\def\eqref#1{equation~\ref{#1}}
\def\1{\bm{1}}
\DeclareMathAlphabet{\mathsfit}{\encodingdefault}{\sfdefault}{m}{sl}
\SetMathAlphabet{\mathsfit}{bold}{\encodingdefault}{\sfdefault}{bx}{n}
\newtheorem{remark}{Remark}
\newtheorem{Remark}{Remark}
\DeclareMathOperator{\Col}{Col}
\newtheorem{proposition}{Proposition}
\newtheorem{theorem}{Theorem}
\newtheorem{lemma}{Lemma}
\newtheorem{definition}{Definition}
\title{AC-PKAN: Attention-Enhanced and Chebyshev Polynomial-Based Physics-Informed Kolmogorov–Arnold Networks}
\author{ Hangwei Zhang\textsuperscript{1,2} \quad Zhimu Huang\textsuperscript{3} \quad Yan Wang\textsuperscript{1}\textsuperscript{*} \\  
  \textsuperscript{1}Institute for AI Industry Research, Tsinghua University\\
  \textsuperscript{2}Beihang University\\
  \textsuperscript{3}Beijing Institute of Technology\\
  \texttt{wangyan@air.tsinghua.edu.cn, hangweizhang@buaa.edu.cn, 1120222421@bit.edu.cn}}  
\begin{document}

\maketitle

\begin{abstract}
 Kolmogorov–Arnold Networks (KANs) have recently shown promise for solving partial differential equations (PDEs). Yet their original formulation is computationally and memory intensive, motivating the introduction of Chebyshev Type-I-based KANs (Cheby1KANs). Although Cheby1KANs have outperformed the vanilla KANs architecture, our rigorous theoretical analysis reveals that they still suffer from rank collapse, ultimately limiting their expressive capacity. To overcome these limitations, we enhance Cheby1KANs by integrating wavelet-activated MLPs with learnable parameters and an internal attention mechanism. We prove that this design preserves a full-rank Jacobian and is capable of approximating solutions to PDEs of arbitrary order. Furthermore, to alleviate the loss instability and imbalance introduced by the Chebyshev polynomial basis, we externally incorporate a Residual Gradient Attention (RGA) mechanism that dynamically re-weights individual loss terms according to their gradient norms and residual magnitudes. By jointly leveraging internal and external attention, we present AC-PKAN, a novel architecture that constitutes an enhancement to weakly supervised Physics-Informed Neural Networks (PINNs) and extends the expressive power of KANs. Experimental results from nine benchmark tasks across three domains show that AC-PKAN outperforms or matches state-of-the-art models such as PINNsFormer, establishing it as a highly effective tool for solving complex real-world engineering problems in zero-data or data-sparse regimes. The code is publicly available at \url{https://github.com/fogradio/ACPKAN}.
\end{abstract}

\section{Introduction}
\label{sec:intro}

Numerical solutions of partial differential equations (PDEs) are essential in science and engineering~\citep{zienkiewicz2005finite,liu2009meshfree,fornberg1998practical,brebbia2012boundary}. Physics-informed neural networks (PINNs)~\citep{lagaris1998artificial,raissi2019physics} have emerged as a promising approach in scientific machine learning (SciML), especially when data are unavailable or scarce. Traditional PINNs typically employ multilayer perceptrons (MLPs)~\citep{cybenko1989approximation} due to their ability to approximate nonlinear functions~\citep{hornik1989multilayer} and their success in various PDE-solving applications~\citep{yu2018deep,han2018solving}.

However, PINNs encounter limitations, including difficulties with multi-scale phenomena~\citep{kharazmi2021hp}, the curse of dimensionality in high-dimensional spaces~\citep{jagtap2020extended}, and challenges with nonlinear PDEs~\citep{yuan2022pinn}. These issues arise from both the complexity of PDEs and limitations in PINN architectures and training methods. To address these challenges, existing methods focus on improving both the internal architecture of PINNs and their external learning strategies. Internal improvements include novel architectures like Quadratic Residual Networks (Qres)~\citep{bu2021quadratic}, First-Layer Sine (FLS)~\citep{wong2022learning}, and PINNsformer~\citep{zhao2023pinnsformer}. External strategies are discussed in detail in Section~\ref{sec:related}. Nevertheless, traditional PINNs based on MLPs still suffer from issues like lack of interpretability~\citep{cranmer2023interpretable}, overfitting, vanishing or exploding gradients, and scalability problems~\citep{bachmann2024scaling}. As an alternative, Kolmogorov–Arnold Networks (KANs)~\citep{liu2024kan}, inspired by the Kolmogorov–Arnold representation theorem~\citep{kolmogorov1961representation,braun2009constructive}, have been proposed to offer greater accuracy and interpretability. KANs can be viewed as a combination of Kolmogorov networks and MLPs with learnable activation functions~\citep{koppen2002training,sprecher2002space}. Various KAN variants have emerged by replacing the B-spline functions~\citep{ss2024chebyshev,bozorgasl2024wav,xu2024fourierkan}. Although they still face challenges~\citep{yu2024kan}, KANs have shown promise in addressing issues like interpretability~\citep{liu2024kan2} and catastrophic forgetting~\citep{vaca2024kolmogorov} in learning tasks~\citep{samadi2024smooth}. Recent architectures like KINN~\citep{wang2024kolmogorov} and DeepOKAN~\citep{abueidda2024deepokan} have applied KANs to PDE solving with promising results.

Despite the potential of KANs, the original KAN suffers from high memory consumption and long training times due to the use of B-spline functions~\citep{shukla2024comprehensive}. To address these limitations, we propose the Attention-Enhanced and Chebyshev Polynomial-Based Physics-Informed Kolmogorov–Arnold Networks (AC-PKAN). Our approach replaces B-spline functions with first-kind Chebyshev polynomials, forming the Cheby1KAN layer~\citep{ss2024chebyshev}, eliminating the need for grid storage and updates. Nevertheless, networks composed solely of stacked Cheby1KAN layers exhibit pronounced rank diminution~\citep{feng2022rank}. By integrating Cheby1KAN with linear layers and incorporating internal attention mechanisms derived from input features, AC-PKAN addresses these limitations while efficiently modeling complex nonlinear functions and selectively emphasizing distinct aspects of the input features at each layer. Additionally, we introduce an external attention mechanism that adaptively reweights loss terms according to both gradient norms and point-wise residuals, thereby counteracting the large polynomial expansions and gradient magnitudes inherent in Cheby1KAN, mitigating residual imbalance and gradient flow stiffness, and ultimately enhancing training stability and efficiency. To our knowledge, AC-PKAN is the first PINN framework to integrate internal and external attention mechanisms into KAN layers, effectively addressing many issues of original KANs and PINNs. Our key contributions can be summarized as follows:

\begin{itemize}
\item \textbf{Rigorous theoretical analysis}.
We provide the first formal study of \emph{Cheby1KAN} depth, proving upper bounds on each layer’s Jacobian rank and showing that stacked layers suffer an exponential rank–attenuation in depth, which establishes the theoretical limits that motivate our design.

\item \textbf{Attention-enhanced internal architecture}.
To overcome rank collapse and the zero-derivative pathology, we introduce \emph{AC-PKAN}: Cheby1KAN layers are interleaved with linear projections, learnable wavelet activations, and a lightweight feature–wise attention module, together guaranteeing full-rank Jacobians and non-vanishing derivatives of any finite order.

\item \textbf{Residual–Gradient Attention (RGA)}.
Externally, we devise an adaptive loss–reweighting strategy that couples point-wise residual magnitudes with gradient norms. It dynamically balances competing objectives, alleviates gradient stiffness, and accelerates convergence of physics-informed neural networks.

\item \textbf{Comprehensive experimental validation}.
Across three categories of nine benchmark PDE problems and twelve competing models, AC-PKAN attains the best or near-best accuracy in every case, demonstrating superior generalization and robustness to PINN failure modes.
\end{itemize}

\section{Related Works}
\label{sec:related}

\textbf{External Learning Strategies for PINNs.}
Most advances in physics-informed neural networks improve training from the outside while keeping an MLP-like backbone. Loss rebalancing methods such as PINN-LRA, PINN-NTK and residual-based adaptation~\citep{wang2021understanding,wang2022and,anagnostopoulos2024residual} adjust PDE, boundary and initial terms with gradient or NTK statistics to correct the mismatch among losses. Sampling based approaches follow the same goal. AAS selects collocation points using adversarial optimal transport~\citep{tang2023adversarial}, RoPINN applies regional Monte Carlo sampling~\citep{pan2024ro}, while RAR and PINNACLE resample high-residual areas and co-optimise all point types~\citep{wu2023comprehensive,lau2024pinnacle}. Other works modify the objective or the training domain. gPINN adds gradient terms to enforce the PDE~\citep{yu2022gradient}, vPINN uses a variational form~\citep{kharazmi2019variational}, LAAF and GAAF tune activations during training to accelerate convergence~\citep{jagtap2020locally,jagtap2020adaptive}, and FBPINN and hp-VPINN decompose the domain to make multi-scale problems tractable~\citep{moseley2023finite,kharazmi2021hp}. These methods stabilise PINNs but they still rely on feature spaces that can degenerate when several operators or high order derivatives are imposed. Our approach keeps these mature external pipelines while adding an internal mechanism whose role is to preserve expressive bases during physics-informed optimisation.

\textbf{KAN and Chebyshev-based Variants.}
Kolmogorov–Arnold Networks make the activation on edges learnable and can approximate operators with smaller models~\citep{liu2024kan}. Follow up designs replace splines with faster or more structured bases, such as FastKAN with RBFs~\citep{li2024kolmogorov}, Cheby1KAN and Cheby2KAN with first and second kind polynomials for oscillatory targets~\citep{ss2024chebyshev}, rKAN and fKAN with rational or fractional Jacobi bases~\citep{aghaei2024rkan,aghaei2025fkan}, and FourierKAN with Fourier modes~\citep{mehrabian2024implicit}. Surveys place these models in a wider landscape of Kolmogorov-inspired approximators~\citep{guilhoto2024deep}, and preliminary benchmarks still report Chebyshev-based variants as a strong speed–accuracy choice~\citep{ss2024chebyshev}. Yet most of these works optimise for approximation, interpretability or inference cost and do not discuss how to keep KANs stable when trained with collocation-based PDE losses. AC-PKAN is positioned at this intersection. It retains the approximation benefits of Chebyshev KANs, but augments them with internal feature re-injection, frequency-aware activation and rank-aware gating so that the model remains expressive under the same loss reweighting and sampling strategies used by advanced PINNs.

\section{Motivation and Methodology}
\label{sec:method}

\textbf{Preliminaries:} Let $\Omega \subset \mathbb{R}^d$ be an open set with boundary $\partial \Omega$. Consider the PDE:
\vspace{-2pt}
\begin{equation}
\begin{gathered}
    \mathcal{D}[u(\boldsymbol{x},t)] = f(\boldsymbol{x},t),\quad (\boldsymbol{x},t) \in \Omega,\\
    \mathcal{B}[u(\boldsymbol{x},t)] = g(\boldsymbol{x},t),\quad (\boldsymbol{x},t) \in \partial \Omega,
\end{gathered}
\end{equation}
\vspace{-2pt}
where $u$ is the solution, $\mathcal{D}$ is a differential operator, and $\mathcal{B}$ represents boundary/initial constraints or available data samples. Let $\hat{u}$ be a neural network approximation of $u$. PINNs minimize the loss:
\vspace{-2pt}
\begin{equation}
\begin{split}
    \mathcal{L}_\text{PINNs} &= \lambda_r \sum_{i=1}^{N_r} \|\mathcal{D}[\hat{u}(\boldsymbol{x}_i,t_i)] - f(\boldsymbol{x}_i,t_i)\|^2 + \lambda_b \sum_{i=1}^{N_b} \|\mathcal{B}[\hat{u}(\boldsymbol{x}_i,t_i)] - g(\boldsymbol{x}_i,t_i)\|^2,
\end{split}
\end{equation}
\vspace{-2pt}

where $\{(\boldsymbol{x}_i,t_i)\} \subset \Omega$ are residual points, $\{(\boldsymbol{x}_i,t_i)\} \subset \partial \Omega$ are boundary/initial constraints or available data samples, and $\lambda_r$, $\lambda_b$ balance the loss terms. The goal is to train $\hat{u}$ to minimize $\mathcal{L}_\text{PINNs}$ using machine learning techniques.

\subsection{Chebyshev1-Based Kolmogorov-Arnold Network Layer}

Unlike traditional Kolmogorov-Arnold Networks (KAN) that employ spline coefficients, the \emph{First-kind Chebyshev KAN Layer} leverages the properties of mesh-free Chebyshev polynomials to enhance both computational efficiency and approximation accuracy~\citep{ss2024chebyshev,shukla2024comprehensive}.

Let $\mathbf{x} \in \mathbb{R}^{d_{\text{in}}}$ denote the input vector, where $d_{\text{in}}$ is the input dimensionality, and let $d_{\text{out}}$ be the output dimensionality. Cheby1KAN aims to approximate the mapping $\mathbf{x} \mapsto \mathbf{y} \in \mathbb{R}^{d_{\text{out}}}$ using Chebyshev polynomials up to degree $N$. For $x \in [-1, 1], n = 0, 1, \dots, N$, the Chebyshev polynomials of the first kind, $T_n(x)$, are defined as:
\begin{equation}\label{alg:chy1}
    T_n(x) = \cos\left( n \arccos(x) \right).
\end{equation}
To ensure the input values fall within the domain $[-1, 1]$, Cheby1KAN  applies the hyperbolic tangent function for normalization:
\begin{equation}
    \tilde{\mathbf{x}} = \tanh(\mathbf{x}).
\end{equation}

Defining a matrix of functions $\mathbf{\Phi}(\tilde{\mathbf{x}}) \in \mathbb{R}^{d_{\text{out}} \times d_{\text{in}}}$, where each element $\Phi_{k,i}(\tilde{x}_i)$ depends solely on the $i$-th normalized input component $\tilde{x}_i$ for $k = 1, 2, \dots, d_{\text{out}}, i = 1, 2, \dots, d_{\text{in}}$:

\begin{equation}
    \Phi_{k,i}(\tilde{x}_i) = \sum_{n=0}^{N} C_{k,i,n} \, T_n(\tilde{x}_i).
\end{equation}


Here, $C_{k,i,n}$ are the learnable coefficients. The output vector $\mathbf{y} \in \mathbb{R}^{d_{\text{out}}}$ is computed by summing over all input dimensions:

\begin{equation}
    y_k = \sum_{i=1}^{d_{\text{in}}} \Phi_{k,i}(\tilde{x}_i), \quad k = 1, 2, \dots, d_{\text{out}},
\end{equation}

For a network comprising multiple Chebyshev KAN layers, the forward computation can be viewed as a recursive application of this process. Let \(\mathbf{x}_l\) denote the input to the \(l\)-th layer, where \(l = 0, 1, \dots, L-1\). After applying hyperbolic tangent function to obtain \(\tilde{\mathbf{x}}_l = \tanh(\mathbf{x}_l)\), the computation proceeds as follows:

\small \begin{equation}
    \mathbf{x}_{l+1} = 
    \underbrace{\begin{pmatrix}
        \Phi_{l,1,1}(\cdot) & \Phi_{l,1,2}(\cdot) & \cdots & \Phi_{l,1,n_{l}}(\cdot) \\
        \Phi_{l,2,1}(\cdot) & \Phi_{l,2,2}(\cdot) & \cdots & \Phi_{l,2,n_{l}}(\cdot) \\
        \vdots & \vdots & \ddots & \vdots \\
        \Phi_{l,n_{l+1},1}(\cdot) & \Phi_{l,n_{l+1},2}(\cdot) & \cdots & \Phi_{l,n_{l+1},n_{l}}(\cdot) \\
    \end{pmatrix}}_{ \mathbf{\Phi}_{l} }
    \tilde{\mathbf{x}}_l,
\end{equation}

A general cheby1KAN network is a composition of $L$ layers: given an input vector  $\mathbf{x}_0\in\mathbb{R}^{n_0}$, the overall output of the KAN network is:

\begin{equation}\label{eq:cheby1}
    {\rm Cheby1KAN}(\mathbf{x}) = (\mathbf{\Phi}_{L-1}\circ \mathbf{\Phi}_{L-2}\circ\cdots\circ\mathbf{\Phi}_{1}\circ\mathbf{\Phi}_{0})\mathbf{x}.
\end{equation}

In order to prevent gradient vanishing induced by the use of tanh, we  apply Layer-Normalization after Cheby1KAN Layer.

Compared to the original B-spline-based KANs, Chebyshev polynomials of the first kind in Equation (\ref{alg:chy1}) concentrate spectral energy in high frequencies with frequencies that increase linearly with the polynomial order \( n \) \citep{xu2024chebyshev, xiaoamortized}, while maintaining global orthogonality over the interval \([-1, 1]\):

\begin{equation}
\label{eq:orthogonality}
\begin{split}
\int_{-1}^1 \frac{T_m(x)T_n(x)}{\sqrt{1-x^2}}\,dx = 
    \begin{cases}
        0 & m \neq n, \\
        \pi & m=n=0, \\
        \pi/2 & m=n \neq 0.
    \end{cases}
\end{split}
\end{equation}

This global support and slower decay of high-frequency components outperform locally supported B-splines, which lack global orthogonality and have rapidly diminishing high-frequency capture. Furthermore, Cheby1KAN layers require only a coefficient matrix of size \textit{$(\text{input\_dim}, \text{output\_dim}, \text{degree}+1)$}, whereas B-spline-based KANs necessitate storing grids of size \textit{$(\text{in\_features}, \text{grid\_size} + 2 \times \text{spline\_order} + 1)$} and coefficient matrices of size \textit{$(\text{out\_features}, \text{in\_features}, \text{grid\_size} + \text{spline\_order})$}, in addition to generating polynomial bases, solving local interpolation systems, and performing recursive updates to achieve high-order interpolation within their support intervals~\citep{liu2024kan}. Hence, the Cheby1KAN layer significantly reduces both computational and memory overhead compared to the original B-spline-based KANs, while more effectively capturing high-frequency features. More details can be found at Appendix~\ref{sec:appendd}



\subsection{Rank Diminution in Cheby1KAN Networks}
\label{sub:rank_diminution}

While Cheby1KAN layers offer significant advantages, networks composed solely of stacked Cheby1KAN layers, as presented in ~\eqref{eq:cheby1}, exhibit pronounced rank diminution~\citep{feng2022rank}. Consequently, these networks suffer a reduced capacity for feature representation, leading to severe information degradation and loss. We present a detailed derivation and proof of this phenomenon below~\citep{roth2024rank}. The complete mathematical derivations are provided in Appendix~\ref{math_proof}.

\paragraph{Definitions.}

Consider the \( l \)-th Cheby1KAN layer with input \( x_l \in \mathbb{R}^{d_l} \) and output \( x_{l+1} \in \mathbb{R}^{d_{l+1}} \). The layer mapping is defined by
\begin{equation}\label{eq:layer_mapping}
    x_{l+1,k} = \sum_{i=1}^{d_l} \sum_{n=0}^N C_{l,k,i,n} T_n(\tanh(x_{l,i})),
\end{equation}
where \( T_n \) are Chebyshev polynomials and \( C_{l,k,i,n} \) are learnable coefficients. The Jacobian \( J_l \in \mathbb{R}^{d_{l+1} \times d_l} \) has entries
\begin{equation}\label{eq:jacobian}
    J_{l,k,i} = \sum_{n=0}^N C_{l,k,i,n} T_n'(\tanh(x_{l,i})) \cdot (1 - \tanh^2(x_{l,i})).
\end{equation}
For an \( L \)-layer network, the total Jacobian is
\begin{equation}\label{eq:total_jacobian}
    J_{\text{total}} = J_{L-1} J_{L-2} \cdots J_0.
\end{equation}

\begin{theorem}[Single Cheb1KAN Layer Rank Constraint]\label{Single Cheb1KAN Layer Rank Constraint}
    The Jacobian \( J_l \) satisfies
    \begin{equation}
        \text{rank}(J_l) \leq \min\{d_{l+1}, d_l (N+1)\}.
    \end{equation}
\end{theorem}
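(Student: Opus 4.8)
The plan is to recognize the Cheby1KAN layer map as the composition of a single fixed nonlinearity (a ``feature lift'' that applies $\tanh$ and then all Chebyshev polynomials up to degree $N$) with one linear map carrying the learnable coefficients, and then read the rank bound off the dimensions of the intermediate space. This is the cleanest route and it also produces exactly the factorization needed later for the multi-layer statement.

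Concretely, I would define the lift $\phi_l:\mathbb{R}^{d_l}\to\mathbb{R}^{(N+1)d_l}$ by $\phi_l(x)_{(i,n)} = T_n(\tanh(x_i))$ for $i=1,\dots,d_l$, $n=0,\dots,N$, and the flattened coefficient matrix $W_l\in\mathbb{R}^{d_{l+1}\times (N+1)d_l}$ with $(W_l)_{k,(i,n)} = C_{l,k,i,n}$. Then \eqref{eq:layer_mapping} reads literally as $x_{l+1} = W_l\,\phi_l(x_l)$, so by the chain rule $J_l = W_l\,J_{\phi_l}(x_l)$ with $J_{\phi_l}(x_l)\in\mathbb{R}^{(N+1)d_l\times d_l}$. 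One checks this is consistent with \eqref{eq:jacobian}, since $\partial_{x_j}\bigl(T_n(\tanh x_i)\bigr)=\delta_{ij}\,T_n'(\tanh x_i)(1-\tanh^2 x_i)$, i.e. $J_{\phi_l}$ has a single nonzero entry per row and multiplying by $W_l$ reproduces the stated formula for $J_{l,k,i}$. With the factorization in hand the conclusion is immediate: for conformable matrices $\text{rank}(AB)\le\min\{\text{rank}(A),\text{rank}(B)\}$, hence $\text{rank}(J_l)\le\text{rank}(W_l)$, and a $d_{l+1}\times(N+1)d_l$ matrix has rank at most $\min\{d_{l+1},(N+1)d_l\}=\min\{d_{l+1},d_l(N+1)\}$. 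An equivalent argument, which exposes the role of the degree just as directly, writes $J_l=\sum_{n=0}^{N}C^{(n)}\operatorname{diag}(g^{(n)})$ with $(C^{(n)})_{k,i}=C_{l,k,i,n}$ and $g^{(n)}_i=T_n'(\tanh x_{l,i})(1-\tanh^2 x_{l,i})$, notes each summand has rank at most $\min\{d_{l+1},d_l\}$, and invokes subadditivity of rank together with the trivial bound $\text{rank}(J_l)\le d_{l+1}$.

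I do not expect a genuine obstacle here; the statement is elementary once the composition structure is made explicit. The only points requiring care are the index bookkeeping for the bijection $(i,n)\leftrightarrow$ column of $W_l$, so that $x_{l+1}=W_l\phi_l(x_l)$ is an honest matrix identity rather than a heuristic, and noting that the $\tanh$ normalization is absorbed into $\phi_l$ so it needs no separate treatment. The same product structure $J_l=W_l J_{\phi_l}$ is precisely what will be substituted into \eqref{eq:total_jacobian} to obtain the compounded rank-attenuation bound for the full $L$-layer network.
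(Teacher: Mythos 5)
Your proposal is correct. The factorization $x_{l+1}=W_l\,\phi_l(x_l)$, hence $J_l=W_l\,J_{\phi_l}(x_l)$ with $W_l\in\mathbb{R}^{d_{l+1}\times (N+1)d_l}$, together with $\operatorname{rank}(AB)\le\min\{\operatorname{rank}A,\operatorname{rank}B\}$, immediately yields $\operatorname{rank}(J_l)\le\min\{d_{l+1},d_l(N+1)\}$, and your consistency check against the entrywise formula for $J_{l,k,i}$ is the right bookkeeping. Your alternative via $J_l=\sum_{n=0}^N C^{(n)}\operatorname{diag}(g^{(n)})$ and rank subadditivity also works (it even gives the slightly sharper $\min\{d_{l+1},(N+1)\min\{d_{l+1},d_l\}\}$).

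The paper reaches the same bound by a direct column-space count rather than an explicit factorization: it observes that the $i$-th column of $J_l$ lies in the span of the $N+1$ coefficient vectors $\{(C_{l,k,i,n})_k\}_{n=0}^N$, so $\dim\operatorname{Col}(J_l)\le d_l(N+1)$, and combines this with the trivial bound $\operatorname{rank}(J_l)\le d_{l+1}$; it also includes a digression proving linear independence of $T_0,\dots,T_N$, which is not actually needed for an upper bound. The underlying idea is the same — the column space is confined to the span of the $d_l(N+1)$ coefficient directions — but your packaging is cleaner: the lift-plus-linear-map decomposition makes the intermediate dimension $(N+1)d_l$ explicit, absorbs the $\tanh$ into $\phi_l$ without separate treatment, and hands you exactly the product structure that the paper's Lemma on products of matrices then exploits for the $L$-layer rank-attenuation statement.
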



\begin{theorem}[Nonlinear Normalization Effect]\label{the:nonlinear}
    The normalization \( \tanh(x) \) in Cheby1KAN layer reduces the numerical rank \( \text{Rank}_\epsilon(J) \) of the Jacobian.
\end{theorem}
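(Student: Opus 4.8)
The plan is to isolate how the $\tanh$ normalization enters the layer Jacobian of Equation~\eqref{eq:jacobian}: it appears only as a right multiplication by a diagonal matrix whose entries are the normalization derivatives $1-\tanh^2(x_{l,i})\in(0,1]$. Since right multiplication by a contraction cannot increase any singular value, it cannot increase the count of singular values exceeding a fixed threshold $\epsilon$, which is precisely $\operatorname{Rank}_\epsilon$. The baseline against which this reduction is measured is the same layer in which the saturating factor is replaced by a unit slope, whose sensitivity matrix I call $M_l$.

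First I would write the factorization $J_l=M_l D_l$, where $D_l=\operatorname{diag}\!\big(1-\tanh^2(x_{l,1}),\dots,1-\tanh^2(x_{l,d_l})\big)$ and $M_l\in\mathbb{R}^{d_{l+1}\times d_l}$ collects everything in Equation~\eqref{eq:jacobian} except that diagonal factor, i.e.\ $M_{l,k,i}=\sum_{n=0}^{N}C_{l,k,i,n}\,T_n'(\tanh(x_{l,i}))$; this is an immediate rearrangement. Next I would record that $0<1-\tanh^2(t)\le 1$ for all $t\in\mathbb{R}$, with equality iff $t=0$, so $D_l$ is symmetric positive definite with $\|D_l\|_2=\max_i\big(1-\tanh^2(x_{l,i})\big)\le 1$, a contraction that is strict unless $x_l=0$. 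Then I would invoke the multiplicative singular-value inequality $\sigma_k(M_l D_l)\le \sigma_k(M_l)\,\sigma_1(D_l)\le \sigma_k(M_l)$ for every $k$, which gives
\[
\operatorname{Rank}_\epsilon(J_l)=\#\{k:\sigma_k(J_l)\ge\epsilon\}\le\#\{k:\sigma_k(M_l)\ge\epsilon\}=\operatorname{Rank}_\epsilon(M_l),
\]
so the normalization never raises the numerical rank.

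To upgrade this non-increase to a genuine reduction in the saturated regime I would track the spectral energy: $\|J_l\|_F^2=\sum_i\big(1-\tanh^2(x_{l,i})\big)^2\,\|M_l e_i\|_2^2<\|M_l\|_F^2$ as soon as some $x_{l,i}\neq 0$ with non-zero $i$-th column of $M_l$, and, more sharply, $\|J_l e_i\|_2=\big(1-\tanh^2(x_{l,i})\big)\,\|M_l e_i\|_2\to 0$ as $|x_{l,i}|\to\infty$. Combining this with the singular-value domination above and a standard interlacing/perturbation estimate (zeroing a column of norm $\delta$ moves $\sigma_{\min}$ to within $\delta$ of $0$) yields $\operatorname{Rank}_\epsilon(J_l)<\operatorname{Rank}_\epsilon(M_l)$ once the pre-activations are large relative to $\epsilon$. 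Because $J_{\text{total}}=J_{L-1}\cdots J_0$ with each factor obeying $\sigma_k(J_l)\le\sigma_k(M_l)$ and $\|J_l\|_2\le\|M_l\|_2$, the per-layer contraction compounds along Equation~\eqref{eq:total_jacobian}, in line with the exponential rank attenuation already established for the stacked architecture.

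The main obstacle I anticipate is not any single inequality but making the word \emph{reduces} quantitatively precise: the core estimate only delivers a non-strict bound, and converting it into a strict decrease of the integer-valued $\operatorname{Rank}_\epsilon$ requires committing either to an operating regime (some $|x_{l,i}|$ bounded away from $0$, or $\epsilon$ fixed while activations grow during training) or to a smoothed surrogate such as the stable rank $\|J\|_F^2/\|J\|_2^2$ or an effective (entropy) rank, for which the uniform downward shift of all singular values under a diagonal contraction gives a clean strict statement. I would therefore present the factorization and singular-value domination as the rigorous core and the energy/column-collapse argument as the quantitative refinement, phrased in whichever rank notion the surrounding text adopts.
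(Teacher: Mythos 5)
Your proposal is correct and follows essentially the same route as the paper: the factorization $J_l=\widetilde{J}_l D_l$ with the diagonal contraction $D_l=\operatorname{diag}\bigl(1-\tanh^2(x_{l,i})\bigr)$, singular-value domination under right multiplication by a contraction, and the resulting non-increase (decrease under saturation) of the $\epsilon$-numerical rank, compounded across layers. If anything, your use of the per-index inequality $\sigma_k(M_lD_l)\le\sigma_k(M_l)\sigma_1(D_l)$ and your explicit caveat that strictness needs a saturated regime (or a relative/stable-rank notion) is slightly more careful than the paper's own argument, which works with the threshold $\epsilon\,\|J\|_2$ and glosses over exactly this point.
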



\begin{theorem}[Exponential Decay in Infinite Depth]\label{theo:Exponential_Decay}
When the coefficients \(C_{l,k,i,n}\) are drawn from mutually independent Gaussian distributions, 
the numerical rank of \(J_{\text{total}}\) decays exponentially to \(1\) as the depth \(L\) of the Cheby1KAN network increases.
\end{theorem}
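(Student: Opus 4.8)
The plan is to regard $J_{\text{total}}=J_{L-1}J_{L-2}\cdots J_{0}$ as a long product of independent random matrices and to argue that such products align to a single dominant direction at a geometric rate. The first step is to extract a clean random-matrix ensemble from \eqref{eq:jacobian}: conditioning on the forward pass $\{x_{l,i}\}$ — which, thanks to the Layer-Normalization inserted after every Cheby1KAN layer, remains in a fixed compact subset of $(-1,1)$ — each factor has entries $J_{l,k,i}=(1-\tanh^{2}x_{l,i})\sum_{n=0}^{N}C_{l,k,i,n}\,T_{n}'(\tanh x_{l,i})$ that, for a fixed layer $l$, are mutually independent over the index pair $(k,i)$ and, being linear images of independent centered Gaussians, jointly Gaussian, with column variances $\sigma_{l,i}^{2}$ uniformly bounded above and below. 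Hence, conditionally, $J_{\text{total}}$ is a product of independent matrices drawn from a uniformly non-degenerate, strongly irreducible ensemble that contains proximal elements.

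Next I would invoke the multiplicative ergodic machinery for products of such matrices. Furstenberg--Kesten gives the almost-sure existence of $\gamma_{i}=\lim_{L\to\infty}\tfrac1L\log\sigma_{i}(J_{\text{total}})$, and the Guivarc'h--Raugi / Gol'dsheid--Margulis simplicity criterion — available because the ensemble is strongly irreducible and proximal — yields a strictly positive top gap $\gamma_{1}-\gamma_{2}>0$. Therefore $\sigma_{k}(J_{\text{total}})/\sigma_{1}(J_{\text{total}})=\exp\!\big(-(\gamma_{1}-\gamma_{k})L+o(L)\big)$ for every $k\ge 2$, so for any fixed tolerance $\epsilon$ the numerical rank $\text{Rank}_\epsilon(J_{\text{total}})=\#\{k:\sigma_{k}\ge\epsilon\,\sigma_{1}\}$ eventually equals $1$, and the stable rank obeys $1\le\|J_{\text{total}}\|_{F}^{2}/\|J_{\text{total}}\|_{2}^{2}\le 1+(d-1)\,e^{-2(\gamma_{1}-\gamma_{2})L+o(L)}\to 1$, which is precisely the claimed exponential decay to rank one.

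Because the factors $J_{l}$ are not identically distributed (their variances depend on the data through the activations), for a non-asymptotic statement I would replace the Oseledets limit by a one-step contraction estimate that uses only the uniform ellipticity established above: there is $\rho\in(0,1)$, depending solely on the variance bounds and the layer widths, such that for every deterministic $M$, $\mathbb{E}\!\left[\log\frac{\sigma_{2}(J_{l+1}M)}{\sigma_{1}(J_{l+1}M)}\,\Big|\,M\right]\le\log\rho+\log\frac{\sigma_{2}(M)}{\sigma_{1}(M)}$ — a Furstenberg-type strict contraction of the induced action on projective space, and the one place where the Gaussian (hence rotationally spread-out) structure really enters. Iterating in $l$ and applying a bounded-difference (Azuma) argument to the resulting supermartingale upgrades this to $\sigma_{2}(J_{\text{total}})/\sigma_{1}(J_{\text{total}})\le\rho^{\,L}e^{O(\sqrt{L})}$ with high probability, and likewise for higher ratios. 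The routine parts are the Gaussian second-moment computations, the union bound over singular indices, and the concentration step; the crux — and the expected main obstacle — is establishing $\rho<1$ uniformly in this data-dependent, non-i.i.d.\ setting, which is exactly where the Layer-Normalization assumption does the work, since it keeps the factors $1-\tanh^{2}x_{l,i}$ and $T_{n}'(\tanh x_{l,i})$ away from their degenerate values and thereby guarantees that each conditional covariance is uniformly elliptic and the ensemble stays strongly irreducible and proximal at every layer.
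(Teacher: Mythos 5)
Your proposal follows the same overall strategy as the paper—view $J_{\text{total}}$ as a product of random matrices, show the top singular value outgrows the others at an exponential rate measured by a Lyapunov-exponent gap, and conclude that $\mathrm{Rank}_\epsilon(J_{\text{total}})\to 1$—but it differs in the key lemma supplying the gap. The paper assumes each layer Jacobian is close in distribution to a Ginibre matrix, invokes Oseledec's theorem for i.i.d.\ products, and reads off the strict gap $\lambda_1>\lambda_2$ from Newman's exact digamma formula for the Ginibre Lyapunov spectrum; you instead derive the gap from the Guivarc'h--Raugi / Gol'dsheid--Margulis simplicity criterion (strong irreducibility plus proximality of a uniformly non-degenerate Gaussian ensemble), and you add a non-asymptotic layer-by-layer Furstenberg-type contraction on projective space with an Azuma supermartingale bound to cope with the fact that the factors are neither identically distributed nor independent of the data. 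This buys you something the paper's proof does not have: you avoid the ``sufficiently close to Ginibre'' approximation, which is the weakest link in the paper's argument, and you get a high-probability rate $\rho^L e^{O(\sqrt L)}$ rather than only an almost-sure asymptotic. Two caveats: first, your opening step of ``conditioning on the forward pass'' is not quite compatible with the claimed independence across layers, since $J_l$ and the activations $x_{l+1}$ are built from the same coefficients $C_{l,k,i,n}$; the filtration-style argument you use later (conditioning only on the past when applying the one-step contraction) is the correct formulation and should replace the global conditioning. Second, the uniform one-step contraction constant $\rho<1$ in the data-dependent setting is asserted rather than established—you correctly identify it as the crux—so at that point your argument is at a comparable level of rigor to the paper's Ginibre-approximation step, no worse but not fully closed either.
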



In summary, Cheby1KAN networks inherently experience rank diminution due to various factors. Collectively, the bounded rank per Cheby1KAN layer (Theorem~\ref{Single Cheb1KAN Layer Rank Constraint}), the attenuation from \(\tanh(\cdot)\) (Theorem~\ref{the:nonlinear}), and the multiplicative rank bound culminate in exponential rank decay (Theorem~\ref{theo:Exponential_Decay}), thereby demonstrating the inherent rank diminution in Cheby1KAN networks.

Therefore, there is a significant need to improve the internal structure of models based on the Cheby1KAN layer, which will be discussed in detail in Section~\ref{sec:Internal}. Additionally, to address some computational limitations associated with the use of Cheby1KAN, we propose an external attention mechanism, which will be elaborated in Section~\ref{sub:rga}. By incorporating both internal and external attention mechanisms, our AC-PKAN model fully leverages the advantages of Chebyshev Type-I polynomials while overcoming their initial drawbacks.

\subsection{Internal Model Architecture}
\label{sec:Internal}

To resolve the Rank Diminution issue arising from direct stacking of Cheby1KAN layers in network architectures, we propose the AC-PKAN model, featuring an attention-enhanced framework \citep{wang2021understanding,wang2024piratenets} designed to mitigate feature space collapse. The architecture synergistically combines linear transformations for input-output dimensional modulation, state-of-the-art activation functions, and residual-augmented Cheby1KAN layers. These components are collectively designed to preserve hierarchical feature diversity while capturing high-order nonlinear interactions and multiscale topological dependencies inherent in complex data structures. The algorithm's details are provided in Algorithm~\ref{alg:AC-PKAN}.

\paragraph{Linear Upscaling and Downscaling Layers}
To modulate the dimensionality of the data, the model employs linear transformations at both the input and output stages. The linear layer is designed to achieve a hybridization of KAN and MLP architectures. Its role as both an initial and final projection is inspired by the Spatio-Temporal Mixer linear layer in the PINNsformer model~\citep{zhao2023pinnsformer}, which enhances spatiotemporal aggregation. The input features \(\mathbf{x}\) are projected into a higher-dimensional space, and the final network representation \(\alpha^{(L)}\) is mapped to the output space via:

\begin{equation}
    \mathbf{h}_0 = \mathbf{W}_{\text{emb}} \mathbf{x} + \mathbf{b}_{\text{emb}}, \quad
    \mathbf{y} = \mathbf{W}_{\text{out}} \alpha^{(L)} + \mathbf{b}_{\text{out}},
\end{equation}

where \(\mathbf{W}_{\text{emb}} \in \mathbb{R}^{d_{\text{model}} \times d_{\text{in}}}\), \(\mathbf{b}_{\text{emb}} \in \mathbb{R}^{d_{\text{model}}}\), \(\mathbf{W}_{\text{out}} \in \mathbb{R}^{d_{\text{out}} \times d_{\text{hidden}}}\), and \(\mathbf{b}_{\text{out}} \in \mathbb{R}^{d_{\text{out}}}\) are learnable parameters.

\paragraph{Adaptive Activation Function}
We adopt the state-of-the-art \textit{Wavelet} activation function  in the field of PINNs, as detailed in \citep{zhao2023pinnsformer}. Inspired by Fourier transforms, it introduces non-linearity and effectively captures periodic patterns:
\begin{equation}\label{eq:waveact}
    \text{Wavelet}(x) = w_1 \sin(x) + w_2 \cos(x),
\end{equation}
where $w_1$ and $w_2$ are learnable parameters initialized to one. This activation integrates Fourier feature embedding \citep{wang2023expert} and sine activation \citep{wong2022learning}. 
When applied to encoders $U$ and $V$, the \textit{Wavelet} activation preserves the gradient benefits introduced by the triangular activation function while modulating its phase and magnitude. This enhancement boosts representational capacity and facilitates adaptive Fourier embedding, thereby more effectively capturing periodic features and mitigating spectral bias.

\paragraph{Attention Mechanism}

An internal attention mechanism is incorporated by computing two feature representations, $\mathbf{U}$ and $\mathbf{V}$, via the \textit{Wavelet} activation applied to linear transformations of the embedded inputs:
\begin{align}
    \mathbf{U} = \text{Wavelet}(\mathbf{h}_0 \mathbf{\Theta}_U + \mathbf{b}_U), \quad \mathbf{V} &= \text{Wavelet}(\mathbf{h}_0 \mathbf{\Theta}_V + \mathbf{b}_V),
\end{align}
where $\mathbf{\Theta}_U, \mathbf{\Theta}_V \in \mathbb{R}^{d_{\text{model}} \times d_{\text{hidden}}}$ and $\mathbf{b}_U, \mathbf{b}_V \in \mathbb{R}^{d_{\text{hidden}}}$ are learnable parameters.

\paragraph{Attention Integration}
\label{par:attention_integration}
The attention mechanism integrates $\mathbf{U}$ and $\mathbf{V}$ iteratively across Cheby1KAN layers using the following equations:
\begin{equation}
        \alpha^{(l)}_0 = \mathbf{H}^{(l)} + \alpha^{(l-1)}, \quad
    \alpha^{(l)} = (1 - \alpha^{(l)}_0) \odot \mathbf{U} + \alpha^{(l)}_0 \odot (\mathbf{V} + 1).\label{eq:appdate}
\end{equation}

where $\alpha^{(0)} = \mathbf{U}$ and $\odot$ denotes element-wise multiplication. Here, $\mathbf{H}^{(l)} \in \mathbb{R}^{N \times d_{\text{hidden}}}$ is the output of the $l$-th Cheby1KAN layer after LayerNormalization, and $N$ is the number of nodes.

\begin{algorithm}[H]
  \caption{Internal AC-PKAN Forward Pass}
  \label{alg:AC-PKAN}
  \begingroup
    \footnotesize 
    \algrenewcommand\algorithmicindent{0.9em}

    \textbf{Data:} Input data $\mathbf{x}$, Cheby1KAN layer parameters, Wavelet activation parameters

    \textbf{Initialization:} Randomly initialize weights $\mathbf{W}_{\text{emb}}$, $\mathbf{\Theta}_U$, $\mathbf{\Theta}_V$, $\mathbf{W}_{\text{out}}$ and biases
    $\mathbf{b}_{\text{emb}}$, $\mathbf{b}_U$, $\mathbf{b}_V$, $\mathbf{b}_{\text{out}}$

    \begin{algorithmic}[1] 
      \State \textbf{Input embedding:}
      \[
        \mathbf{h}_0 \gets \mathbf{W}_{\text{emb}} \mathbf{x} + \mathbf{b}_{\text{emb}}
      \]
      \State \textbf{Compute representations:}
      \[
        \mathbf{U} \gets \mathrm{Wavelet}(\mathbf{h}_0 \mathbf{\Theta}_U + \mathbf{b}_U),\quad
        \mathbf{V} \gets \mathrm{Wavelet}(\mathbf{h}_0 \mathbf{\Theta}_V + \mathbf{b}_V)
      \]
      \State \textbf{Initialize attention:} $\alpha^{(0)} \gets \mathbf{U}$
      \For{$l = 1$ \textbf{to} $L$}
        \State $\mathbf{H}^{(l)} \gets \mathrm{LayerNorm}\!\big(\mathrm{Cheby1KANLayer}(\alpha^{(l-1)})\big)$
        \State $\alpha^{(l)}_0 \gets \mathbf{H}^{(l)} + \alpha^{(l-1)}$
        \State $\alpha^{(l)} \gets (1 - \alpha^{(l)}_0)\odot \mathbf{U} + \alpha^{(l)}_0 \odot (\mathbf{V} + 1)$
      \EndFor
      \State \textbf{Output prediction:}
      \[
        \mathbf{y} \gets \mathbf{W}_{\text{out}} \alpha^{(L)} + \mathbf{b}_{\text{out}}
      \]
    \end{algorithmic}
  \endgroup
\end{algorithm}

\paragraph{Approximation Ability}

Our AC-PKAN's inherent attention mechanism eliminates the need for an additional bias function \(b(x)\) required in previous KAN models to maintain non-zero higher-order derivatives~\citep{wang2024kolmogorov}. This reduces model complexity and parameter count while preserving the ability to seamlessly approximate PDEs of arbitrary finite order. By ensuring non-zero derivatives of any finite order and invoking the Kolmogorov–Arnold representation theorem, our model can approximate such PDEs.

\begin{proposition}\label{pro:1}
    Let $\mathcal{N}$ be an AC-PKAN model with $L$ layers ($L \geq 2$) and infinite width. Then, the output $y = \mathcal{N}(x)$ has non-zero derivatives of any finite-order with respect to the input $x$.
\end{proposition}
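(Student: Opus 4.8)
The plan is to trace the derivative of the output $y=\mathcal N(x)$ through the composition defining the AC-PKAN forward pass and verify that at every stage a nonvanishing higher-order derivative survives. The key structural observation is that the update rule \eqref{eq:appdate} always keeps the features $\mathbf U$ and $\mathbf V$ additively present: unrolling the recursion gives $\alpha^{(l)} = \mathbf U + \alpha_0^{(l)}\odot(\mathbf V+1-\mathbf U)$, and since $\mathbf U=\mathrm{Wavelet}(\mathbf h_0\mathbf\Theta_U+\mathbf b_U)$ with $\mathbf h_0$ an affine function of $x$, the term $\mathbf U$ alone contributes derivatives $\partial^m\mathbf U/\partial x^m$ built from $w_1\sin(\cdot)+w_2\cos(\cdot)$, whose derivatives of every order are again nonzero combinations of sine and cosine — crucially never identically zero, unlike a polynomial or a finitely-differentiable spline. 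So the first step is to make this unrolling precise and isolate the "$\mathbf U$-channel" as a guaranteed source of infinitely many nonzero derivatives.

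Second, I would handle the worry that the Cheby1KAN branch $\mathbf H^{(l)}$ could destructively cancel the $\mathbf U$-channel. Here I invoke the infinite-width hypothesis together with a genericity/measure-zero argument: the learnable coefficients $C_{l,k,i,n}$, the projection matrices $\mathbf\Theta_U,\mathbf\Theta_V,\mathbf W_{\mathrm{emb}},\mathbf W_{\mathrm{out}}$, and the wavelet weights $w_1,w_2$ live in a Euclidean parameter space, and the set of parameter values for which some fixed-order derivative $\partial^m y/\partial x^m$ vanishes identically is an algebraic/analytic variety of positive codimension (the derivative is a real-analytic function of both $x$ and the parameters, and it is not the zero function — the $\mathbf U$-channel computation from the first step exhibits one parameter setting where it is nonzero). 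Hence for generic parameters, and in particular with infinite width providing enough free directions, no such cancellation occurs. I would phrase this as: the output is a finite sum of real-analytic functions, real-analyticity is preserved under differentiation, and a real-analytic function vanishing to infinite order at a point is identically zero — combined with the explicit nonvanishing witness, this rules out a spurious zero derivative.

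Third, to close the loop with the PDE-approximation claim in the surrounding text, I would note that once all finite-order derivatives are guaranteed nonzero, the composition with the final linear layer $\mathbf y=\mathbf W_{\mathrm{out}}\alpha^{(L)}+\mathbf b_{\mathrm{out}}$ preserves this property (a generic linear map cannot annihilate all derivative information), and then cite the Kolmogorov–Arnold representation theorem exactly as the paper does to conclude universal approximation of finite-order differential operators acting on $u$. The condition $L\ge 2$ enters because $\alpha^{(0)}=\mathbf U$ must be updated at least once through \eqref{eq:appdate} for the $\mathbf V$-channel and the Cheby1KAN residual to participate, though the $\mathbf U$-channel argument already works for $L\ge 1$; I would keep the hypothesis as stated for consistency with the architecture description.

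The main obstacle I anticipate is making the "no destructive cancellation" step rigorous rather than hand-wavy: one must be careful that the $\tanh$ normalization and LayerNormalization inside the Cheby1KAN layers do not conspire with the wavelet activations to force an exact cancellation on a positive-measure set of parameters. The cleanest route is the real-analyticity argument above — every building block ($\tanh$, $\sin$, $\cos$, polynomials $T_n$, affine maps, and LayerNorm away from the measure-zero locus where its denominator vanishes) is real-analytic, so $y$ is real-analytic in $(x,\text{params})$ on a full-measure domain, and a single explicit nonvanishing evaluation (the $\mathbf U$-channel) forces nonvanishing generically. I would therefore spend most of the written proof pinning down this analyticity-plus-witness structure, and treat the unrolling of \eqref{eq:appdate} and the derivative bookkeeping as routine.
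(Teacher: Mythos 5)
Your proposal follows essentially the same route as the paper's proof in Appendix~\ref{sec:appenda}: unroll the attention update to expose the wavelet channels $\mathbf{U}$ and $\mathbf{V}$, whose sine/cosine derivatives are non-zero at every order, observe that the Chebyshev branch by itself would lose derivatives beyond the polynomial degree, and rule out exact cancellation between the independently parameterized blocks by a genericity/measure-zero argument. Your real-analyticity-plus-explicit-witness phrasing is a somewhat cleaner way to justify the paper's claim that cancellation can only occur on a measure-zero parameter set, but in substance it is the same argument, yielding the same ``for almost all parameters'' conclusion.
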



Then we prove that the Jacobian matrix of the AC-PKAN model is full-rank, thereby rigorously precluding degenerate directions in the input space.

\begin{proposition}\label{pro:2}
   Let $\mathcal{N}$ be an AC-PKAN model with $L$ layers ($L \geq 2$) and infinite width. Then, the Jacobian matrix \(J_{\mathcal{N}}(\boldsymbol{x}) = \left[ \frac{\partial \mathcal{N}_i}{\partial x_j} \right]_{m \times d}\) is full rank in the input space \(\mathbb{R}^d\).
\end{proposition}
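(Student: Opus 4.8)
The plan is to reduce the claim to a rank computation for one explicit matrix and then to promote that computation to a generic statement by real analyticity, rather than to propagate the rank through the attention recursion \eqref{eq:appdate} step by step. Accordingly, first I would apply the chain rule along the factorization $\boldsymbol{x}\mapsto\mathbf{h}_0=\mathbf{W}_{\mathrm{emb}}\boldsymbol{x}+\mathbf{b}_{\mathrm{emb}}\mapsto\alpha^{(L)}\mapsto\mathbf{y}=\mathbf{W}_{\mathrm{out}}\alpha^{(L)}+\mathbf{b}_{\mathrm{out}}$, which gives $J_{\mathcal{N}}(\boldsymbol{x})=\mathbf{W}_{\mathrm{out}}\,A^{(L)}(\boldsymbol{x})\,\mathbf{W}_{\mathrm{emb}}$ with $A^{(L)}:=\partial\alpha^{(L)}/\partial\mathbf{h}_0$. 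Because the width is unbounded (in particular $d_{\mathrm{model}},d_{\mathrm{hidden}}\ge\max\{m,d\}$) and $\mathbf{W}_{\mathrm{emb}},\mathbf{W}_{\mathrm{out}}$ are learnable, for weights in general position $\mathbf{W}_{\mathrm{emb}}$ has rank $d$ and $\mathbf{W}_{\mathrm{out}}$ has rank $m$, and since generic subspaces meet transversally one gets $\operatorname{rank}J_{\mathcal{N}}(\boldsymbol{x})=\min\{m,d,\operatorname{rank}A^{(L)}(\boldsymbol{x})\}$; so it suffices to show $\operatorname{rank}A^{(L)}(\boldsymbol{x})\ge\min\{m,d\}$.

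Second, I would unroll \eqref{eq:appdate}. Setting $\mathbf{q}:=\mathbf{V}+1-\mathbf{U}$ and using $\alpha^{(l)}_0=\mathbf{H}^{(l)}+\alpha^{(l-1)}$, the update becomes $\alpha^{(l)}=\mathbf{U}+\alpha^{(l)}_0\odot\mathbf{q}$; starting from $\alpha^{(0)}=\mathbf{U}$, a one-line induction yields $\alpha^{(L)}=\mathbf{U}\odot\sum_{k=0}^{L}\mathbf{q}^{\odot k}+R$, where $\mathbf{q}^{\odot k}$ is the $k$-fold Hadamard power and $R$ collects Hadamard products of the LayerNormalized Chebyshev-layer outputs $\mathbf{H}^{(j)}$ with powers of $\mathbf{q}$. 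Differentiating, and using $\partial\mathbf{U}/\partial\mathbf{h}_0=\operatorname{diag}\!\big(\text{Wavelet}'(\mathbf{h}_0\mathbf{\Theta}_U+\mathbf{b}_U)\big)\mathbf{\Theta}_U^{\top}$ from \eqref{eq:waveact} (and the analogue for $\mathbf{V}$), one finds that $A^{(L)}$ equals a diagonal matrix times $\mathbf{\Theta}_U^{\top}$, plus a diagonal times $\mathbf{\Theta}_V^{\top}$, plus $\partial R/\partial\mathbf{h}_0$. The first block is an explicit, structurally full-rank ``skeleton'' (a diagonal with generically nonzero entries times a generic full-rank $\mathbf{\Theta}_U^{\top}$), in line with the non-vanishing-derivative conclusion of Proposition~\ref{pro:1}; the real difficulty is that the remaining additive terms could, a priori, conspire to lower the rank of the sum.

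Third, I would dispose of that difficulty by real analyticity. Every entry of $A^{(L)}(\boldsymbol{x})$---hence every $r\times r$ minor of $J_{\mathcal{N}}$ with $r=\min\{m,d\}$---is a real-analytic function of $\boldsymbol{x}$ and of the parameters, being built from $\tanh$, Chebyshev polynomials, $\sin$, $\cos$, LayerNorm (with its stabilizing $\epsilon>0$), and linear maps; hence any such minor either vanishes identically on the joint $(\boldsymbol{x},\text{parameter})$ space or only on a closed measure-zero analytic set, so it is enough to exhibit one parameter setting and one point at which some minor is nonzero. I would take all Chebyshev coefficients and LayerNorm shifts to be $0$ (so $\mathbf{H}^{(j)}\equiv\mathbf{0}$ and $R\equiv\mathbf{0}$), take $\mathbf{\Theta}_V=0$ (so $\partial\mathbf{V}/\partial\mathbf{h}_0=0$), and take $\mathbf{\Theta}_U,\mathbf{W}_{\mathrm{emb}},\mathbf{W}_{\mathrm{out}}$ in general position. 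Then each coordinate of $\alpha^{(L)}$ is the nonconstant polynomial $\phi_j(u)=u\sum_{k=0}^{L}(v_{0,j}+1-u)^{k}$ evaluated at $u=U_j$ (with $\mathbf{v}_0=\text{Wavelet}(\mathbf{b}_V)$ constant), so $A^{(L)}=\operatorname{diag}(\mathbf{s})\,\mathbf{\Theta}_U^{\top}$ with $s_j=\phi_j'(U_j)\,\text{Wavelet}'\!\big((\mathbf{h}_0\mathbf{\Theta}_U)_j+b_{U,j}\big)$; each factor of $s_j$ is a real-analytic function of $\boldsymbol{x}$ that is not identically zero, so $\mathbf{s}$ has no vanishing coordinate off a measure-zero set, and at such an $\boldsymbol{x}_0$ the matrix $J_{\mathcal{N}}(\boldsymbol{x}_0)=\mathbf{W}_{\mathrm{out}}\operatorname{diag}(\mathbf{s})\mathbf{\Theta}_U^{\top}\mathbf{W}_{\mathrm{emb}}$ has rank $\min\{m,d\}$. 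Combined with the analyticity dichotomy, this shows that, for parameters in general position, $J_{\mathcal{N}}(\boldsymbol{x})$ has rank $\min\{m,d\}$ on an open, dense, full-measure subset of $\mathbb{R}^{d}$ (which is the honest reading of ``full rank in the input space'').

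The step I expect to be the main obstacle is exactly the one flagged above. A naive induction asserting that $\operatorname{rank}A^{(l)}$ is non-decreasing in $l$ does not go through, because \eqref{eq:appdate} combines matrices additively and addition can destroy rank; the analyticity-plus-single-witness device circumvents this, but it shifts the burden onto verifying that the degenerate configuration chosen as a witness is genuinely inside the AC-PKAN class (in particular that LayerNorm stays well defined---guaranteed by $\epsilon>0$---and that the resulting diagonal skeleton is not identically singular) and onto checking that the relevant minors of $J_{\mathcal{N}}$, viewed jointly in $\boldsymbol{x}$ and all parameters, are not identically zero. That bookkeeping, although elementary, is where the actual work concentrates.
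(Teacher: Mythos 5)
Your proposal is essentially correct, but it takes a genuinely different route from the paper. The paper factors $\partial y_k/\partial x=\mathbf{w}_k^\top\,\partial\alpha^{(L)}/\partial x$ and then argues by induction over layers: it invokes the infinite-width (infinite-dimensional parameter space) assumption to construct mutually orthogonal rows of $W_{\text{out}}$ and linearly independent columns of $\partial\alpha^{(L)}/\partial x$, propagating a full-rank column space through the decomposition $\partial\alpha^{(L)}/\partial x=\Gamma_L+\Phi\,\partial\alpha^{(L-1)}/\partial x$ via ``parameter freedom,'' and then finishes with rank--nullity, a contradiction excluding nonzero null vectors, and Proposition~\ref{pro:1}, conceding full rank only off a measure-zero set. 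You instead avoid any layer-by-layer rank propagation (correctly observing that additive combination of Jacobian terms can destroy rank, which is precisely the fragile step in the paper's inductive ``dimensional extension''): you unroll the attention recursion in closed form, exhibit one explicit degenerate-but-admissible parameter witness (zero Chebyshev coefficients and LayerNorm shifts, $\mathbf{\Theta}_V=0$) at which $J_{\mathcal{N}}=\mathbf{W}_{\text{out}}\operatorname{diag}(\mathbf{s})\mathbf{\Theta}_U^\top\mathbf{W}_{\text{emb}}$ visibly has rank $\min\{m,d\}$ at generic $\boldsymbol{x}$, and then promote this to parameters in general position by the real-analyticity dichotomy for the $\min\{m,d\}$-minors jointly in $(\boldsymbol{x},\theta)$. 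What your approach buys: it does not actually need infinite width (only $d_{\text{model}},d_{\text{hidden}}\ge\max\{m,d\}$), it replaces the paper's informal Hilbert-space construction with a checkable witness-plus-genericity argument, and it makes the quantification explicit (generic parameters, full rank on an open dense full-measure set of inputs), which matches the paper's own ``except for a measure-zero set'' conclusion; what it gives up is the paper's constructive flavor and its direct reuse of Proposition~\ref{pro:1}. Two small caveats in your write-up, neither fatal: the general unrolled formula for $R$ is only heuristic (the $\mathbf{H}^{(j)}$ themselves depend on $\mathbf{U},\mathbf{V}$ through $\alpha^{(j-1)}$), but you only use it at the witness where $\mathbf{H}^{(j)}\equiv 0$; and the blanket identity $\operatorname{rank}J=\min\{m,d,\operatorname{rank}A^{(L)}\}$ for generic outer weights is likewise not needed once the witness-plus-analyticity argument is in place.
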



This property effectively addresses the internal rank diminution issue of Cheby1KAN networks discussed in Section~\ref{sub:rank_diminution}, and also ensures stable gradient backpropagation, thereby preventing rank-deficiency-induced training failures in AC-PKAN.The complete mathematical derivations are provided in Appendix~\ref{math_proof}.

\subsection{Residual-and-Gradient Based Attention}
\label{sub:rga}

\begin{figure*}[t]
    \centering
    \includegraphics[width=\linewidth]{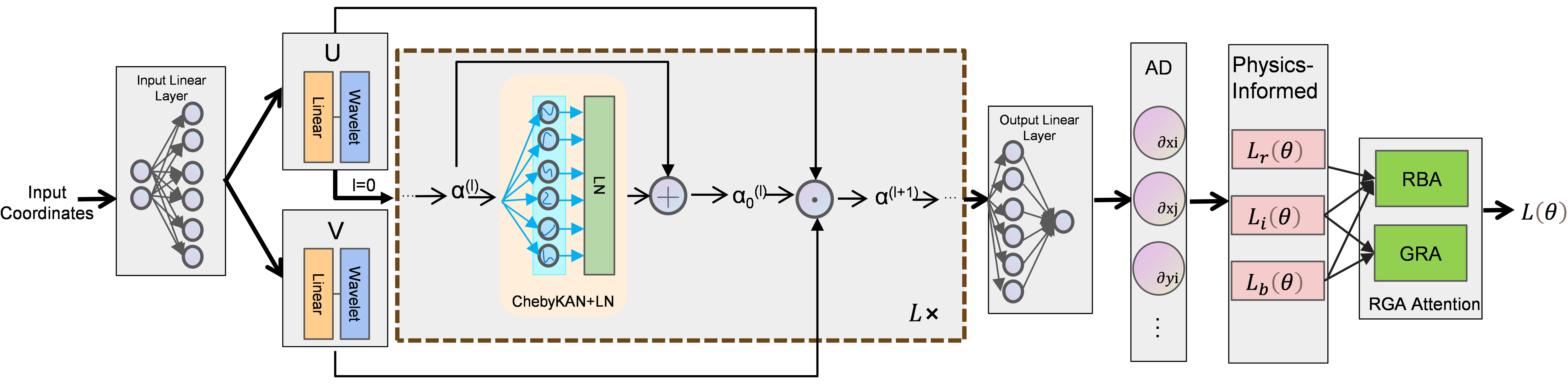}
    \caption{Architecture of the complete AC-PKAN model. It combines its internal attention architecture with an external attention strategy, yielding a weighted loss optimized to obtain the predicted solution.}
    \label{fig:ac-pkan_architecture}
\end{figure*}

In the canonical PINN formulation, the loss is split into an unlabeled PDE-residual term $\mathcal{L}_{r}$ and a labeled term $\mathcal{L}_{d}$ that enforces boundary/initial constraints and matches available data samples.
 To improve optimization efficiency and accuracy and to correct the loss imbalance introduced by Chebyshev bases, we propose Residual Gradient Attention (RGA), an adaptive mechanism that rescales each loss term according to its residual magnitude and its gradient norm.
 This approach ensures balanced and efficient optimization, particularly addressing challenges with boundary and initial condition losses.

\paragraph{Residual-Based Attention (RBA)}
Residual‐Based Attention (RBA) dynamically amplifies loss terms with the largest point‐wise residuals, assigning a tensor of weights \(w_{i,j}^{\mathrm{RBA}}\) to each loss component \(\mathcal{L}_i\) (\(i\in\{r,d\}\)) at location \(j\)~\citep{anagnostopoulos2024residual}:
\begin{equation}
w_{i,j}^{\mathrm{RBA}}
\leftarrow (1-\eta)\,w_{i,j}^{\mathrm{RBA}}
+\eta\,\frac{\lvert\mathcal{L}_{i,j}\rvert}{\max_{j}\lvert\mathcal{L}_{i,j}\rvert},
\end{equation}
where \(\eta\) is the RBA learning rate and \(\max_j\lvert\mathcal{L}_{i,j}\rvert\) normalizes by the maximal residual. Residual-Based Attention (RBA) is a lightweight, pointwise weighting scheme that complements the Cheby1KAN layer. Cheby1KAN captures strong nonlinear structure and complex distributions but can exhibit slow or unstable convergence. RBA inserts a self-adjusting feedback loop that reweights local training signals according to residual statistics, focusing learning on poorly fitted locations and reducing the influence of noisy or saturated terms. This synergy alleviates numerical optimization difficulties and enhances global convergence efficiency.

\paragraph{Gradient-Related Attention (GRA)}
Due to the Cheby1KAN layer's utilization of high-order Chebyshev polynomials, large coefficients and derivative magnitudes are introduced, resulting in an increased maximum eigenvalue of the Hessian and exacerbating gradient flow stiffness. Additionally, nonlinear operations such as $\cos(x)$ and $\arccos(x)$ create regions of vanishing and exploding gradients, respectively. The heightened nonlinearity from these high-degree polynomials further leads to imbalanced loss gradients, intensifying dynamic stiffness. Therefore, we employ Gradient-Related Attention (GRA).

GRA dynamically adjusts weights based on gradient norms of different loss components, promoting balanced training. As a \textbf{scalar} applied to one entire loss term, GRA addresses the imbalance where gradient norms of the PDE residual loss significantly exceed those of the data fitting loss~\citep{wang2021understanding}, which can lead to pathological gradient flow issues~\citep{wang2022and,fang2023ensemble}. Our mechanism smooths weight adjustments, preventing the network from overemphasizing residual loss terms and neglecting other essential physical constraints, thus enhancing convergence and stability.

The GRA weight $\lambda^{\text{GRA}}$ is computed as:
\begin{equation}
    \hat{\lambda}_{d}^{\text{GRA}} =  \frac{G_{r}^{\max}}{\epsilon + \overline{G}_{d}} ,
\end{equation}
where $G_{r}^{\max} = \max_{p} \left\| \frac{\partial \mathcal{L}_{r}}{\partial \theta_{p}} \right\|$ is the maximum gradient norm of the residual loss, $\overline{G}_{d} = \frac{1}{P} \sum_{p=1}^{P} \left\| \frac{\partial \mathcal{L}_{d}}{\partial \theta_{p}} \right\|$ is the average gradient norm for $\mathcal{L}_{d}$, $P$ is the number of model parameters, and $\epsilon$ prevents division by zero.

To smooth the GRA weights over iterations, we apply an exponential moving average:
\begin{equation}
    \lambda_{d}^{\text{GRA}} \leftarrow (1 - \beta_{w}) \lambda_{d}^{\text{GRA}} + \beta_{w} \hat{\lambda}_{d}^{\text{GRA}},
\end{equation}
where $\beta_{w}$ is the learning rate for the GRA weights. We enforce a minimum value for numerical stability:
\begin{equation}
    \lambda_{d}^{\text{GRA}} \leftarrow \max \left( \lambda_{d}^{\text{GRA}}, e + \epsilon \right).
\end{equation}

GRA addresses the aforementioned issues by stabilizing the gradient flow, thereby ensuring more efficient and reliable training of the network. By combining our AC-PKAN internal architecture with the external RGA mechanism, we obtain the complete AC-PKAN model. Figure~\ref{fig:ac-pkan_architecture} provides a detailed illustration of our model structure.

\begin{figure*}[t]
    \centering
    \includegraphics[width=0.20\linewidth]{./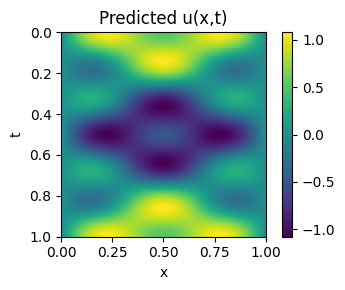}
    \includegraphics[width=0.20\linewidth]{./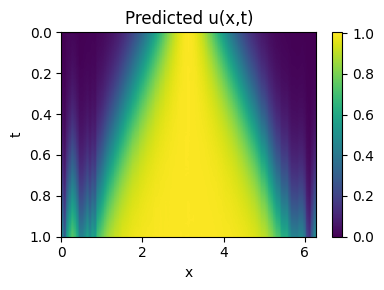}
    \includegraphics[width=0.20\linewidth]{./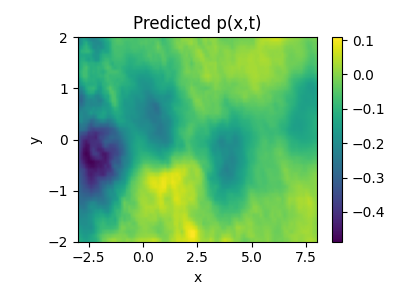}
    \includegraphics[width=0.20\linewidth]{./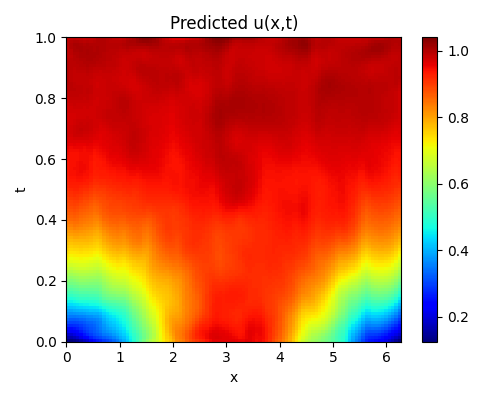}

    \includegraphics[width=0.20\linewidth]{./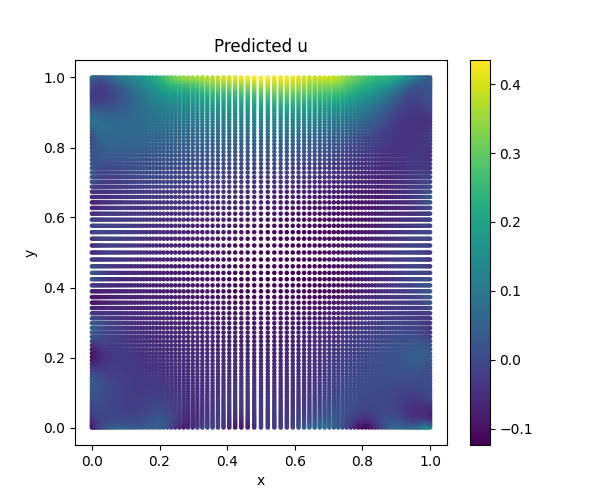}
    \includegraphics[width=0.20\linewidth]{./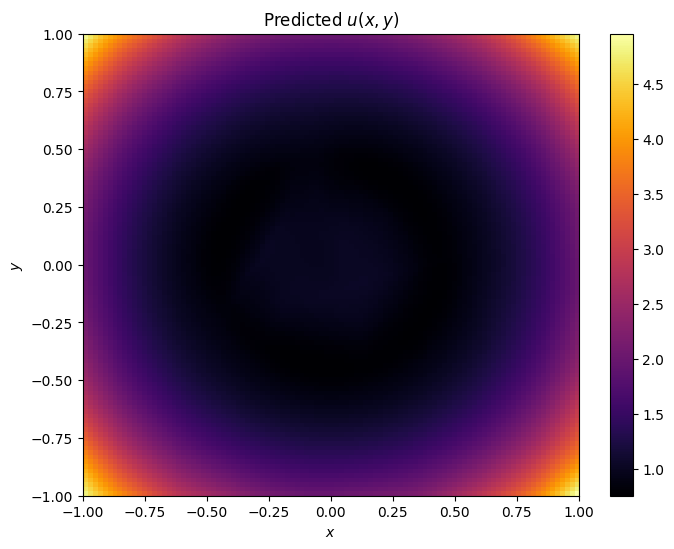}
    \includegraphics[width=0.20\linewidth]{./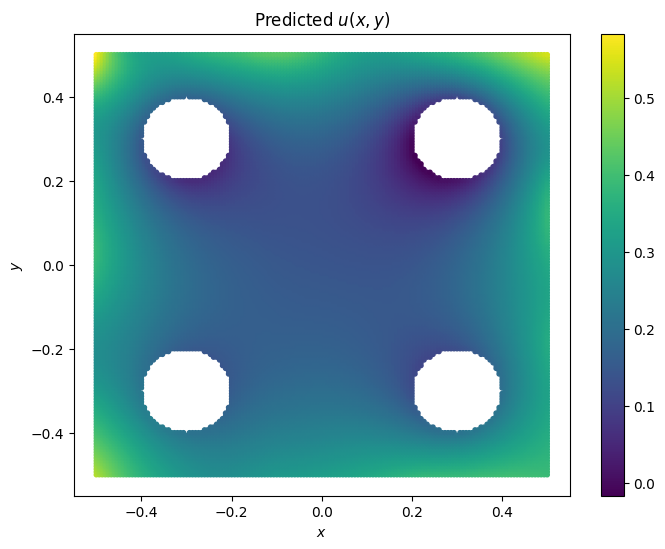}
    \includegraphics[width=0.20\linewidth]{./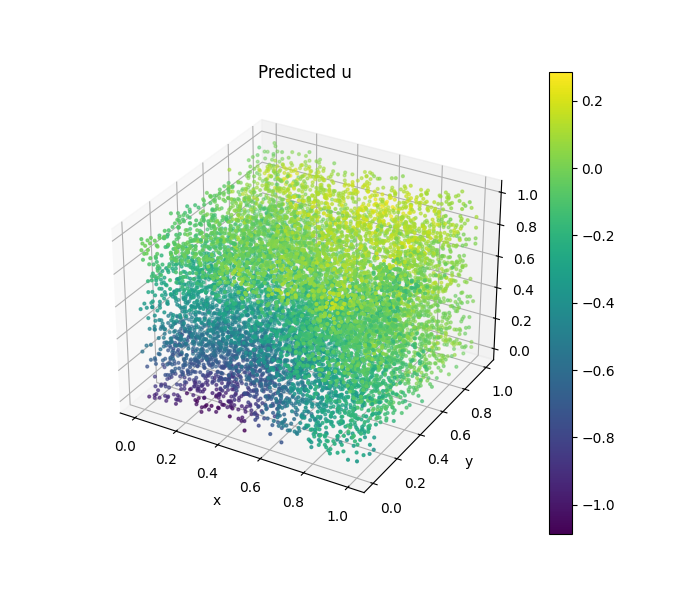}

    \captionsetup{justification=raggedright,singlelinecheck=false}
    \caption{
        Visualization of AC-PKAN's predicted values for PDE experiments: (Row 1) 1D-Wave, 1D-Reaction, 2D NS Cylinder, 1D-Conv.-Diff.-Reac.; (Row 2) 2D Lid-driven Cavity, Heterogeneous Problem, Complex Geometry, and 3D Point-Cloud.
    }
    \label{fig:combined_visions}
\end{figure*}

\begin{algorithm}[H]
  \caption{Implementation of the RGA Mechanism}
  \label{alg:RGA}
  \begingroup
    \footnotesize                       
    \algrenewcommand\algorithmicindent{0.9em} 

    \textbf{Data:} Model parameters $\theta$, total number of parameters $P$, learning rate $\alpha$, hyperparameters $\eta,\ \beta_w,\ \epsilon$.

    \textbf{Initialization:} $w_{r,d}^{\mathrm{RBA}} \gets 0,\quad \lambda_{d}^{\mathrm{GRA}} \gets 1$.

    \begin{algorithmic}[1]
      \For{each training iteration}
        \State Compute gradients:
          \[
            \nabla_{\theta} \mathcal{L}_{i} \gets \dfrac{\partial \mathcal{L}_{i}}{\partial \theta}, \quad i \in \{r,d\}
          \]
        \State Update RBA weights for each data point $j$:
          \[
            w_{i,j}^{\mathrm{RBA}} \gets (1-\eta) w_{i,j}^{\mathrm{RBA}} + \eta\left(\dfrac{|\mathcal{L}_{i,j}|}{\max_j |\mathcal{L}_{i,j}|}\right),\quad i\in\{r,d\}
          \]
        \State Compute gradient norms:
          \[
            G_{r}^{\max}\gets \max_p \|\nabla_{\theta_p}\mathcal{L}_r\|,\quad
            \overline{G}_{i}\gets \dfrac{1}{P}\sum_{p=1}^P \|\nabla_{\theta_p}\mathcal{L}_i\|,\quad i\in\{d\}
          \]
        \State Update GRA weights:
          \[
            \hat{\lambda}_i \gets \dfrac{G_r^{\max}}{\epsilon+\overline{G}_i},\quad
            \lambda_i^{\mathrm{GRA}}\gets (1-\beta_w)\lambda_i^{\mathrm{GRA}} + \beta_w \hat{\lambda}_i,\quad
            \lambda_i^{\mathrm{GRA}}\gets \max(\epsilon,\,\lambda_i^{\mathrm{GRA}}),\quad i\in\{d\}
          \]
        \State Compute total loss:
          \[
            \mathcal{L}_{\mathrm{RGA}} \gets \lambda_r\, w_r^{\mathrm{RBA}}\, \mathcal{L}_r
            \;+\;
            \sum_{i\in\{d\}} \lambda_i\, w_i^{\mathrm{RBA}}\, \log(\lambda_i^{\mathrm{GRA}})\, \mathcal{L}_i
          \]
        \State Update model parameters:
          \[
            \theta \gets \theta - \alpha \nabla_{\theta} \mathcal{L}_{\mathrm{RGA}}
          \]
      \EndFor
    \end{algorithmic}
  \endgroup
\end{algorithm}

\paragraph{Combined Attention Mechanism}

To equilibrate the magnitudes of GRA and RBA weights, we apply a logarithmic transformation to the GRA weights when incorporating them into the loss terms, while retaining their original form during weight updates. This preserves the direct relationship between weights and gradient information, ensuring sensitivity to discrepancies between residual and data gradients. The logarithmic transformation mitigates magnitude disparities, preventing imbalances among loss terms. It enables GRA weights to adjust more rapidly when discrepancies are minor and ensures stable updates when discrepancies are substantial. The coefficient $\lambda^{\text{GRA}}$ not only attains excessively large values in scale but also exhibits a broad range of variation. In the training process, $\lambda^{\text{GRA}}$ rapidly increases from zero to very large values, demonstrating a wide dynamic range which is shown in Figure~\ref{fig:GRA_progression} in Appendix~\ref{sec:appendc}. The logarithmic transformation significantly constrains this range; without it, the model cannot accommodate drastic changes in $\lambda^{\text{GRA}}$, and rigid manual scaling factors further exacerbate the imbalance among loss terms, ultimately causing training failure. 

By integrating point-wise RBA with term-wise GRA, the total loss under the RGA mechanism is defined as:
\begin{equation}
        \mathcal{L}_{\text{RGA}} = \lambda_{r} w_{r}^{\text{RBA}} \mathcal{L}_{r} + \lambda_{d} w_{d}^{\text{RBA}} \log\left( \lambda_{d}^{\text{GRA}} \right) \mathcal{L}_{d} .
\end{equation}

where $w^{\text{RBA}}$ are the RBA weights, and $\lambda_{d}^{\text{GRA}}$ are the GRA weights for boundary/initial conditions or available data samples.

This formulation reweights the residual loss based on its magnitude and adjusts the boundary and initial condition losses according to both their magnitudes and gradient norms, promoting balanced and focused training through a dual attention mechanism. 

RGA enhances PINNs by dynamically adjusting loss weights based on residual magnitudes and gradient norms. By integrating RBA and GRA, it balances loss contributions, preventing any single component from dominating the training process. This adaptive reweighting accelerates and stabilizes convergence, focusing on challenging regions with significant errors or imbalanced gradients. Consequently, RGA provides a robust framework for more accurate and efficient solutions to complex differential equations, performing well in our AC-PKAN model and potentially benefiting other PINN variants which is discussed in detail in appendix~\ref{sub:ab2}.

\section{Experiments}
\label{sec:exp}

\begin{table*}[t]
    \centering
    \resizebox{\linewidth}{!}{
    \begin{tabular}{lcccccccccccccccc}
        \hline\hline
        \multirow{2}{*}{\textbf{Model}} & \multicolumn{2}{c}{\textbf{1D-Wave}} & \multicolumn{2}{c}{\textbf{1D-Reaction}} & \multicolumn{2}{c}{\textbf{2D NS Cylinder}} & \multicolumn{2}{c}{\textbf{1D-Conv.-Diff.-Reac.}} & \multicolumn{2}{c}{\textbf{2D Lid-driven Cavity}} \\
        & \textbf{rMAE} & \textbf{rRMSE} & \textbf{rMAE} & \textbf{rRMSE} & \textbf{rMAE} & \textbf{rRMSE} & \textbf{rMAE} & \textbf{rRMSE} & \textbf{rMAE} & \textbf{rRMSE} \\
        \hline
        \textbf{PINN} & 0.3182 & 0.3200 & 0.9818 & 0.9810 & 5.8378 & 4.0529 & 0.0711 & 0.1047 & 0.6219 & 0.6182 \\
        \textbf{QRes} & 0.3507 & 0.3485 & 0.9844 & 0.9849 & 25.8970 & 17.9767 & 0.0722 & 0.1062 & \textbf{0.5989} & \textbf{0.5674} \\
        \textbf{FLS} & 0.3810 & 0.3796 & 0.9793 & 0.9773 & 12.4564 & 8.6473 & \textbf{0.0707} & 0.1045 & 0.6267 & 0.6267 \\
        \textbf{PINNsFormer} & \textbf{0.2699} & \textbf{0.2825} & \textbf{\underline{0.0152}} & \textbf{\underline{0.0300}} & \textbf{0.3843} & \textbf{0.2801} & 0.0854 & 0.0927 & \text{OoM} & \text{OoM} \\
        \textbf{Cheby1KAN} & 1.1240 & 1.0866 & 0.0617 & 0.1329 & 3.7107 & 2.7379 & 0.0992 & 0.1644 & \textbf{\underline{0.5689}} & \textbf{\underline{0.5370}} \\
        \textbf{Cheby2KAN} & 1.1239 & 1.0865 & 1.0387 & 1.0256 & 72.1708 & 50.1039 & 1.2078 & 1.2059 & 6.1457 & 3.9769 \\
        \textbf{AC-PKAN (Ours)} & \textbf{\underline{0.0011}} & \textbf{\underline{0.0011}} & \textbf{0.0375} & \textbf{0.0969} & \textbf{\underline{0.2230}} & \textbf{\underline{0.2182}} & \textbf{\underline{0.0114}} & \textbf{\underline{0.0142}} & 0.6374 & 0.5733 \\
        \textbf{KINN} & 0.3466 & 0.3456 & 0.1314 & 0.2101 & 4.5306 & 3.1507 & 0.0721 & 0.1058 & \text{OoM} & \text{OoM} \\
        \textbf{rKAN} & 247.7560 & 2593.0750 & 65.2014 & 54.8567 & \text{NaN} & \text{NaN} & 543.8576 & 3053.6257 & \text{OoM} & \text{OoM} \\
        \textbf{FastKAN} & 0.5312 & 0.5229 & 0.5475 & 0.6030 & 25.8970 & 1.4085 & 0.0876 & 0.1219 & \text{OoM} & \text{OoM} \\
        \textbf{fKAN} & 0.4884 & 0.4768 & 0.0604 & 0.1033 & 3.0766 & 2.1403 & 0.1186 & \textbf{0.0794} & 0.7639 & 0.7366 \\
        \textbf{FourierKAN} & 1.1356 & 1.1018 & 1.4542 & 1.4217 & 9.3295 & 8.0346 & 0.91052 & 0.9708 & \text{OoM} & \text{OoM} \\
        \hline\hline
    \end{tabular}
    }
    \caption{Combined experimental results across Failure PINN Modes. Results are organized from left to right in the following order: 1D-Wave, 1D-Reaction, 2D NS Cylinder, 1D-Conv.-Diff.-Reac., and 2D Lid-driven Cavity.}
    \label{tbl:failure}
    \vspace{-0.2in}
\end{table*}

\begin{table*}[t]
    \centering
    \scriptsize  
    \setlength{\tabcolsep}{14pt}  
    \renewcommand{\arraystretch}{1.0}  
    \resizebox{\linewidth}{!}{
    \begin{tabular}{lcccccc}
        \hline\hline
        \multirow{2}{*}{\textbf{Model}} & \multicolumn{2}{c}{\textbf{Heterogeneous Problem}} & \multicolumn{2}{c}{\textbf{Complex Geometry}} & \multicolumn{2}{c}{\textbf{3D Point-Cloud}} \\
        & \textbf{rMAE} & \textbf{rRMSE} & \textbf{rMAE} & \textbf{rRMSE} & \textbf{rMAE} & \textbf{rRMSE} \\
        \hline
        \textbf{PINN} & 0.1662 & 0.1747 & 0.9010 & 0.9289 & 3.0265 & 2.4401 \\
        \textbf{QRes} & 0.1102 & \textbf{\underline{0.1140}} & 0.9024 & 0.9289 & 3.6661 & 2.8897 \\
        \textbf{FLS} & 0.1701 & 0.1789 & 0.9021 & 0.9287 & 3.1881 & 2.5629 \\
        \textbf{PINNsFormer} & \textbf{\underline{0.1008}} & \textbf{0.1610} & \textbf{0.8851} & \textbf{0.8721} & \text{OoM} & \text{OoM} \\
        \textbf{Cheby1KAN} & 0.1404 & 0.2083 & 0.9026 & 0.9244 & 2.4139 & 1.9646 \\
        \textbf{Cheby2KAN} & 0.4590 & 0.5155 & 0.9170 & 1.0131 & 4.9177 & 3.5084 \\
        \textbf{AC-PKAN (Ours)} & \textbf{0.1063} & 0.1817 & \textbf{\underline{0.5452}} & \textbf{\underline{0.5896}} & \textbf{\underline{0.3946}} & \textbf{\underline{0.3403}} \\
        \textbf{KINN} & 0.1599 & 0.1690 & 0.9029 & 0.9261 & \text{OoM} & \text{OoM} \\
        \textbf{rKAN} & 24.8319 & 380.5582 & 23.5426 & 215.4764 & 366.5741 & 2527.1180 \\
        \textbf{FastKAN} & 0.1549 & 0.1624 & 0.9034 & 0.9238 & \text{OoM} & \text{OoM} \\
        \textbf{fKAN} & 0.1179 & 0.1724 & 0.9043 & 0.9303 & 2.6279 & 2.2051 \\
        \textbf{FourierKAN} & 0.4588 & 0.5154 & 1.4455 & 1.5341 & \textbf{0.9314} & \textbf{1.0325} \\
        \hline\hline
    \end{tabular}
    }
    \caption{Combined experimental results across Complex Engineering Environments. Results are organized from left to right in the following order: Heterogeneous Problem, Complex Geometry, and 3D Point-Cloud.}
    \label{tbl:environment}
    \vspace{-0.2in}
\end{table*}

\textbf{Goal.} Our empirical study highlights three principal strengths of AC-PKAN: (1) its internal architecture delivers powerful symbolic representation and function-approximation capabilities; (2) it significantly improves generalization abilities and mitigates failure modes compared to PINNs and other KAN variants; and (3) it achieves superior performance in complex real-world engineering environments. We evaluate our method on three task suites comprising nine benchmarks and compare it to 12 representative architectures, including PINN, PINNsFormer, KAN, and fKAN. Although operator learning frameworks that rely on large volumes of labeled data have recently dominated SciML~\citep{lu2019deeponet, li2020fourier, li2020neural, tripura2022wavelet, calvello2024continuum}, our work remains within the established remit of PINN refinement studies, which focus on unsupervised or weakly supervised settings~\citep{jagtap2020extended, yu2022gradient, zhang2025bo, hou2024reconstruction, li2023adversarial}. Operator-learning methods typically require dense supervision and therefore address a different set of trade-offs compared to physics-informed approaches. Nevertheless, we include a head-to-head comparison of operator-learning and PINN-variant baselines under sparse-data conditions in~\Cref{sub:ope}. The experimental setup was inspired by methodologies in~\citep{ss2024chebyshev, hao2023pinnacle, wang2024kolmogorov, zhao2023pinnsformer, wang2023expert}. In all experiments, the best results are highlighted in bold italics, and the second-best results in bold. For the formal definitions of the evaluation metrics and detailed descriptions of the experimental setup, please refer to~\cref{sec:appendb}.

\subsection{Complex Function Fitting}
\label{com_fuc_fit}

We evaluated the AC-PKAN Simplified model, which retains only the internal architecture, on a challenging function interpolation benchmark and compared its performance to a PINN implemented as an MLP, the original KAN, and several KAN variants. Detailed experimental setups and results are provided in Appendices~\ref{sec:appendb} and~\ref{sec:appendc}.

As shown in Figure~\ref{fig:function_fitting_convergence}, the AC-PKAN Simplified model converges more rapidly than MLPs, KAN, and most KAN variants, achieving lower final losses. While Cheby2KAN and FourierKAN demonstrate faster convergence, our model produces smoother fitted curves and exhibits greater robustness to noise, effectively preventing overfitting in regions with high-frequency variations.
Performance metrics are presented in Table~\ref{tab:model_performance_complex_function_fitting}.

\begin{figure}[!t]
  \centering
  \captionsetup[sub]{aboveskip=2pt,belowskip=2pt}

  \begin{subfigure}[b]{0.46\textwidth}  
    \centering
    \resizebox{0.85\linewidth}{!}{%
      \begin{tabular}{@{}lccc@{}}
        \toprule
        \textbf{Model} & \textbf{rMAE} & \textbf{rMSE} & \textbf{Loss} \\
        \midrule
        Cheby1KAN   & 0.0179 & 0.0329 & \textbf{0.0068} \\
        Cheby2KAN   & 0.0189 & 0.0313 & 0.0079 \\
        MLP         & 0.0627 & 0.1250 & 0.1410 \\
        AC-PKAN\_s  & \textbf{0.0177} & \textbf{0.0311} & 0.0081 \\
        KAN         & \textbf{\underline{0.0145}} & \textbf{\underline{0.0278}} & 0.0114 \\
        rKAN        & 0.0458 & 0.0783 & 0.1867 \\
        fKAN        & 0.0858 & 0.1427 & 0.1722 \\
        FastKAN     & 0.0730 & 0.1341 & 0.1399 \\
        FourierKAN  & 0.0211 & 0.0353 & \textbf{\underline{0.0063}} \\
        \bottomrule
      \end{tabular}%
    }
    \caption{Comparison of test rMAE, rMSE, and training Loss} 
    \label{tab:model_performance_complex_function_fitting}
  \end{subfigure}%
  \hfill
  \begin{subfigure}[b]{0.50\textwidth}
    \centering
    \includegraphics[width=\linewidth]{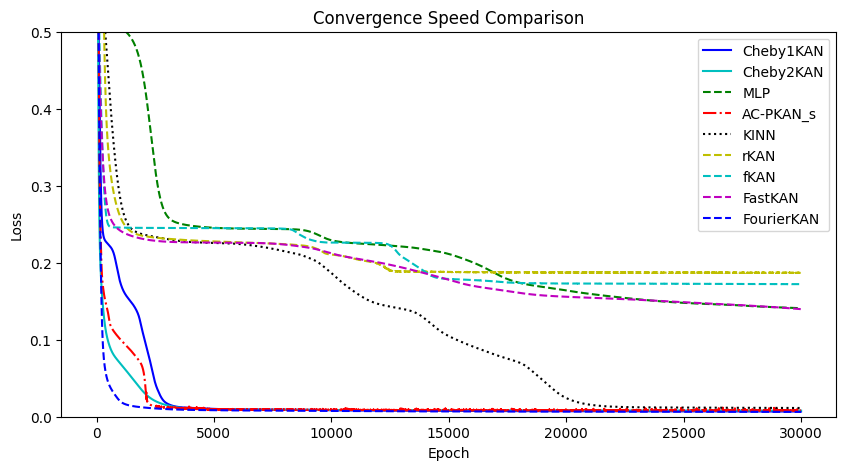}
    \caption{Convergence Comparison of Nine Models}
    \label{fig:function_fitting_convergence}
  \end{subfigure}

\end{figure}
\subsection{Mitigating Failure Modes in PINNs}

We assessed the AC-PKAN model on five complex PDEs known as PINN failure modes—the 1D-Wave PDE, 1D-Reaction PDE, 2D Navier–Stokes Flow around a Cylinder, 1D Convection-Diffusion-Reaction and 2D Navier–Stokes Lid-driven Cavity Flow~\citep{mojgani2022lagrangian, daw2022mitigating, krishnapriyan2021characterizing}—to demonstrate its superior generalization ability compared to other PINN variants. In these cases, optimization often becomes trapped in local minima, leading to overly smooth approximations that deviate from true solutions.

Evaluation results are summarized in Table~\ref{tbl:failure}, with detailed PDE formulations and setups in Appendix~\ref{sec:appendb}. Prediction for AC-PKAN are shown in Figure ~\ref{fig:combined_visions} and additional plots including the analysis of loss landscapes are in Appendix~\ref{sec:appendc}.

AC-PKAN significantly outperforms nearly all baselines, achieving the lowest or second-lowest test errors, thus more effectively mitigating failure modes than the previous SOTA method, PINNsFormer. Other baselines remain stuck in local minima, failing to optimize the loss effectively. These results highlight the advantages of AC-PKAN in generalization and approximation accuracy over conventional PINNs, KANs, and existing variants.


\begin{table}[h]
  \centering
  \scriptsize                          
  \setlength{\tabcolsep}{14pt}         
  \renewcommand{\arraystretch}{1.0}    
  \resizebox{\linewidth}{!}{%
    \begin{tabular}{lcccc}
      \hline\hline
      \textbf{Model}   & \textbf{rMAE (RGA)}         & \textbf{rRMSE (RGA)}        & \textbf{rMAE (No RGA)}            & \textbf{rRMSE (No RGA)}           \\
      \hline
      PINN             & 0.0914                      & 0.0924                     & \textbf{0.3182}                   & \textbf{0.3200}                   \\
      PINNsFormer      & OoM                         & OoM                        & \textbf{\underline{0.2699}}       & \textbf{\underline{0.2825}}       \\
      QRes             & 0.2204                      & 0.2184                     & 0.3507                            & 0.3485                            \\
      FLS              & 0.1610                      & 0.1617                     & 0.3810                            & 0.3796                            \\
      Cheby1KAN        & 0.0567                      & 0.0586                     & 1.1240                            & 1.0866                            \\
      Cheby2KAN        & 1.0114                      & 1.0048                     & 1.1239                            & 1.0865                            \\
      AC-PKAN (Ours)          & \textbf{\underline{0.0011}} & \textbf{\underline{0.0011}} & 0.4549                            & 0.4488                            \\
      KINN             & \textbf{0.0479}             & \textbf{0.0486}            & 0.3466                            & 0.3456                            \\
      rKAN             & NaN                         & NaN                        & 247.7560                          & 2593.0750                         \\
      FastKAN          & 0.1348                      & 0.1376                     & 0.5312                            & 0.5229                            \\
      fKAN             & 0.2177                      & 0.2149                     & 0.4884                            & 0.4768                            \\
      FourierKAN       & 1.0015                      & 1.0001                     & 1.1356                            & 1.1018                            \\
      \hline\hline
    \end{tabular}%
  }
  \captionsetup{skip=4pt}
  \caption{Comparison of performance metrics in the 1D-Wave experiment with and without the RGA module applied.}
  \label{tab:wave_rga_comparison}
  \vspace{-0.2in}
\end{table}

\subsection{PDEs in Complex Engineering Environments}

We further evaluated AC-PKAN across three challenging scenarios: heterogeneous environments, complex geometric boundary conditions, and three-dimensional spatial point clouds. Literature indicates that PINNs encounter difficulties with heterogeneous problems due to sensitivity to material properties~\citep{aliakbari2023ensemble}, significant errors near boundary layers~\citep{piao2024domain}, and convergence issues~\citep{sumanta2024physics}. Additionally, original KANs perform poorly with complex geometries~\citep{wang2024kolmogorov}. The sparsity, irregularity, and high dimensionality of unstructured 3D point cloud data hinder PINNs from effectively capturing spatial features, resulting in suboptimal training performance~\citep{chen2022deepurbandownscale}. We applied AC-PKAN to solve Poisson equations within these contexts.

\begin{wraptable}{r}{0.45\textwidth}
\vspace{-10pt}
  \centering
  \small
  \setlength{\tabcolsep}{6pt}      
  \renewcommand{\arraystretch}{1.0} 
  \begin{tabular}{lcc}
    \toprule
    \textbf{Model}              & \textbf{rMAE}   & \textbf{rRMSE}  \\
    \midrule
    \textbf{AC-PKAN}            & \textbf{0.0011} & \textbf{0.0011} \\
    AC-PKAN (no GRA)            & 0.0779          & 0.0787          \\
    AC-PKAN (no RBA)            & 0.0494          & 0.0500          \\
    AC-PKAN (no RGA)            & 0.4549          & 0.4488          \\
    AC-PKAN (no Wavelet)        & 0.0045          & 0.0046          \\
    AC-PKAN (no Encoder)        & 0.0599          & 0.0584          \\
    AC-PKAN (no MLPs)           & 1.0422          & 1.0246          \\
    \bottomrule
  \end{tabular}
  \caption{Ablation study on the 1D-wave equation, demonstrating the effect of removing each module from AC-PKAN.}
  \label{tab:ablation_study}
  \vspace{-20pt}
\end{wraptable}

Detailed PDE formulations are in Appendix~\ref{sec:appendb}, and detailed experimental results are illustrated in Appendix~\ref{sec:appendc}. Summarized in Table~\ref{tbl:environment} and partially shown in Figure~\ref{fig:combined_visions}, the results indicate that AC-PKAN consistently achieves the best or second-best performance. It demonstrates superior potential in solving heterogeneous problems without subdomain division and exhibits promising application potential in complex geometric boundary problems where most models fail.

\subsection{Ablation Study}
\paragraph{Module importance.}
Ablation experiments for the module importance on the 1D-Wave equation (Table~\ref{tab:ablation_study}) confirm that each module in our model is crucial. Removing any module leads to a significant performance decline, especially the MLPs module. These findings suggest that the KAN architecture alone is insufficient for complex tasks, validating our integration of MLPs with the Cheby1KAN layers.

\paragraph{Transferability of RGA.}
\label{sub:ab2}

Table \ref{tab:wave_rga_comparison} evaluates our RGA on twelve alternative PINN variants.  Except for PINNsFormer (out-of-memory due to pseudo-sequence inflation) and rKAN (gradient blow-up), every model benefits markedly: average rMAE drops by \(36\%\) and rRMSE by \(34\%\).  Nonetheless, none surpass \textsc{AC-PKAN}, whose coupled architecture and RGA still attain the lowest errors by two orders of magnitude, underscoring both the standalone value of RGA and the holistic superiority of \textsc{AC-PKAN}.

\paragraph{Effect of Logarithmic Transformation in the RGA Module.}
\label{sec:appende}

In this ablation study, we investigated the impact of removing the logarithmic transformation in the RGA module across five PDE experimental tasks. To compensate for the absence of the logarithmic scaling, we adjusted the scaling factors to smaller values. Specifically, we employed the original RGA design to pre-train the models for several epochs, during which very large values of \( \lambda^{\text{GRA}} \) were obtained. We fix the PDE residual scale to one and set the data loss scales, including boundary and initial condition terms, to be inversely proportional to the current scale of $\lambda^{\mathrm{GRA}}$, which keeps the different loss contributions at comparable magnitudes.

The performance metrics with and without the logarithmic transformation are summarized in Table~\ref{tab:ablation_combined}.

\begin{table}[h]
\vspace{-10pt}
\centering
\begin{tabular}{lcc|cc}
\hline\hline
\textbf{Equation} & \multicolumn{2}{c}{\textbf{Without Log}} & \multicolumn{2}{c}{\textbf{With Log}} \\
                  & \textbf{rMAE}         & \textbf{rRMSE}         & \textbf{rMAE}         & \textbf{rRMSE}         \\
\hline
2D NS Cylinder      & 532.2411               & 441.0240               & 0.2230                & 0.2182                 \\
1D Wave             & 0.7686                 & 0.7479                 & 0.0011                & 0.0011                 \\
1D Reaction         & 2.2348                 & 2.2410                 & 0.0375                & 0.0969                 \\
Heterogeneous Problem & 10.0849             & 9.6492                 & 0.1063                & 0.1817                 \\
Complex Geometry    & 164.4283              & 158.7840               & 0.5452                & 0.5896                 \\
\hline\hline
\end{tabular}
\captionsetup{skip=4pt}
\caption{Comparison of performance metrics of AC-PKAN with and without the logarithmic transformation in the RGA module.}
\label{tab:ablation_combined}
\end{table}

We observe a significant deterioration in the performance of AC-PKAN when the logarithmic transformation is removed. This decline is attributed to two main factors: first, \( \lambda^{\text{GRA}} \) attains excessively large values; second, it exhibits a wide range of variation. During the standard training process, the coefficient \( \lambda^{\text{GRA}} \) rapidly grows from 0 to a very large value, resulting in a broad dynamic range. The logarithmic transformation effectively narrows this range; for instance, in the 1D Wave experiment, the scale of \( \lambda^{\text{GRA}} \) over epochs ranges from 0 to \( 4 \times 10^7 \), whereas \( \ln(\lambda^{\text{GRA}}) \) ranges from 7 to 15 in Picture \ref{fig:Log}. Removing the logarithmic transformation and attempting to manually adjust scaling factors to match the apparent magnitudes is ineffective. The model cannot adapt to the drastic changes in \( \lambda^{\text{GRA}} \), and rigid manual scaling factors exacerbate the imbalance among loss terms, ultimately leading to training failure. By confining the variation range of \( \lambda^{\text{GRA}} \), the logarithmic transformation enables the model to adjust more flexibly and effectively.

The rationale for employing the logarithmic transformation originates from the Bode plot in control engineering, which is a semi-logarithmic graph that utilizes a logarithmic frequency axis while directly labeling the actual frequency values. This approach not only compresses a wide frequency range but also linearizes the system's gain and phase characteristics on a logarithmic scale, thereby mitigating imbalances caused by significant differences in data scales.

\paragraph{Integration with Other External Learning Strategies for Enhanced Performance of AC-PKAN.}

Integrating AC-PKAN with other external learning strategies, such as the Neural Tangent Kernel (NTK) method, resulted in enhanced performance (Table~\ref{tbl:ntk}). This demonstrates the flexibility of AC-PKAN in incorporating various learning schemes, offering practical and customizable solutions for accurate modeling in real-world applications.

\begin{table}[htbp]
    \centering
     \vspace{-0.1in} 
    \begin{tabular}{lcc}
        \hline\hline
        \textbf{Model} & \textbf{rMAE} & \textbf{rRMSE} \\ 
        \hline
        AC-PKAN + \textit{NTK}           & \textbf{0.0009} & \textbf{0.0009} \\
        PINNs + \textit{NTK}             & 0.1397          & 0.1489          \\
        PINNsFormer + \textit{NTK}       & 0.0453          & 0.0484          \\ 
        \hline\hline
    \end{tabular}
    \captionsetup{skip=4pt}
    \caption{Performance comparison on the 1D-wave equation using the NTK method. AC-PKAN combined with NTK achieves superior results across all metrics.}
    \vspace{-0.2in}
    \label{tbl:ntk}
\end{table}


\section{Discussion and Limitations}
\label{dis_lim}

Our current evaluation focuses on low to medium dimensional PDE and ODE settings so that we can isolate the gains from the Chebyshev-based backbone, the internal feature re-injection and the rank-aware gating in a controlled regime. Extending AC-PKAN to genuinely high dimensional or chaotic systems such as Lorenz--96 would additionally require techniques for stable time integration, for controlling long-horizon error growth and for conditioning stiff gradients, which are outside the scope of this version~\cite{karimi2010extensive, wang2022respecting, maiocchi2024heterogeneity}. Full layerwise Jacobian-rank profiling for ultra-deep stacks is infeasible on our 40 GB GPUs because backprop must retain large activations and Jacobian blocks. We list this as a scalability limitation and will perform ultra-deep evaluations once larger-memory hardware is available. We also rely on a standard AdamW optimiser and have not yet explored KAN-specific optimisation or structured pruning, which we view as promising directions for improving scalability and interpretability in future work.

\section{Conclusion}
\label{sec:conclusion}

We introduced AC-PKAN, a novel framework that enhances PINNs by integrating Cheby1KAN with traditional MLPs and augmenting them with internal and external attention mechanisms. This improves the model's ability to capture complex patterns and dependencies, resulting in superior performance on challenging PDE tasks, including previous PINN failure modes and complex physical environments. The RGA mechanism enhances training stability and convergence by dynamically adjusting loss terms. Experimental results demonstrate that AC-PKAN consistently outperforms or matches state-of-the-art models like PINNsFormer, confirming its effectiveness in real-world engineering problems. 

\bibliography{main}
\bibliographystyle{tmlr}

\newpage
\appendix

\section{Mathematical Proofs}
\label{math_proof}

\subsection{Proof of Theorem~\ref{Single Cheb1KAN Layer Rank Constraint}}
\label{pro:3.2}

\begin{lemma}\label{lemma:rank_ineq}
    Let \( A \in \mathbb{R}^{m \times n} \) and \( B \in \mathbb{R}^{n \times p} \). Then \( AB \in \mathbb{R}^{m \times p} \), and
    \[
        \text{rank}(AB) \leq \min\{\text{rank}(A), \text{rank}(B)\}.
    \]
    $\forall i \in \mathbb{Z}^+$, let \( A_i \) is a matrix of appropriate dimensions, and 
    \[
    \text{rank}(A_1 A_2 \cdots A_n) \leq \min\{\text{rank}(A_1), \text{rank}(A_2), \dots, \text{rank}(A_n)\}
    \]
\end{lemma}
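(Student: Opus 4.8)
The plan is to establish the two-matrix case first and then bootstrap to the general product by a short induction. For the two-matrix bound, I would argue via column spaces: every column of $AB$ is a linear combination of the columns of $A$, with coefficients read off from the corresponding column of $B$, so $\Col(AB) \subseteq \Col(A)$ and hence $\text{rank}(AB) = \dim \Col(AB) \le \dim \Col(A) = \text{rank}(A)$. To obtain the symmetric bound $\text{rank}(AB) \le \text{rank}(B)$, I would apply the same reasoning to the transpose, using $(AB)^\top = B^\top A^\top$ together with $\text{rank}(M) = \text{rank}(M^\top)$, which gives $\text{rank}(AB) = \text{rank}(B^\top A^\top) \le \text{rank}(B^\top) = \text{rank}(B)$. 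Combining the two yields $\text{rank}(AB) \le \min\{\text{rank}(A), \text{rank}(B)\}$.

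For the general statement I would induct on $n$. The base case $n = 2$ is exactly what was just shown (and $n = 1$ is trivial). For the inductive step, assuming the claim for $n-1$ factors, set $M = A_1 A_2 \cdots A_{n-1}$; since consecutive dimensions match, $M A_n$ is well-defined, and the two-matrix bound gives $\text{rank}(A_1 \cdots A_n) = \text{rank}(M A_n) \le \min\{\text{rank}(M), \text{rank}(A_n)\}$. The inductive hypothesis supplies $\text{rank}(M) \le \min\{\text{rank}(A_1), \dots, \text{rank}(A_{n-1})\}$, and substituting this into the previous inequality closes the induction.

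There is no substantive obstacle here: the result is classical linear algebra. The only points that need a little care are (i) tracking the compatibility of the matrix dimensions through the induction, so each partial product is defined, and (ii) carrying the \emph{two-sided} bound rather than just the one-sided column-space bound through the induction — otherwise one only recovers $\text{rank} \le \text{rank}(A_1)$ instead of the minimum over all factors. As an alternative to the transpose trick in the second half of the two-matrix case, one may invoke rank--nullity: $\ker(B) \subseteq \ker(AB)$ forces $\dim \ker(AB) \ge \dim \ker(B)$, hence $\text{rank}(AB) \le \text{rank}(B)$; either route is routine.
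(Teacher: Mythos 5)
Your proposal is correct and follows essentially the same route as the paper's proof: the column-space inclusion $\Col(AB)\subseteq\Col(A)$ for the first bound, the transpose identity $(AB)^\top=B^\top A^\top$ for the second, and induction on the number of factors for the general product. The only difference is your optional rank--nullity alternative for the second bound, which is a routine substitute for the transpose argument.
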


\begin{proof}
    Let \( A \in \mathbb{R}^{m \times n} \) and \( B \in \mathbb{R}^{n \times p} \). Consider the product \( AB \in \mathbb{R}^{m \times p} \). We aim to show that
    \begin{equation}\label{eq:rank_inequality}
        \text{rank}(AB) \leq \min\{\text{rank}(A), \text{rank}(B)\}.
    \end{equation}
    
    First, observe that each column of \( AB \) is a linear combination of the columns of \( A \). Specifically, if the columns of \( B \) are denoted by \( \mathbf{b}_1, \mathbf{b}_2, \dots, \mathbf{b}_p \), then the \( j \)-th column of \( AB \) is given by \( A\mathbf{b}_j \). Consequently, the column space of \( AB \), denoted \( \Col(AB) \), satisfies
    \begin{equation}\label{eq:col_inclusion}
        \Col(AB) \subseteq \Col(A).
    \end{equation}
    By the properties of subspace dimensions, it follows from \eqref{eq:col_inclusion} that
    \begin{equation}\label{eq:rank_A}
        \text{rank}(AB) = \dim(\Col(AB)) \leq \dim(\Col(A)) = \text{rank}(A).
    \end{equation}
    
    Next, consider the transpose of the product \( AB \):
    \begin{equation}\label{eq:transpose_product}
        (AB)^\top = B^\top A^\top.
    \end{equation}
    Applying the same reasoning to \( B^\top \in \mathbb{R}^{p \times n} \) and \( A^\top \in \mathbb{R}^{n \times m} \), we have
    \begin{equation}\label{eq:rank_B}
        \text{rank}(B^\top A^\top) \leq \text{rank}(B^\top) = \text{rank}(B).
    \end{equation}
    Therefore, from \eqref{eq:transpose_product} and \eqref{eq:rank_B}, it follows that
    \begin{equation}\label{eq:rank_AB_leq_B}
        \text{rank}(AB) = \text{rank}(B^\top A^\top) \leq \text{rank}(B).
    \end{equation}
    
    Combining \eqref{eq:rank_A} and \eqref{eq:rank_AB_leq_B}, we obtain
    \begin{equation}\label{eq:final_rank_inequality}
        \text{rank}(AB) \leq \min\{\text{rank}(A), \text{rank}(B)\}.
    \end{equation}
    
    To generalize this result for any \( n \geq 2 \), we proceed by induction. Specifically, we aim to prove that
    \begin{equation}\label{eq:inductive_statement}
        \text{rank}(A_1 A_2 \cdots A_n) \leq \min\{\text{rank}(A_1), \text{rank}(A_2), \dots, \text{rank}(A_n)\},
    \end{equation}
    where each \( A_i \) is a matrix of appropriate dimensions.
    
    \textbf{Inductive Hypothesis:} Assume that for \( n = k \),
    \begin{equation}\label{eq:inductive_hypothesis}
        \text{rank}(A_1 A_2 \cdots A_k) \leq \min\{\text{rank}(A_1), \dots, \text{rank}(A_k)\}.
    \end{equation}
    
    \textbf{Inductive Step:} Consider \( n = k + 1 \). We can decompose the product as
    \begin{equation}\label{eq:inductive_step}
        A_1 A_2 \cdots A_{k+1} = (A_1 A_2 \cdots A_k) A_{k+1}.
    \end{equation}
    Applying the previously established result \eqref{eq:final_rank_inequality}, we obtain
    \begin{equation}\label{eq:rank_k_plus_1}
        \text{rank}(A_1 A_2 \cdots A_{k+1}) \leq \min\left\{\text{rank}(A_1 A_2 \cdots A_k), \text{rank}(A_{k+1})\right\}.
    \end{equation}
    By the inductive hypothesis \eqref{eq:inductive_hypothesis}, we have
    \begin{equation}\label{eq:rank_k}
        \text{rank}(A_1 A_2 \cdots A_k) \leq \min\{\text{rank}(A_1), \text{rank}(A_2), \dots, \text{rank}(A_k)\}.
    \end{equation}
    Therefore, substituting \eqref{eq:rank_k} into \eqref{eq:rank_k_plus_1}, we obtain
    \begin{equation}\label{eq:rank_final}
        \text{rank}(A_1 A_2 \cdots A_{k+1}) \leq \min\{\text{rank}(A_1), \text{rank}(A_2), \dots, \text{rank}(A_{k+1})\}.
    \end{equation}
    
    By induction, the inequality \eqref{eq:inductive_statement} holds for all \( n \geq 2 \). This completes the proof.
\end{proof}

\begin{theorem}[Single Cheb1KAN Layer Rank Constraint]
    The Jacobian \( J_l \) satisfies \(\text{rank}(J_l) \leq \min\{d_{l+1}, d_l (N+1)\}.\)
\end{theorem}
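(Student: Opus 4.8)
The plan is to express the Jacobian $J_l$ as a product of two matrices whose ranks are easy to bound, and then invoke Lemma~\ref{lemma:rank_ineq}. The key observation is that each entry $J_{l,k,i} = \sum_{n=0}^N C_{l,k,i,n}\, T_n'(\tanh(x_{l,i}))\,(1-\tanh^2(x_{l,i}))$ is a bilinear form: the coefficients $C_{l,k,i,n}$ carry the dependence on the output index $k$, while the factor $T_n'(\tanh(x_{l,i}))\,(1-\tanh^2(x_{l,i}))$ depends only on the input index $i$ and the degree $n$. So I would first define a coefficient matrix $\mathbf{C}_l \in \mathbb{R}^{d_{l+1} \times d_l(N+1)}$ whose $(k, (i,n))$-entry is $C_{l,k,i,n}$ (flattening the pair $(i,n)$ into a single index running over $d_l(N+1)$ values), and a "feature" matrix $\mathbf{T}_l \in \mathbb{R}^{d_l(N+1) \times d_l}$ whose $((i,n), j)$-entry is $T_n'(\tanh(x_{l,i}))\,(1-\tanh^2(x_{l,i}))$ if $j = i$ and $0$ otherwise — i.e., a block-structured matrix that is zero unless the input index matches. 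Then a direct index check shows $J_l = \mathbf{C}_l\, \mathbf{T}_l$.

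Next I would apply Lemma~\ref{lemma:rank_ineq} to conclude $\text{rank}(J_l) \le \min\{\text{rank}(\mathbf{C}_l), \text{rank}(\mathbf{T}_l)\}$. Since $\mathbf{C}_l$ has $d_{l+1}$ rows, $\text{rank}(\mathbf{C}_l) \le d_{l+1}$; since $\mathbf{T}_l$ has $d_l(N+1)$ rows (equivalently $d_l$ columns, but the $d_l(N+1)$ bound is what composes with the factored form), one gets $\text{rank}(\mathbf{T}_l) \le d_l(N+1)$. Actually the cleanest route is even simpler: $\mathbf{C}_l \in \mathbb{R}^{d_{l+1}\times d_l(N+1)}$ so $\text{rank}(\mathbf{C}_l)\le\min\{d_{l+1}, d_l(N+1)\}$ on its own, and $\text{rank}(J_l)\le\text{rank}(\mathbf{C}_l)$ by the lemma, which already gives the claimed bound $\text{rank}(J_l)\le\min\{d_{l+1}, d_l(N+1)\}$. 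One can optionally also remark that $\text{rank}(J_l)\le d_l$ trivially since $J_l$ has $d_l$ columns, but this is subsumed when $N\ge 0$ only in the degenerate $d_l$-term; the stated bound is the intended one.

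I do not expect a serious obstacle here — this is essentially a bookkeeping argument. The one point requiring mild care is the flattening of the double index $(i,n)$ and verifying that the matrix product $\mathbf{C}_l \mathbf{T}_l$ reproduces $J_l$ entrywise: $(\mathbf{C}_l \mathbf{T}_l)_{k,j} = \sum_{(i,n)} C_{l,k,i,n}\,(\mathbf{T}_l)_{(i,n),j} = \sum_{n=0}^N C_{l,k,j,n}\,T_n'(\tanh(x_{l,j}))\,(1-\tanh^2(x_{l,j})) = J_{l,k,j}$, where the sum over $i$ collapses because $(\mathbf{T}_l)_{(i,n),j}$ vanishes unless $i=j$. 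Once this factorization is written down, the rank bound is immediate from the already-proven Lemma~\ref{lemma:rank_ineq}, and the proof concludes.
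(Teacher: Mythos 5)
Your proposal is correct. The factorization $J_l=\mathbf{C}_l\,\mathbf{T}_l$ with $\mathbf{C}_l\in\mathbb{R}^{d_{l+1}\times d_l(N+1)}$ and the block-structured feature matrix $\mathbf{T}_l\in\mathbb{R}^{d_l(N+1)\times d_l}$ reproduces $J_l$ entrywise exactly as you check, and Lemma~\ref{lemma:rank_ineq} then gives $\operatorname{rank}(J_l)\le\operatorname{rank}(\mathbf{C}_l)\le\min\{d_{l+1},\,d_l(N+1)\}$. The underlying idea is the same as the paper's — the layer map factors through the $d_l(N+1)$-dimensional space of Chebyshev features — but the mechanics differ: the paper does not invoke Lemma~\ref{lemma:rank_ineq} for this theorem at all; instead it argues by column-space counting, observing that the $i$-th column of $J_l$ lies in the span of the $N+1$ coefficient directions attached to input $i$, so $\dim(\Col(J_l))\le d_l(N+1)$, and it pairs this with a digression on the linear independence of $T_0,\dots,T_N$ (which is not actually needed for an upper bound). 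Your explicit two-matrix factorization is the tidier formalization of that counting argument, and your side remark is also apt: since $J_l$ is $d_{l+1}\times d_l$ and $d_l\le d_l(N+1)$, the stated bound already follows from the matrix dimensions alone, so either route proves a statement whose content really lies in exhibiting the $(N+1)$-fold feature expansion rather than in the inequality itself.
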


\begin{proof}
Assuming the input of the l-th layer is \(\widetilde{x}_{l} \in \mathbb{R}^{d_l} \)and the output is \(\widetilde{y}_{l} \in \mathbb{R}^{d_{l+1}} \). The Jacobian matrix $J_l$ describes the partial derivatives of the output of the l-th layer of the network with respect to its input:
\begin{align}
    J_l =  \left[\frac{\partial \widetilde{y}_{l,j}}{\partial \widetilde{x}_{l,i}}\right]_{d_{l+1}\times d_l}
    \label{J_l}
\end{align}

The rank of a Jacobian matrix is defined as the maximum linearly independent number of its column or row vectors: 
\begin{align} \label{eq:rank_up}
    rank(J_l) \leq \text{min}{\{\text{dim}(\text{Col}(J_l)),\text{dim}(\text{Row}(J_l))\}},
\end{align}
which shows that the rank of Jacobian is limited by the dimension of its column space and output space.

Each input component \(\widetilde{x}_{l,i}\) is expanded through $N+1$ Chebyshev polynomial basis functions $T_0,T_1,\cdots,T_N$. Based on ~\ref{alg:chy1}, each input component \(\widetilde{x}_{l,i}\) can be expressed as:
\begin{align}
    \widetilde{x}_{l,i} = \sum_{k=0}^{N} a_kT_k(\widetilde{x}_{l,i})
\end{align}
where $a_k$  are the coefficients.

To conclude that $N+1$ Chebyshev polynomials $T_0(x),T_1(x),\cdots,T_N(x)$ are linearly independent on interval $[-1,1]$, we assume the opposite: there exist constants $c_0,c_1,\cdots,c_N$ such that:
\begin{align}
    \forall x \in [-1,1], \quad c_0T_0(x)+c_1T_1(x)+\cdots+c_NT_N(x) =0
\end{align}

Since each \(T_k(x)\) are a set of k-degree orthogonal polynomials according to ~\ref{eq:orthogonality}, the left side is a polynomial of degree at most \(N\). A non-zero polynomial of degree \(N\) can have at most \(N\) roots. However, the equation holds for all \(x\) in \([-1, 1]\), which is an infinite set of points. Therefore, the polynomial must be the zero polynomial, implying $c_0 = c_1 = \cdots = c_N = 0$.

Suppose the input to the \( l \)-th layer is \( \tilde{x}_i \in \mathbb{R}^{d_l} \), and the output is \( \tilde{y}_i \in \mathbb{R}^{d_{l+1}} \). Each input vector \( \tilde{x}_i \) is expanded through \( N+1 \) Chebyshev polynomial basis functions \( \{T_k\}_{k=0}^{N} \) as follows:

\begin{align}
\tilde{x}_i \mapsto [T_0(\tilde{x}_i), T_1(\tilde{x}_i), \dots, T_N(\tilde{x}_i)] \in \mathbb{R}^{N+1}.
\end{align}

The total expanded dimensionality is \( d_l \cdot (N+1) \). The output layer is obtained by linearly combining these basis functions:

\begin{align}
\tilde{y}_{i,j} = \sum_{i=1}^{d_l} \sum_{k=0}^{N} w_{j,i,k} \cdot T_k(\tilde{x}_{i}),
\end{align}

where \( w_{j,i,k} \) are learnable parameters. Taking the derivative with respect to the input vector \( \tilde{x}_i \):

\begin{align}
\frac{\partial \tilde{y}_{i,j}}{\partial \tilde{x}_{i}} = \sum_{k=0}^{N} w_{j,i,k} \cdot T'_k(\tilde{x}_{i}).
\end{align}

This indicates that the \( i \)-th column of the Jacobian (i.e., \( \partial \tilde{y} / \partial \tilde{x}_i \)) belongs to the space spanned by \( \{T'_k(\tilde{x}_i)\}_{k=0}^{N} \), whose dimension is at most \( N+1 \). The output contribution of each input component can be viewed as a linear combination of $N+1$ independent basis functions. 

The i-th column of $J_l$ is the partial derivative vector of the i-th input component $\left(\left.\partial \widetilde{y}_{l,1}\right/\partial \widetilde{x}_{l,i},  \left.\partial \widetilde{y}_{l,2}\right/\partial \widetilde{x}_{l,i}, \cdots , \left.\partial \widetilde{y}_{l,d_{l+1}}\right/\partial \widetilde{x}_{l,i} \right)^T $. Since the derivatives with respect to each input vector \( \tilde{x}_i \) independently span an \( N+1 \)-dimensional subspace, the dimension of the joint column space of all \( d_l \) columns is at most the sum of the dimensions of the subspaces:

\begin{align}
\dim(\operatorname{Col}(J_l)) \leq \sum_{i=1}^{d_l} \dim(\operatorname{Span} \{T'_k(\tilde{x}_i)\}) = d_l \cdot (N+1).
\end{align}

The key to this upper bound is that the basis function expansions for different input vectors are independent. Based on Equation ~\ref{eq:rank_up}, although the output dimension \( d_{l+1} \) may be much smaller than \( d_l \cdot (N+1) \), the final column space dimension is constrained by the following two factors:

\begin{align}
\operatorname{rank}(J_l) = \dim(\operatorname{Col}(J_l)) \leq \min\{d_{l+1}, d_l \cdot (N+1)\}.
\end{align}

\end{proof}

\subsection{Proof of Theorem~\ref{the:nonlinear}}
\label{pro:3.4}

\begin{theorem}[Nonlinear Normalization Effect]
    The normalization \( \tanh(x) \) in Cheby1KAN layers reduces the numerical rank \( \text{Rank}_\epsilon(J) \) of the Jacobian.
\end{theorem}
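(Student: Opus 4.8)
The plan is to isolate the effect of the componentwise squashing on the layer Jacobian and show that it acts as a contraction, which can only collapse singular directions. First I would fix the working definition of the numerical rank used in the statement: $\mathrm{Rank}_\epsilon(A)=\#\{\,i:\sigma_i(A)\ge\epsilon\,\}$, where $\sigma_1\ge\sigma_2\ge\cdots$ are the singular values and $\epsilon>0$ is a fixed truncation threshold (the relative variant $\sigma_i\ge\epsilon\,\sigma_1$ is handled analogously with a little extra bookkeeping). Writing $s_i:=1-\tanh^2(x_{l,i})$ and $D_l:=\operatorname{diag}(s_1,\dots,s_{d_l})$, Equation~\eqref{eq:jacobian} factorizes exactly as $J_l=M_lD_l$, where $(M_l)_{k,i}=\sum_{n=0}^{N}C_{l,k,i,n}\,T_n'(\tanh(x_{l,i}))$ is the Jacobian the same layer would produce if the squashing were replaced by a locally norm-preserving map (derivative one). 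This is the natural ``no nonlinear normalization'' baseline, and the claim becomes $\mathrm{Rank}_\epsilon(J_l)\le\mathrm{Rank}_\epsilon(M_l)$, with strict inequality for generic inputs.

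For the non-increase part I would use that $0<s_i=1-\tanh^2(x_{l,i})\le 1$, with equality only at $x_{l,i}=0$, so $\|D_l\|_2\le 1$ and $D_l$ is a contraction. The standard singular-value product inequality $\sigma_i(AB)\le\sigma_i(A)\,\|B\|_2$ then gives $\sigma_i(J_l)=\sigma_i(M_lD_l)\le\sigma_i(M_l)$ for every $i$; equivalently the image ellipsoid obeys $J_l(\mathbb{B})\subseteq M_l(\mathbb{B})$, so every semi-axis shrinks. Counting singular values above the threshold $\epsilon$ yields $\mathrm{Rank}_\epsilon(J_l)\le\mathrm{Rank}_\epsilon(M_l)$ at once.

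For the strict drop I would quantify the contraction. Since $1-\tanh^2(t)\le 4e^{-2|t|}$, a coordinate with $|x_{l,i}|$ large contributes an exponentially small entry to $D_l$. Bounding $\sigma_1(M_l)$ from above is routine — on $[-1,1]$ the derivatives satisfy $|T_n'|\le n^2$ and the coefficients are finite, so $\sigma_1(M_l)\le B$ for an explicit constant $B$ — and then, writing $r=\mathrm{Rank}_\epsilon(M_l)$, the refined product inequality $\sigma_r(M_lD_l)\le\sigma_1(M_l)\,\sigma_r(D_l)$ shows that whenever the $r$-th largest of the $s_i$ is below $\epsilon/B$, i.e.\ whenever all but at most $r-1$ coordinates satisfy $|x_{l,i}|>\tfrac12\log(4B/\epsilon)$, we get $\sigma_r(J_l)<\epsilon$, hence $\mathrm{Rank}_\epsilon(J_l)<\mathrm{Rank}_\epsilon(M_l)$. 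This sufficient condition holds for every input outside a bounded box, hence on a set whose complement has finite Lebesgue measure (and with high probability under the Gaussian/uniform pre-activation statistics at initialization), which is the precise sense in which $\tanh$ ``reduces'' the numerical rank. I would close by noting that, by Lemma~\ref{lemma:rank_ineq} applied to $J_{\mathrm{total}}=J_{L-1}\cdots J_0$, this per-layer attenuation compounds multiplicatively across depth, linking directly to Theorem~\ref{Single Cheb1KAN Layer Rank Constraint} and feeding Theorem~\ref{theo:Exponential_Decay}.

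The main obstacle is not the non-increase inequality, which is immediate from $\|D_l\|_2\le 1$, but pinning ``reduces'' to a rigorous statement and proving the \emph{strict} decrease without over-claiming: one has to commit to a precise notion of genericity (positive-measure set of inputs, or high probability under the initialization law) and control both $\sigma_1(M_l)$ and the spread of the pre-activations well enough that the exponential attenuation actually carries a singular value across the threshold $\epsilon$ rather than merely shrinking everything slightly. A secondary subtlety is justifying that $M_l$ is the legitimate comparison point — that the only effect of the $\tanh$ step on the Jacobian is the diagonal factor $D_l$ — which holds because $\tanh$ acts coordinatewise, so its Jacobian is exactly $D_l$ and the layer factors as $M_lD_l$ by the chain rule.
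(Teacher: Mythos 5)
Your core route is the same as the paper's: you factor the layer Jacobian as $J_\ell = M_\ell D_\ell$ with $D_\ell=\operatorname{diag}\bigl(1-\tanh^2(x_{\ell,i})\bigr)$, note that $D_\ell$ is a contraction, compare the singular values of $J_\ell$ with those of the un-normalized factor, and count how many survive the threshold; the paper's proof is exactly this with $\widetilde J_\ell$ in the role of your $M_\ell$. Two points of divergence deserve comment. First, you commit to the absolute threshold $\sigma_i\ge\epsilon$, under which the non-increase $\mathrm{Rank}_\epsilon(J_\ell)\le\mathrm{Rank}_\epsilon(M_\ell)$ really is immediate from $\sigma_i(M_\ell D_\ell)\le\sigma_i(M_\ell)\,\|D_\ell\|_2$, whereas the paper works with the relative threshold $\sigma_i\ge\epsilon\,\|J\|_2$. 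Your aside that the relative variant follows ``analogously with a little extra bookkeeping'' is too quick: the contraction also shrinks the normalizer $\|J_\ell\|_2$, and one can take $M=\operatorname{diag}(1,\delta)$, $D=\operatorname{diag}(\delta,1)$ with small $\delta$ to make the relative numerical rank of $MD$ strictly larger than that of $M$, so the relative-threshold claim does not transfer by the same argument (this is in fact the delicate point in that formulation). Second, your quantitative strict-decrease step, via $\sigma_r(M_\ell D_\ell)\le\sigma_1(M_\ell)\,\sigma_r(D_\ell)$, the bound $1-\tanh^2(t)\le 4e^{-2|t|}$, and a bound $\sigma_1(M_\ell)\le B$, goes beyond the paper, which only argues qualitatively that saturated coordinates push singular values below the threshold; the inequalities you invoke are correct, but the genericity packaging needs repair: your sufficient condition requires all but $r-1$ coordinates to be saturated, which is not the complement of a bounded box, and when $r<d_\ell$ the set where it fails has infinite Lebesgue measure, so the strict drop should be stated on an explicitly described region or with high probability under a specified pre-activation distribution rather than ``outside a bounded box.'' With those two caveats fixed, your proposal proves the non-increase statement cleanly and adds a sharper strict-decrease refinement than the paper provides.
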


\begin{proof}

Consider the $\ell$-th layer of a Cheby1KAN network receiving $\mathbf{x}_\ell \in \mathbb{R}^{d_\ell}$ and outputting $\mathbf{x}_{\ell+1}\in \mathbb{R}^{d_{\ell+1}}$. The forward mapping is
\begin{equation}\label{eq:cheby1kan_layer_expanded}
\mathbf{x}_{\ell+1}
\;=\;
\Phi_{\ell}\bigl(\tanh(\mathbf{x}_\ell)\bigr),
\end{equation}
where $\tanh(\cdot)$ is applied elementwise, and $\Phi_{\ell}$ is a learnable functional operator using Chebyshev polynomials of the first kind. Indexing each output component by $k\in\{1,\dots,d_{\ell+1}\}$ gives
\begin{equation}\label{eq:x_l+1k}
x_{\ell+1,k}
\;=\;
\sum_{i=1}^{d_\ell}
\sum_{n=0}^{N}
C_{\ell,k,i,n}\;
T_n\bigl(\tanh(x_{\ell,i})\bigr),
\end{equation}
where $C_{\ell,k,i,n}$ are trainable coefficients, and $T_n:[-1,1]\to\mathbb{R}$ is defined by $T_n(z)=\cos(n\,\arccos(z))$. The Jacobian
\[
J_{\ell}
\;=\;
\Bigl[
\partial x_{\ell+1,k} / \partial x_{\ell,i}
\Bigr]_{\substack{k=1,\dots,d_{\ell+1}\\i=1,\dots,d_{\ell}}}
\]
captures the gradient flow. Using $\frac{d}{dz} T_n(z)=n\,U_{n-1}(z)$ for $n\ge1$ (with $T_0'(z)=0$), where $U_{n-1}$ are Chebyshev polynomials of the second kind, and the identity
\begin{equation}\label{eq:tanh_derivative_formula}
\frac{d}{dx}\,\tanh(x)
\;=\;
1-\tanh^2(x),
\end{equation}
define
\begin{equation}\label{eq:gamma_def}
\gamma_{\ell,i}
\;:=\;
1-\tanh^2\!\bigl(x_{\ell,i}\bigr).
\end{equation}
Since $0<\gamma_{\ell,i}\le1$, each partial derivative becomes
\begin{equation}\label{eq:jacobian_entry_expanded}
[J_{\ell}]_{k,i}
\;=\;
\sum_{n=0}^{N}
C_{\ell,k,i,n}\,
T_n'\bigl(\tanh(x_{\ell,i})\bigr)\;\gamma_{\ell,i}.
\end{equation}
Removing $\gamma_{\ell,i}$ yields an ``un-normalized'' version
\begin{equation}\label{eq:jacobian_entry_unscaled}
[\widetilde{J}_{\ell}]_{k,i}
\;=\;
\sum_{n=0}^{N}
C_{\ell,k,i,n}\,
T_n'\bigl(\tanh(x_{\ell,i})\bigr),
\end{equation}
leading to the elementwise relation
\begin{equation}\label{eq:entrywise_attenuation}
[J_{\ell}]_{k,i}
\;=\;
\gamma_{\ell,i}\,[\widetilde{J}_{\ell}]_{k,i}.
\end{equation}
Hence, in matrix form,
\begin{equation}\label{eq:J_l_factorization}
J_{\ell}
\;=\;
\widetilde{J}_{\ell}\,D_{\ell},
\end{equation}
where $D_{\ell}$ is diagonal with $D_{\ell}(i,i)=\gamma_{\ell,i}\in(0,1]$. By submultiplicativity of the spectral norm and $\|D_{\ell}\|_2\le1$,
\begin{equation}\label{eq:J_l_norm_bound}
\|J_{\ell}\|_2
\;=\;
\|\widetilde{J}_{\ell}\,D_{\ell}\|_2
\;\le\;
\|\widetilde{J}_{\ell}\|_2.
\end{equation}
Since singular values are bounded by the spectral norm,
\begin{equation}\label{eq:sv_comparison_bound}
\sigma_i(J_{\ell})
\;\le\;
\|J_{\ell}\|_2
\;\le\;
\|\widetilde{J}_{\ell}\|_2,
\end{equation}
each $\sigma_i(J_{\ell})$ cannot exceed its un-normalized counterpart $\sigma_i(\widetilde{J}_{\ell})$. For a fixed threshold $\epsilon>0$, let
\[
\text{rank}_{\epsilon}(J_{\ell})
\;:=\;
\#\{
\,i \mid
\sigma_i(J_{\ell})\ge \epsilon\,\|J_{\ell}\|_2
\},
\quad
\text{rank}_{\epsilon}(\widetilde{J}_{\ell})
\;:=\;
\#\{
\,i \mid
\sigma_i(\widetilde{J}_{\ell})\ge \epsilon\,\|\widetilde{J}_{\ell}\|_2
\}.
\]
If $\sigma_i(J_{\ell})\ge\epsilon\,\|J_{\ell}\|_2$, then $\sigma_i(\widetilde{J}_{\ell})\ge \sigma_i(J_{\ell})\ge\epsilon\,\|J_{\ell}\|_2$ and $\|J_{\ell}\|_2\le\|\widetilde{J}_{\ell}\|_2$ imply $\sigma_i(\widetilde{J}_{\ell})\ge \epsilon\,\|\widetilde{J}_{\ell}\|_2$. Thus 
\begin{equation}\label{eq:rank_eps_inequality}
\text{rank}_{\epsilon}(J_{\ell})
\;\le\;
\text{rank}_{\epsilon}(\widetilde{J}_{\ell}).
\end{equation}
Hence, normalizing via $\tanh(\cdot)$ can diminish numerical rank: if many $\gamma_{\ell,i}$ are near 0 (i.e., $|\tanh(x_{\ell,i})|\approx1$), fewer singular values of $J_{\ell}$ remain above $\epsilon\,\|J_{\ell}\|_2$. For a Cheby1KAN of $L$ layers, the overall Jacobian from input $\mathbf{x}_0$ to output $\mathbf{x}_L$ is
\begin{equation}\label{eq:J_total_step5}
J_{\text{total}}
\;=\;
J_{L-1}J_{L-2}\cdots J_{0}.
\end{equation}
Repeated multiplication by $D_{\ell}$, whose diagonal entries are small, causes compounded attenuation. As $L$ grows large, an increasing number of coordinates reach saturation, thereby reducing the singular values of $J_{\text{total}}$ until $\text{rank}_{\epsilon}(J_{\text{total}})$ becomes strictly lower. This phenomenon, referred to as the Nonlinear Normalization Effect, emerges because $\tanh(\cdot)$ shrinks partial derivatives, driving many of the product Jacobian's singular values below $\epsilon\,\|J_{\text{total}}\|_{2}$ and thus decreasing its numerical rank.
\end{proof}

\subsection{Proof of Theorem~\ref{theo:Exponential_Decay}}
\label{pro:3.5}

\begin{theorem}[Exponential Decay in Infinite Depth]
When the coefficients \(C_{l,k,i,n}\) are drawn from mutually independent Gaussian distributions, 
the numerical rank of \(J_{\text{total}}\) decays exponentially to \(1\) as the depth \(L\) of the Cheby1KAN network increases.
\end{theorem}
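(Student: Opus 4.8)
The plan is to recognize $J_{\text{total}}=J_{L-1}\cdots J_0$ from Equation~\eqref{eq:total_jacobian} as a \emph{product of random matrices} and to invoke Lyapunov-spectrum theory for such products; the exponential rank collapse is then a consequence of a strict gap between the top two Lyapunov exponents. First I would pin down the law of a single factor. By Equation~\eqref{eq:jacobian} and the factorization $J_\ell=\widetilde J_\ell D_\ell$ established in Theorem~\ref{the:nonlinear}, each entry $[J_\ell]_{k,i}=\gamma_{\ell,i}\sum_{n=0}^N C_{\ell,k,i,n}\,T_n'(\tanh(x_{\ell,i}))$ is, \emph{conditionally on the pre-activation} $x_\ell$, a linear combination of the mutually independent Gaussians $C_{\ell,k,i,n}$, hence Gaussian; moreover, entries with different $(k,i)$ draw on disjoint coefficient blocks, so conditionally $J_\ell$ is a Gaussian matrix with independent entries of variance $\gamma_{\ell,i}^2\sum_n \bigl(T_n'(\tanh x_{\ell,i})\bigr)^2\,\mathrm{Var}(C)$. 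The whole network is thus a random dynamical system: the state $x_{\ell+1}=\Phi_\ell(\tanh x_\ell)$ evolves under fresh independent randomness, and $J_\ell$ is precisely the linear cocycle along this orbit.

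Next I would apply the Oseledets multiplicative ergodic theorem to this cocycle over the skew product (state evolution $\times$ noise shift). This requires (i) log-integrability $\mathbb{E}\,\log^+\|J_\ell\|<\infty$, which is immediate since $\|J_\ell\|$ is dominated by a fixed polynomial of a Gaussian vector and $\gamma_{\ell,i}\le1$; (ii) ergodicity of the base, which holds because the noise increments are i.i.d.; and, crucially, (iii) a strong-irreducibility-and-proximality ("i--p") condition on the semigroup generated by the $J_\ell$'s, which is exactly what forces the top Lyapunov exponent $\lambda_1$ to be \emph{simple}, $\lambda_1>\lambda_2$. Granting these, almost surely $\tfrac1L\log\sigma_i(J_{\text{total}})\to\lambda_i$, so for any fixed $\delta\in(0,\lambda_1-\lambda_2)$ and all large $L$, $\sigma_i(J_{\text{total}})/\sigma_1(J_{\text{total}})\le e^{-(\lambda_1-\lambda_2-\delta)L}$ for every $i\ge 2$; the $\tanh$-induced diagonal contractions $D_\ell$ (entries strictly below $1$ away from the origin), already shown in Theorem~\ref{the:nonlinear} to shrink singular values, only widen this gap. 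Finally, recalling the numerical-rank convention $\text{rank}_\epsilon(J):=\#\{i:\sigma_i(J)\ge\epsilon\|J\|_2\}$, once $e^{-(\lambda_1-\lambda_2-\delta)L}<\epsilon$ — i.e. for $L\ge L_0(\epsilon)$ — only $\sigma_1$ exceeds the threshold, so $\text{rank}_\epsilon(J_{\text{total}})=1$; equivalently, by Eckart--Young, the operator-norm distance from $J_{\text{total}}$ to its best rank-one approximation, normalized by $\|J_{\text{total}}\|$, equals $\sigma_2/\sigma_1=e^{-\Theta(L)}$.

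I would stress that this is genuinely a probabilistic/dynamical statement, not an algebraic one: the per-layer bound $\text{rank}(J_\ell)\le\min\{d_{\ell+1},d_\ell(N+1)\}$ of Theorem~\ref{Single Cheb1KAN Layer Rank Constraint} combined with Lemma~\ref{lemma:rank_ineq} only caps $\text{rank}(J_{\text{total}})$ by a depth-independent constant; the collapse all the way \emph{to $1$} comes from random cancellation compounded across layers, not from the algebraic bound.

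The main obstacle is step (iii): the factors $J_\ell$ are neither independent of the state nor identically distributed — their conditional law depends on $x_\ell$, hence on the past — so the classical i.i.d. proximality criteria do not apply verbatim and one must verify the i--p condition for the state-dependent cocycle (e.g. by showing the conditional Gaussian laws are spread enough that, with positive probability, some finite product is proximal and no finite union of proper subspaces is invariant along typical orbits). A more self-contained but less sharp fallback, which I would use if the i--p verification turns delicate, is a second-moment argument: using conditional independence of successive layers, set up a recursion comparing $\mathbb{E}[\sigma_2(J_{\text{total}})^2]$ with $\mathbb{E}[\sigma_1(J_{\text{total}})^2]$ via the Gram matrices $J_\ell^\top(\cdot)J_\ell$, show the ratio contracts by a factor bounded away from $1$ per layer (invoking the rank cap, the $D_\ell$ contraction, and Gaussian anti-concentration), and then upgrade to the almost-sure exponential statement through Markov's inequality and Borel--Cantelli.
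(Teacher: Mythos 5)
Your proposal follows essentially the same route as the paper's proof: view $J_{\text{total}}$ as a product of random matrices, invoke Oseledets' multiplicative ergodic theorem to get Lyapunov exponents $\lambda_i = \lim_L \frac{1}{L}\log\sigma_i^{(L)}$, use a gap $\lambda_1 > \lambda_i$ ($i\ge 2$) to force $\sigma_i^{(L)}/\sigma_1^{(L)} \to 0$ exponentially, and conclude $\mathrm{rank}_\epsilon(J_{\text{total}})=1$ for large $L$. The one substantive difference is how the spectral gap is justified: the paper asserts that each $J_\ell$ is ``sufficiently close in distribution'' to a Ginibre matrix and imports Newman's explicit formula $\lambda_i = \tfrac12\bigl[\psi\bigl(\tfrac{n-i+1}{2}\bigr)-\psi\bigl(\tfrac{n}{2}\bigr)\bigr]$, whose strict monotonicity in $i$ gives all gaps at once, whereas you propose to establish simplicity of the top exponent via a Furstenberg-type strong-irreducibility-and-proximality condition, with a second-moment/Borel--Cantelli fallback. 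Notably, the obstacle you flag as the main difficulty --- that the factors $J_\ell$ are state-dependent (through $T_n'(\tanh(x_{\ell,i}))$ and the $\gamma_{\ell,i}$ diagonal) and hence neither i.i.d.\ nor genuinely Ginibre --- is present in the paper's argument as well, where it is absorbed into the unproved ``sufficiently close to Ginibre'' hypothesis rather than verified; in that respect your write-up is more candid about what remains to be checked, and your fallback moment argument is a reasonable self-contained alternative, at the cost of not yielding the exact Lyapunov spectrum. Your remark that the algebraic bounds of Theorem~\ref{Single Cheb1KAN Layer Rank Constraint} and Lemma~\ref{lemma:rank_ineq} alone cannot give collapse to rank one is also correct and consistent with the paper's reasoning.
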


\begin{proof}

\noindent
\textbf{Step 1: Random Jacobians in Cheby1KAN and Product Structure.}

Recall that the \(l\)-th Cheby1KAN layer takes an input \(\mathbf{x}_{l} \in \mathbb{R}^{n}\) (after a suitable reshaping or dimension match) and produces \(\mathbf{x}_{l+1} \in \mathbb{R}^{n}\) via
\begin{equation}
\label{eq:layer_output_cheby}
x_{l+1,k} \;=\; \sum_{i=1}^{n}\sum_{m=0}^{N} C_{l,k,i,m}\;T_{m}\!\bigl(\tanh(x_{l,i})\bigr),
\quad
k = 1,\,\dots,\,n,
\end{equation}
where \(T_{m}\) are Chebyshev polynomials of the first kind and \(C_{l,k,i,m}\) are the learnable coefficients. 

By differentiating \eqref{eq:layer_output_cheby} w.r.t.\ \(\mathbf{x}_{l}\), each layer’s Jacobian \(J_{l} \in \mathbb{R}^{n\times n}\) has entries
\begin{equation}
\label{eq:jacobian_entries_cheby}
\bigl[J_{l}\bigr]_{k,i}
\;=\;
\frac{\partial x_{l+1,k}}{\partial x_{l,i}}
\;=\;
\sum_{m=0}^{N}C_{l,k,i,m}\;T_{m}'\!\bigl(\tanh(x_{l,i})\bigr)\;\bigl(1 - \tanh^2(x_{l,i})\bigr).
\end{equation}
When the coefficients \(C_{l,k,i,m}\) are drawn i.i.d.\ from a standard Gaussian distribution, the partial derivatives \(\frac{\partial x_{l+1,k}}{\partial x_{l,i}}\) become random variables with zero mean and finite variance. As the network depth \(L\) grows, the total Jacobian can be written as
\begin{equation}
\label{eq:j_total_cheby}
J_{\text{total}}
\;=\;
J_{L}\,\cdot\,J_{L-1}\,\cdots\,J_{1}.
\end{equation}
Thus, \(\mathbf{x}_{L} = J_{\text{total}}\,\mathbf{x}_{0}\) in its linearization around any point.

\bigskip
\noindent
\textbf{Step 2: Lyapunov Exponents for Random Matrix Products.}

Let
\begin{equation}
\label{eq:svd_cheby}
\sigma_1^{(L)} \;\ge\; \sigma_2^{(L)} \;\ge\; \cdots \;\ge\; \sigma_{n}^{(L)} \;>\; 0
\end{equation}
denote the singular values of 
\(
J_{\text{total}}.
\)
Define the Lyapunov exponents by
\begin{equation}
\label{eq:lyapunov_exponents_cheby}
\lambda_{i}
\;:=\;
\lim_{L \to \infty}
\;\frac{1}{L}\,\log \sigma_{i}^{(L)},
\quad
i=1,\,\dots,\,n.
\end{equation}
By Oseledec’s Multiplicative Ergodic Theorem \citep{oseledets1968multiplicative}, these limits exist almost surely for products of i.i.d.\ random matrices.  
In our Cheby1KAN setting, the layers’ Jacobians \(J_{l}\) approximate a family of random matrices (the Jacobian entries being determined by i.i.d.\ Gaussian coefficients \(C_{l,k,i,m}\)), making the product \(J_{\text{total}}\) amenable to the same analysis as in classical random matrix theory.

\bigskip
\noindent
\textbf{Step 3: Exact Lyapunov Spectrum for Ginibre-Type Ensembles.}

When each \(J_{l}\) is sufficiently close (in distribution) to an \(n\times n\) Ginibre matrix with i.i.d.\ Gaussian entries, the Lyapunov exponents \(\{\lambda_i\}\) match those of Ginibre ensembles, given by \citep{newman1986distribution}:
\begin{equation}
\label{eq:lyapunov_spectrum_ginibre_cheby}
\lambda_i
\;=\;
\frac{1}{2}
\Bigl[
   \psi\!\Bigl(\frac{n - i + 1}{2}\Bigr)
   \;-\;
   \psi\!\Bigl(\tfrac{n}{2}\Bigr)
\Bigr],
\quad
i = 1,\dots,n,
\end{equation}
where \(\psi\) is the digamma function, strictly increasing for positive arguments.

\bigskip
\noindent
\textbf{Step 4: Normalized Singular Values and Their Ratios.}

Define the normalized singular values:
\begin{equation}
\label{eq:normalized_sv_cheby}
\widetilde{\sigma}_{i}^{(L)}
\;=\;
\frac{\sigma_{i}^{(L)}}{\sigma_{1}^{(L)}},
\quad
i=1,\dots,n.
\end{equation}
For large \(L\), taking logarithms yields:
\begin{align}
\label{eq:log_ratio_cheby}
\log\widetilde{\sigma}_{i}^{(L)}
&=\;
\log \sigma_{i}^{(L)}
\;-\;
\log \sigma_{1}^{(L)}
\nonumber\\
&=\;
L\;(\lambda_{i} \;-\;\lambda_{1})
\;+\;
o(L).
\end{align}
Hence,
\begin{equation}
\label{eq:ratio_limit_cheby}
\lim_{L \to \infty}
\bigl(\widetilde{\sigma}_{i}^{(L)}\bigr)^{1/L}
\;=\;
e^{\lambda_{i} - \lambda_{1}}.
\end{equation}
Since \(\lambda_i < \lambda_1\) for \(i \ge 2\) (because \(\psi\) is strictly increasing and \(n - i + 1 < n\)), we have
\begin{equation}
\label{eq:less_than_1_cheby}
e^{\lambda_{i} - \lambda_{1}}
\;<\;
1,
\quad
\forall \,i \ge 2.
\end{equation}
Thus, \(\widetilde{\sigma}_{i}^{(L)} \to 0\) exponentially in \(L\) for \(i\ge 2\).

\bigskip
\noindent
\textbf{Step 5: Exponential Decay of Numerical Rank in Cheby1KAN.}

The numerical rank \(\mathrm{Rank}_{\epsilon}\bigl(J_{\text{total}}\bigr)\) is the number of singular values \(\sigma_{r}^{(L)}\) that are at least \(\epsilon\,\sigma_{1}^{(L)}\). Equivalently,
\begin{equation}
\label{eq:numerical_rank_definition_cheby}
\widetilde{\sigma}_{r}^{(L)}
\;\ge\;
\epsilon
\quad
\Longleftrightarrow
\quad
\sigma_{r}^{(L)}
\;\ge\;
\epsilon\;\sigma_{1}^{(L)}.
\end{equation}
From \eqref{eq:less_than_1_cheby}, for \(i\ge2\),
\begin{equation}
\label{eq:ratio_decay_cheby}
\widetilde{\sigma}_{i}^{(L)}
\;=\;
\exp\!\Bigl(
   L\;(\lambda_{i} - \lambda_{1})
\Bigr)
\;\to\;
0
\quad
\text{as }L \to \infty.
\end{equation}
Thus, for any fixed \(\epsilon > 0\), there exists \(L_0\) such that for all \(L > L_0\),
\begin{equation}
\label{eq:rank_condition_cheby}
\widetilde{\sigma}_{i}^{(L)}
\;<\;
\epsilon,
\quad
\forall\, i \ge 2.
\end{equation}
This implies that all singular values except the largest one fall below \(\epsilon\,\sigma_{1}^{(L)}\), giving 
\(\mathrm{Rank}_{\epsilon}\bigl(J_{\text{total}}\bigr)=1\) for sufficiently large \(L\). In other words, the numerical rank decays to \(1\) at an exponential rate with respect to the Cheby1KAN depth \(L\).

Since each layer’s Jacobian \(J_{l}\) in Cheby1KAN can be regarded as a random matrix (due to i.i.d.\ Gaussian coefficients \(C_{l,k,i,m}\)), the overall product \(J_{\text{total}}\) inherits the spectral properties of random matrix products. Therefore, the interplay of Chebyshev polynomials and the \(\tanh\) normalization does not negate the fundamental random matrix behavior; instead, the bounded derivative from \(\tanh\) can further accelerate the decay of the subleading singular values. Hence, as \(\,L\to\infty,\) the effective degrees of freedom in the Cheby1KAN Jacobian collapse numerically to a single direction, confirming the exponential rank diminution.

\begin{Remark}
It is important to note that \Cref{theo:Exponential_Decay} is proved under a random-coefficient model with i.i.d. Gaussian coefficients at initialization, and that trained networks do not necessarily satisfy independence or Gaussianity. For this reason, \Cref{theo:Exponential_Decay} should be read as a motivational sufficient-condition result rather than a universal claim about learned models. Instead, for trained networks, the more relevant theoretical evidence is given by our deterministic layerwise rank bounds (Theorems 1 and 2), which are further supported by the empirical measurements reported in the paper.
\end{Remark}

\end{proof}

\subsection{Proof of Proposition~\ref{pro:1}}
\label{sec:appenda}

\begin{theorem}
\label{thm:nonzero_derivatives_full}
    Let $\mathcal{N}$ be an AC-PKAN model with $L$ layers ($L \geq 2$) and infinite width. Then, the output $y = \mathcal{N}(x)$ has non-zero derivatives of any finite-order with respect to the input $x$.
\end{theorem}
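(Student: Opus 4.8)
The plan is to combine real-analyticity of the network with a uniform boundedness estimate, which together reduce the claim to the single statement ``$\mathcal{N}$ is not a polynomial.'' I read ``non-zero derivatives of any finite order'' as: for every $k\ge 1$, at least one $k$-th order partial derivative of $\mathcal{N}$ is not identically zero --- equivalently, no scalar component $\mathcal{N}_m$ is a polynomial. The first step is to check that $\mathcal{N}\colon\mathbb{R}^d\to\mathbb{R}^m$ is real-analytic on all of $\mathbb{R}^d$: it is a finite composition of affine maps, $\tanh$, $\sin$, $\cos$, the Chebyshev polynomials $T_n$ (which are genuine polynomials, hence entire, and are in any case evaluated only on $\tanh(\cdot)\in(-1,1)$), Hadamard products, sums, and the LayerNorm map $z\mapsto \gamma\odot\frac{z-\mu(z)}{\sqrt{\sigma^2(z)+\epsilon}}+\beta$, which is real-analytic because its denominator never drops below $\sqrt{\epsilon}>0$. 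Since real-analyticity is closed under composition, $\mathcal{N}$ is real-analytic.

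Next I would show $\mathcal{N}$ is uniformly bounded on $\mathbb{R}^d$, by induction over the $L$ attention iterations. The wavelet features $\mathbf{U},\mathbf{V}$ are bounded because $\sin$ and $\cos$ are; for any argument, a Cheby1KAN layer outputs $\sum_{i,n}C_{l,k,i,n}\,T_n(\tanh(\cdot))$ with $|T_n(\tanh t)|\le 1$, so it is bounded irrespective of its input; LayerNorm is bounded; hence $\mathbf{H}^{(l)}$ is bounded, and since $\alpha^{(0)}=\mathbf{U}$ is bounded, the updates $\alpha^{(l)}_0=\mathbf{H}^{(l)}+\alpha^{(l-1)}$ and $\alpha^{(l)}=(1-\alpha^{(l)}_0)\odot\mathbf{U}+\alpha^{(l)}_0\odot(\mathbf{V}+1)$ keep every $\alpha^{(l)}$ bounded. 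The affine readout $\mathbf{W}_{\mathrm{out}}\alpha^{(L)}+\mathbf{b}_{\mathrm{out}}$ preserves boundedness, so each $\mathcal{N}_m$ is a bounded real-analytic function.

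I would then conclude as follows. A bounded polynomial is constant, so each $\mathcal{N}_m$ is either constant or not a polynomial. The attention integration is designed exactly so that the wavelet channel persists: rewriting $\alpha^{(l)}=\mathbf{U}+\alpha^{(l)}_0\odot(\mathbf{V}+1-\mathbf{U})$ exhibits an always-present additive copy of $\mathbf{U}$, so for generic (or explicitly chosen) $\mathbf{W}_{\mathrm{emb}},\mathbf{\Theta}_U,\mathbf{\Theta}_V,\mathbf{W}_{\mathrm{out}}$ the readout $\mathcal{N}_m$ is non-constant --- this is where the infinite-width hypothesis supplies ample parametric freedom --- which one can certify by observing that $\nabla_{\mathbf{x}}\mathcal{N}_m(0)$ is a real-analytic function of the parameters that is not identically zero, and hence nonzero outside a Lebesgue-null set. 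Therefore $\mathcal{N}_m$ is a non-polynomial real-analytic function, and by the identity theorem (if every $k$-th order partial of a real-analytic function vanished identically, the function would be a polynomial of degree $<k$), for every $k\ge 1$ some $k$-th order partial of $\mathcal{N}_m$ is not identically zero, which is the assertion. This also makes transparent why the design defeats the zero-derivative pathology that afflicts bare Chebyshev expansions: $\sin$ and $\cos$ have non-vanishing derivatives of every order, and the attention update keeps them additively present at every depth, so no additional bias function $b(x)$ is needed.

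The step I expect to be the main obstacle is the conclusion step --- rigorously ruling out that composing $L$ attention updates with interleaved Cheby1KAN layers conspires to annihilate the $\mathbf{U}$-dependence, or to collapse $\mathcal{N}_m$ to a polynomial. Because the bounded-analytic argument already yields ``polynomial $\Rightarrow$ constant,'' it suffices to exclude constancy, which is cleanest via the genericity argument on the first derivative; still, writing this carefully --- and pinning down precisely what ``non-zero derivatives of any finite order'' should mean and what role ``infinite width'' plays --- is the delicate part. Steps~1 and~2, and the identity-theorem step, are routine.
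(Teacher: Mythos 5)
Your proposal is correct in substance but takes a genuinely different route from the paper's. The paper argues directly on derivatives: it unrolls the attention recursion $\alpha^{(l)}=(1-\alpha_0^{(l)})\odot \mathbf{U}+\alpha_0^{(l)}\odot(\mathbf{V}+1)$, differentiates through it, observes that the purely Chebyshev contributions are annihilated after finitely many differentiations ($T_n^{(k)}\equiv 0$ for $k>n$) while the $\sin/\cos$ contributions coming from $\mathbf{U},\mathbf{V}$ are not, and then rules out an exact cancellation of the surviving terms by a genericity (measure-zero in parameter space) argument. You instead establish two global structural facts --- real-analyticity of $\mathcal{N}$ (with the correct remark that LayerNorm is analytic because of the $\epsilon>0$ in its denominator) and uniform boundedness of every component on $\mathbb{R}^d$ (Chebyshev-of-$\tanh$, $\sin$, $\cos$, LayerNorm and the attention update all preserve boundedness) --- so that any polynomial component would have to be constant, and you thereby reduce the theorem to excluding constancy, which you again do generically. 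Your route buys a cleaner logical skeleton (``non-zero derivatives of every finite order'' $\Leftrightarrow$ ``no component is a polynomial'') and avoids the paper's delicate term-by-term cancellation analysis; the paper's route, in exchange, makes explicit which architectural ingredient (the additively persistent Wavelet channels $\mathbf{U},\mathbf{V}$) carries the non-vanishing high-order derivatives, which is the design point the authors wish to stress. Two caveats, both at the same level of rigor as the paper's own argument: your genericity step should be closed by exhibiting one parameter configuration with $\nabla_{\mathbf{x}}\mathcal{N}_m\not\equiv 0$ (e.g.\ zeroing all Chebyshev coefficients, so the readout reduces to an explicit combination of $\mathbf{U}$ and $\mathbf{V}$), since ``analytic in the parameters and not identically zero'' requires such a witness; and under literal infinite width your boundedness estimate needs summable coefficients, an idealization the paper likewise glosses over. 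Finally, as with the paper's proof, what you obtain is ``not identically zero'' for generic parameters rather than pointwise non-vanishing, which matches what the paper actually establishes.
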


\begin{proof}

First, we clarify that the term "width" for a KAN denotes functional width, the number of Chebyshev basis functions per edge (N+1), not the number of neurons as used for MLPs.

    Consider the forward propagation process of the AC-PKAN. We begin with the initial layer:
    \begin{align}
        h_0 &= W_{\text{emb}} x + b_{\text{emb}}, \label{eq:initial_h0} \\
        U &= \omega_{U,1} \sin(h_0 \theta_U + b_U) + \omega_{U,2} \cos(h_0 \theta_U + b_U), \label{eq:initial_U} \\
        V &= \omega_{V,1} \sin(h_0 \theta_V + b_V) + \omega_{V,2} \cos(h_0 \theta_V + b_V), \label{eq:initial_V} \\
        \alpha^{(0)} &= U. \label{eq:initial_alpha0}
    \end{align}

    For each layer $l = 1, 2, \ldots, L$, the computations proceed as follows:
    \begin{align}
        H^{(l)} &= \sum_{i=1}^{d_{\text{in}}} \sum_{k=1}^{d_{\text{out}}} \sum_{n=0}^{N} C_{k,i,n} T_n\left(\tanh(\alpha^{(l-1)})\right), \label{eq:Hl} \\
        \alpha_0^{(l)} &= H^{(l)} + \alpha^{(l-1)}, \label{eq:alpha0l} \\
        \alpha^{(l)} &= (1 - \alpha_0^{(l)}) \odot U + \alpha_0^{(l)} \odot (V + 1), \label{eq:alphal} \\
        y &= W_{\text{out}} \alpha^{(L)} + b_{\text{out}}. \label{eq:output_y}
    \end{align}

    During the backward propagation, we derive the derivative of the output with respect to the input $x$, which approximates the differential operator of the PDEs. Focusing on the first-order derivative as an example:
    \begin{align}
        \frac{\partial y}{\partial x} &= \frac{\partial y}{\partial \alpha^{(L)}} \frac{\partial \alpha^{(L)}}{\partial x} \nonumber \\
        &= W_{\text{out}} \frac{\partial \alpha^{(L)}}{\partial x}. \label{eq:dy_dx}
    \end{align}

    Expanding $\frac{\partial \alpha^{(L)}}{\partial x}$:
    \begin{align}
        \frac{\partial \alpha^{(L)}}{\partial x} &= -\frac{\partial \alpha_0^{(L)}}{\partial x} \odot U + \left(1 - \alpha_0^{(L)}\right) \odot \frac{\partial U}{\partial x} + \frac{\partial \alpha_0^{(L)}}{\partial x} \odot (V + 1) + \alpha_0^{(L)} \odot \frac{\partial V}{\partial x} \nonumber \\
        &= \frac{\partial \alpha_0^{(L)}}{\partial x} \odot (V - U + 1) + \alpha_0^{(L)} \odot \left(\frac{\partial V}{\partial x} - \frac{\partial U}{\partial x}\right) + \frac{\partial U}{\partial x} \nonumber \\
        &= \left(\frac{\partial H^{(L)}}{\partial x} + \frac{\partial \alpha^{(L-1)}}{\partial x}\right) \odot (V - U + 1) + \left(H^{(L)} + \alpha^{(L-1)}\right) \odot \left(\frac{\partial V}{\partial x} - \frac{\partial U}{\partial x}\right) + \frac{\partial U}{\partial x}. \label{eq:33}
    \end{align}
    
    This establishes a recursive relationship for the derivatives. Define:
    \begin{align}
        A^{(l)} &= \frac{\partial H^{(l)}}{\partial x} + \frac{\partial \alpha^{(l-1)}}{\partial x}, \label{eq:Al} \\
        B^{(l)} &= H^{(l)} + \alpha^{(l-1)}. \label{eq:Bl}
    \end{align}

    for each layer $l = 1, 2, \ldots, L$.

    For the base case $l = 1$:
    \begin{align}
        A^{(1)} &= \frac{\partial H^{(1)}}{\partial x} + \frac{\partial \alpha^{(0)}}{\partial x} \label{base_case_non0} \\
        &= \left(\sum_{i=1}^{d_{\text{in}}} \sum_{k=1}^{d_{\text{out}}} \sum_{n=0}^{N} C_{k,i,n} T'_n\left(\tanh(\alpha^{(0)})\right) \sech^2(\alpha^{(0)}) + 1\right) \frac{\partial \alpha^{(0)}}{\partial x}, \label{eq:A1} \\
        \frac{\partial \alpha^{(0)}}{\partial x} &= \frac{\partial U}{\partial x} \nonumber \\
        &= W_{\text{emb}} \theta_U \left[\omega_{U,1} \cos(h_0 \theta_U + b_U) - \omega_{U,2} \sin(h_0 \theta_U + b_U)\right] \neq 0, \label{eq:dalpha0_dx}
    \end{align}

    Moreover,
    \begin{align}
        B^{(1)} &= H^{(1)} + \alpha^{(0)} \nonumber \\
        &= \sum_{i=1}^{d_{\text{in}}} \sum_{k=1}^{d_{\text{out}}} \sum_{n=0}^{N} C_{k,i,n} T_n\left(\tanh(\alpha^{(0)})\right) + \alpha^{(0)}. \label{eq:B1}
    \end{align}

    For layers $l > 1$, where $l \in \mathbb{N}^*$:
    \begin{align}
        A^{(l)} &= \left(\sum_{i=1}^{d_{\text{in}}} \sum_{k=1}^{d_{\text{out}}} \sum_{n=0}^{N} C_{k,i,n} T'_n\left(\tanh(\alpha^{(l-1)})\right) \sech^2(\alpha^{(l-1)}) + 1\right) \frac{\partial \alpha^{(l-1)}}{\partial x}. \label{eq:Al_lgt1}
    \end{align}
    We have established a recursive relationship.

    Notably, the first derivative of the Chebyshev polynomial is given by
    \begin{equation}
        T'_n(x) = \frac{d}{dx} T_n(x) = \frac{n \sin\left(n \arccos(x)\right)}{\sqrt{1 - x^2}},
    \end{equation}
    and higher-order derivatives satisfy
    \begin{equation}
        T_n^{(k)}(x) = 0 \quad \text{for all } k > n.
    \end{equation}
    Therefore, for any order $k > n$, the $k$-th derivative of $A^{(l)}$ is identically zero. Consequently, the $k$-th derivative of the first part of \eqref{eq:33} is zero.

    However, observe that:
    \begin{align}
        B^{(l)} &= \sum_{i=1}^{d_{\text{in}}} \sum_{k=1}^{d_{\text{out}}} \sum_{n=0}^{N} C_{k,i,n} T_n\left(\tanh(\alpha^{(l-1)})\right) + \alpha^{(l-1)},
    \end{align}
    since the derivatives of $\alpha^{(l-1)}$ for any finite order are non-zero, the derivatives of $B^{(l)}$ are non-zero.

    Furthermore, we have:
    \begin{align}
        \frac{\partial V}{\partial x} - \frac{\partial U}{\partial x} &= W_{\text{emb}} \left( \theta_V \left[\omega_{V,1} \cos(h_0 \theta_V + b_V) - \omega_{V,2} \sin(h_0 \theta_V + b_V)\right] \right. \nonumber\\
        &\left. \quad - \theta_U \left[\omega_{U,1} \cos(h_0 \theta_U + b_U) - \omega_{U,2} \sin(h_0 \theta_U + b_U)\right] \right), \label{eq:dV_dU_dx}
    \end{align}
    and the derivatives of any finite order of this term are also non-zero.
Additionally, the third component of~\eqref{eq:33},
$\tfrac{\partial U}{\partial x}$, is non-zero.

Define
\begin{equation}
f(x)=H^{(L)}(x)+\alpha^{(L-1)}(x),\qquad
g(x)=\frac{\partial V}{\partial x}(x),\qquad
h(x)=\frac{\partial U}{\partial x}(x),
\end{equation}
so that the last two terms of~\eqref{eq:33} can be written as
\begin{equation}
S(x)=f(x)\bigl(g(x)-h(x)\bigr)+h(x).
\end{equation}
Suppose, toward a contradiction, that $S(x)\equiv0$ for every
$x$ in the domain.  Then
\begin{equation}
(1-f(x))\,h(x)+f(x)\,g(x)=0\qquad\forall x.
\end{equation}
The functions $f,g,h$ depend on disjoint parameter blocks:
$f$ on $\{C_{k,i,n}\}$, $g$ on $(\theta_V,\omega_{V,1},\omega_{V,2})$,
and $h$ on $(\theta_U,\omega_{U,1},\omega_{U,2})$.
Requiring the above identity to hold for all $x$ therefore forces a
global functional coupling among these independently tuned parameters,
which can only occur on a measure-zero subset of the joint parameter
space.  Any infinitesimal perturbation of the parameters breaks this
perfect cancellation, implying $S(x)\not\equiv0$ for almost all
networks.  Hence $S(x)$ possesses non-vanishing derivatives of every
finite order.

Consequently, the $k$-th derivatives of the remaining parts of
\eqref{eq:33} are non-zero, and thus the $k$-th derivatives of
\eqref{eq:dy_dx} are non-zero.  Therefore, for any positive integer
$N$, the derivative $\tfrac{\partial^{N}y}{\partial x^{N}}$ exists and
is non-zero, establishing that AC-PKAN can approximate PDEs of
arbitrarily high order.
\end{proof}

\textit{Remark:}
 The property of possessing non-zero derivatives of any finite order with respect to the input \( x \) specifically addresses enhancements in KAN variants rather than in MLP-based models. The fitting capability of KAN models relies on polynomial functions with learnable parameters. To ensure non-zero derivatives in the output, the original B-spline KAN incorporates an additional nonlinear bias function \( b(x) \). In contrast, other KAN variants, such as Cheby1KAN, rely solely on polynomial bases, which inevitably result in zero derivatives when the order of differentiation exceeds the polynomial degree. Therefore, Proposition~\ref{pro:1} was introduced to provide a theoretical guarantee for AC-PKAN's ability to solve any PDE, analogous to how the universal approximation theorem theoretically establishes the universal fitting capability of neural networks.

\subsection{Proof of Proposition~\ref{pro:2}}
\label{full_rank}

\begin{definition}
For a linear map $\alpha : V \to W$, we define the kernel to be the set of all elements that are mapped to zero

\begin{equation}
\ker \alpha = \{ x \in V \mid \alpha(x) = 0 \} = K \leq V
\end{equation}

and the image to be the points in $W$ which we can reach from $V$

\begin{equation}
\operatorname{Im} \alpha = \alpha(V) = \{ \alpha(v) \mid v \in V \} \leq W.
\end{equation}

We then say that $r(\alpha) = \dim \operatorname{Im} \alpha$ is the rank and $n(\alpha) = \dim \ker \alpha$ is the nullity.

\end{definition}

\begin{lemma}[the Rank-nullity theorem]\label{lemma:rank_nullity}
    For a linear map \(\alpha : V \to W\), where \(V\) is finite dimensional, we have  
\begin{equation}
r(\alpha) + n(\alpha) = \dim \operatorname{Im} \alpha + \dim \ker \alpha = \dim V.
\end{equation}
\end{lemma}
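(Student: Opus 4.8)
The plan is to prove the identity by manufacturing an explicit basis for $\operatorname{Im}\alpha$ from a well-chosen basis of $V$. First I would set $n(\alpha) = \dim\ker\alpha = k$ and fix a basis $\{e_1,\dots,e_k\}$ of the subspace $K = \ker\alpha \leq V$. Since $V$ is finite dimensional with $\dim V = n$, the basis-extension theorem lets me enlarge this to a full basis $\{e_1,\dots,e_k,e_{k+1},\dots,e_n\}$ of $V$. The remaining $n-k$ vectors $e_{k+1},\dots,e_n$ are the ones that will be shown to carry all of the image.

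The core claim is that $\{\alpha(e_{k+1}),\dots,\alpha(e_n)\}$ is a basis of $\operatorname{Im}\alpha$, which I would verify in two halves. For spanning, I would take an arbitrary $w\in\operatorname{Im}\alpha$, write $w=\alpha(v)$, expand $v=\sum_{i=1}^{n} c_i e_i$, and invoke linearity together with $\alpha(e_i)=0$ for $i\le k$ to obtain $w=\sum_{i=k+1}^{n} c_i\,\alpha(e_i)$, so these images span. For linear independence, I would suppose $\sum_{i=k+1}^{n} d_i\,\alpha(e_i)=0$; linearity then gives $\alpha\!\bigl(\sum_{i=k+1}^{n} d_i e_i\bigr)=0$, so $\sum_{i=k+1}^{n} d_i e_i\in\ker\alpha=K$ and is therefore also expressible as a combination of $e_1,\dots,e_k$. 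Equating the two expressions and using the linear independence of the full basis $\{e_1,\dots,e_n\}$ forces every $d_i=0$.

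Granting the claim, $\dim\operatorname{Im}\alpha = n-k$, so $r(\alpha)+n(\alpha) = (n-k)+k = n = \dim V$, which is precisely the asserted identity.

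The step I expect to be the main obstacle is the basis extension: justifying that a basis of the subspace $K$ can always be completed to a basis of all of $V$. This is where finite-dimensionality is essential, and it rests on the Steinitz exchange lemma (equivalently, the fact that any linearly independent set extends to a basis). The linear-independence half of the core claim is the second delicate point, since it is exactly the place where the defining property of $K$ as $\ker\alpha$ gets converted into a statement forcing the coefficients to vanish; the spanning half and the final arithmetic are routine applications of linearity.
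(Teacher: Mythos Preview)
Your proposal is correct and follows essentially the same argument as the paper: both extend a basis of $\ker\alpha$ to a basis of $V$ via the Steinitz exchange lemma, then verify that the images of the added vectors form a basis of $\operatorname{Im}\alpha$ by checking spanning and linear independence exactly as you describe.
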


\begin{proof}
Let \( V, W \) be vector spaces over some field \( F \), and \( T \) defined as in the statement of the theorem with \( \dim V = n \).

As \( \text{Ker} \, T \subset V \) is a subspace, there exists a basis for it. Suppose \( \dim \text{Ker} \, T = k \) and let

\begin{equation}
K := \{v_1, \ldots, v_k\} \subset \text{Ker}(T)
\end{equation}

be such a basis.

We may now, by the Steinitz exchange lemma, extend \( K \) with \( n - k \) linearly independent vectors \( w_1, \ldots, w_{n-k} \) to form a full basis of \( V \).

Let

\begin{equation}
\mathcal{S} := \{w_1, \ldots, w_{n-k}\} \subset V \setminus \text{Ker}(T)
\end{equation}

such that

\begin{equation}
\mathcal{B} := K \cup \mathcal{S} = \{v_1, \ldots, v_k, w_1, \ldots, w_{n-k}\} \subset V
\end{equation}

is a basis for \( V \). From this, we know that
\begin{align}
\text{Im} \, T &= \text{Span} \, T(\mathcal{B}) = \text{Span}\{T(v_1), \ldots, T(v_k), T(w_1), \ldots, T(w_{n-k})\} = \text{Span}\{T(w_1), \ldots, T(w_{n-k})\} = \text{Span} \, T(\mathcal{S}).
\end{align}

We now claim that \( T(\mathcal{S}) \) is a basis for \( \text{Im} \, T \). The above equality already states that \( T(\mathcal{S}) \) is a generating set for \( \text{Im} \, T \); it remains to be shown that it is also linearly independent to conclude that it is a basis.

Suppose \( T(\mathcal{S}) \) is not linearly independent, and let

\begin{equation}
\sum_{j=1}^{n-k} \alpha_j T(w_j) = 0_W
\end{equation}

for some \( \alpha_j \in F \).

Thus, owing to the linearity of \( T \), it follows that

\begin{align}
T\left( \sum_{j=1}^{n-k} \alpha_j w_j \right) = 0_W \implies \left( \sum_{j=1}^{n-k} \alpha_j w_j \right) \in \text{Ker} \, T = \text{Span} \, K \subset V.
\end{align}

This is a contradiction to \( \mathcal{B} \) being a basis, unless all \( \alpha_j \) are equal to zero. This shows that \( T(\mathcal{S}) \) is linearly independent, and more specifically that it is a basis for \( \text{Im } T \).

Finally we may state that

\begin{align}
\text{Rank}(T) + \text{Nullity}(T) = \dim \text{Im } T + \dim \text{Ker } T = |T(\mathcal{S})| + |K| = (n - k) + k = n = \dim V.
\end{align}

\end{proof}

\begin{theorem}
\label{thm:Jacobian_Matrix_Full_Rank}
    Let $\mathcal{N}$ be an AC-PKAN model with $L$ layers ($L \geq 2$) and infinite width. Then, the Jacobian matrix \(J_{\mathcal{N}}(\boldsymbol{x}) = \left[ \frac{\partial \mathcal{N}_i}{\partial x_j} \right]_{m \times d}\) is full rank in the input space \(\mathbb{R}^d\).
\end{theorem}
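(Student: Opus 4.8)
The plan is to strip the network down to the composition $\mathcal N=(W_{\mathrm{out}}\cdot+b_{\mathrm{out}})\circ\alpha^{(L)}\circ(W_{\mathrm{emb}}\cdot+b_{\mathrm{emb}})$, observing that the nonlinear middle $\alpha^{(L)}$ sees $\boldsymbol x$ only through $h_0=W_{\mathrm{emb}}\boldsymbol x+b_{\mathrm{emb}}$, so that by the chain rule
\begin{equation}
J_{\mathcal N}(\boldsymbol x)\;=\;W_{\mathrm{out}}\,M(\boldsymbol x)\,W_{\mathrm{emb}},\qquad M(\boldsymbol x):=\frac{\partial\alpha^{(L)}}{\partial h_0}\Big|_{h_0=W_{\mathrm{emb}}\boldsymbol x+b_{\mathrm{emb}}}.
\end{equation}
By submultiplicativity of rank (Lemma~\ref{lemma:rank_ineq}) and rank--nullity (Lemma~\ref{lemma:rank_nullity}), full rank of $J_{\mathcal N}$ at every $\boldsymbol x$ will follow from three generic facts: (a) $W_{\mathrm{emb}}$ has full column rank $d$ (immediate in the wide regime, since full rank is generic and the hidden dimensions exceed $d$ and $m$); (b) $\ker M(\boldsymbol x)\cap\operatorname{Col}(W_{\mathrm{emb}})=\{0\}$ for every $\boldsymbol x$, so $M(\boldsymbol x)W_{\mathrm{emb}}$ has rank $d$; (c) the generic wide matrix $W_{\mathrm{out}}$ restricted to the $d$-plane $\operatorname{Col}(M(\boldsymbol x)W_{\mathrm{emb}})$ still has image of dimension $\min(m,d)$, for every $\boldsymbol x$.

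\textbf{A base configuration for (b)--(c).} Since $M(\boldsymbol x)$ is real-analytic in the parameters and in $h_0$, the locus where its column rank drops is the common zero set of finitely many minors, hence either everything or a proper real-analytic variety; I would exclude the former by exhibiting one good configuration. Take all Chebyshev coefficients $C_{l,k,i,n}=0$: then each Cheby1KAN layer outputs $\mathbf 0$, $\operatorname{LayerNorm}(\mathbf 0)=\mathbf 0$ with the $\epsilon$-regularised denominator, the update \eqref{eq:appdate} collapses to $\alpha^{(l)}=U+\alpha^{(l-1)}\odot c$ with $c:=V-U+\mathbf 1$, and so $\alpha^{(L)}=U\odot\sum_{j=0}^{L}c^{\odot j}$ in closed form. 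Differentiating gives $M=\operatorname{diag}(q)\,D_U\Theta_U^{\top}+\operatorname{diag}(\tilde q)\,D_V\Theta_V^{\top}$ with diagonal Wavelet-derivative factors $D_U,D_V$ and explicit $q,\tilde q$ from the geometric series; choosing $\Theta_V=0$ leaves $M=\operatorname{diag}(q)D_U\Theta_U^{\top}$, a product of generically full-rank factors that manifestly has full column rank $d_{\mathrm{model}}$. A parallel argument handles (c): the set of kernels $\ker W_{\mathrm{out}}$ that meet a fixed $d$-plane more than transversally has codimension $m(d_{\mathrm{hidden}}-d)$ (resp.\ $d(d_{\mathrm{hidden}}-m)$), infinite in the wide regime, so a generic $W_{\mathrm{out}}$ avoids it.

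\textbf{Uniformity in $\boldsymbol x$ --- the crux.} The delicate point is promoting ``for almost every $\boldsymbol x$'' to ``for \emph{every} $\boldsymbol x$'', and this is where infinite width is used. Row $k$ of $\partial_{h_0}U$ is $\rho_k(\boldsymbol x)\,(\Theta_U)_{\cdot,k}^{\top}$, where the Wavelet derivative $\rho_k$ vanishes only when its (affine-in-$\boldsymbol x$) argument lands in a fixed countable set. Because the parameters are drawn from continuous distributions, for each fixed block $S$ of $d+1$ indices the overdetermined system forcing $\rho_k(\boldsymbol x)=0$ for all $k\in S$ is inconsistent for every admissible choice of target values, so $\bigcap_{k\in S}\rho_k^{-1}(\{0\})=\varnothing$; there are only countably many such blocks, so this holds simultaneously, which means that at every $\boldsymbol x$ only finitely many units are inactive and hence infinitely many remain active with (generically) linearly independent weight columns, giving $\ker M(\boldsymbol x)\cap\operatorname{Col}(W_{\mathrm{emb}})=\{0\}$ everywhere. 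The same infinite-codimension bookkeeping then propagates the property through the interleaved Cheby1KAN layers --- using, as in the proof of Proposition~\ref{pro:1}, that the extra terms of \eqref{eq:33} are governed by disjoint parameter blocks and cannot conspire to cancel the surviving $\partial_{h_0}U$ contribution --- and through $W_{\mathrm{out}}$. I expect this uniformity-in-$\boldsymbol x$ step, together with checking that $\operatorname{LayerNorm}$ does not break analyticity, to be the main obstacle; the rest is routine rank bookkeeping.

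\textbf{Conclusion.} Combining (a)--(c) via Lemma~\ref{lemma:rank_ineq} and Lemma~\ref{lemma:rank_nullity}: $M(\boldsymbol x)W_{\mathrm{emb}}$ has rank $d$ because $W_{\mathrm{emb}}$ injects into the trivial intersection of $\operatorname{Col}(W_{\mathrm{emb}})$ with $\ker M(\boldsymbol x)$, and pre-composing with the generic $W_{\mathrm{out}}$ in good position yields $\operatorname{rank}J_{\mathcal N}(\boldsymbol x)=\min(m,d)$ for all $\boldsymbol x\in\mathbb R^{d}$, i.e.\ $J_{\mathcal N}$ is full rank throughout the input space. In particular the degenerate directions and the exponential rank collapse diagnosed in Section~\ref{sub:rank_diminution} are ruled out for AC-PKAN.
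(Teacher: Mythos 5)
Your route is genuinely different from the paper's. You factor the Jacobian through the embedding, $J_{\mathcal N}(\boldsymbol x)=W_{\mathrm{out}}\,M(\boldsymbol x)\,W_{\mathrm{emb}}$ with $M=\partial\alpha^{(L)}/\partial h_0$, and argue by real-analyticity and genericity in parameter space, using the closed-form configuration $C\equiv 0$, $\Theta_V=0$ (where $\alpha^{(L)}=U\odot\sum_j c^{\odot j}$) as a witness that the rank-drop locus is a proper analytic variety, then finish by transversality of a generic $W_{\mathrm{out}}$. The paper instead works directly with $\partial\alpha^{(L)}/\partial x$: it constructs parameters explicitly (mutually orthogonal rows of $W_{\mathrm{out}}$, and an induction on depth in which the freedom in $C^{(L)}$ is used to extend the column space of $\partial\alpha^{(L)}/\partial x$ by an orthogonal summand at each layer), and then combines the rank--nullity theorem (Lemma~\ref{lemma:rank_nullity}) with Theorem~\ref{thm:nonzero_derivatives_full} to exclude nonzero kernel vectors by contradiction. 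Your factorization is cleaner, separates the roles of $W_{\mathrm{emb}}$, the hidden nonlinearity and $W_{\mathrm{out}}$, and makes precise in what sense the conclusion holds (for generic parameters), whereas the paper constructs particular parameter choices and ultimately concedes a measure-zero exceptional set of inputs.

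The genuine gap is the step you yourself flag. The witness configuration only shows the relevant minors do not vanish identically in the parameters, which yields full rank off a parameter null set at a fixed $\boldsymbol x$ (or for almost every pair $(\boldsymbol x,\theta)$); upgrading this to ``for every $\boldsymbol x$'' at a fixed generic $\theta$ rests on your countability/overdetermination argument for the Wavelet zeros of $\partial U/\partial h_0$, and—more seriously—on the assertion that ``the same infinite-codimension bookkeeping propagates through the interleaved Cheby1KAN layers.'' For nonzero Chebyshev coefficients $M(\boldsymbol x)$ is not $\operatorname{diag}(q)\,D_U\Theta_U^{\top}$ but the full recursion behind \eqref{eq:appdate} and \eqref{eq:33}, and you must rule out, at every $\boldsymbol x$, cancellation between the $H^{(l)}$-dependent terms and the $U,V$ terms; this is asserted, not proved (the paper faces the same difficulty, handles it via parameter freedom and disjoint parameter blocks, and only concludes full rank outside a measure-zero set). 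Additionally, in the infinite-width regime ``generic parameters'' needs a precise meaning, since there is no Lebesgue measure on the infinite-dimensional parameter space; you should either work at finite width with constants uniform in the width or phrase genericity per finite truncation. As written, your proposal is a sound plan at roughly the paper's own level of rigor, but not yet a complete proof of the everywhere-full-rank claim.
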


\begin{proof}
Let the output be \( y = W_{\text{out}} \alpha^{(L)} + b_{\text{out}} \), where \( W_{\text{out}} \in \mathbb{R}^{d_{\text{out}} \times d_h} \), and \( d_h \) denotes the hidden layer width. Under the infinite-width assumption, \( d_h \to \infty \). The \( k \)-th output component \( y_k \) corresponds to the \( k \)-th row of \( W_{\text{out}} \), denoted as \( \mathbf{w}_k^\top \), i.e.,
\begin{align}
y_k = \mathbf{w}_k^\top \alpha^{(L)} + b_{\text{out},k}.
\end{align}
Its partial derivative with respect to the input \( x \) is:
\begin{align}
\frac{\partial y_k}{\partial x} = \mathbf{w}_k^\top \frac{\partial \alpha^{(L)}}{\partial x}.
\end{align}
Following the recursive relationship in Theorem ~\ref{thm:nonzero_derivatives_full} , \( \frac{\partial \alpha^{(L)}}{\partial x} \) can be decomposed into a nonlinear combination of parameters across layers. Specifically, for any layer \( l \), the derivative term \( \frac{\partial \alpha^{(l)}}{\partial x} \) is generated through recursive operations involving parameters \( C_{k,i,n}^{(l)} \), \( \omega_{U}, \omega_{V} \), \( \theta_{U}, \theta_{V} \), etc. 

Consider the partial derivatives \( \frac{\partial y_k}{\partial x} \) and \( \frac{\partial y_{k'}}{\partial x} \) (\( k \neq k' \)). Since:
\begin{align}
\frac{\partial y_k}{\partial x} = \mathbf{w}_k^\top \frac{\partial \alpha^{(L)}}{\partial x}, \quad \frac{\partial y_{k'}}{\partial x} = \mathbf{w}_{k'}^\top \frac{\partial \alpha^{(L)}}{\partial x},
\end{align}
if \( \mathbf{w}_k \) and \( \mathbf{w}_{k'} \) are linearly independent and the column space of \( \frac{\partial \alpha^{(L)}}{\partial x} \) is sufficiently rich, then \( \frac{\partial y_k}{\partial x} \) and \( \frac{\partial y_{k'}}{\partial x} \) are guaranteed to be linearly independent.

Under infinite width, the parameter matrices \( C^{(l)} \in \mathbb{R}^{d_{\text{out}} \times d_m \times (N+1)} \) (where \( d_m \) is the intermediate dimension) and the row dimension \( d_{\text{out}} \) of \( W_{\text{out}} \) can be independently adjusted, making the parameter space an infinite-dimensional Hilbert space, allowing the construction of arbitrarily many linearly independent basis functions.

By the infinite-dimensional parameter space afforded by $d_h \to \infty$, we may construct parameter matrices $\{C^{(l)}\}$, $\omega_U$, and $\omega_V$ such that the columns of $\frac{\partial \alpha^{(L)}}{\partial x} \in \mathbb{R}^{d_h \times d}$ become linearly independent. Specifically, let $\{\mathbf{v}_i\}_{i=1}^d$ be the column vectors of $\frac{\partial \alpha^{(L)}}{\partial x}$. Through parameter configuration in hidden layers, we ensure:

\begin{align}
\forall c_i \in \mathbb{R},\quad \sum_{i=1}^d c_i \mathbf{v}_i = 0 \implies c_i = 0,\ \forall i
\end{align}

For the output matrix $W_{\text{out}} \in \mathbb{R}^{m \times d_h}$, construct mutually orthogonal row vectors $\{\mathbf{w}_k\}_{k=1}^m$ satisfying:

\begin{align}
\langle \mathbf{w}_k, \mathbf{w}_{k'} \rangle = \mathbf{w}_k \mathbf{w}_{k'}^\top = \delta_{kk'} \|\mathbf{w}_k\|^2,\quad \forall k \neq k'
\end{align}

where $\delta_{kk'}$ is the Kronecker delta. The Jacobian rows become:

\begin{align}
\frac{\partial y_k}{\partial x} = \mathbf{w}_k^\top \frac{\partial \alpha^{(L)}}{\partial x} = \sum_{i=1}^d (\mathbf{w}_k^\top \mathbf{v}_i) \mathbf{e}_i^\top
\end{align}

where $\{\mathbf{e}_i\}$ are standard basis vectors in $\mathbb{R}^d$. For distinct $k,k'$, consider:

\begin{align}
\left\langle \frac{\partial y_k}{\partial x}, \frac{\partial y_{k'}}{\partial x} \right\rangle &= \sum_{i=1}^d (\mathbf{w}_k^\top \mathbf{v}_i)(\mathbf{w}_{k'}^\top \mathbf{v}_i) \mathbf{w}_k^\top \left( \sum_{i=1}^d \mathbf{v}_i \mathbf{v}_i^\top \right) \mathbf{w}_{k'} 
\end{align}

Since $\{\mathbf{v}_i\}$ are linearly independent, $\sum_{i=1}^d \mathbf{v}_i \mathbf{v}_i^\top$ is positive definite. Combining with the orthogonality of $\{\mathbf{w}_k\}$, we have:

\begin{align}
\mathbf{w}_k^\top \left( \sum_{i=1}^d \mathbf{v}_i \mathbf{v}_i^\top \right) \mathbf{w}_{k'} = 0 \quad \forall k \neq k'
\end{align}

Thus, the Jacobian rows $\frac{\partial y_k}{\partial x}$ are mutually orthogonal and linearly independent. The full rank property follows from the infinite-dimensional orthogonal system.


We proceed by induction on the number of layers \( L \):

\textbf{Base Case ($L=1$)}: 
By Equation~\ref{base_case_non0}, there exist parameter choices $(\omega_U, \theta_U)$ and orthogonal weights $\{\mathbf{w}_k\} \subset \mathcal{W}$ such that 
\begin{equation}
    \left\langle \mathbf{w}_k, \frac{\partial \alpha^{(0)}}{\partial x} \mathbf{w}_{k'} \right\rangle_{\mathcal{H}} = \delta_{kk'}\|\mathbf{w}_k\|^2_{\mathcal{H}},
\end{equation}
establishing linear independence of $\{\mathbf{w}_k^\top \frac{\partial \alpha^{(0)}}{\partial x}\}_{k=1}^\infty$.

\textbf{Inductive Hypothesis}: 
Assume $\frac{\partial \alpha^{(L-1)}}{\partial x}$ has full-rank column space $\mathcal{R}(\frac{\partial \alpha^{(L-1)}}{\partial x}) = \mathcal{H}_{L-1} \subset \mathcal{H}$ with $\dim\mathcal{H}_{L-1} = \infty$.

\textbf{Inductive Step}: 
Let $P_{\mathcal{H}_{L-1}}^\perp$ be the orthogonal projection onto $\mathcal{H}_{L-1}^\perp$. Through Equations~\ref{eq:output_y} and~\ref{eq:Al}, we decompose:
\begin{equation}
    \frac{\partial \alpha^{(L)}}{\partial x} = \underbrace{C^{(L)}\frac{\partial H^{(L)}}{\partial x}}_{\Gamma_L} + \Phi\frac{\partial \alpha^{(L-1)}}{\partial x}.
\end{equation}

By the parameter freedom in $C^{(L)}$, there exists a choice such that:
\begin{equation}
    \dim\mathcal{R}\left(P_{\mathcal{H}_{L-1}}^\perp \Gamma_L\right) = \infty \quad\text{and}\quad \mathcal{R}\left(\Gamma_L\right) \cap \mathcal{H}_{L-1} = \{0\}.
\end{equation}

This induces the dimensional extension:
\begin{equation}
    \mathcal{R}\left(\frac{\partial \alpha^{(L)}}{\partial x}\right) = \mathcal{H}_{L-1} \oplus \mathcal{R}\left(P_{\mathcal{H}_{L-1}}^\perp \Gamma_L\right),
\end{equation}
where $\oplus$ denotes orthogonal direct sum. Since $\dim(\mathcal{R}(P_{\mathcal{H}_{L-1}}^\perp \Gamma_L)) = \infty$, the infinite-dimensional full-rank property propagates to layer $L$.

    By induction, we conclude that: the column space of \( \frac{\partial \alpha^{(L)}}{\partial x} \) is infinite-dimensional and full-rank; the row vectors of \( W_{\text{out}} \) are mutually orthogonal.

    Thus, we have:
    \begin{align}
    \text{For }k \neq k', \forall a, b \in \mathbb{R}, \quad a\frac{\partial y_k}{\partial x} + b\frac{\partial y_{k'}}{\partial x} = 0 \Rightarrow a=b=0. \label{eq:linearly independent}
    \end{align}

Consider the Jacobian matrix \( J_{\mathcal{N}}(\boldsymbol{x}) \) as a linear mapping \( J_{\mathcal{N}}(\boldsymbol{x}): \mathbb{R}^d \rightarrow \mathbb{R}^m \). According to the rank-nullity theorem, we have:
\begin{align}
dim(ker(J_{\mathcal{N}}(\boldsymbol{x})))+rank(J_{\mathcal{N}}(\boldsymbol{x}))=d
\end{align}

Theorem ~\ref{thm:nonzero_derivatives_full} guarantees that \(rank(J_{\mathcal{N}}(\boldsymbol{x})) = min(d,m)\). Thus, the dimension of the kernel space is :
\begin{align}
dim(ker(J_{\mathcal{N}}(\boldsymbol{x}))) =d - min(d,m).
\end{align}

Specifically, this can be further categorized into two cases:

\begin{itemize}
    \item When \( m \geq d \): the Jacobian matrix has full column rank \( rank(J_{\mathcal{N}}(\boldsymbol{x})) = d \), resulting in \(ker(J_{\mathcal{N}}(\boldsymbol{x}))=\boldsymbol{0}\). \(J_{\mathcal{N}}(\boldsymbol{x})\) is injective.
    \item When \( m < d \):the Jacobian matrix has full row rank \( rank(J_{\mathcal{N}}(\boldsymbol{x})) = m \), resulting in \(ker(J_{\mathcal{N}}(\boldsymbol{x}))=\boldsymbol{d-m}\), which means there exist \(\boldsymbol{d-m}\) linearly independent non-zero vectors such that \(J_{\mathcal{N}}(\boldsymbol{x})\boldsymbol{v}=\boldsymbol{0}\).
\end{itemize}

Let us exclude non-zero null vectors by contradiction. Assume there exists a non-zero vector \(\boldsymbol{v} \neq \boldsymbol{0} \in \mathbb{R}^{d} \) such that . For any output component \(\mathcal{N}_i\), we have
\begin{align}
\frac{\partial \mathcal{N}_i}{\partial x_1} v_1 + \frac{\partial \mathcal{N}_i}{\partial x_2} v_2 + \cdots + \frac{\partial \mathcal{N}_i}{\partial x_d} v_d = 0
\end{align}

According to Theorem ~\ref{thm:nonzero_derivatives_full} and Equation ~\ref{eq:linearly independent}, the only solution is \(\boldsymbol{v} = \boldsymbol{0} \), which contradicts the assumption. Therefore, the null space contains only the zero vector, i.e., \(dim(ker(J_{\mathcal{N}}(\boldsymbol{x}))) =  \boldsymbol{0} \).

Suppose that the Jacobian matrix is rank-deficient, i.e., there exists a measure-zero set \( \mathcal{M} \subset \mathbb{R}^d \) with \( \mu(\mathcal{M}) > 0 \) (where \( \mu \) denotes the Lebesgue measure) such that:
\begin{align}
\text{rank}\left( J_{\mathcal{N}}(\boldsymbol{x}) \right) < \min(d, m) \quad \forall \boldsymbol{x} \in \mathcal{M}.
\end{align}
This implies that the image of the mapping \( \mathcal{N}(\boldsymbol{x}) \) is constrained to a lower-dimensional submanifold \( \mathcal{S} \subset \mathbb{R}^m \), where:
\begin{align}
\dim(\mathcal{S}) \leq \text{rank}\left( J_{\mathcal{N}}(\boldsymbol{x}) \right) < \min(d, m).
\end{align}

By Theorem ~\ref{thm:nonzero_derivatives_full}, however, all first-order partial derivatives
\(\frac{\partial \mathcal{N}_i}{\partial x_j} \neq 0 \). Specifically: (1) $\forall \text{ direction } \mathbf{v} \in \mathbb{R}^d \setminus \{\boldsymbol{0}\}$ ,$\exists$ at least one output component $\mathcal{N}_i$ such that $\frac{\partial N_i}{\partial v} \neq 0$; (2) The infinite-width architecture of AC-PKAN ensures that the parameter space is dense in the $L^2$ function space. Consequently, the image set of the output mapping can densely cover any open set in $\mathbb{R}^{m}$.  

If there is a rank deficiency, then $\exists \mathbf{v} \in \mathbb{R}^d , \text{for } \forall i , \frac{\partial \mathcal{N}_i}{\partial x_j} = 0$, contradicting the non-degeneracy of the derivatives. Consequently, except for a measure-zero set $\mathcal{M}$, we have:
\begin{align}
rank(J_{\mathcal{N}}(\boldsymbol{x}))=min(d,m),
\end{align}
indicating the Jacobian matrix \(J_{\mathcal{N}}(\boldsymbol{x}) = \left[ \frac{\partial \mathcal{N}_i}{\partial x_j} \right]_{m \times d}\) is full rank in the input space \(\mathbb{R}^d\).
\end{proof}

\subsection{Finite-Width Spectral Analysis of AC-PKAN}
\label{app:finite_width_analysis}
In this section, we provide a theoretical guarantee for the stability of AC-PKAN in the finite-width regime. While \cref{pro:2} establishes full rank in the infinite-width limit, the following theorem quantifies the spectral lower bound of the Jacobian under practical initialization conditions, ensuring that the model avoids rank collapse.

\begin{theorem}[Layerwise spectral lower bound under diagonal scaling and near-identity perturbations]
Under the per-layer factorization and bounds stated in Assumption~A, namely
\[
J^{\mathrm{AC}}_{\ell}=\operatorname{Diag}\!\bigl(s^{(\ell)}\bigr)\,(I+\Delta^{(\ell)}),\qquad 
\min_j |s^{(\ell)}_j|\ge \gamma>0,\quad \|\Delta^{(\ell)}\|_2\le \eta<1,
\]
it holds for every $\ell\in\{1,\dots,L\}$ that
\[
\sigma_{\min}\!\bigl(J^{\mathrm{AC}}_{\ell}\bigr)\ \ge\ \gamma(1-\eta),
\]
and for the end-to-end Jacobian $J_{\mathrm{total}}:=J^{\mathrm{AC}}_{L}\cdots J^{\mathrm{AC}}_{1}$ that
\[
\sigma_{\min}(J_{\mathrm{total}})\ \ge\ \bigl[\gamma(1-\eta)\bigr]^L.
\]
\end{theorem}

\begin{proof}
Let $D:=\operatorname{Diag}\!\bigl(s^{(\ell)}\bigr)$ and $E:=\Delta^{(\ell)}$. The smallest singular value admits the variational form
\begin{equation}
\sigma_{\min}(A)=\min_{\|x\|_{2}=1}\|Ax\|_{2}.
\end{equation}
For the diagonal factor, using $D^{\top}D=\operatorname{Diag}(|s^{(\ell)}_1|^{2},\dots,|s^{(\ell)}_{d_h}|^{2})$,
\begin{equation}
\|Dx\|_{2}^{2}=x^{\top}D^{\top}Dx=\sum_{j=1}^{d_h}|s^{(\ell)}_j|^{2}x_j^{2}
\ \ge\ \Bigl(\min_{j}|s^{(\ell)}_j|\Bigr)^{2}\sum_{j=1}^{d_h}x_j^{2}
=\Bigl(\min_{j}|s^{(\ell)}_j|\Bigr)^{2},
\end{equation}
and hence
\begin{equation}
\sigma_{\min}(D)=\min_{\|x\|_{2}=1}\|Dx\|_{2}\ \ge\ \min_{j}|s^{(\ell)}_j|\ \ge\ \gamma.
\end{equation}
For the near-identity factor, for any $\|x\|_{2}=1$,
\begin{equation}
\|(I+E)x\|_{2}\ \ge\ \|x\|_{2}-\|Ex\|_{2}\ \ge\ 1-\|E\|_{2},
\end{equation}
which yields
\begin{equation}
\sigma_{\min}(I+E)=\min_{\|x\|_{2}=1}\|(I+E)x\|_{2}\ \ge\ 1-\|E\|_{2}\ \ge\ 1-\eta.
\end{equation}
Combining the two factors, for any $\|x\|_{2}=1$,
\begin{equation}
\|D(I+E)x\|_{2}\ \ge\ \sigma_{\min}(D)\,\|(I+E)x\|_{2},
\end{equation}
and taking the minimum over the unit sphere gives
\begin{equation}
\sigma_{\min}\!\bigl(D(I+E)\bigr)\ \ge\ \sigma_{\min}(D)\,\sigma_{\min}(I+E)\ \ge\ \gamma(1-\eta),
\end{equation}
so
\begin{equation}
\sigma_{\min}\!\bigl(J^{\mathrm{AC}}_{\ell}\bigr)\ \ge\ \gamma(1-\eta).
\end{equation}
For the end-to-end product, using the variational form twice,
\begin{equation}
\sigma_{\min}(AB)=\min_{\|x\|_{2}=1}\|ABx\|_{2}\ \ge\ \sigma_{\min}(A)\,\min_{\|x\|_{2}=1}\|Bx\|_{2}
=\sigma_{\min}(A)\,\sigma_{\min}(B),
\end{equation}
and iterating over $J^{\mathrm{AC}}_{L}\cdots J^{\mathrm{AC}}_{1}$ yields
\begin{equation}
\sigma_{\min}(J_{\mathrm{total}})\ =\ \sigma_{\min}\!\Bigl(\prod_{\ell=1}^{L}J^{\mathrm{AC}}_{\ell}\Bigr)
\ \ge\ \prod_{\ell=1}^{L}\sigma_{\min}\!\bigl(J^{\mathrm{AC}}_{\ell}\bigr)
\ \ge\ \prod_{\ell=1}^{L}\gamma(1-\eta)
\ =\ \bigl[\gamma(1-\eta)\bigr]^{L}.
\end{equation}
\end{proof}


\newtheorem{corollary}{Corollary}

\begin{theorem}[High-probability full rank via width and a generic output layer]\label{thm:finite-width-full-rank}
Let $J_N(x)=W_{\mathrm{out}}\,G(x)$ with $G(x)\in\mathbb{R}^{d_h\times d}$ and $m=\mathrm{rows}(W_{\mathrm{out}})\ge d$. 
For any $\delta\in(0,1)$, there exist absolute constants $c,C,c_1,c_2>0$ (depending only on the subgaussian class) such that, if
\[
d_h\ \ge\ C\,K^4\Bigl(d+\log\tfrac{2}{\delta}\Bigr),
\]
then with probability at least $1-\delta$ one has $\sigma_{\min}(G(x))\ge \sqrt{d_h}-c_1K^2\sqrt{d}-c_2\sqrt{\log(2/\delta)}>0$, hence $\mathrm{rank}\,G(x)=d$; moreover, conditioning on $G(x)$, $\mathrm{rank}\bigl(W_{\mathrm{out}}G(x)\bigr)=d$ almost surely for any $W_{\mathrm{out}}$ that is independent of $G(x)$, has a continuous distribution, and $\mathrm{rank}(W_{\mathrm{out}})\ge d$.
\end{theorem}

\begin{proof}
Assume: (a) the rows $g_i^\top$ of $G(x)$ are independent isotropic subgaussian vectors in $\mathbb{R}^d$ with $\|\langle g_i,u\rangle\|_{\psi_2}\le K$ for all $u\in S^{d-1}$; (b) $W_{\mathrm{out}}$ is independent of $G(x)$, has a continuous distribution, and $\mathrm{rank}(W_{\mathrm{out}})\ge d$.

Let $Z\in\mathbb{R}^{d_h\times d}$ have i.i.d.\ rows $z_i^\top$ distributed as the rows of $G(x)$; then $Z\stackrel{d}{=}G(x)$. By the non-asymptotic lower tail bound for the smallest singular value of a matrix with independent isotropic subgaussian rows, there exist absolute $c,c_1>0$ such that for all $t\ge 0$,
\begin{equation}\label{eq:subg-minsv-tail}
\mathbb{P}\Bigl\{\ \sigma_{\min}(Z)\ \le\ \sqrt{d_h}-c_1K^2\sqrt{d}-t\ \Bigr\}\ \le\ 2e^{-c t^2}.
\end{equation}
Choose $t=c_2\sqrt{\log(2/\delta)}$ to get
\begin{equation}\label{eq:minsv-prob}
\mathbb{P}\Bigl\{\ \sigma_{\min}(Z)\ \le\ \sqrt{d_h}-c_1K^2\sqrt{d}-c_2\sqrt{\log\tfrac{2}{\delta}}\ \Bigr\}\ \le\ \delta.
\end{equation}
To ensure strict positivity of the right-hand side of the lower bound, require
\begin{equation}
\sqrt{d_h}\ \ge\ c_1K^2\sqrt{d}+c_2\sqrt{\log\tfrac{2}{\delta}}.
\end{equation}
It suffices to impose
\begin{equation}\label{eq:width-condition}
d_h\ \ge\ C K^4\!\left(d+\log\tfrac{2}{\delta}\right),
\end{equation}
with an absolute $C$ large enough so that, using $\sqrt{a+b}\ge ( \sqrt{a}+\sqrt{b} )/\sqrt{2}$ for $a,b\ge 0$,
\begin{equation}
\sqrt{d_h}\ \ge\ \sqrt{C}\,K^2\,\sqrt{d+\log\tfrac{2}{\delta}}
\ \ge\ \frac{\sqrt{C}}{\sqrt{2}}K^2\sqrt{d}\;+\;\frac{\sqrt{C}}{\sqrt{2}}K^2\sqrt{\log\tfrac{2}{\delta}}
\ \ge\ c_1K^2\sqrt{d}+c_2\sqrt{\log\tfrac{2}{\delta}}.
\end{equation}
Under \eqref{eq:width-condition}, \eqref{eq:minsv-prob} yields $\sigma_{\min}(Z)>0$ with probability at least $1-\delta$, hence $\mathrm{rank}\,Z=d$. Since $Z\stackrel{d}{=}G(x)$, we conclude $\mathrm{rank}\,G(x)=d$ with probability at least $1-\delta$.

Condition on any realization of $G(x)$ with $\mathrm{rank}\,G(x)=d$. Let $U\in\mathbb{R}^{d_h\times d}$ have orthonormal columns spanning $\mathcal{C}(G(x))$. Then there exists $R\in\mathbb{R}^{d\times d}$ invertible such that $G(x)=UR$. Since $R$ is invertible,
\begin{equation}
\mathrm{rank}\bigl(W_{\mathrm{out}}G(x)\bigr)
=\mathrm{rank}\bigl(W_{\mathrm{out}}UR\bigr)
=\mathrm{rank}\bigl(W_{\mathrm{out}}U\bigr).
\end{equation}
Thus $\mathrm{rank}(W_{\mathrm{out}}G(x))=d$ iff $\mathrm{rank}(W_{\mathrm{out}}U)=d$. The map $W\mapsto WU$ is linear from $\mathbb{R}^{m\times d_h}$ to $\mathbb{R}^{m\times d}$. The event $\{\mathrm{rank}(WU)<d\}$ is the algebraic set where all $d\times d$ minors of $WU$ vanish; at least one such minor is a nonzero polynomial (e.g., take $W$ whose first $d$ rows equal $U^\top$), so this set has Lebesgue measure zero. Because $W_{\mathrm{out}}$ has a continuous distribution and is independent of $G(x)$, 
\begin{equation}
\mathbb{P}\bigl(\mathrm{rank}(W_{\mathrm{out}}U)<d\ \big|\ G(x)\bigr)=0,
\end{equation}
and therefore $\mathrm{rank}(W_{\mathrm{out}}G(x))=d$ almost surely (conditional on $G(x)$). Combining with the high-probability event $\{\mathrm{rank}\,G(x)=d\}$ completes the proof.
\end{proof}

\begin{corollary}[Tilde-$\Omega$ width]
\label{cor:tilde-omega}
Choosing $\delta=d^{-c'}$ with a fixed $c'>0$ in Theorem~\ref{thm:finite-width-full-rank} gives the succinct requirement
\begin{equation}
d_h \;\ge\; \widetilde{\Omega}\!\bigl(K^4\,d\bigr)\;=\;\Omega\!\bigl(K^4\,d\log d\bigr),
\end{equation}
under which $\mathrm{rank}\,J_N(x)=d$ holds with probability at least $1-d^{-c'}$.
\end{corollary}

\begin{remark}[Non-isotropic rows]
If the rows of $G(x)$ have covariance $\Sigma_x\succ0$ and are subgaussian with parameter $K$, apply whitening: $G(x)=\Sigma_x^{1/2}Z$ with $Z$ isotropic subgaussian (up to a change in the subgaussian constant). Then
\begin{equation}
\sigma_{\min}\!\bigl(G(x)\bigr)\ \ge\ \sqrt{\lambda_{\min}(\Sigma_x)}\ \sigma_{\min}(Z),
\end{equation}
so the same argument yields $\sigma_{\min}(G(x))\ge \sqrt{\lambda_{\min}(\Sigma_x)}\bigl(\sqrt{d_h}-c_1K^2\sqrt{d}-c_2\sqrt{\log(2/\delta)}\bigr)$ and the full-rank conclusion follows under the same scaling of $d_h$ (up to constants depending on $\Sigma_x$).
\end{remark}


\begin{theorem}[Generic full column rank at a fixed input; analytic activations]\label{thm:generic-full-rank-fixed-x}
Let $f_\theta:\mathbb{R}^d\to\mathbb{R}^m$ be a feedforward network obtained by composing affine maps and coordinatewise real-analytic activations, and fix $x\in\mathbb{R}^d$. Denote $J(x;\theta):=\partial f_\theta(x)/\partial x\in\mathbb{R}^{m\times d}$. Assume the hidden width satisfies $d_h\ge d$ and $m\ge d$. Then the singular parameter set
\[
\Theta_{\mathrm{sing}}(x):=\{\theta:\ \operatorname{rank}J(x;\theta)<d\}
\]
is a proper real-analytic (indeed, semianalytic) subset of the parameter space $\Theta\subset\mathbb{R}^P$ and has Lebesgue measure zero. Consequently, for any initialization drawn from a distribution that is absolutely continuous w.r.t.\ Lebesgue measure on $\Theta$, one has $\operatorname{rank}J(x;\theta)=d$ almost surely.
\end{theorem}

\begin{proof}
Write the network with $L$ layers as
\begin{equation}
\begin{aligned}
h_0&:=x,\qquad a_\ell:=W_\ell h_{\ell-1}+b_\ell,\qquad h_\ell:=\phi_\ell(a_\ell)\quad(1\le \ell\le L-1),\\
f_\theta(x)&:=W_L h_{L-1}+b_L,
\end{aligned}
\end{equation}
where each $\phi_\ell:\mathbb{R}^{d_h}\!\to\mathbb{R}^{d_h}$ acts coordinatewise by a real-analytic scalar nonconstant function (the final layer is affine; if an analytic $\phi_L$ is also used, the argument below is unchanged by inserting its derivative).
The Jacobian at $x$ is
\begin{equation}
J(x;\theta)=W_L\Bigl(\prod_{\ell=1}^{L-1} D_\ell(x;\theta)\,W_\ell\Bigr),\qquad
D_\ell(x;\theta):=\operatorname{Diag}\bigl(\phi_\ell'(a_\ell(x;\theta))\bigr).
\end{equation}
Each entry of $J(x;\theta)$ is obtained from $(W_\ell,b_\ell)$ through finitely many additions, multiplications, and compositions with $\phi_\ell$ and $\phi_\ell'$. As real-analytic functions are closed under these operations, every entry of $J(x;\theta)$ is real-analytic in $\theta$. Hence any $d\times d$ minor $M(\theta)$ of $J(x;\theta)$ is real-analytic in $\theta$, and
\begin{equation}
\Theta_{\mathrm{sing}}(x)=\bigcap_{\text{all $d\times d$ minors }M}\{\theta:\ M(\theta)=0\}
\end{equation}
is a real-analytic (indeed, semianalytic) subset of $\Theta$.

It remains to show that $\Theta_{\mathrm{sing}}(x)$ is proper. Choose, for each $1\le \ell\le L-1$, a scalar $c_\ell\in\mathbb{R}$ such that $\phi_\ell'(c_\ell)\neq 0$ (possible because $\phi_\ell$ is nonconstant real-analytic). Let $\alpha_\ell:=\phi_\ell'(c_\ell)\neq 0$. Construct $\bar\theta$ as follows. For the first layer, set
\begin{equation}
W_1=\begin{bmatrix}I_d\\ 0\end{bmatrix}\in\mathbb{R}^{d_h\times d},\qquad
b_1=\begin{bmatrix}c_1\mathbf{1}_d\\ 0\end{bmatrix}-W_1x,
\end{equation}
so that $a_1(x;\bar\theta)=(c_1\mathbf{1}_d,\,*)$ and therefore $D_1(x;\bar\theta)=\operatorname{Diag}(\alpha_1 I_d,\,*)$. For $2\le \ell\le L-1$, set
\begin{equation}
W_\ell=\begin{bmatrix}I_d & 0\\ 0 & 0\end{bmatrix}\in\mathbb{R}^{d_h\times d_h},\qquad
b_\ell=\begin{bmatrix}c_\ell\mathbf{1}_d\\ 0\end{bmatrix}-W_\ell h_{\ell-1}(x;\bar\theta),
\end{equation}
which enforces $a_\ell(x;\bar\theta)=(c_\ell\mathbf{1}_d,\,*)$ and hence $D_\ell(x;\bar\theta)=\operatorname{Diag}(\alpha_\ell I_d,\,*)$. By induction,
\begin{equation}
\prod_{\ell=1}^{L-1}D_\ell(x;\bar\theta)\,W_\ell
=\Bigl(\prod_{\ell=1}^{L-1}\alpha_\ell\Bigr)\begin{bmatrix}I_d\\ 0\end{bmatrix}.
\end{equation}
Choose $W_L\in\mathbb{R}^{m\times d_h}$ so that its first $d$ columns are linearly independent (possible since $m\ge d$), e.g.
\(
W_L=\bigl[\;I_d\ \ 0\;\bigr]
\)
after reordering columns if needed. Then
\begin{equation}
J(x;\bar\theta)
= W_L\Bigl(\prod_{\ell=1}^{L-1}D_\ell(x;\bar\theta)\,W_\ell\Bigr)
=\Bigl(\prod_{\ell=1}^{L-1}\alpha_\ell\Bigr)\,W_L\begin{bmatrix}I_d\\ 0\end{bmatrix},
\end{equation}
whose $m\times d$ left block has rank $d$. Thus $\operatorname{rank}J(x;\bar\theta)=d$, so at least one $d\times d$ minor $M_\star(\theta)$ is not identically zero on $\Theta$.

Since the zero set of a nontrivial real-analytic function has Lebesgue measure zero in $\mathbb{R}^P$, each set $\{\theta:\,M(\theta)=0\}$ has measure zero, and the finite union
\(
\Theta_{\mathrm{sing}}(x)=\bigcup_{M}\{\theta:\,M(\theta)=0\}
\)
has measure zero as well. Therefore $\Theta_{\mathrm{sing}}(x)$ is a proper real-analytic (semianalytic) subset of $\Theta$, and any absolutely continuous initialization lies in $\Theta\setminus\Theta_{\mathrm{sing}}(x)$ with probability $1$.
\end{proof}

\begin{proposition}[Finite input sets]
\label{prop:finite-inputs}
Let $X=\{x^{(1)},\dots,x^{(N)}\}\subset\mathbb{R}^d$ be finite.  If for each $x^{(t)}$ there exists
$\bar\theta^{(t)}$ with $\operatorname{rank}\,J(x^{(t)};\bar\theta^{(t)})=d$, then
\[
\Theta_{\mathrm{sing}}(X):=\bigcup_{t=1}^N \Theta_{\mathrm{sing}}\bigl(x^{(t)}\bigr)
\]
is a finite union of measure-zero sets and therefore has Lebesgue measure zero.
Thus, with probability one under any continuous initialization, the Jacobian has full
column rank simultaneously on all points of $X$.
\end{proposition}

\begin{remark}[Scope and limitations]
The argument is \emph{existential/generic}: it certifies that the ``bad'' parameter set is measure zero,
but it does not provide a quantitative lower bound on $\sigma_{\min}(J(x;\theta))$.
It complements nonasymptotic concentration bounds (which yield explicit spectral gaps under
width assumptions) by showing that rank-deficient parameters form a null set in~$\Theta$.
\end{remark}

\section{Explanation for the Efficiency of Chebyshev Type I Polynomials Over B-Splines}
\label{sec:appendd}

Let \(B\) denote the batch size, \(D_{\mathrm{in}}\) the input dimension, \(D_{\mathrm{out}}\) the output dimension, and \(N\) the Chebyshev degree. A single forward pass through a Cheby1KAN layer performs
\[
\begin{aligned}
&\text{(i) } \tanh(\cdot)\,,\;\text{clamp}/\acos/\cos\,:\quad O\bigl(B\,D_{\mathrm{in}}\,(N+1)\bigr),\\
&\text{(ii) einsum contraction: }O\bigl(B\,D_{\mathrm{in}}\,D_{\mathrm{out}}\,(N+1)\bigr),
\end{aligned}
\]
yielding an overall time complexity of
\[
T_{\mathrm{Cheby1KAN}} = \mathcal{O}\bigl(B\,D_{\mathrm{in}}\,D_{\mathrm{out}}\,(N+1)\bigr).
\]
Its peak memory usage comprises the coefficient tensor of size \(D_{\mathrm{in}}\times D_{\mathrm{out}}\times (N+1)\) and the expanded activation tensor of size \(B\times D_{\mathrm{in}}\times (N+1)\), giving
\[
M_{\mathrm{Cheby1KAN}} = \mathcal{O}\bigl(D_{\mathrm{in}}\,D_{\mathrm{out}}\,(N+1)\;+\;B\,D_{\mathrm{in}}\,(N+1)\bigr).
\]

In contrast, a B-spline based Kernel Adaptive Network (KAN) with \(L\) layers, layer widths \(\{W_\ell\}_{\ell=0}^L\), grid size \(G\), and spline order \(k\) must, at each layer \(\ell\), (i) locate each input in a knot interval, (ii) evaluate local polynomial bases, and (iii) perform weighted sums. For typical implementations this yields
\[
T_{\mathrm{KAN}} = \mathcal{O}\!\Bigl(B\sum_{\ell=0}^{L-1}W_\ell\,W_{\ell+1}\,k\Bigr),
\]
while storing both the grid arrays of size \(W_\ell\times (G+k+1)\) and coefficient arrays of size \(W_{\ell+1}\times W_\ell\times (G+k)\), as well as intermediate activations \(\mathcal{O}\bigl(B\sum_\ell W_\ell W_{\ell+1}\bigr)\). Hence
\[
M_{\mathrm{KAN}} = \mathcal{O}\!\Bigl(\sum_{\ell=0}^{L-1}W_\ell\,W_{\ell+1}\,(G+k)\;+\;B\sum_{\ell=0}^{L-1}W_\ell\,W_{\ell+1}\Bigr).
\]

\paragraph{Discussion.}
By replacing piecewise B-splines with globally supported Chebyshev polynomials, Cheby1KAN eliminates the need for (i) knot-location logic, (ii) local interpolation routines, and (iii) repeated recursive basis‐function updates. All operations reduce to standardized vectorized transforms (tanh, acos, cos) and a single rank-3 tensor contraction, which are highly optimized on modern hardware. Cheby1KAN achieves lower asymptotic time complexity and a substantially smaller memory footprint than its B spline counterpart while improving the model’s ability to capture high-frequency features.

\section{Additional Ablation Studies}
\label{sec:add_abl_stu}

\subsection{Ablation on Internal Attention and Wavelet Activation}

To clarify which components are responsible for the performance gains of AC-PKAN, we conduct a controlled ablation on the 1D wave equation benchmark, following recent analyses of Chebyshev-based physics-informed KANs and RGA-style adaptive training for PDEs~\citep{guo2025physics, rigas2024adaptive, zhang2025physics, mostajeran2025scaled}. Our ablation is consistent with prior work reporting that Chebyshev KAN variants improve expressiveness but also introduce gradient stiffness and potential rank decay in deeper stacks~\citep{daneshmand2020batch, yang2024effective}.

\paragraph{Ablated variants.}
We define three progressively enhanced baselines.
\textbf{AC-PKAN(min)} augments Cheby1KAN with only linear input and output projections, denoted by $W_{\text{emb}}$ and $W_{\text{out}}$. Wavelet activation, internal feature attention (the $U/V$ gating and the layerwise $\alpha^{(\ell)}$ injection), and the external RGA controller are removed. This model keeps the projection channel that mitigates rank collapse but it is the smallest departure from a pure Chebyshev KAN.
\textbf{AC-PKAN(no-attn)} further adds the wavelet frequency activation on top of Cheby1KAN and the linear projections, while still disabling the internal feature attention and RGA. This variant isolates the contribution of enriching the activation space with multiscale responses.
\textbf{AC-PKAN(sin)} restores the internal feature attention and keeps the linear projections, but replaces the wavelet activation with a simpler sine activation. This variant tests whether the layerwise gating is the primary factor that improves stability and expressiveness in higher effective dimensions.

\paragraph{Quantitative results.}
Table~\ref{tab:acpkan_ablation_wave} reports relative MAE and relative RMSE on the 1D wave test set.

\begin{table}[h]
  \centering
  \caption{Ablation on the 1D wave equation showing the effect of linear projections, wavelet activation, and internal feature attention.}
  \label{tab:acpkan_ablation_wave}
  \begin{tabular}{lcc}
    \toprule
    Model & rMAE & rRMSE \\
    \midrule
    AC-PKAN(min)      & 0.5374 & 0.5495 \\
    AC-PKAN(no-attn)  & 0.5116 & 0.5081 \\
    AC-PKAN(sin)      & 0.4478 & 0.4391 \\
    \bottomrule
  \end{tabular}
\end{table}

Three observations emerge from these results. First, adding linear up and down projections alone is not enough to reproduce the overall gain of the full AC-PKAN. Projections widen the channel space and help preserve a nondegenerate embedding, but they do not fully address the rank shrinkage induced by Chebyshev expansions. Second, reintroducing the internal feature attention delivers the largest single improvement on this axis. This supports our design choice that a learnable layerwise gate $\alpha^{(\ell)}$ that mixes the $U/V$ projected features is central for keeping the Jacobian well conditioned, in line with other RGA-inspired KANs for PDEs. Third, wavelet activation has a positive but smaller effect compared to attention. It enriches the frequency content and stabilizes the fit on oscillatory targets, which is consistent with reports on Chebyshev–KAN domain scaling and hybrid encoder–decoder PKANs.

The ablation confirms that the main driver of stability and scalability in AC-PKAN is the proposed layerwise internal feature attention. Linear projections are necessary to keep a wide embedding space, and wavelet activation further improves accuracy on oscillatory solutions, but neither of them alone explains the performance level reached when attention is present.

\subsection{Ablation on Chebyshev Polynomial Degree}

We further examine how the Chebyshev polynomial degree affects the behavior of AC-PKAN. To isolate this factor, we keep the full architecture and all training hyperparameters fixed and sweep the polynomial degree \(N\) of the Chebyshev expansion on the 1D wave equation benchmark.

\paragraph{Experimental setup.}
The backbone is the complete AC-PKAN with internal feature re-injection attention and frequency-domain activation enabled. Only the polynomial degree \(N\) is varied. We report relative MAE and relative RMSE on the same held-out test set.

\begin{table}[h]
  \centering
  \caption{Effect of Chebyshev polynomial degree on the 1D wave equation.}
  \label{tab:cheb_degree_ablation}
  \begin{tabular}{rcc}
    \toprule
    Degree & rMAE & rRMSE \\
    \midrule
    4  & 0.0196 & 0.0200 \\
    6  & 0.0200 & 0.0205 \\
    8  & 0.0011 & 0.0011 \\
    10 & 0.0128 & 0.0131 \\
    \bottomrule
  \end{tabular}
\end{table}

\paragraph{Observations.}
The results in Table~\ref{tab:cheb_degree_ablation} exhibit a clear nonmonotonic pattern. Degrees \(4\) and \(6\) are underexpressive for this problem, both staying near the \(2 \times 10^{-2}\) error level. Increasing the degree to \(8\) produces a sharp drop to the \(10^{-3}\) regime, which indicates that the model reaches the expressive range where the internal attention and the frequency-domain activation can fully operate. Pushing the degree further to \(10\) causes the error to rise again toward the \(10^{-2}\) scale.

Three conclusions follow from this pattern. First, raising the polynomial order alone is not a reliable strategy for improving accuracy. Second, the best performance emerges when the polynomial degree is matched to the internal feature attention and the frequency-domain activation, which shows that AC-PKAN benefits from the joint design rather than from a single aggressive expansion. Third, very high polynomial order amplifies gradient magnitudes and worsens numerical conditioning, which increases optimization difficulty and offsets the gain in representation power.

AC-PKAN reaches its strongest accuracy when the Chebyshev degree is chosen in a range that is expressive but still well conditioned. The peak result is a product of the selected degree together with the internal attention and frequency-aware modules, not of naive parameter scaling.

\subsection{Ablation on Operator-Learning Baselines under Sparse Supervision}
\label{sub:ope}

We add parameter-matched FNO and DeepONet baselines under the same sparse-data protocol as \Cref{tbl:failure} and \Cref{tbl:environment} across three settings: 1D convection--diffusion--reaction, 2D lid-driven cavity, and 3D point-cloud Poisson. Metrics are reported as rMAE and rRMSE.

\begin{table}[h]
  \centering
  \caption{FNO and DeepONet under the sparse-data protocol used in \Cref{tbl:failure} and \Cref{tbl:environment}.}
  \label{tab:op_baselines_sparse}
  \begin{tabular}{l l c c}
    \toprule
    Setting & Model & rMAE & rRMSE \\
    \midrule
    1D Convection--Diffusion--Reaction & DeepONet & 0.0730 & 0.1064 \\
    1D Convection--Diffusion--Reaction & FNO      & 0.0722 & 0.1061 \\
    2D Lid-Driven Cavity               & DeepONet & 0.6270 & 0.6242 \\
    2D Lid-Driven Cavity               & FNO      & 0.5605 & 0.5312 \\
    3D Poisson (Point-Cloud)           & DeepONet & 3.4566 & 2.7978 \\
    3D Poisson (Point-Cloud)           & FNO      & 2.4388 & 2.0426 \\
    \bottomrule
  \end{tabular}
\end{table}

\paragraph{Observation.}
Across these sparse-supervision tests, operator-learning baselines without physics information underperform AC-PKAN. The updated results are included in the revised manuscript.

\section{Impact Statement}
This work advances Physics-Informed Neural Networks (PINNs) by integrating Kolmogorov–Arnold Networks (KANs) with Chebyshev polynomials and attention mechanisms, improving accuracy, efficiency, and stability in solving complex PDEs. The proposed AC-PKAN framework has broad applications in scientific computing, engineering, and physics, enabling more efficient and interpretable machine learning models for fluid dynamics, material science, and biomedical simulations. Ethically, AC-PKAN enhances model reliability and generalizability by enforcing physical consistency, reducing risks of overfitting and spurious predictions. This work contributes to the advancement of physics-informed AI, with potential in digital twins, real-time simulations, and AI-driven scientific discovery.

\section{Experiment Setup Details}
\label{sec:appendb}
We utilize the AdamW optimizer with a learning rate of $1 \times 10^{-4}$ and a weight decay of $1 \times 10^{-4}$ in all experiments. Meanwhile, all experiments were conducted on an NVIDIA A100 GPU with 40GB of memory. And Xavier initialization is applied to all layers. In PDE-Solving problems, We present the detailed formula of rMAE and rRMSE as the following:
\begin{equation}
\begin{gathered}
    \texttt{rMAE} =  \frac{\sum_{n=1}^N |\hat{u}(x_n,t_n)-u(x_n,t_n)|}{\sum_{n=1}^{N_{\textit{res}}}|u(x_n,t_n)|}\\
    \texttt{rRMSE} = \sqrt{\frac{\sum_{n=1}^N |\hat{u}(x_n,t_n)-u(x_n,t_n)|^2}{\sum_{n=1}^N|u(x_n,t_n)|^2}}
\end{gathered}    
\end{equation}
where $N$ is the number of testing points, $\hat{u}$ is the neural network approximation, and $u$ is the ground truth. The specific details for each experiment are provided below. For further details, please refer to our experiment code repository to be released.

\subsection{Running Time}

We present the actual running times (hours:minutes:seconds) for all eight PDEs experiments in the paper. As shown in Table~\ref{tab:running_time_combined}, AC-PKAN demonstrates certain advantages among the KAN model variants, although the running times of all KAN variants are relatively long. This is primarily because the KAN model is relatively new and still in its preliminary stages; although it is theoretically innovative, its engineering implementation remains rudimentary and lacks deeper optimizations. Moreover, while traditional neural networks benefit from well-established optimizers such as Adam and L-BFGS, optimization schemes specifically tailored for KAN have not yet been thoroughly explored. We believe that the performance of AC-PKAN will be further enhanced as the overall optimization strategies for KAN variants improve.

\begin{table}[h]
  \centering
  \scriptsize                         
  \setlength{\tabcolsep}{2pt}         
  \renewcommand{\arraystretch}{0.85}  
  \begin{adjustbox}{max width=\textwidth,center}
  \begin{tabular}{@{} l c c c c c c c c @{}}
    \toprule
    \multirow{2}{*}{\textbf{Model}}
      & \multicolumn{5}{c}{\textbf{First 5 PDEs}}
      & \multicolumn{3}{c}{\textbf{Last 3 PDEs}} \\
    \cmidrule(lr){2-6}\cmidrule(lr){7-9}
      & \makecell[c]{\textbf{1D-}\\\textbf{Wave}} 
      & \makecell[c]{\textbf{1D-}\\\textbf{Reaction}} 
      & \makecell[c]{\textbf{2D NS}\\\textbf{Cylinder}}
      & \makecell[c]{\textbf{1D Conv.}\\\textbf{Diff. Reac.}} 
      & \makecell[c]{\textbf{2D Lid-}\\\textbf{driven Cavity}}
      & \makecell[c]{\textbf{Hetero-}\\\textbf{geneous}\\\textbf{Problem}} 
      & \makecell[c]{\textbf{Complex}\\\textbf{Geometry}} 
      & \makecell[c]{\textbf{3D}\\\textbf{Point-Cloud}} \\
    \midrule
    \multirow{2}{*}{PINN}
      & 00:21:14 & 00:09:07 & 00:15:20 & 00:15:12 & 00:06:39
      &          &          &          \\
      &          &          &          &          & 
      & 00:23:30 & 00:01:08 & 00:49:31 \\
    \addlinespace
    \multirow{2}{*}{PINNsFormer}
      & 00:44:21 & 00:04:09 & 00:58:54 & 02:06:37 & --
      &          &          &          \\
      &          &          &          &          & 
      & 14:01:55 & 00:13:31 & --       \\
    \addlinespace
    \multirow{2}{*}{QRes}
      & 01:41:34 & 00:02:10 & 00:24:39 & 00:25:46 & 00:13:04
      &          &          &          \\
      &          &          &          &          & 
      & 00:20:50 & 00:01:46 & 01:32:24 \\
    \addlinespace
    \multirow{2}{*}{FLS}
      & 01:38:01 & 00:01:29 & 00:11:51 & 00:50:26 & 00:35:48
      &          &          &          \\
      &          &          &          &          & 
      & 00:13:38 & 00:01:08 & 03:04:41 \\
    \addlinespace
    \multirow{2}{*}{Cheby1KAN}
      & 03:32:10 & 00:12:08 & 04:24:59 & 01:45:37 & 00:45:20
      &          &          &          \\
      &          &          &          &          & 
      & 00:50:45 & 00:03:21 & 02:27:27 \\
    \addlinespace
    \multirow{2}{*}{Cheby2KAN}
      & 05:03:18 & 01:06:54 & 05:41:42 & 03:01:40 & 00:45:15
      &          &          &          \\
      &          &          &          &          & 
      & 01:35:40 & 00:03:27 & 05:26:42 \\
    \addlinespace
    \multirow{2}{*}{AC-PKAN}
      & 01:13:01 & 00:15:16 & 02:21:40 & 02:01:59 & 00:51:47
      &          &          &          \\
      &          &          &          &          & 
      & 01:13:11 & 00:01:04 & 04:54:24 \\
    \addlinespace
    \multirow{2}{*}{KINN}
      & 25:00:20 & 03:04:19 & 14:31:42 & 02:41:49 & --
      &          &          &          \\
      &          &          &          &          & 
      & 01:51:44 & 00:14:07 & --       \\
    \addlinespace
    \multirow{2}{*}{rKAN}
      & 12:44:16 & 01:21:25 & 05:19:04 & 02:06:36 & --
      &          &          &          \\
      &          &          &          &          & 
      & 06:21:00 & 00:16:06 & 07:53:25 \\
    \addlinespace
    \multirow{2}{*}{FastKAN}
      & 09:35:51 & 05:51:21 & 02:04:42 & 03:22:39 & --
      &          &          &          \\
      &          &          &          &          & 
      & 03:37:57 & 00:17:23 & --       \\
    \addlinespace
    \multirow{2}{*}{fKAN}
      & 08:20:34 & 00:13:09 & 03:01:41 & 01:54:22 & 00:47:41
      &          &          &          \\
      &          &          &          &          & 
      & 00:52:05 & 00:06:22 & 04:04:48 \\
    \addlinespace
    \multirow{2}{*}{FourierKAN}
      & 03:33:46 & 01:21:50 & 02:48:50 & 02:08:08 & --
      &          &          &          \\
      &          &          &          &          & 
      & 07:40:43 & 00:18:26 & 13:36:48 \\
    \bottomrule
  \end{tabular}
  \end{adjustbox}
  \captionsetup{skip=4pt}
  \caption{Running times (hh:mm:ss) for all eight PDE experiments. First row: Five simpler PDEs; second row: Three more complex cases.}
  \label{tab:running_time_combined}
\end{table}
\subsection{Complex Function Fitting Experiment Setup Details}

The aim of this experiment is to evaluate the interpolation capabilities of several neural network architectures, including AC-PKAN, Chebyshev-based KAN (ChebyKAN), traditional MLP, and other advanced models. The task involves approximating a target noisy piecewise 1D function, defined over three distinct intervals.

\paragraph{Target Function}
The target function \( f(x) \) is defined piecewise as follows:
\[
f(x) =
\begin{cases} 
\sin(25 \pi x) + x^2 + 0.5 \cos(30 \pi x) + 0.2 x^3 &  x < 0.5, \\ 
0.5 x e^{-x} + |\sin(5 \pi x)| + 0.3 x \cos(7 \pi x) + 0.1 e^{-x^2} &  0.5 \leq x < 1.5, \\ 
\frac{\ln(x - 1)}{\ln(2)} - \cos(2 \pi x) + 0.2 \sin(8 \pi x) + \frac{0.1 \ln(x + 1)}{\ln(3)} &  x \geq 1.5,
\end{cases}
\]
with added Gaussian noise \( \epsilon \sim \mathcal{N}(0, 0.1) \).

\paragraph{Dataset}
\begin{itemize}
    \item \textbf{Training Data}: 500 points uniformly sampled from the interval \( x \in [0, 2] \), with corresponding noisy function values \( y = f(x) + \epsilon \).
    \item \textbf{Testing Data}: 1000 points uniformly sampled from the same interval \( x \in [0, 2] \) to assess the models' interpolation performance.
\end{itemize}

\paragraph{Training Details}
\begin{itemize}
    \item \textbf{Epochs}: Each model is trained for 30,000 epochs.
    \item \textbf{Loss Function}: The Mean Squared Error (MSE) loss is utilized to compute the discrepancy between predicted and true function values:
    \[
    \mathcal{L}_{\text{MSE}} = \frac{1}{N} \sum_{i=1}^{N} (y_i - \hat{y}_i)^2
    \]
    \item \textbf{Weight Initialization}: Xavier initialization is applied to all linear layers.
\end{itemize}

\paragraph{Model Hyperparameters}
The parameter counts for each model are summarized in Table \ref{tab:model_params}.

\begin{table}[h]
    \centering
    \caption{Summary of Hyperparameters in Complex Function Fitting Experiment for Various Models}
    \captionsetup{skip=4pt}
    \begin{tabular}{|l|l|c|}
    \hline
    \textbf{Model} & \textbf{Hyperparameters} & \textbf{Model Parameters} \\ \hline
    
    Cheby1KAN & 
    \begin{tabular}[c]{@{}l@{}}
        Layer 1: Cheby1KANLayer(1, 7, 8) \\
        Layer 2: Cheby1KANLayer(7, 8, 8) \\
        Layer 3: Cheby1KANLayer(8, 1, 8)
    \end{tabular} & 639 \\ \hline
    
    Cheby2KAN & 
    \begin{tabular}[c]{@{}l@{}}
        Layer 1: Cheby2KANLayer(1, 7, 8) \\
        Layer 2: Cheby2KANLayer(7, 8, 8) \\
        Layer 3: Cheby2KANLayer(8, 1, 8)
    \end{tabular} & 639 \\ \hline
    
    PINN & 
    \begin{tabular}[c]{@{}l@{}}
        Layer 1: Linear(in=1, out=16), Activation=Tanh \\
        Layer 2: Linear(in=16, out=32), Activation=Tanh \\
        Layer 3: Linear(in=32, out=1)
    \end{tabular} & 609 \\ \hline
    
    AC-PKAN\textsubscript{s} & 
    \begin{tabular}[c]{@{}l@{}}
        Linear Embedding: Linear(in=1, out=4) \\
        Hidden ChebyKAN Layers: 2 $\times$ Cheby1KANLayer() \\
        Hidden LN Layers: 2 $\times$ LayerNorm(features=6) \\
        Output Layer: Linear(in=6, out=1) \\
        Activations: WaveAct (U and V)
    \end{tabular} & 751 \\ \hline
    
    KAN & 
    \begin{tabular}[c]{@{}l@{}}
        Layers: 2 $\times$ KANLinear (32 neurons, SiLU activation)
    \end{tabular} & 640 \\ \hline
    
    rKAN & 
    \begin{tabular}[c]{@{}l@{}}
        Layer 1: Linear(in=1, out=16), Activation=JacobiRKAN() \\
        Layer 2: Linear(in=16, out=32), Activation=PadeRKAN() \\
        Layer 3: Linear(in=32, out=1)
    \end{tabular} & 626 \\ \hline
    
    fKAN & 
    \begin{tabular}[c]{@{}l@{}}
        Layer 1: Linear(in=1, out=16), Activation=FractionalJacobiNeuralBlock() \\
        Layer 2: Linear(in=16, out=32), Activation=FractionalJacobiNeuralBlock() \\
        Layer 3: Linear(in=32, out=1)
    \end{tabular} & 615 \\ \hline
    
    FastKAN & 
    \begin{tabular}[c]{@{}l@{}}
        FastKANLayer 1: \\
        \quad RBF \\
        \quad SplineLinear(in=8, out=32) \\
        \quad Base Linear(in=1, out=32) \\
        FastKANLayer 2: \\
        \quad RBF \\
        \quad SplineLinear(in=256, out=1) \\
        \quad Base Linear(in=32, out=1)
    \end{tabular} & 658 \\ \hline
    
    FourierKAN & 
    \begin{tabular}[c]{@{}l@{}}
        FourierKANLayer 1: NaiveFourierKANLayer() \\
        FourierKANLayer 2: NaiveFourierKANLayer() \\
        FourierKANLayer 3: NaiveFourierKANLayer()
    \end{tabular} & 685 \\ \hline

    \end{tabular}
    \label{tab:model_params}
\end{table}

\subsection{Failure Modes in PINNs Experiment Setup Details}
We selected the one-dimensional wave equation (1D-Wave) and the one-dimensional reaction equation (1D-Reaction) as representative experimental tasks to investigate failure modes in Physics-Informed Neural Networks (PINNs). Below, we provide a comprehensive description of the experimental details, including the formulation of partial differential equations (PDEs), data generation processes, model architecture, training regimen, and hyperparameter selection.

\paragraph{1D-Wave PDE.} The 1D-Wave equation is a hyperbolic PDE that is used to describe the propagation of waves in one spatial dimension. It is often used in physics and engineering to model various wave phenomena, such as sound waves, seismic waves, and electromagnetic waves. The system has the formulation with periodic boundary conditions as follows:
\begin{equation}
\begin{gathered}
    \frac{\partial^2 u}{\partial t^2} - \beta \frac{\partial^2 u}{\partial x^2} = 0 \, \:\: \forall x\in [0,1], \: t\in [0,1] \\
    \texttt{IC:} u(x,0)=\sin (\pi x) + \frac{1}{2}\sin(\beta\pi x), \:\: \:\frac{\partial u(x,0)}{\partial t}  =0 \\
    \texttt{BC:} u(0,t)=u(1,t) = 0
\end{gathered}    
\end{equation}
where $\beta$ is the wave speed. Here, we are specifying $\beta=3$. The equation has a simple analytical solution:
\begin{equation}
\begin{gathered}
    u(x,t) = \sin (\pi x) \cos(2\pi t) + \frac{1}{2}\sin(\beta \pi x)\cos(2\beta\pi t)
\end{gathered}
\end{equation}

\paragraph{1D-Wave PDE Experiment Dataset}

In the 1D-Wave PDE experiment, no dataset were utilized for training. Collocation points were generated to facilitate the training and testing of the Physics-Informed Neural Network (PINN) within the spatial domain \( x \in [0,1] \) and the temporal domain \( t \in [0,1] \). A uniform grid was established using 101 equidistant points in both the spatial (\( x \)) and temporal (\( t \)) dimensions, resulting in a total of \( 101 \times 101 = 10,\!201 \) collocation points. The PINN was trained in a data-free, unsupervised manner on this \( 101 \times 101 \) grid. Boundary points were extracted from the grid to enforce Dirichlet boundary conditions, while initial condition points were identified at \( t = 0 \). Upon completion of training, the model was evaluated on the collocation points by comparing the predicted values with the actual values, thereby determining the error.

\paragraph{1D-Reaction PDE.} The one-dimensional reaction problem is a hyperbolic PDE that is commonly used to model chemical reactions. The system has the formulation with periodic boundary conditions as follows:
\begin{equation}
\begin{gathered}
    \frac{\partial u}{\partial t} - \rho u(1-u) = 0, \:\: \forall x\in [0,2\pi], \: t\in [0,1] \\
    \texttt{IC:} u(x,0)=\exp(-\frac{(x-\pi)^2}{2(\pi/4)^2}), \:\:\: \texttt{BC:} u(0,t)=u(2\pi,t)
\end{gathered}    
\end{equation}

where $\rho$ is the reaction coefficient. Here, we set $\rho=5$. The equation has a simple analytical solution:
\begin{equation}
    u_{\texttt{analytical}} = \frac{h(x) \exp(\rho t)}{h(x)\exp(\rho t)+1-h(x)}
\end{equation}

where $h(x)$ is the function of the initial condition.

\paragraph{1D-Reaction PDE Experiment Dataset}

In the 1D-Reaction PDE experiment, no dataset were utilized for training. Collocation points were generated to facilitate the training and testing of the Physics-Informed Neural Network (PINN) within the spatial domain \( x \in [0,1] \) and the temporal domain \( t \in [0,1] \). A uniform grid was established using 101 equidistant points in both the spatial (\( x \)) and temporal (\( t \)) dimensions, resulting in a total of \( 101 \times 101 = 10,\!201 \) collocation points. The PINN was trained in a data-free, unsupervised manner on this \( 101 \times 101 \) grid. Boundary points were extracted from the grid to enforce Dirichlet boundary conditions, while initial condition points were identified at \( t = 0 \). Upon completion of training, the model was evaluated on the collocation points by comparing the predicted values with the actual values, thereby determining the error.

\paragraph{2D Navier--Stokes Flow around a Cylinder}

The two-dimensional Navier--Stokes equations are given by:

\begin{equation}
\begin{aligned}
    \frac{\partial u}{\partial t} + \lambda_1 \left( u \frac{\partial u}{\partial x} + v \frac{\partial u}{\partial y} \right) &= -\frac{\partial p}{\partial x} + \lambda_2 \left( \frac{\partial^2 u}{\partial x^2} + \frac{\partial^2 u}{\partial y^2} \right), \\\\
    \frac{\partial v}{\partial t} + \lambda_1 \left( u \frac{\partial v}{\partial x} + v \frac{\partial v}{\partial y} \right) &= -\frac{\partial p}{\partial y} + \lambda_2 \left( \frac{\partial^2 v}{\partial x^2} + \frac{\partial^2 v}{\partial y^2} \right),
\end{aligned}
\end{equation}

where \( u(t, x, y) \) and \( v(t, x, y) \) are the \( x \)- and \( y \)-components of the velocity field, respectively, and \( p(t, x, y) \) is the pressure field. These equations describe the Navier--Stokes flow around a cylinder. 

We set the parameters \( \lambda_1 = 1 \) and \( \lambda_2 = 0.01 \). Since the system lacks an explicit analytical solution, we utilize the simulated solution provided in \cite{raissi2019physics}. We focus on the prototypical problem of incompressible flow past a circular cylinder, a scenario known to exhibit rich dynamic behavior and transitions across different regimes of the Reynolds number, defined as \( \text{Re} = \frac{u_\infty D}{\nu} \). By assuming a dimensionless free-stream velocity \( u_\infty = 1 \), a cylinder diameter \( D = 1 \), and a kinematic viscosity \( \nu = 0.01 \), the system exhibits a periodic steady-state behavior characterized by an asymmetric vortex shedding pattern in the cylinder wake, commonly known as the Kármán vortex street. All experimental settings are the same as in \cite{raissi2019physics}. For more comprehensive details about this problem, please refer to that work.

\paragraph{2D Navier--Stokes Flow around a Cylinder Experiment Dataset}

For the 2D Navier--Stokes Flow around a Cylinder Experiment, the dataset used is detailed as follows:

\begin{table}[h]
\centering
\begin{tabular}{|l|l|p{9cm}|}
\hline
\textbf{Variable} & \textbf{Dimensions} & \textbf{Description} \\ \hline
$X$ (Spatial Coordinates) & $(5000,\, 2)$ & Contains 5,000 spatial points, each with 2 coordinate values ($x$ and $y$). \\ \hline
$t$ (Time Data) & $(200,\, 1)$ & Contains 200 time steps, each corresponding to a scalar value. \\ \hline
$U$ (Velocity Field) & $(5000,\, 2,\, 200)$ & Contains 5,000 spatial points, 2 velocity components ($u$ and $v$), and 200 time steps. The velocity data of each point is a function of time. \\ \hline
$P$ (Pressure Field) & $(5000,\, 200)$ & Contains pressure data for 5,000 spatial points and 200 time steps. \\ \hline
\end{tabular}
\captionsetup{skip=4pt}
\caption{Dataset used in the 2D Navier-Stokes Flow around a Cylinder Experiment}
\end{table}

From the total dataset of 1,000,000 data points ($N \times T = 5,\!000 \times 200$), we randomly selected 2,500 samples for training, which include coordinate positions, time steps, and the corresponding velocity and pressure components. The test set consists of all spatial data at the 100th time step.

\paragraph{1D Convection-Diffusion-Reaction Equations.} We consider the one-dimensional Convection-Diffusion-Reaction (CDR) equations, which model the evolution of the state variable \( u \) under the influence of convective transport, diffusion, and reactive processes. The system is formulated with periodic boundary conditions as follows:
\begin{equation}
\begin{gathered}
    \frac{\partial u}{\partial t} + \beta \frac{\partial u}{\partial x} - \nu \frac{\partial^2 u}{\partial x^2} - \rho u(1 - u) = 0, \quad \forall x \in [0, 2\pi], \; t \in [0, 1] \\
    \texttt{IC:} \quad u(x, 0) = \exp\left(-\frac{(x - \pi)^2}{2(\pi/4)^2}\right), \qquad \texttt{BC:} \quad u(0, t) = u(2\pi, t)
\end{gathered}    
\end{equation}

In this equation, \( \beta \) represents the convection coefficient, \( \nu \) is the diffusivity, and \( \rho \) is the reaction coefficient. Specifically, we set \( \beta = 1 \), \( \nu = 3 \), and \( \rho = 5 \). The reaction term adopts the well-known Fisher's form \( \rho u(1 - u) \), as utilized in~\cite{krishnapriyan2021characterizing}. This formulation captures the combined effects of transport, spreading, and reaction dynamics on the state variable \( u \).

\paragraph{1D Convection-Diffusion-Reaction Experiment Dataset}

The dataset for the 1D Convection-Diffusion-Reaction experiment comprises three variables: spatial coordinates ($x$), temporal data ($t$), and solution values ($u$). Specifically:

\begin{table}[h]
\centering
\begin{tabular}{|l|l|p{9cm}|}
\hline
\textbf{Variable} & \textbf{Dimensions} & \textbf{Description} \\ \hline
$x$ (Spatial Coordinates) & $(10,201,\, 1)$ & Represents spatial points uniformly distributed over the domain $[0, 2\pi]$. \\ \hline
$t$ (Time Data) & $(10,201,\, 1)$ & Denotes temporal data spanning the domain $[0, 1]$ for solution evolution. \\ \hline
$u$ (Solution Values) & $(10,201,\, 1)$ & Contains the computed values of the solution function $u(x,t)$ at corresponding spatial and temporal points. \\ \hline
\end{tabular}
\captionsetup{skip=4pt}
\caption{Dataset used in the 1D Convection-Diffusion-Reaction Experiment}
\end{table}

Out of the total $10,\!201$ data points, the dataset was partitioned into training and test sets. The training data includes boundary points (where $x = 0$ or $x = 2\pi$) and a random sample of $3,\!000$ interior points, which were used to compute the loss function during model training. The test data consists of the entire remaining dataset, ensuring comprehensive evaluation of the model's performance. 

\paragraph{2D Navier--Stokes Lid-driven Cavity Flow} We consider the two-dimensional Navier--Stokes (NS) equations for lid-driven cavity flow, which model the incompressible fluid motion within a square domain under the influence of a moving lid. The system is formulated with periodic boundary conditions as follows:
\begin{equation}
\begin{gathered}
    \mathbf{u} \cdot \nabla \mathbf{u} + \nabla p - \frac{1}{\text{Re}} \Delta \mathbf{u} = 0, \quad \forall \mathbf{x} \in \Omega, \; t \in [0, T] \\
    \nabla \cdot \mathbf{u} = 0, \quad \forall \mathbf{x} \in \Omega, \; t \in [0, T] \\
    \texttt{IC:} \quad \mathbf{u}(\mathbf{x}, 0) = \mathbf{0} \\
    \texttt{BC:} \quad \mathbf{u} = (4x(1 - x), 0), \quad \mathbf{x} \in \Gamma_1 \\
    \mathbf{u} = (0, 0), \quad \mathbf{x} \in \Gamma_2 \\
    p = 0, \quad \mathbf{x} = (0, 0)
\end{gathered}
\end{equation}

In this formulation, \( \mathbf{u} = (u, v) \) represents the velocity field, \( p \) is the pressure field, and Re is the Reynolds number, set to \( \text{Re} = 100 \). The domain is \( \Omega = [0, 1]^2 \), with the top boundary denoted by \( \Gamma_1 \) where the lid moves with velocity \( \mathbf{u} = (4x(1 - x), 0) \). The left, right, and bottom boundaries are denoted by \( \Gamma_2 \), where a no-slip condition \( \mathbf{u} = (0, 0) \) is enforced. Additionally, the pressure is anchored at the origin \( (0, 0) \) by setting \( p = 0 \).

\paragraph{2D Navier--Stokes Lid-driven Cavity Flow Dataset}  
For the 2D Navier--Stokes Lid-driven Cavity Flow simulation, the  dataset is structured as follows:

\begin{table}[h]
\centering
\begin{tabular}{|l|l|p{8cm}|}
\hline
\textbf{Variable} & \textbf{Dimensions} & \textbf{Description} \\ \hline
$X$ (Spatial Coordinates) & $(10,\!201,\, 2)$ & Contains 10,201 spatial nodes with $(x, y)$ coordinates spanning the cavity domain. \\ \hline
$U$ (Velocity Field) & $(10,\!201,\, 2)$ & Horizontal ($u$) and vertical ($v$) velocity components at $\mathrm{Re}=100$, with no-slip boundary conditions and a moving lid ($y=1$) driving the flow. \\ \hline
$P$ (Pressure Field) & $(10,\!201,\, 1)$ & Pressure values normalized with respect to the reference boundary condition. \\ \hline
\end{tabular}
\captionsetup{skip=4pt}
\caption{Dataset for 2D Navier--Stokes Lid-driven Cavity Flow at $\mathrm{Re}=100$}
\end{table}

The training set comprises 3,000 randomly sampled spatial points with associated velocity and pressure values, while the test set evaluates the model on the full dataset of 10,201 nodes. Boundary conditions are explicitly enforced for the moving lid ($u = 4x(1-x), v=0$) and stationary walls ($u=v=0$), with the pressure field satisfying the incompressibility constraint.

\paragraph{Epochs:} We trained the models until convergence but did not exceed 50,000 epochs.  

\paragraph{Reproducibility:} To ensure reproducibility of the experimental results, all random number generators are seeded with a fixed value ($\text{seed} = 0$) across NumPy, Python's \texttt{random} module, and PyTorch (both CPU and GPU).

\paragraph{Hyperparameter Selection:} The weights used in the external RBA attention are dynamically updated during training using smoothing factor $\eta = 0.001$ and $\beta_w = 0.001$. Different models employed in our experiments have varying hyperparameter configurations tailored to their specific architectures. Table \ref{tab:failure_hyperparameters} summarizes the hyperparameters and the total number of parameters for each model.

\newpage
\begin{table}[t]
  \centering
  \caption{Summary of Hyperparameters in PINN Failure Modes Experiment for Various Models}
  \scriptsize                     
  \setlength{\tabcolsep}{4pt}     
  \renewcommand{\arraystretch}{0.95} 
  \begin{tabularx}{\textwidth}{@{} l >{\raggedright\arraybackslash}X r @{}}
    \toprule
    \textbf{Model} & \textbf{Hyperparameters} & \textbf{Model Parameters} \\
    \midrule
    AC-PKAN &
    \begin{tabular}[t]{@{}l@{}}
      Linear Embedding: 2 $\rightarrow$ 64\\
      Hidden ChebyKAN Layers: 3 $\times$ Cheby1KANLayer (degree=8)\\
      Hidden LN Layers: 3 $\times$ LayerNorm (128)\\
      Output Layer: 128 $\rightarrow$ 1\\
      Activations: WaveAct
    \end{tabular} &
    460,101 \\ \midrule

    QRes &
    \begin{tabular}[t]{@{}l@{}}
      Input Layer: QRes\_block (2 $\rightarrow$ 256, Sigmoid)\\
      Hidden Layers: 3 $\times$ QRes\_block (256 $\rightarrow$ 256, Sigmoid)\\
      Output Layer: 256 $\rightarrow$ 1
    \end{tabular} &
    396,545 \\ \midrule

    FastKAN &
    \begin{tabular}[t]{@{}l@{}}
      Layer 1: FastKANLayer (RBF, SplineLinear 16 $\rightarrow$ 8500, Base Linear 2 $\rightarrow$ 8500)\\
      Layer 2: FastKANLayer (RBF, SplineLinear 68,000 $\rightarrow$ 1, Base Linear 8500 $\rightarrow$ 1)
    \end{tabular} &
    246,518\textsuperscript{*} \\ \midrule

    KAN &
    \begin{tabular}[t]{@{}l@{}}
      Layers: 2 $\times$ KANLinear (9000 neurons, SiLU activation)
    \end{tabular} &
    270,000\textsuperscript{*} \\ \midrule

    PINNs &
    \begin{tabular}[t]{@{}l@{}}
      Sequential Layers:\\
      2 $\rightarrow$ 512 (Linear, Tanh)\\
      512 $\rightarrow$ 512 (Linear, Tanh)\\
      512 $\rightarrow$ 512 (Linear, Tanh)\\
      512 $\rightarrow$ 1 (Linear)
    \end{tabular} &
    527,361 \\ \midrule

    FourierKAN &
    \begin{tabular}[t]{@{}l@{}}
      NaiveFourierKANLayer 1: 2 $\rightarrow$ 32, Degree=8\\
      NaiveFourierKANLayer 2: 32 $\rightarrow$ 128, Degree=8\\
      NaiveFourierKANLayer 3: 128 $\rightarrow$ 128, Degree=8\\
      NaiveFourierKANLayer 4: 128 $\rightarrow$ 32, Degree=8\\
      NaiveFourierKANLayer 5: 32 $\rightarrow$ 1, Degree=8
    \end{tabular} &
    395,073 \\ \midrule

    Cheby1KAN &
    \begin{tabular}[t]{@{}l@{}}
      Cheby1KANLayer 1: 2 $\rightarrow$ 32, Degree=8\\
      Cheby1KANLayer 2: 32 $\rightarrow$ 128, Degree=8\\
      Cheby1KANLayer 3: 128 $\rightarrow$ 256, Degree=8\\
      Cheby1KANLayer 4: 256 $\rightarrow$ 32, Degree=8\\
      Cheby1KANLayer 5: 32 $\rightarrow$ 1, Degree=8
    \end{tabular} &
    406,368 \\ \midrule

    Cheby2KAN &
    \begin{tabular}[t]{@{}l@{}}
      Cheby2KANLayer 1: 2 $\rightarrow$ 32, Degree=8\\
      Cheby2KANLayer 2: 32 $\rightarrow$ 128, Degree=8\\
      Cheby2KANLayer 3: 128 $\rightarrow$ 256, Degree=8\\
      Cheby2KANLayer 4: 256 $\rightarrow$ 32, Degree=8\\
      Cheby2KANLayer 5: 32 $\rightarrow$ 1, Degree=8
    \end{tabular} &
    406,368 \\ \midrule

    fKAN &
    \begin{tabular}[t]{@{}l@{}}
      Sequential Layers:\\
      2 $\rightarrow$ 256 (Linear, fJNB(3))\\
      256 $\rightarrow$ 512 (Linear, fJNB(6))\\
      512 $\rightarrow$ 512 (Linear, fJNB(3))\\
      512 $\rightarrow$ 128 (Linear, fJNB(6))\\
      128 $\rightarrow$ 1 (Linear)
    \end{tabular} &
    460,813 \\ \midrule

    rKAN &
    \begin{tabular}[t]{@{}l@{}}
      Sequential Layers:\\
      2 $\rightarrow$ 256 (Linear, JacobiRKAN(3))\\
      256 $\rightarrow$ 512 (Linear, PadeRKAN[2/6])\\
      512 $\rightarrow$ 512 (Linear, JacobiRKAN(6))\\
      512 $\rightarrow$ 128 (Linear, PadeRKAN[2/6])\\
      128 $\rightarrow$ 1 (Linear)
    \end{tabular} &
    460,835 \\ \midrule

    FLS &
    \begin{tabular}[t]{@{}l@{}}
      Sequential Layers:\\
      2 $\rightarrow$ 512 (Linear, SinAct)\\
      512 $\rightarrow$ 512 (Linear, Tanh)\\
      512 $\rightarrow$ 512 (Linear, Tanh)\\
      512 $\rightarrow$ 1 (Linear)
    \end{tabular} &
    527,361 \\ \midrule

    PINNsformer &
    \begin{tabular}[t]{@{}l@{}}
      Parameters: d\_out=1, d\_hidden=512, d\_model=32, N=1, heads=2
    \end{tabular} &
    453,561 \\
    \bottomrule
  \end{tabularx}

  \vspace{4pt}
  \footnotesize
  \noindent\makebox[\textwidth][l]{\textsuperscript{*} This reaches the GPU memory limit, and increasing the number of parameters further would cause an out-of-memory error.}
  \label{tab:failure_hyperparameters}
\end{table}

\subsection{PDEs in Complex Engineering Environments Setup Details}

In this study, we investigate the performance of AC-PKAN compared with other models in solving complex PDEs characterized by heterogeneous material properties and intricate geometric domains. Specifically, we focus on two distinct difficult environmental PDE problems: a heterogeneous Poisson problem and a Poisson equation defined on a domain with complex geometric conditions. The following sections detail the formulation of the PDEs, data generation processes, model architecture, training regimen, hyperparameter selection, and evaluation methodologies employed in our experiments.

\paragraph{Heterogeneous Poisson Problem.} We consider a two-dimensional Poisson equation with spatially varying coefficients to model heterogeneous material properties. The PDE is defined as:

\begin{equation}
    \begin{cases}
        a_{1}\Delta u(\boldsymbol{x}) = 16r^{2} & \text{for } r < r_{0}, \\
        a_{2}\Delta u(\boldsymbol{x}) = 16r^{2} & \text{for } r \geq r_{0}, \\
        u(\boldsymbol{x}) = \frac{r^{4}}{a_{2}} + r_{0}^{4}\left(\frac{1}{a_{1}} - \frac{1}{a_{2}}\right) & \text{on } \partial\Omega,
    \end{cases}
\end{equation}

where \( r = \|\boldsymbol{x}\|_2 \) is the distance from the origin, \( a_{1} = \frac{1}{15} \) and \( a_{2} = 1 \) are the material coefficients, \( r_{0} = 0.5 \) defines the interface between the two materials, and \( \partial\Omega \) represents the boundary of the square domain \( \Omega = [-1,1]^2 \). The boundary condition is a pure Dirichlet condition applied uniformly on all four edges of the square.

\paragraph{Heterogeneous Poisson Dataset}
To train and evaluate the Physics-Informed Neural Networks (PINNs), collocation points were generated within the defined spatial domains, and boundary conditions were appropriately enforced. A uniform grid was established using 100 equidistant points in each spatial dimension, resulting in \(101 \times 101 = 10,\!201\) internal collocation points for the heterogeneous Poisson problem. Boundary points were extracted from the edges of the square domain \(\Omega = [-1,1]^2\) to impose Dirichlet boundary conditions. The PINN was trained in a data-free, unsupervised manner. Upon completion of training, the model was evaluated on the collocation points by comparing the predicted values with the actual values, thereby determining the error.

\paragraph{Complex Geometric Poisson Problem.} Additionally, we examine a Poisson equation defined on a domain with complex geometry, specifically a rectangle with four circular exclusions. The PDE is given by:

\begin{equation}
    -\Delta u = 0 \quad \text{in } \Omega = \Omega_{\text{rec}} \setminus \bigcup_{i=1}^{4} R_i,
\end{equation}

where \( \Omega_{\text{rec}} = [-0.5, 0.5]^2 \) is the rectangular domain and \( R_i \) for \( i = 1,2,3,4 \) are circular regions defined as:

\begin{align*}
    R_1 &= \left\{(x,y) : (x-0.3)^2 + (y-0.3)^2 \leq 0.1^2 \right\}, \\
    R_2 &= \left\{(x,y) : (x+0.3)^2 + (y-0.3)^2 \leq 0.1^2 \right\}, \\
    R_3 &= \left\{(x,y) : (x-0.3)^2 + (y+0.3)^2 \leq 0.1^2 \right\}, \\
    R_4 &= \left\{(x,y) : (x+0.3)^2 + (y+0.3)^2 \leq 0.1^2 \right\}.
\end{align*}

The boundary conditions are specified as:
\begin{align}
    u &= 0 \quad \text{on } \partial R_i, \quad \forall i=1,2,3,4, \\
    u &= 1 \quad \text{on } \partial \Omega_{\text{rec}}.
\end{align}

\paragraph{Complex Geometric Poisson Dataset}
To train and evaluate the Physics-Informed Neural Networks (PINNs), collocation points were generated within the defined spatial domains, and boundary conditions were appropriately enforced. A uniform grid was established using 100 equidistant points in each spatial dimension, resulting in \(101 \times 101 = 10,\!201\) internal collocation points for the Complex Geometric Poisson problem. Boundary points are sampled from both the outer boundary \( \partial \Omega_{\text{rec}} \) and the boundaries of the excluded circular regions \( \partial R_i \) for \( i=1,2,3,4 \). The PINN was trained in a data-free, unsupervised manner. Upon completion of training, the model was evaluated on the collocation points by comparing the predicted values with the actual values, thereby determining the error.

\paragraph{3D Point-Cloud Poisson Problem}
We investigate a three-dimensional Poisson equation defined on a unit cubic domain, $\Omega = [0,1]^3$, where the data distribution is represented as a point cloud, capturing the complex geometry introduced by excluding four spherical regions. The governing equation is a non-homogeneous, layered Helmholtz-type partial differential equation given by
\begin{equation}
    -\mu(z) \Delta u(\mathbf{x}) + k(z)^2 u(\mathbf{x}) = f(\mathbf{x}) \quad \text{in } \Omega = [0,1]^3 \setminus \bigcup_{i=1}^{4} \mathcal{C}_i,
\end{equation}
where the spherical exclusion regions \(\mathcal{C}_i\) for \(i=1,2,3,4\) are defined as
\begin{equation}
    \mathcal{C}_1 = \left\{(x,y,z) : (x-0.4)^2 + (y-0.3)^2 + (z-0.6)^2 \leq 0.2^2 \right\},
\end{equation}
\begin{equation}
    \mathcal{C}_2 = \left\{(x,y,z) : (x-0.6)^2 + (y-0.7)^2 + (z-0.6)^2 \leq 0.2^2 \right\},
\end{equation}
\begin{equation}
    \mathcal{C}_3 = \left\{(x,y,z) : (x-0.2)^2 + (y-0.8)^2 + (z-0.7)^2 \leq 0.1^2 \right\},
\end{equation}
\begin{equation}
    \mathcal{C}_4 = \left\{(x,y,z) : (x-0.6)^2 + (y-0.2)^2 + (z-0.3)^2 \leq 0.1^2 \right\}.
\end{equation}

The material properties exhibit a layered structure at \(z = 0.5\), with
\begin{equation}
    \mu(z) = 
    \begin{cases} 
        \mu_1 = 1, & z < 0.5, \\
        \mu_2 = 1, & z \geq 0.5, 
    \end{cases}
    \quad
    k(z) = 
    \begin{cases} 
        k_1 = 8, & z < 0.5, \\
        k_2 = 10, & z \geq 0.5.
    \end{cases}
\end{equation}

The source term \(f(\mathbf{x})\) incorporates strong nonlinearities, defined as
\begin{equation}
    f(\mathbf{x}) = A_1 e^{\sin(m_1 \pi x) + \sin(m_2 \pi y) + \sin(m_3 \pi z)} \frac{x^2 + y^2 + z^2 - 1}{x^2 + y^2 + z^2 + 1} + A_2 \left[\sin(m_1 \pi x) + \sin(m_2 \pi y) + \sin(m_3 \pi z)\right],
\end{equation}
where the parameters are set to \(A_1 = 20\), \(A_2 = 100\), \(m_1 = 1\), \(m_2 = 10\), and \(m_3 = 5\). Homogeneous Neumann boundary conditions are imposed on the boundary of the cubic domain, ensuring that
\begin{equation}
    \frac{\partial u}{\partial n} = 0 \quad \text{on } \partial \Omega,
\end{equation}
where \(\partial \Omega\) consists of the six faces of the unit cube.

\paragraph{3D Point-Cloud Poisson Dataset}

The 3D Point-Cloud Poisson Problem dataset is derived from an extensive collection of 65,202 points, each defined by three spatial coordinates ($x$, $y$, $z$) and an associated scalar solution value $u$, collectively representing the solution to a Poisson equation within a three-dimensional domain. To achieve computational feasibility, a randomized subset of 10,000 points is selected from the original dataset for model training and evaluation. This reduced dataset maintains the structural integrity of the original data, with spatial coordinates organized in a $(10{,}000 \times 3)$ matrix and the solution field in a $(10{,}000 \times 1)$ vector. From this subset, a further random selection of 1,000 points constitutes the supervised training set, which includes exact solution values essential for calculating data loss, while the remaining 9,000 points are utilized to enforce physics-informed loss during the training process. This approach ensures computational efficiency while preserving a representative sample of the three-dimensional domain. Subsequently, testing and validation are conducted on the entire reduced dataset to assess the model's predictive accuracy across the domain.

\paragraph{Tensor Conversion}: All collocation and boundary points are converted into PyTorch tensors with floating-point precision and are set to require gradients to facilitate automatic differentiation. The data resides on an NVIDIA A100 GPU with 40GB of memory to expedite computational processes.

\paragraph{Training Regimen:}
All PDE problems are trained for a total of 50,000 epochs to allow sufficient learning iterations. And the RBA attention mechanism for AC-PKAN is configured with smoothing factors \( \eta = 0.001 \) and \( \beta_w = 0.001 \).

\paragraph{Reproducibility:}
To ensure the reproducibility of our experimental results, all random number generators are seeded with a fixed value (\(\text{seed} = 0\)) across NumPy, Python's \texttt{random} module, and PyTorch (both CPU and GPU). This deterministic setup guarantees consistent initialization and training trajectories across multiple runs.

\paragraph{Hyperparameter Selection:}
For the 3D Point-Cloud Poisson Problem, Table~\ref{tab:failure_hyperparameters} provides a detailed summary of the hyperparameters and the total number of parameters for each model. Similarly, for the other two problems, Table~\ref{tab:environmental_hyperparameters} summarizes the hyperparameters and the total number of parameters for each model.

\newpage
\begin{table}[t]
  \centering
  \caption{Summary of Hyperparameters in Complex Engineering Environmental PDEs for Various Models}
  \label{tab:environmental_hyperparameters}
  \scriptsize                         
  \setlength{\tabcolsep}{4pt}         
  \renewcommand{\arraystretch}{0.95}  

  \begin{tabularx}{\textwidth}{@{} l >{\raggedright\arraybackslash}X r @{}}
    \toprule
    \textbf{Model} & \textbf{Hyperparameters} & \textbf{Model Parameters} \\
    \midrule

    AC-PKAN &
    \makecell[l]{Linear Embedding: in=2, out=32 \\ ChebyKAN Layers: 4 layers, degree=8 \\ LN Layers: 4 layers, features=64 \\ Output Layer: in=64, out=1 \\ Activation: WaveAct}
    & 152,357 \\ \midrule

    QRes &
    \makecell[l]{Input Layer: in=2, out=128 \\ Hidden Layers: 5 QRes blocks, units=128 \\ Output Layer: in=128, out=1 \\ Activation: Sigmoid}
    & 166,017 \\ \midrule

    PINN &
    \makecell[l]{Layer 1: 2 $\rightarrow$ 256, Activation=Tanh \\ Layer 2: 256 $\rightarrow$ 512, Activation=Tanh \\ Layer 3: 512 $\rightarrow$ 128, Activation=Tanh \\ Layer 4: 128 $\rightarrow$ 1}
    & 198,145 \\ \midrule

    PINNsformer &
    \makecell[l]{d\_out=1 \\ d\_hidden=128 \\ d\_model=8 \\ N=1 \\ heads=2}
    & 158,721 \\ \midrule

    FLS &
    \makecell[l]{Layer 1: 2 $\rightarrow$ 256, Activation=SinAct \\ Layer 2: 256 $\rightarrow$ 256, Activation=Tanh \\ Layer 3: 256 $\rightarrow$ 256, Activation=Tanh \\ Layer 4: 256 $\rightarrow$ 1}
    & 132,609 \\ \midrule

    Cheby1KAN &
    \makecell[l]{Layer 1: 2 $\rightarrow$ 32, Degree=8 \\ Layer 2: 32 $\rightarrow$ 128, Degree=8 \\ Layer 3: 128 $\rightarrow$ 64, Degree=8 \\ Layer 4: 64 $\rightarrow$ 32, Degree=8 \\ Layer 5: 32 $\rightarrow$ 1, Degree=8}
    & 129,888 \\ \midrule

    Cheby2KAN &
    \makecell[l]{Layer 1: 2 $\rightarrow$ 32, Degree=8 \\ Layer 2: 32 $\rightarrow$ 128, Degree=8 \\ Layer 3: 128 $\rightarrow$ 64, Degree=8 \\ Layer 4: 64 $\rightarrow$ 32, Degree=8 \\ Layer 5: 32 $\rightarrow$ 1, Degree=8}
    & 129,888 \\ \midrule

    KAN\textsuperscript{*} &
    \makecell[l]{Layers: 2 $\times$ KANLinear \\ Neurons: 9000 \\ Activation: SiLU}
    & 60,000\textsuperscript{*} \\ \midrule

    rKAN &
    \makecell[l]{Layer 1: 2 $\rightarrow$ 256, Activation=JacobiRKAN(3) \\ Layer 2: 256 $\rightarrow$ 256, Activation=PadeRKAN[2/6] \\ Layer 3: 256 $\rightarrow$ 256, Activation=JacobiRKAN(6) \\ Layer 4: 256 $\rightarrow$ 128, Activation=PadeRKAN[2/6] \\ Layer 5: 128 $\rightarrow$ 1}
    & 165,411 \\ \midrule

    FastKAN\textsuperscript{*} &
    \makecell[l]{FastKANLayer 1: RBF, SplineLinear 16 $\rightarrow$ 2600, Base Linear 2 $\rightarrow$ 2600 \\ FastKANLayer 2: RBF, SplineLinear 20800 $\rightarrow$ 1, Base Linear 2600 $\rightarrow$ 1}
    & 75,418\textsuperscript{*} \\ \midrule

    fKAN &
    \makecell[l]{Layer 1: 2 $\rightarrow$ 256, Activation=fJNB(3) \\ Layer 2: 256 $\rightarrow$ 512, Activation=fJNB(6) \\ Layer 3: 512 $\rightarrow$ 512, Activation=fJNB(3) \\ Layer 4: 512 $\rightarrow$ 128, Activation=fJNB(6) \\ Layer 5: 128 $\rightarrow$ 1}
    & 132,618 \\ \midrule

    FourierKAN &
    \makecell[l]{Layer 1: 2 $\rightarrow$ 32 \\ Layer 2: 32 $\rightarrow$ 64 \\ Layer 3: 64 $\rightarrow$ 64 \\ Layer 4: 64 $\rightarrow$ 64 \\ Layer 5: 64 $\rightarrow$ 1 \\ Degree=8}
    & 166,113 \\
    \bottomrule
  \end{tabularx}

  \vspace{4pt}
  \footnotesize
  \noindent\makebox[\textwidth][l]{\textsuperscript{*} This reaches the GPU memory limit, and increasing the number of parameters further would cause an out-of-memory error.}
\end{table}

\newpage
\section{Results Details and Visualizations. }
\label{sec:appendc}

Firstly, in the context of the 1D-Wave experiment, we present the logarithm of the GRA weights, $\log \left(\lambda_{IC, BC}^{\text{GRA}} \right)$, across epochs in Figure~\ref{fig:Log}. Additionally, the progression of $\lambda_{IC, BC}^{\text{GRA}}$ over epochs is illustrated in Figure~\ref{fig:GRA_progression} (see below).

\begin{figure}[htbp]
    \centering
    \includegraphics[width=0.32\linewidth]{./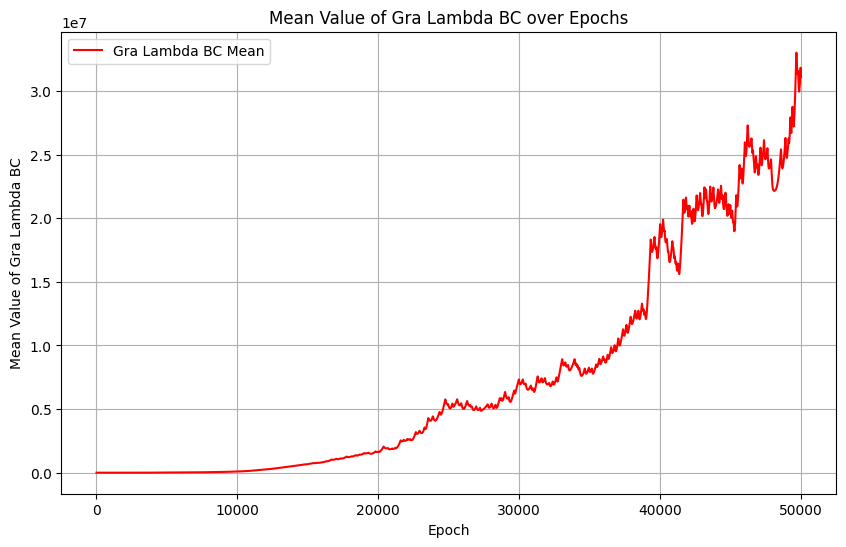}
    \includegraphics[width=0.32\linewidth]{./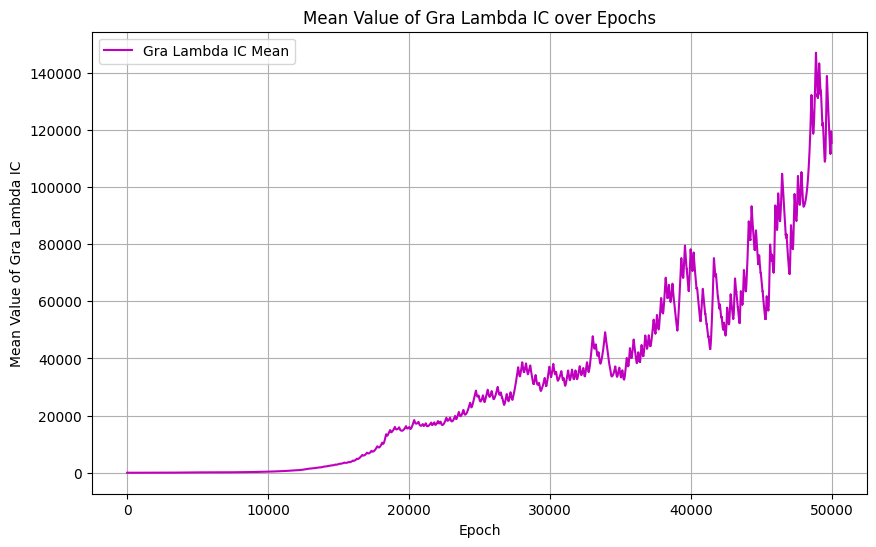}
    \includegraphics[width=0.32\linewidth]{./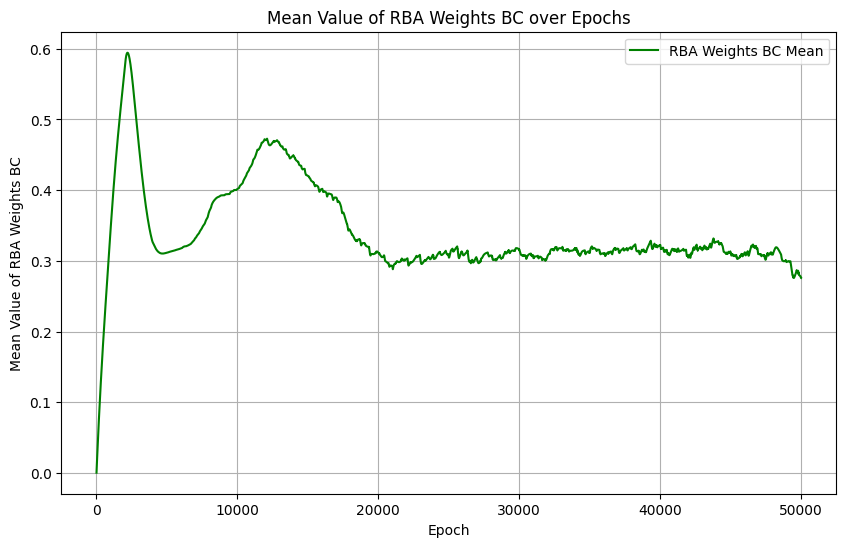}\\[6pt]
    \includegraphics[width=0.32\linewidth]{./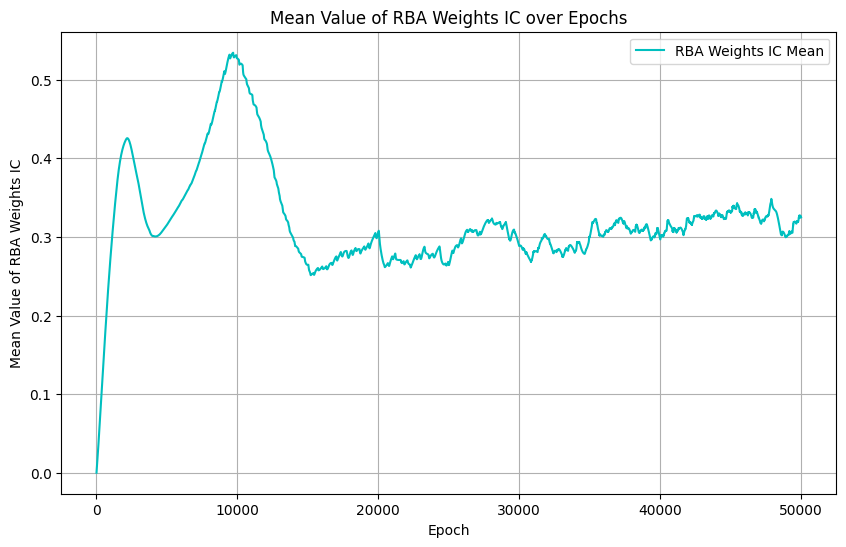}
    \includegraphics[width=0.32\linewidth]{./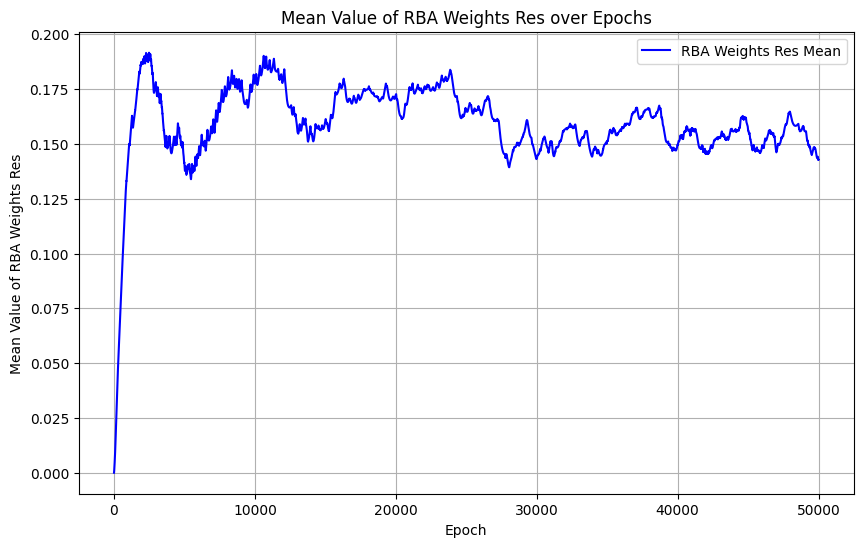}
    \caption{Mean values of GRA and RBA weights over epochs for the 1D-Wave experiment. 
    From left to right in the first row: GRA $\lambda_{BC}$, GRA $\lambda_{IC}$, and RBA weights (BC). 
    Second row: RBA weights (IC) and RBA weights (Residual).}
    \label{fig:GRA_progression}
\end{figure}

\begin{figure}[htbp]
    \centering
    \includegraphics[width=0.32\linewidth]{./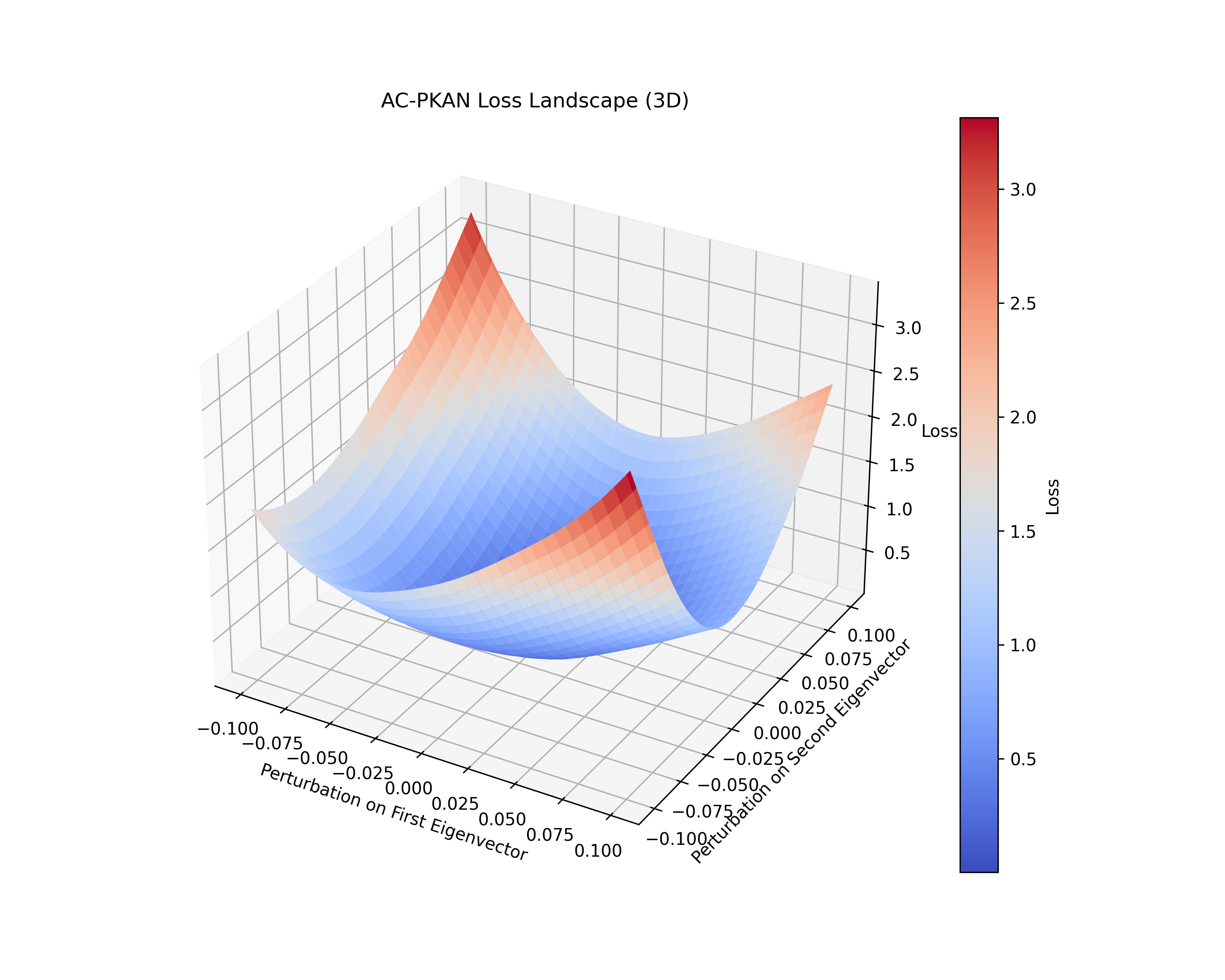}
    \includegraphics[width=0.32\linewidth]{./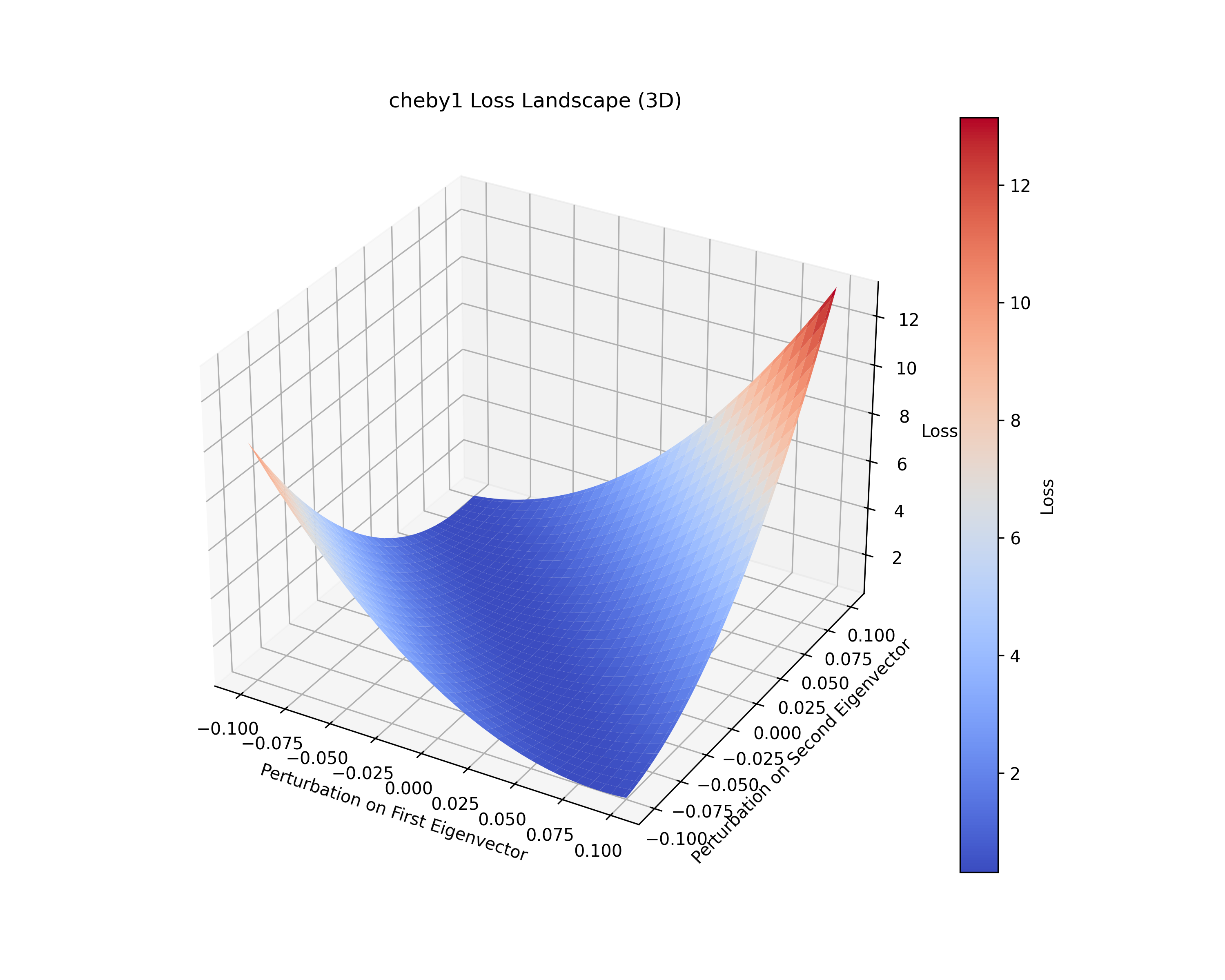}
    \includegraphics[width=0.32\linewidth]{./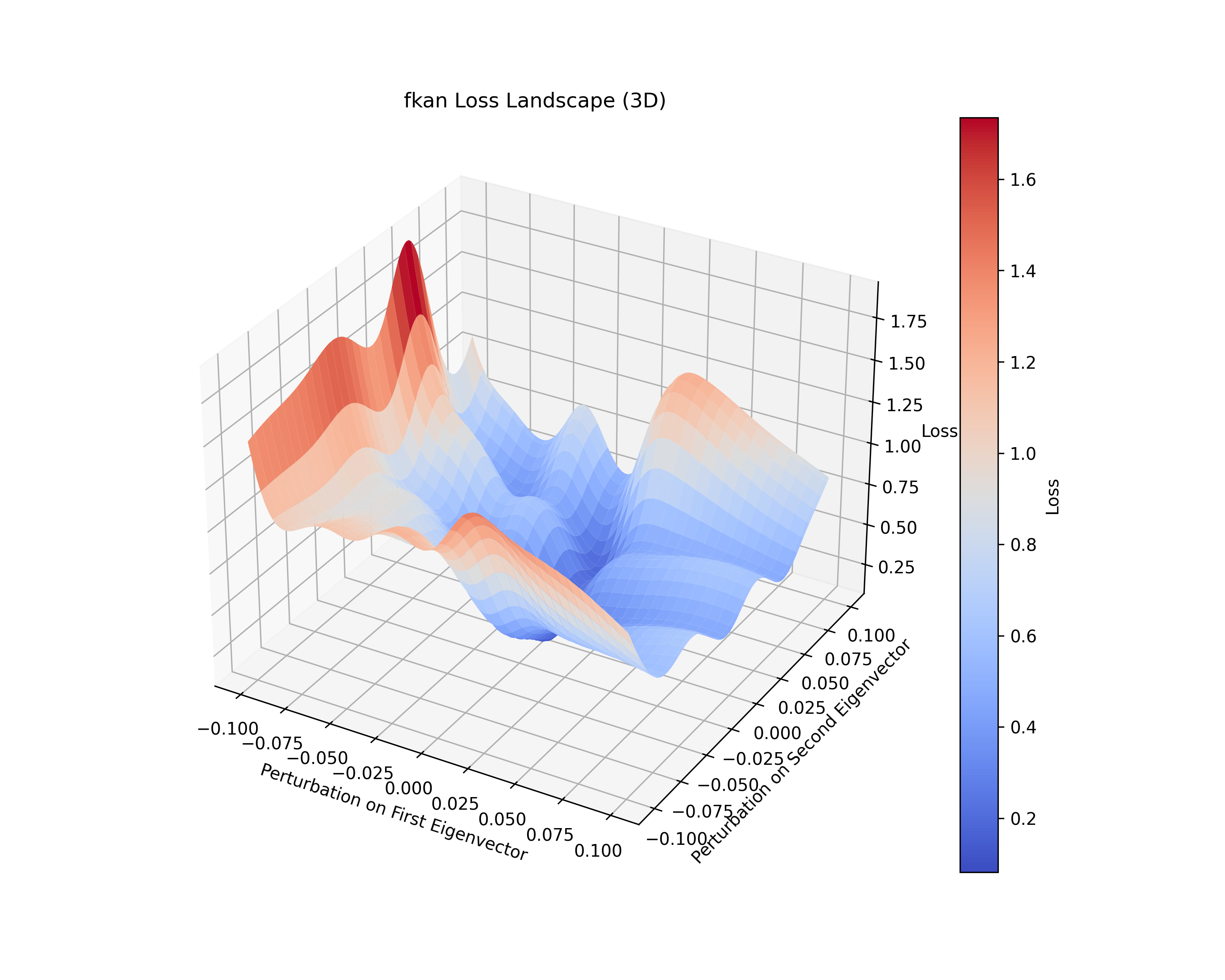}\\[6pt]
    \includegraphics[width=0.32\linewidth]{./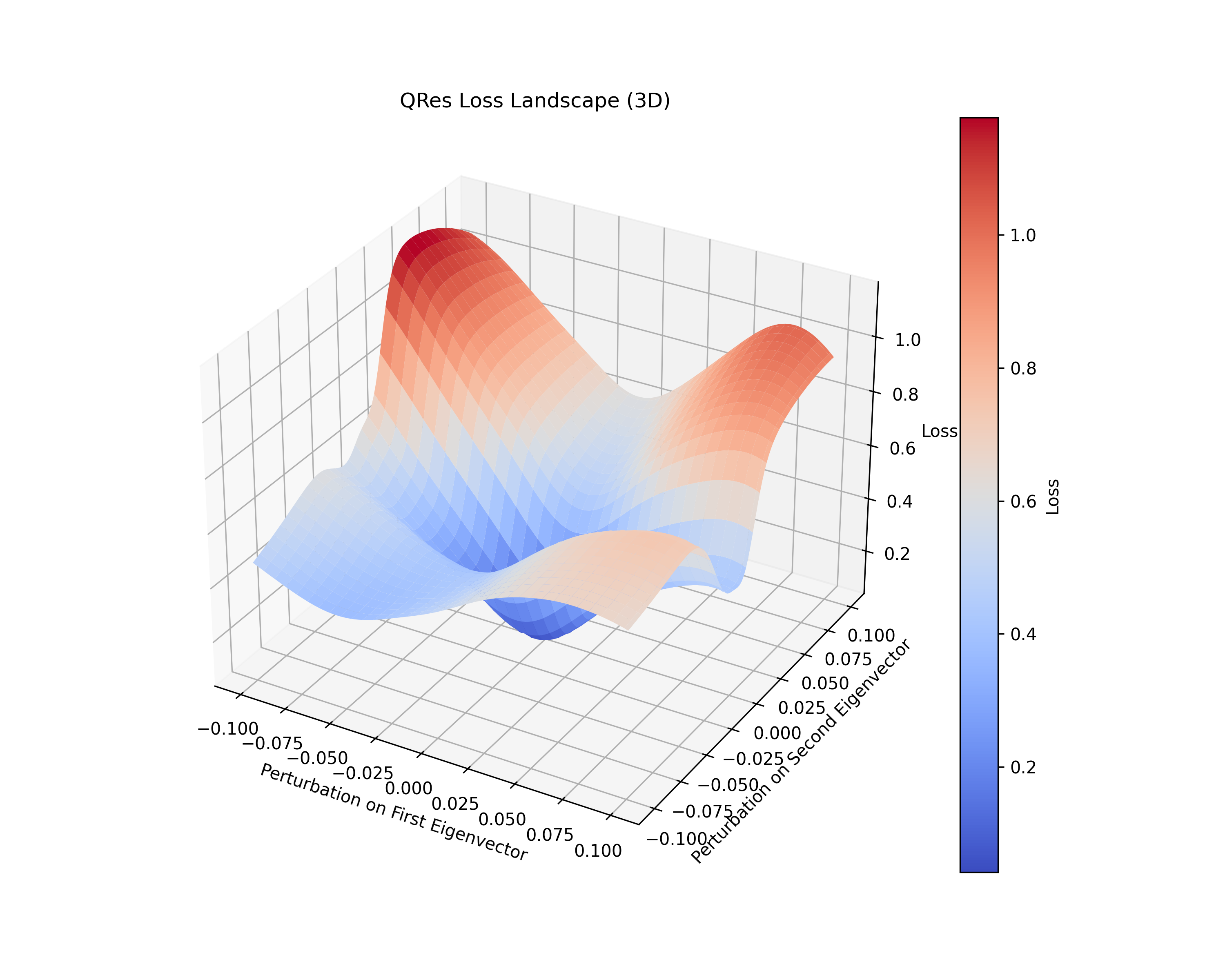}
    \includegraphics[width=0.32\linewidth]{./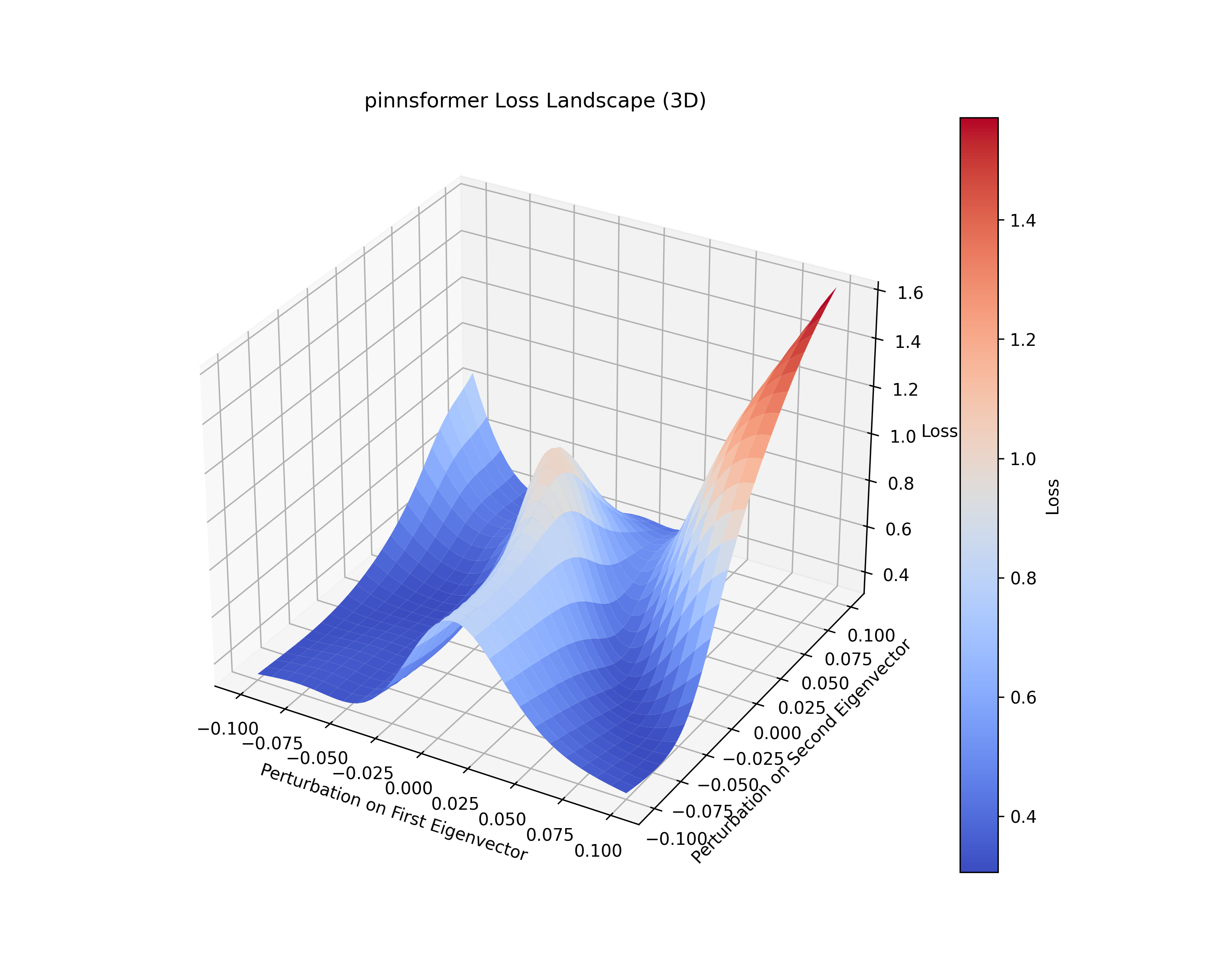}
    \caption{Loss landscapes of various models in the 1D-Wave experiment. 
    From left to right in the first row: AC-PKAN, Cheby1KAN and fKAN. Second row: QRes and Pinnsformer.}
    \label{fig:loss_landscapes}
\end{figure}

\begin{figure}[htbp]
    \centering
    \renewcommand{\arraystretch}{0.40}
    \includegraphics[width=0.49\linewidth]{./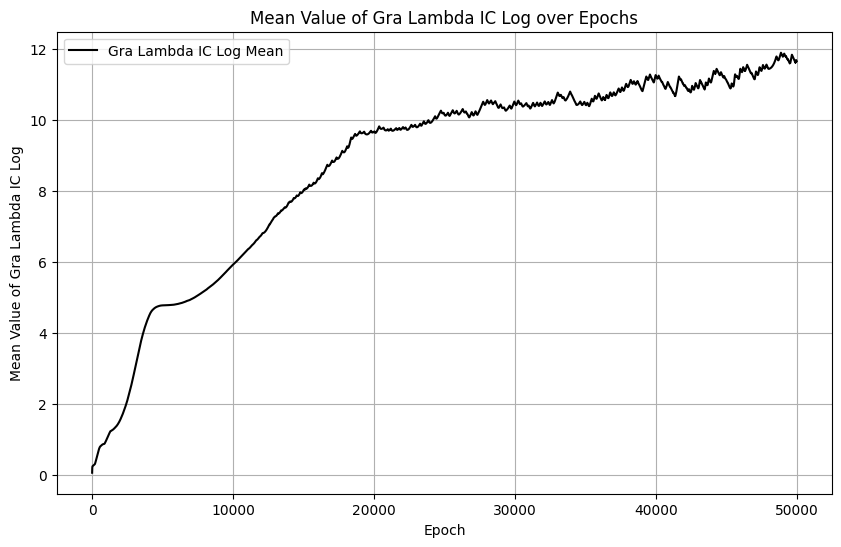}
    \includegraphics[width=0.49\linewidth]{./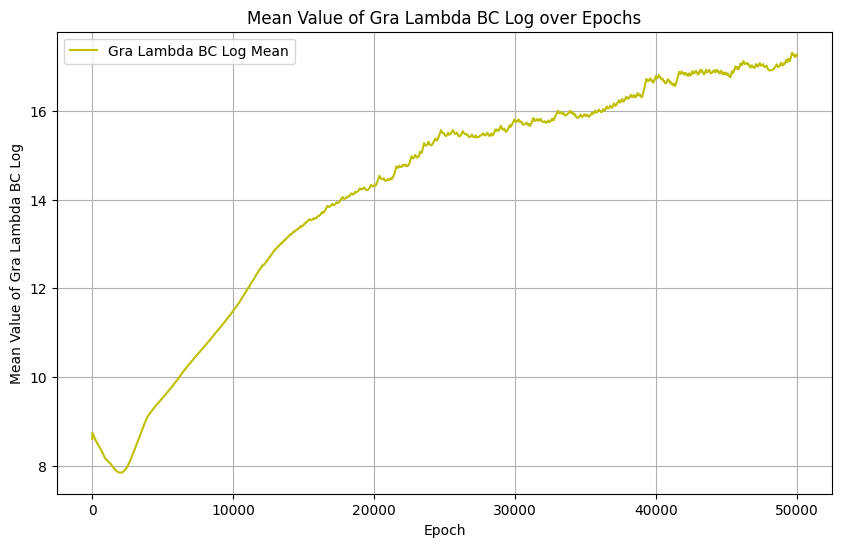}
    \caption{Mean values of GRA weights after logarithmic transformation over epochs for the 1D-Wave experiment.}
    \label{fig:Log}
\end{figure}

In Figure~\ref{fig:GRA_progression}, we see that the mean RBA weights for all loss terms eventually converge, indicating mitigation of residual imbalance. In contrast, the GRA weights continue to increase, suggesting persistent gradient imbalance. The steadily growing GRA weights effectively alleviate the gradient stiffness problem, consistent with findings in~\citep{wang2021understanding}. The significant magnitude discrepancy between GRA and RBA data justifies using a logarithmic function for GRA weights in loss weighting (Figure~\ref{fig:Log}).

Moreover, Figure~\ref{fig:loss_landscapes} illustrates the loss landscapes of AC-PKAN, Cheby1KAN, fKAN, QRes, and PINNsFormer. Although Cheby1KAN appears to have a simpler loss landscape, its steep gradients hinder optimization. PINNsFormer, fKAN, and QRes exhibit more complex, multi-modal surfaces, leading to convergence challenges near the optimal point. In contrast, AC-PKAN shows a relatively smoother trajectory, facilitating training stability and efficiency.

\newpage
Then we illustrate the fitting results of nine models for complex functions in Figure~\ref{fig:function_fitting_illustration}. Additionally, we present the plots of ground truth solutions, neural network predictions, and absolute errors for all evaluations conducted in the five PDE-solving experiments. The results for the 1D-Reaction, 1D-Wave, 2D Navier-Stokes, Heterogeneous Poisson Problem, and Complex Geometric Poisson Problem are displayed in Figures~\ref{fig:2dns}, ~\ref{fig:1dreaction}, \ref{fig:1dwave}, and \ref{fig:Heterogeneous_problem}, respectively.

\begin{figure}[htbp]
    \vspace{0.05in}
    \centering
    \begin{minipage}{0.95\textwidth}
        \centering
        \includegraphics[width=\linewidth]{./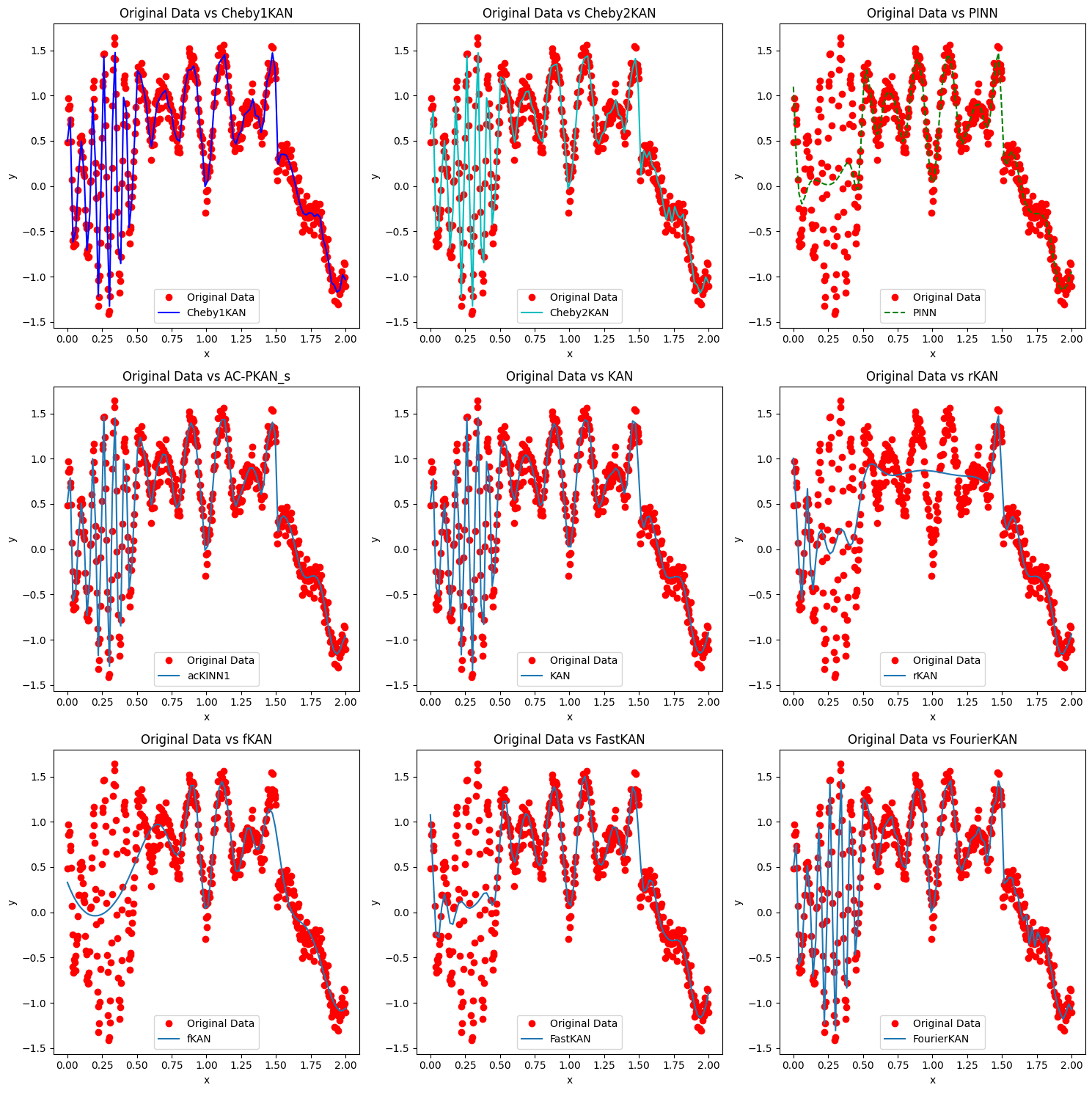}
        \vspace{-0.2in}
        \caption{Illustration of 9 Various Models for Complex Function Fitting}
        \label{fig:function_fitting_illustration}
    \end{minipage}
\end{figure}

\newpage
\begin{figure}[t]
    \vspace{-0.1in}
    \centering
    \begin{subfigure}{\linewidth}
        \centering
        \includegraphics[width=0.30\linewidth]{./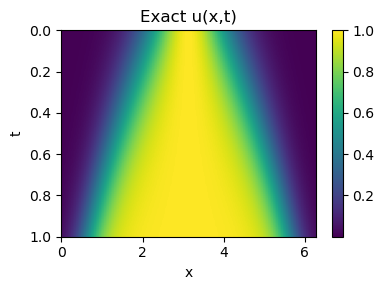}
        \caption*{(a) Ground Truth Solution for the 1D-Reaction Equation}
    \end{subfigure}
    
    \vspace{0.15in}
    \begin{subfigure}{\linewidth}
        \centering
        \begin{tabular}{cccc}
            \includegraphics[width=0.20\linewidth]{./figure/1dreaction_AC-PKAN_prediction.png} &
            \includegraphics[width=0.20\linewidth]{./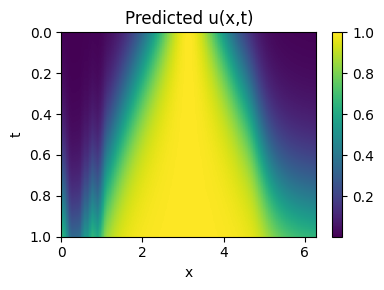} &
            \includegraphics[width=0.20\linewidth]{./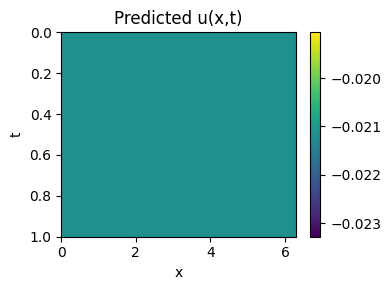} &
            \includegraphics[width=0.20\linewidth]{./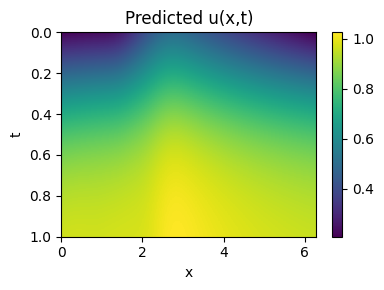} \\
            \includegraphics[width=0.20\linewidth]{./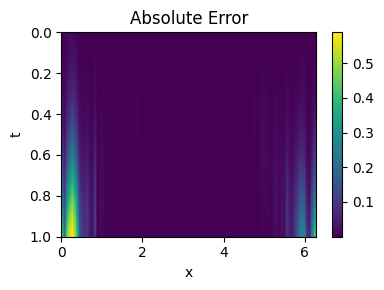} &
            \includegraphics[width=0.20\linewidth]{./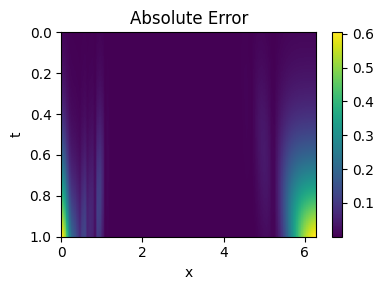} &
            \includegraphics[width=0.20\linewidth]{./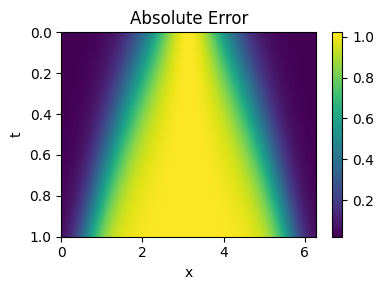} &
            \includegraphics[width=0.20\linewidth]{./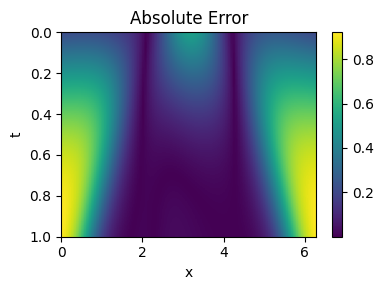} \\
            \includegraphics[width=0.20\linewidth]{./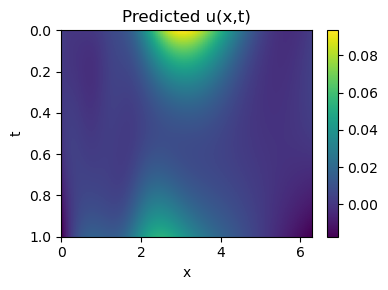} &
            \includegraphics[width=0.20\linewidth]{./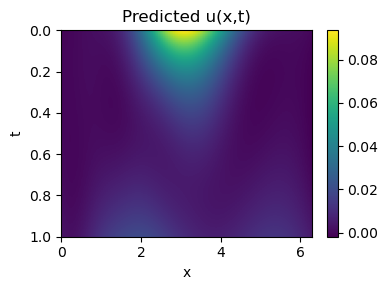} &
            \includegraphics[width=0.20\linewidth]{./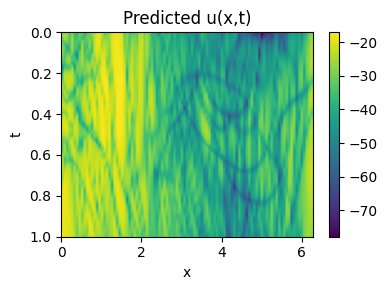} &
            \includegraphics[width=0.20\linewidth]{./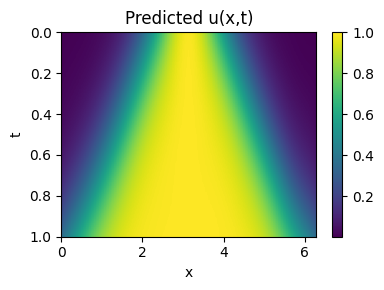} \\
            \includegraphics[width=0.20\linewidth]{./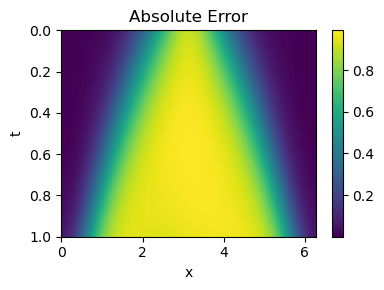} &
            \includegraphics[width=0.20\linewidth]{./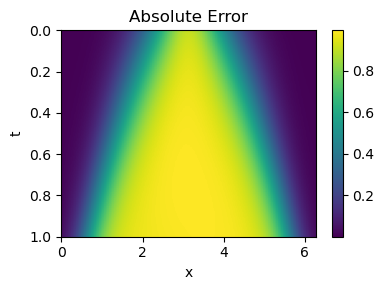} &
            \includegraphics[width=0.20\linewidth]{./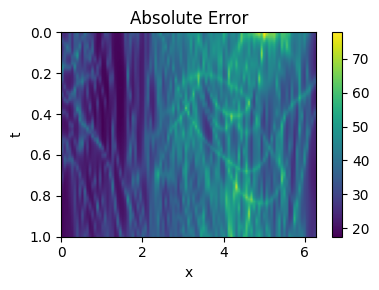} &
            \includegraphics[width=0.20\linewidth]{./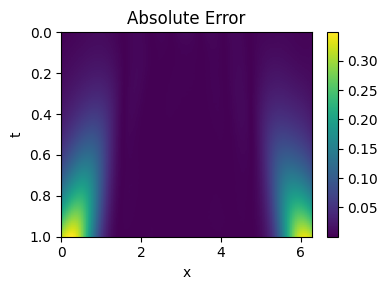} \\
            \includegraphics[width=0.20\linewidth]{./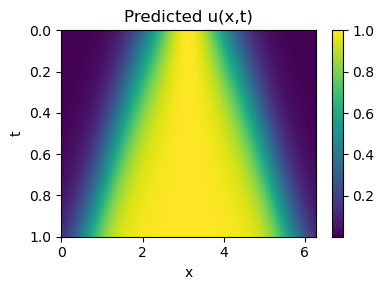} &
            \includegraphics[width=0.20\linewidth]{./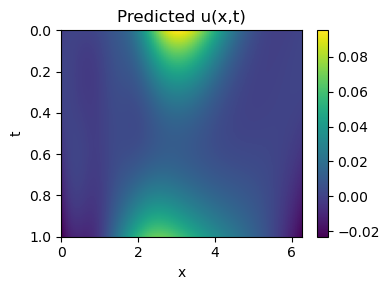} &
            \includegraphics[width=0.20\linewidth]{./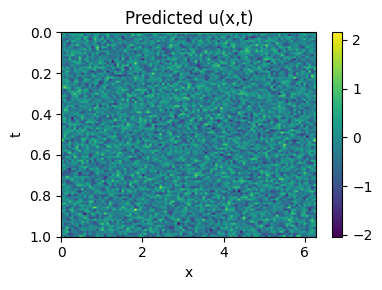} &
            \includegraphics[width=0.20\linewidth]{./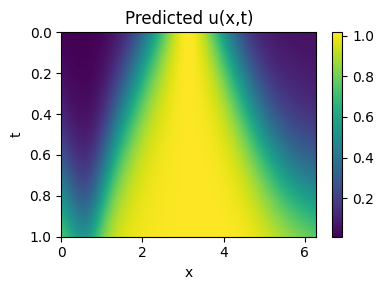} \\
            \includegraphics[width=0.20\linewidth]{./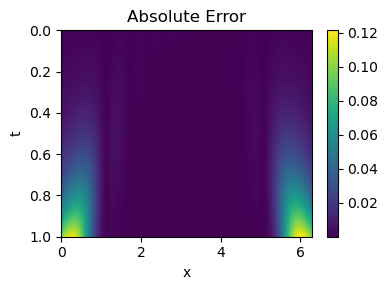} &
            \includegraphics[width=0.20\linewidth]{./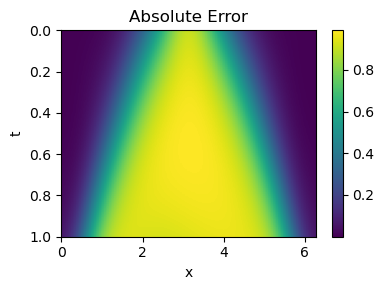} &
            \includegraphics[width=0.20\linewidth]{./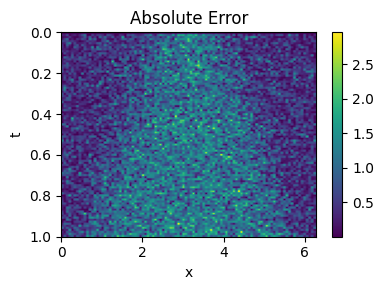} &
            \includegraphics[width=0.20\linewidth]{./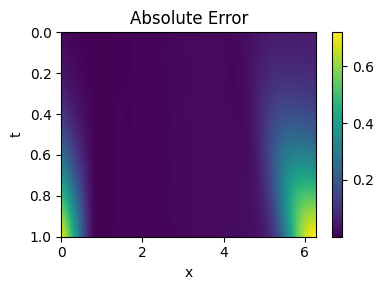} \\
        \end{tabular}
        \caption*{(b) From left to right, the first, third, and fifth rows display the predictions of the AC-PKAN, Cheby1KAN, Cheby2KAN, and FastKAN models; the PINNs, QRes, rKAN, and fKAN models; and the PINNsformer, FLS, FourierKAN, and KINN models, respectively. The second, fourth, and sixth rows present their corresponding absolute errors.}
    \end{subfigure}
    
    \caption{Comparison of the ground truth solution for the 1D-Reaction equation with predictions and error maps from various models.}
    \label{fig:1dreaction}
\end{figure}

\newpage
\begin{figure}[!t] 
    \centering
    \captionsetup{aboveskip=4pt,belowskip=2pt} 
    \begin{subfigure}{\linewidth}
        \centering
        \includegraphics[width=0.22\linewidth]{./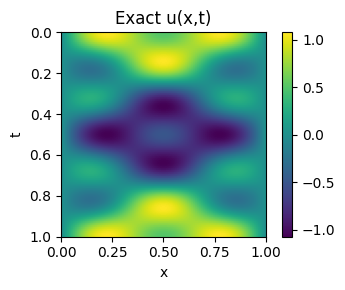}
        \caption*{(a) Ground Truth Solution for the 1D-Wave Equation}
    \end{subfigure}

    \vspace{0.2em} 

    \begin{subfigure}{\linewidth}
        \centering
        \begingroup
        \setlength{\tabcolsep}{1.8pt}       
        \renewcommand{\arraystretch}{0.90}  
        \def\imgw{0.205\linewidth}          

        \begin{adjustbox}{max totalsize={\textwidth}{0.72\textheight},center}
        \begin{tabular}{@{}cccc@{}}
            \includegraphics[width=\imgw]{./figure/1dwave_AC-PKAN_prediction.png} &
            \includegraphics[width=\imgw]{./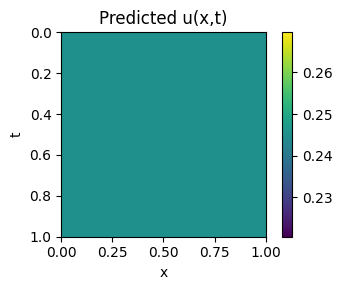} &
            \includegraphics[width=\imgw]{./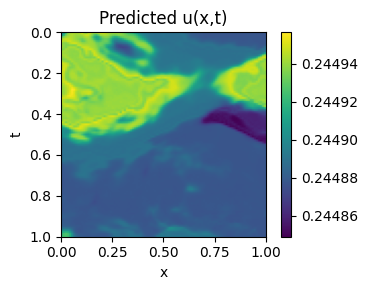} &
            \includegraphics[width=\imgw]{./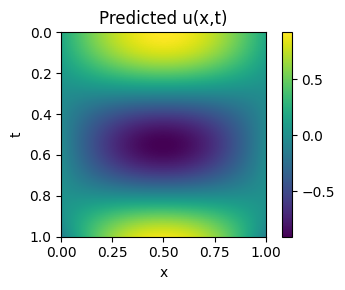} \\
            \includegraphics[width=\imgw]{./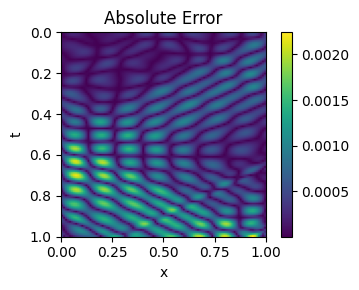} &
            \includegraphics[width=\imgw]{./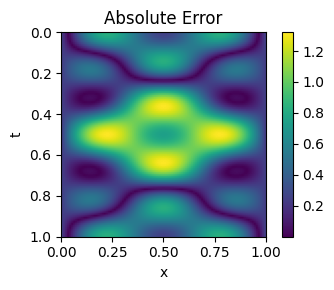} &
            \includegraphics[width=\imgw]{./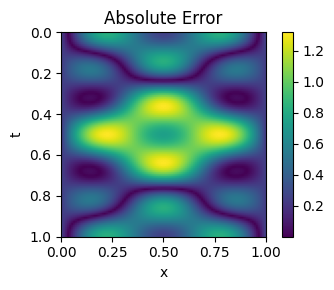} &
            \includegraphics[width=\imgw]{./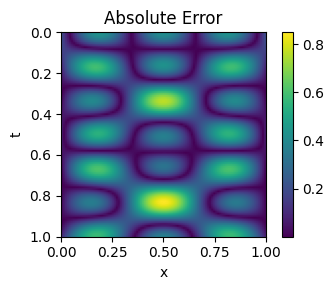} \\
            \includegraphics[width=\imgw]{./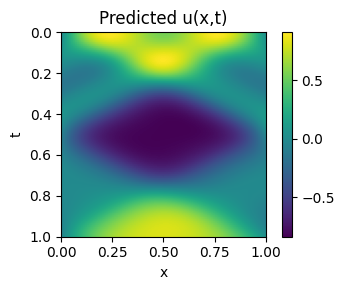} &
            \includegraphics[width=\imgw]{./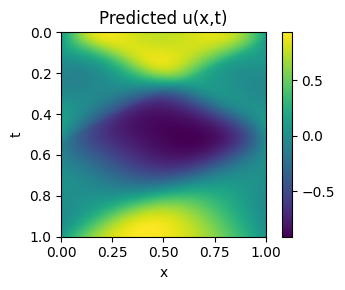} &
            \includegraphics[width=\imgw]{./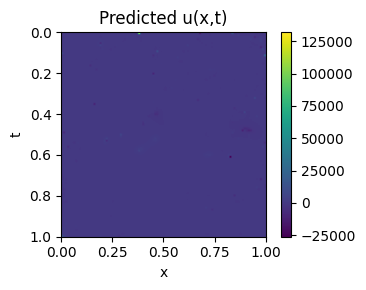} &
            \includegraphics[width=\imgw]{./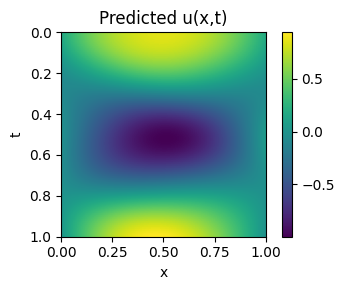} \\
            \includegraphics[width=\imgw]{./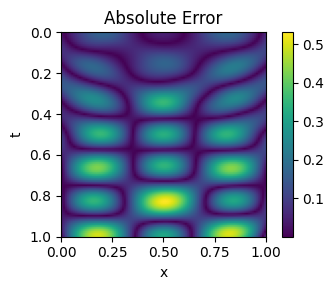} &
            \includegraphics[width=\imgw]{./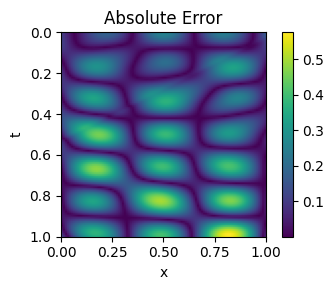} &
            \includegraphics[width=\imgw]{./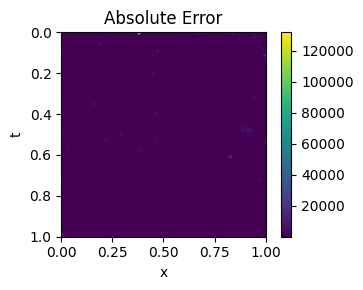} &
            \includegraphics[width=\imgw]{./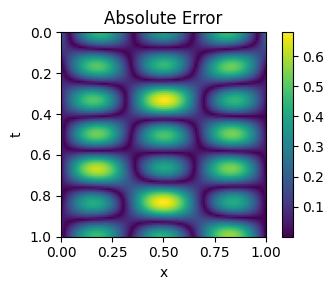} \\
            \includegraphics[width=\imgw]{./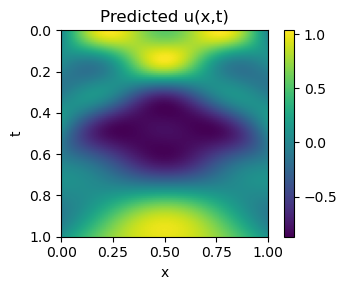} &
            \includegraphics[width=\imgw]{./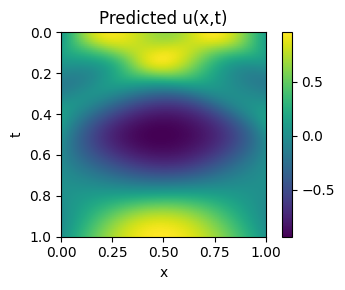} &
            \includegraphics[width=\imgw]{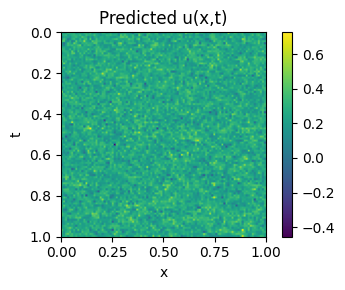} &
            \includegraphics[width=\imgw]{./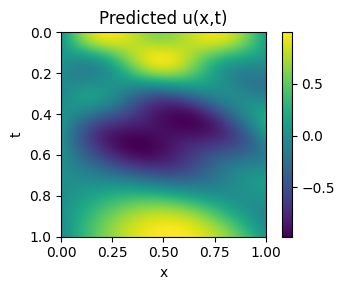} \\
            \includegraphics[width=\imgw]{./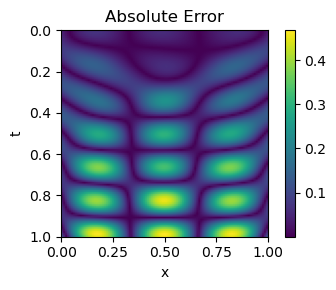} &
            \includegraphics[width=\imgw]{./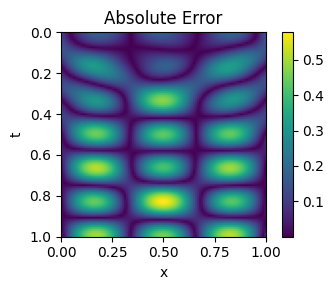} &
            \includegraphics[width=\imgw]{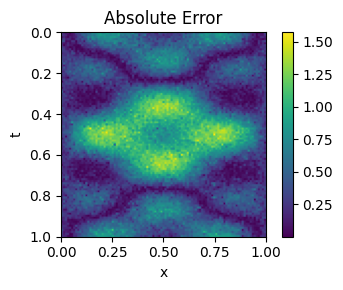} &
            \includegraphics[width=\imgw]{./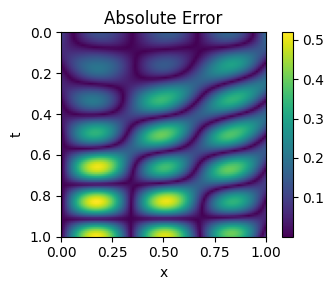} \\
        \end{tabular}
        \end{adjustbox}
        \endgroup

        \caption*{(b) From left to right, the first, third, and fifth rows display the predictions of the AC-PKAN, Cheby1KAN, Cheby2KAN, and FastKAN models; the PINNs, QRes, rKAN, and fKAN models; and the PINNsformer, FLS, FourierKAN, and KINN models, respectively. The second, fourth, and sixth rows present their corresponding absolute errors.}
    \end{subfigure}

    \caption{Comparison of the ground truth solution for the 1D-Wave equation with predictions and error maps from various models.}
    \label{fig:1dwave}
\end{figure}

\newpage
\begin{figure}[t]
    \vspace{-0.10in}
    \centering
    \begin{subfigure}{\linewidth}
        \centering
        \includegraphics[width=0.30\linewidth]{./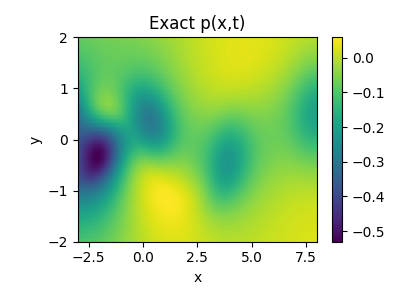}
        \caption*{(a) Ground Truth Solution for the 2D Navier–Stokes Cylinder Flow}
    \end{subfigure}
    
    \vspace{0.11in}
    \begin{subfigure}{\linewidth}
        \centering
        \begin{tabular}{cccc}
            \includegraphics[width=0.20\linewidth]{./figure/2dns_AC-PKAN_prediction.png} &
            \includegraphics[width=0.20\linewidth]{./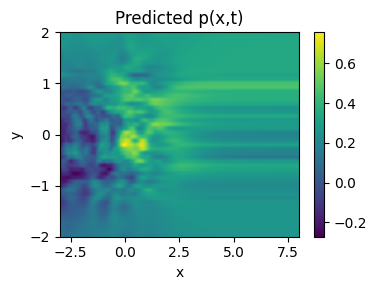} &
            \includegraphics[width=0.20\linewidth]{./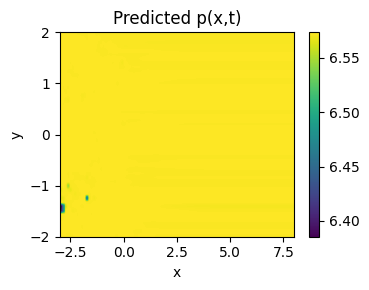} &
            \includegraphics[width=0.20\linewidth]{./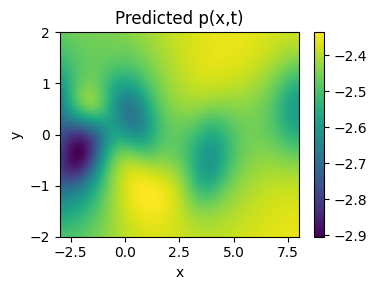} \\
            \includegraphics[width=0.20\linewidth]{./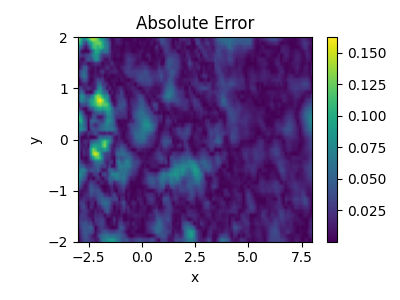} &
            \includegraphics[width=0.20\linewidth]{./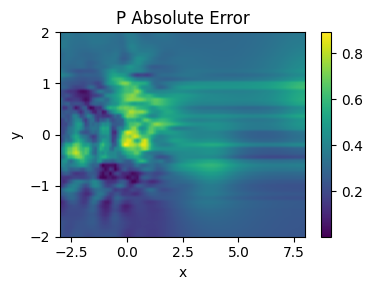} &
            \includegraphics[width=0.20\linewidth]{./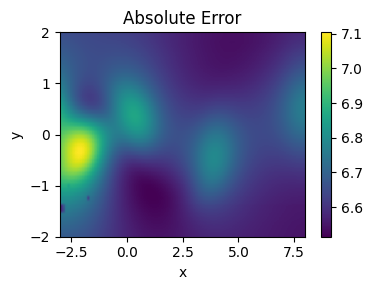} &
            \includegraphics[width=0.20\linewidth]{./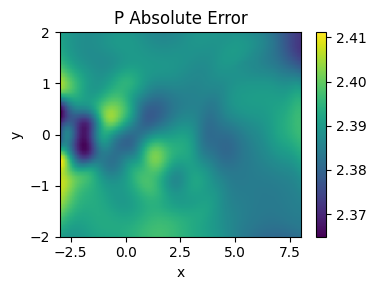} \\
            \includegraphics[width=0.20\linewidth]{./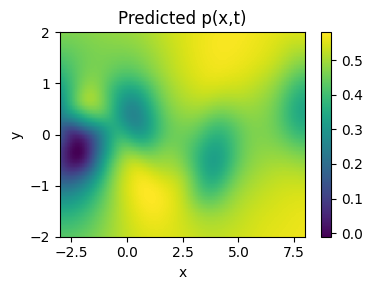} &
            \includegraphics[width=0.20\linewidth]{./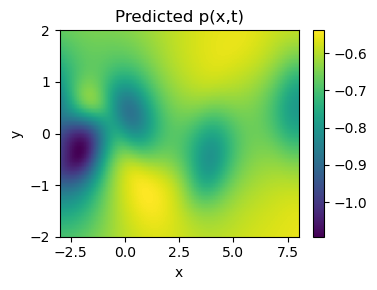} &
            \includegraphics[width=0.20\linewidth]{./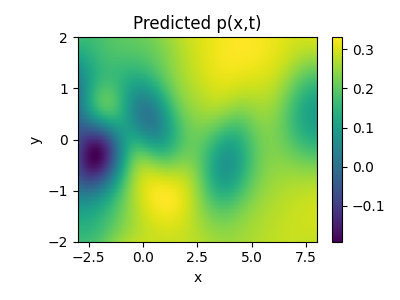} \\
            \includegraphics[width=0.20\linewidth]{./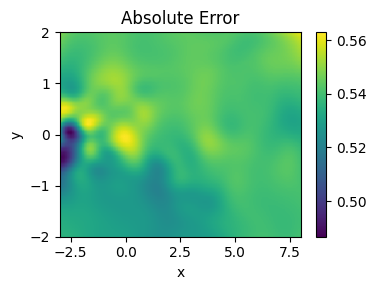} &
            \includegraphics[width=0.20\linewidth]{./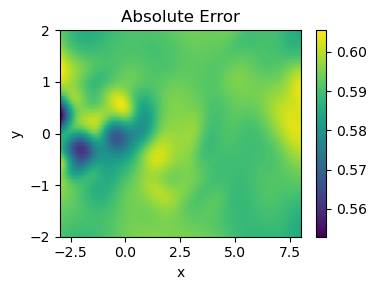} &
            \includegraphics[width=0.20\linewidth]{./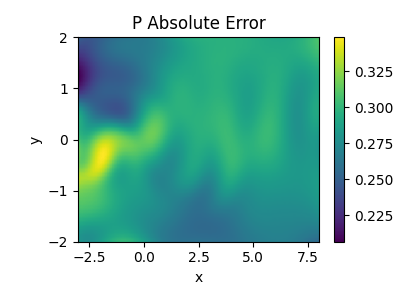} \\
            \includegraphics[width=0.20\linewidth]{./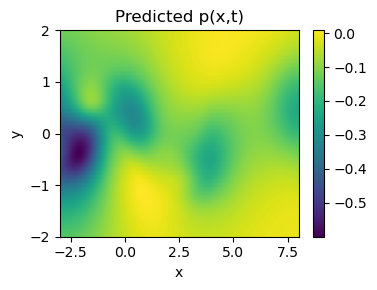} &
            \includegraphics[width=0.20\linewidth]{./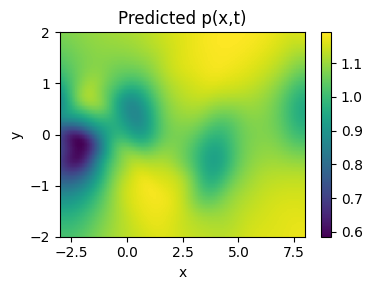} &
            \includegraphics[width=0.20\linewidth]{./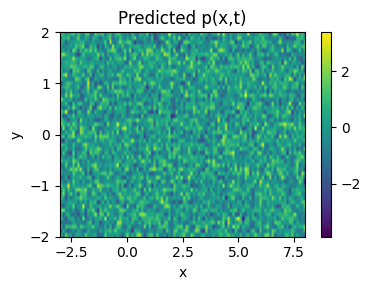} &
            \includegraphics[width=0.20\linewidth]{./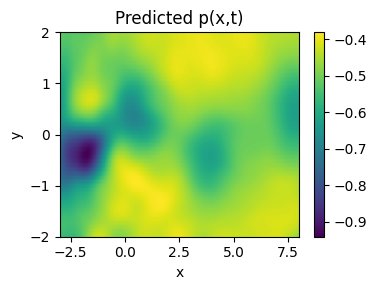} \\
            \includegraphics[width=0.20\linewidth]{./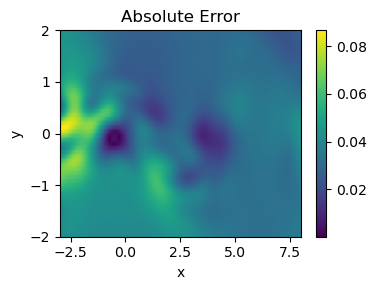} &
            \includegraphics[width=0.20\linewidth]{./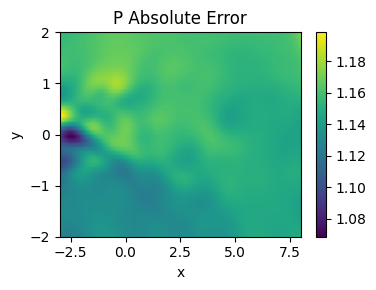} &
            \includegraphics[width=0.20\linewidth]{./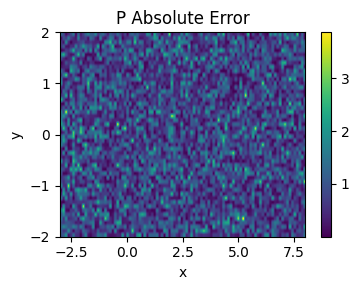} &
            \includegraphics[width=0.20\linewidth]{./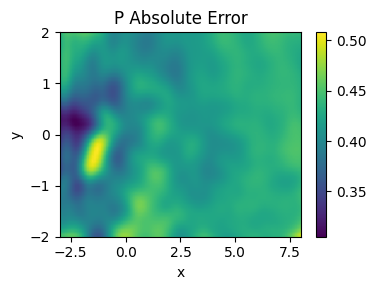} \\
        \end{tabular}
        \caption*{(b) From left to right, the first, third, and fifth rows display the predictions of the AC-PKAN, Cheby1KAN, Cheby2KAN, and FastKAN models; the PINNs, QRes, and fKAN models; and the PINNsformer, FLS, FourierKAN, and KINN models, respectively. The second, fourth, and sixth rows present their corresponding absolute errors.}
    \end{subfigure}
    \caption{Comparison of the ground truth pressure field \( P \) of the 2D Navier–Stokes cylinder flow with predictions and corresponding error maps generated by various models.}
    \label{fig:2dns}
\end{figure}

\newpage
\begin{figure}[!t] 
    \centering
    \captionsetup{aboveskip=4pt,belowskip=3pt}

    \begin{subfigure}{\linewidth}
        \centering
        \includegraphics[width=0.26\linewidth]{./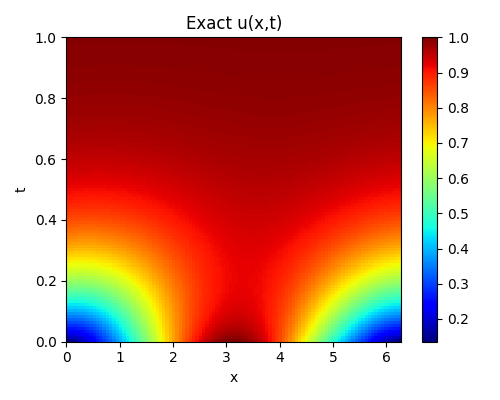}
        \caption*{(a) Ground Truth Solution for the 1D-Conv.-Diff.-Reac. Equation}
    \end{subfigure}

    \vspace{0.25em} 

    \begin{subfigure}{\linewidth}
        \centering
        \begingroup
        \setlength{\tabcolsep}{1.8pt}       
        \renewcommand{\arraystretch}{0.90}  

        \def\imgw{0.205\linewidth}

        \begin{adjustbox}{max totalsize={\textwidth}{0.72\textheight},center}
        \begin{tabular}{@{}cccc@{}}
            \includegraphics[width=\imgw]{./figure/AC-PKAN_cdr_pred.png} &
            \includegraphics[width=\imgw]{./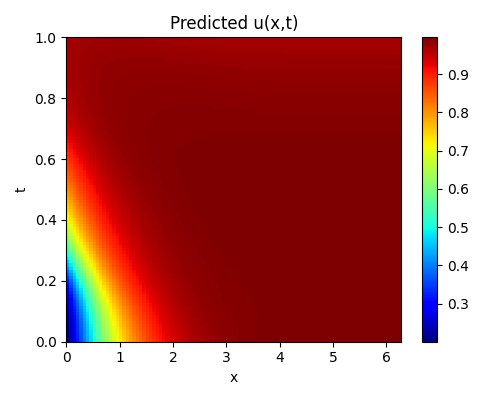} &
            \includegraphics[width=\imgw]{./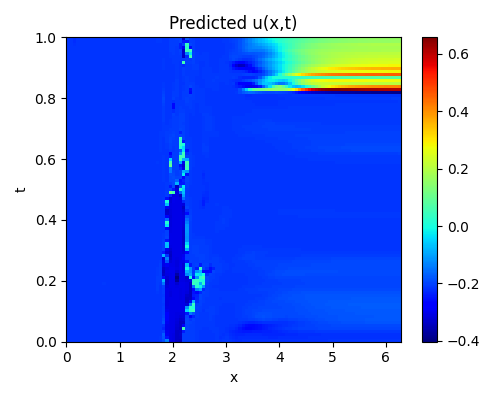} &
            \includegraphics[width=\imgw]{./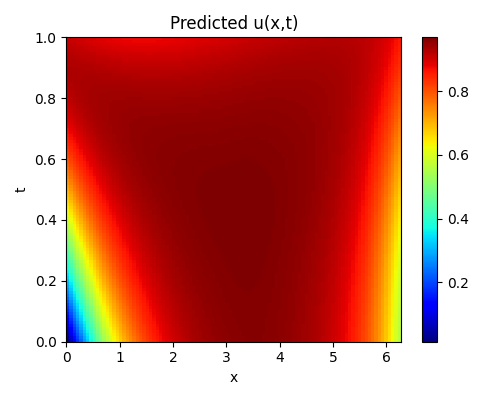} \\
            \includegraphics[width=\imgw]{./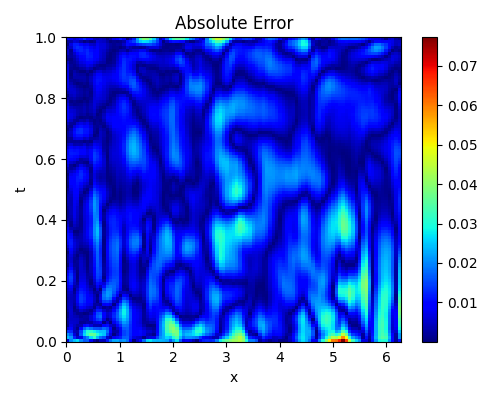} &
            \includegraphics[width=\imgw]{./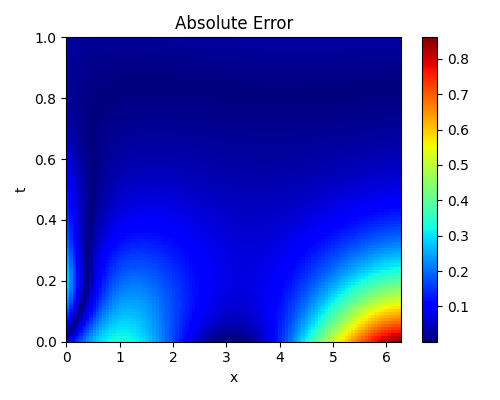} &
            \includegraphics[width=\imgw]{./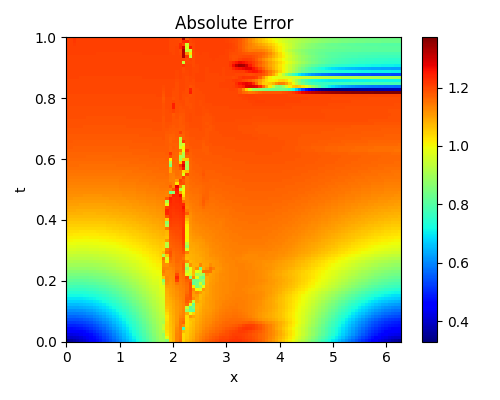} &
            \includegraphics[width=\imgw]{./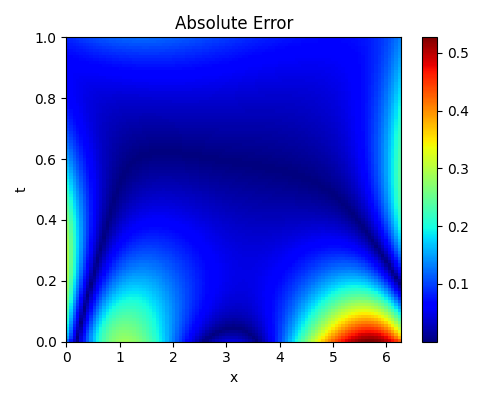} \\
            \includegraphics[width=\imgw]{./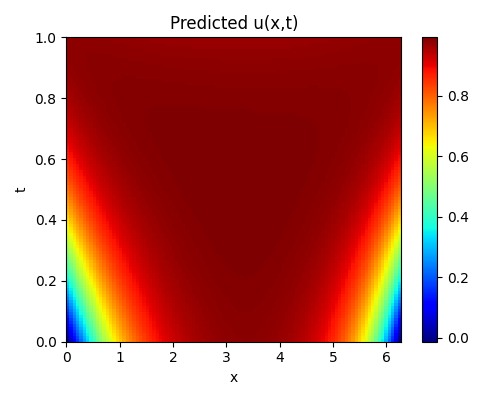} &
            \includegraphics[width=\imgw]{./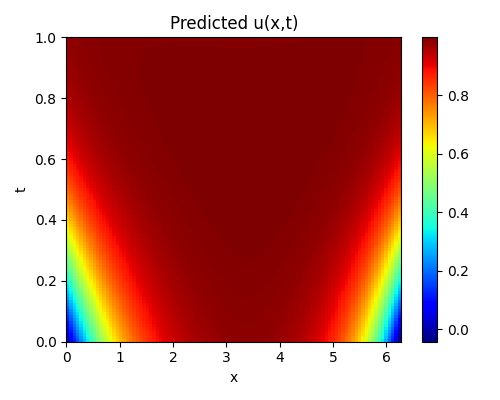} &
            \includegraphics[width=\imgw]{./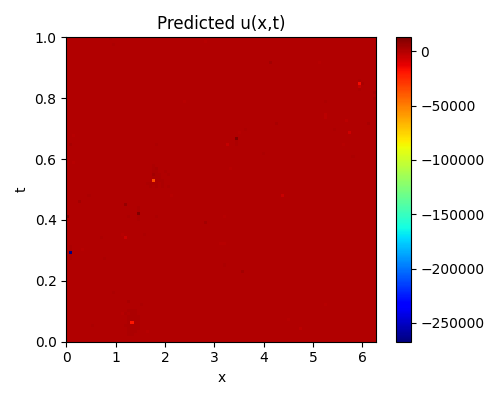} &
            \includegraphics[width=\imgw]{./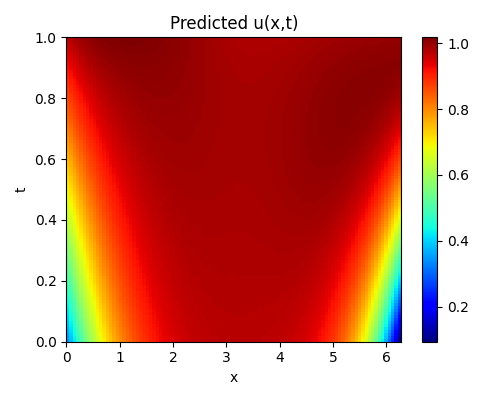} \\
            \includegraphics[width=\imgw]{./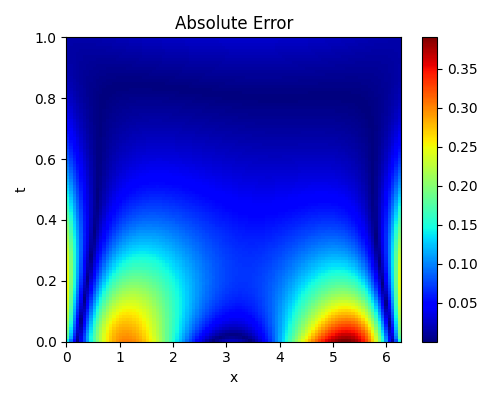} &
            \includegraphics[width=\imgw]{./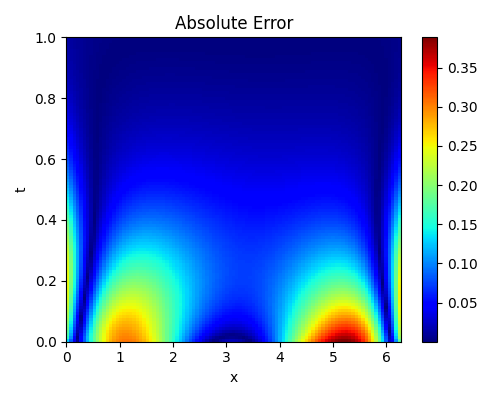} &
            \includegraphics[width=\imgw]{./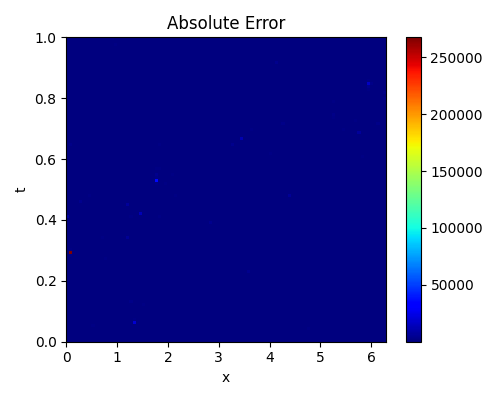} &
            \includegraphics[width=\imgw]{./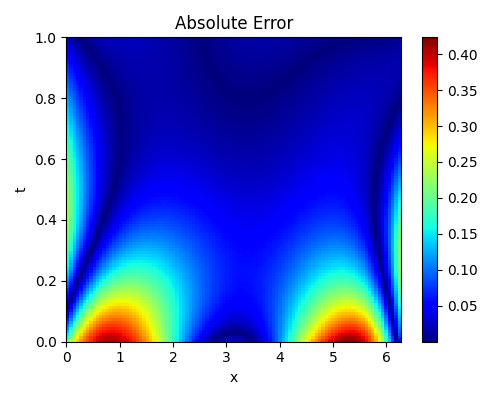} \\
            \includegraphics[width=\imgw]{./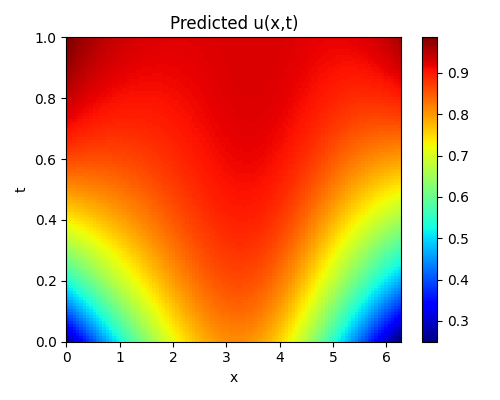} &
            \includegraphics[width=\imgw]{./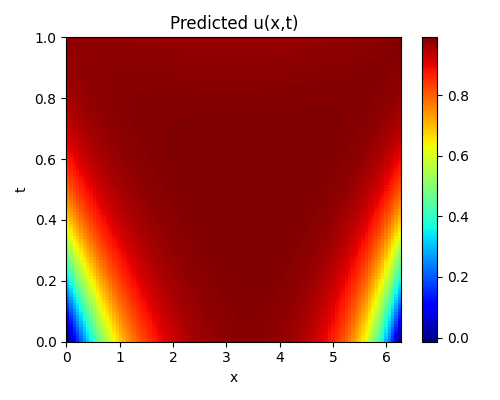} &
            \includegraphics[width=\imgw]{./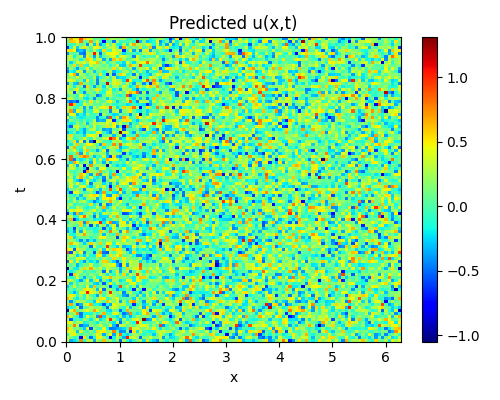} &
            \includegraphics[width=\imgw]{./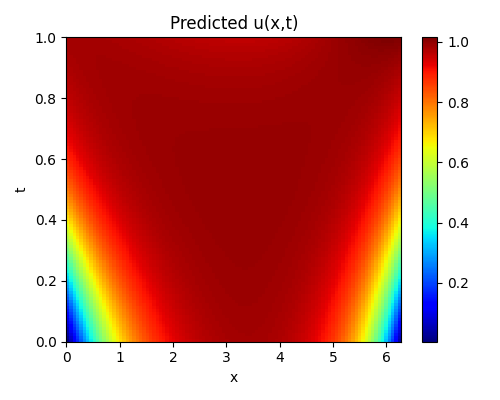} \\
            \includegraphics[width=\imgw]{./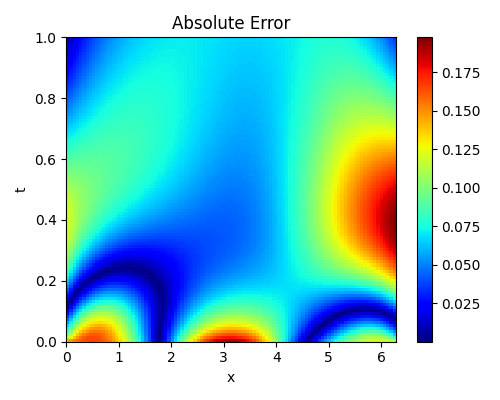} &
            \includegraphics[width=\imgw]{./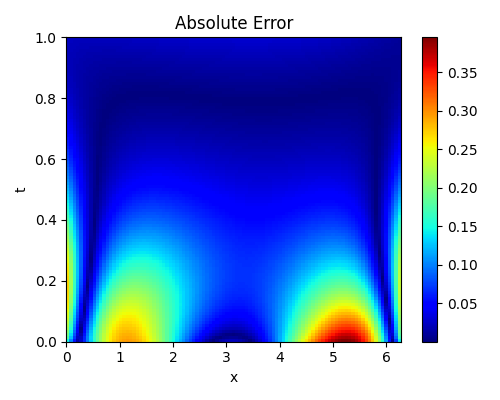} &
            \includegraphics[width=\imgw]{./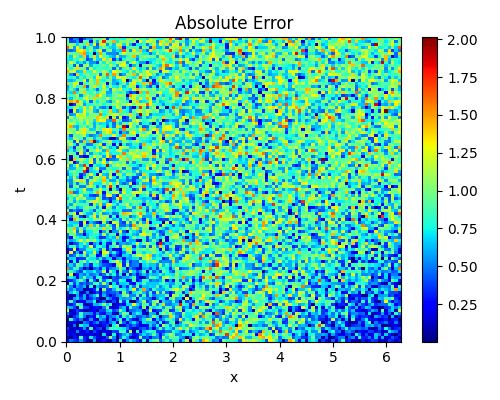} &
            \includegraphics[width=\imgw]{./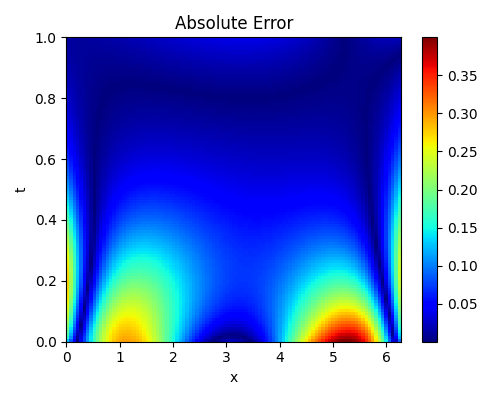} \\
        \end{tabular}
        \end{adjustbox}
        \endgroup

        \caption*{(b) From left to right, the first, third, and fifth rows display the predictions of the AC-PKAN, Cheby1KAN, Cheby2KAN, and FastKAN models; the PINNs, QRes, rKAN, and fKAN models; and the PINNsformer, FLS, FourierKAN, and KINN models, respectively. The second, fourth, and sixth rows present their corresponding absolute errors.}
    \end{subfigure}

    \caption{Comparison of the ground truth solution for the 1D-Conv.-Diff.-Reac. Equation with predictions and error maps from various models.}
    \label{fig:1dcdr}
\end{figure}

\newpage
\begin{figure}[!t] 
    \centering
    \captionsetup{aboveskip=4pt,belowskip=3pt}

    \begin{subfigure}{\linewidth}
        \centering
        \includegraphics[width=0.26\linewidth]{./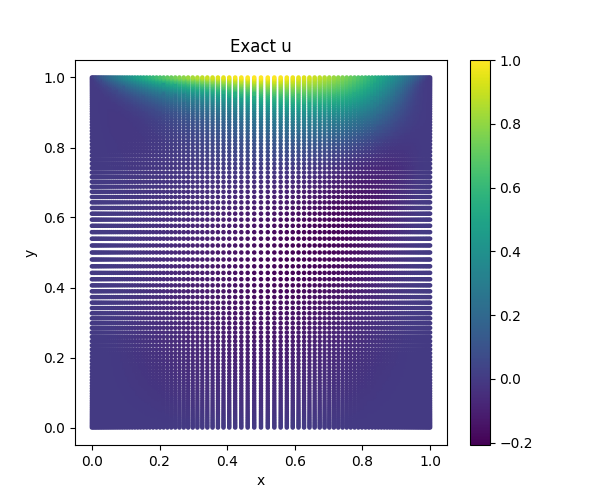}
        \caption*{(a) Ground Truth Solution for the 2D Lid-driven cavity flow}
    \end{subfigure}

    \vspace{0.2em} 

    \begin{subfigure}{\linewidth}
        \centering
        \begingroup
        \setlength{\tabcolsep}{1.8pt}       
        \renewcommand{\arraystretch}{0.90}  
        \def\imgw{0.22\linewidth}          

        \begin{adjustbox}{max totalsize={\textwidth}{0.72\textheight},center}
        \begin{tabular}{@{}cccc@{}}
            \includegraphics[width=\imgw]{./figure/AC-PKAN_lid_driven_flow_predicted_u.png} &
            \includegraphics[width=\imgw]{./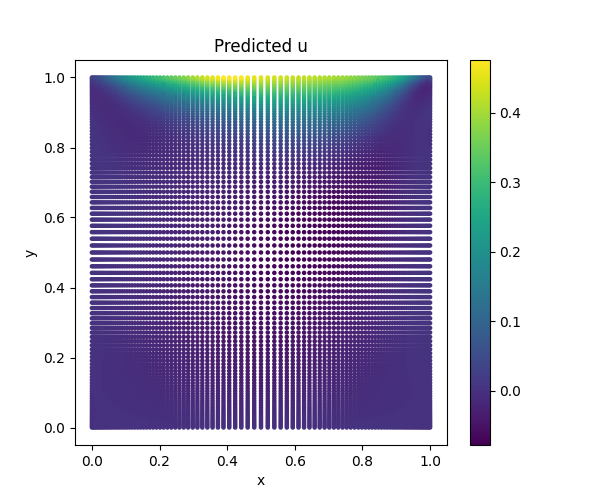} &
            \includegraphics[width=\imgw]{./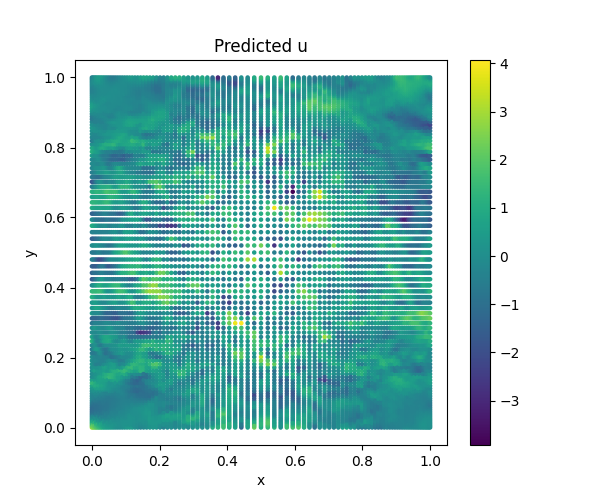} &
            \makebox[\imgw]{} \\

            \includegraphics[width=\imgw]{./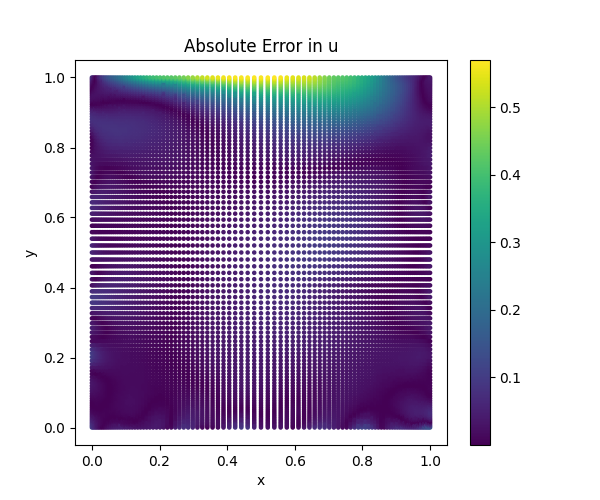} &
            \includegraphics[width=\imgw]{./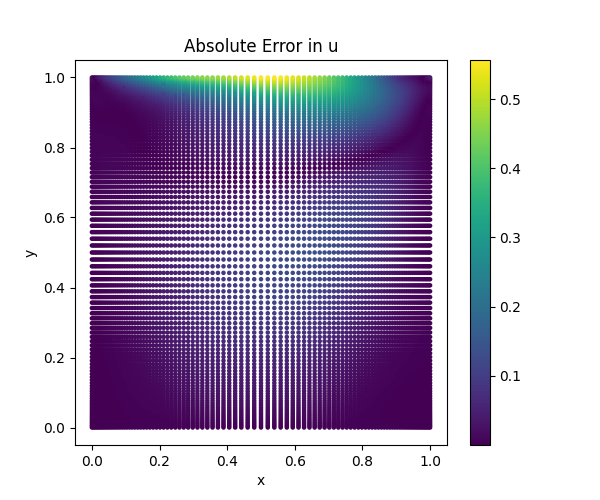} &
            \includegraphics[width=\imgw]{./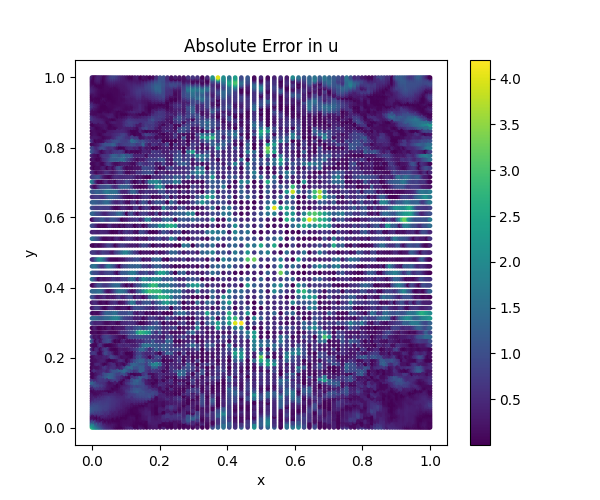} &
            \makebox[\imgw]{} \\

            \includegraphics[width=\imgw]{./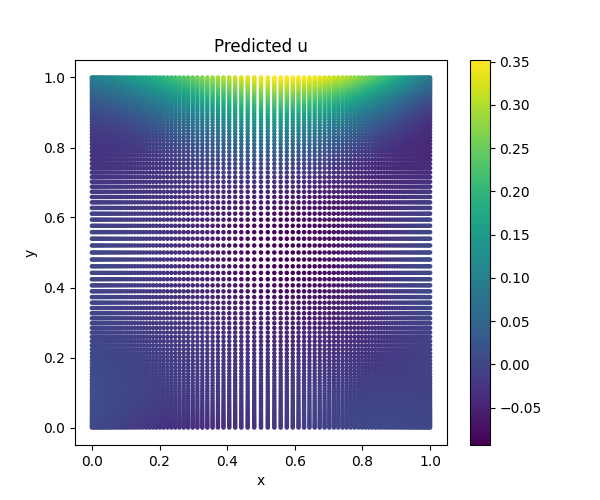} &
            \includegraphics[width=\imgw]{./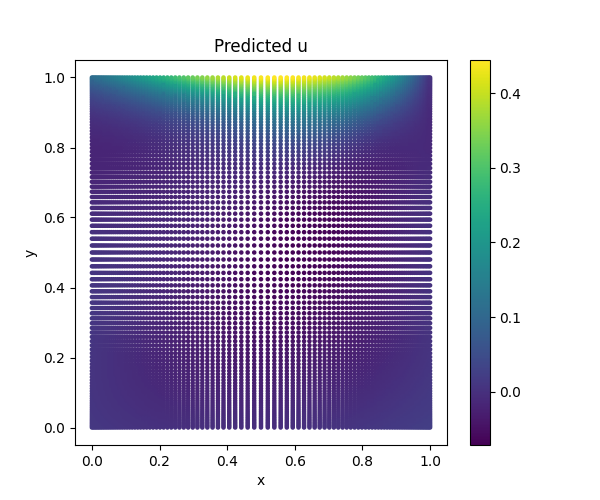} &
            \makebox[\imgw]{} &
            \includegraphics[width=\imgw]{./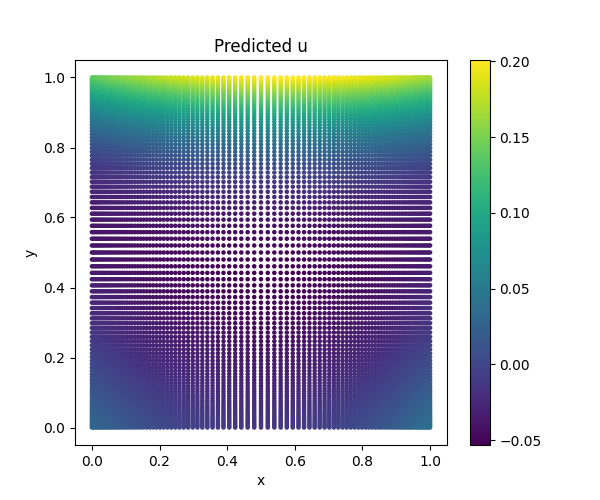} \\

            \includegraphics[width=\imgw]{./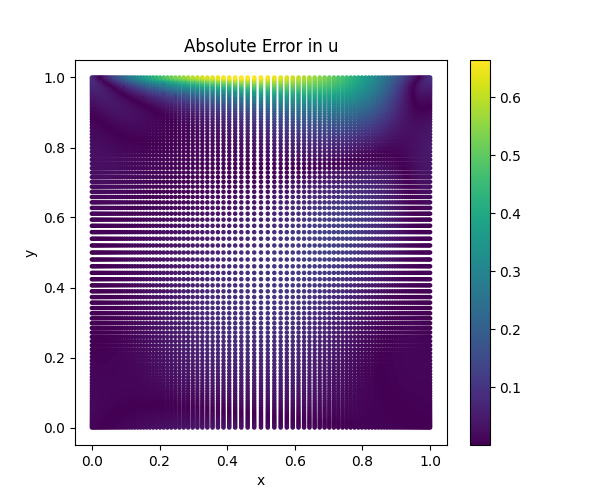} &
            \includegraphics[width=\imgw]{./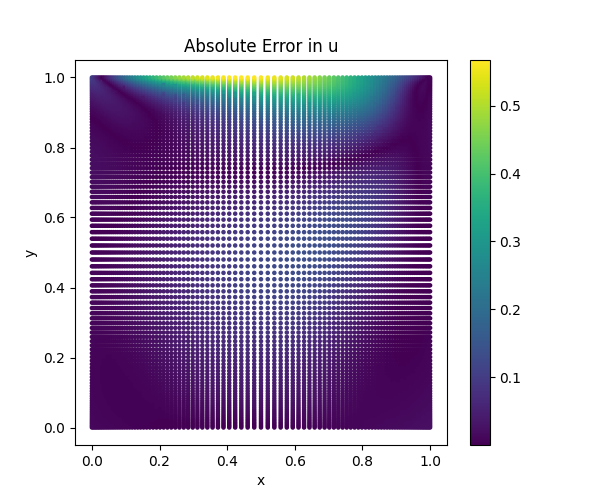} &
            \makebox[\imgw]{} &
            \includegraphics[width=\imgw]{./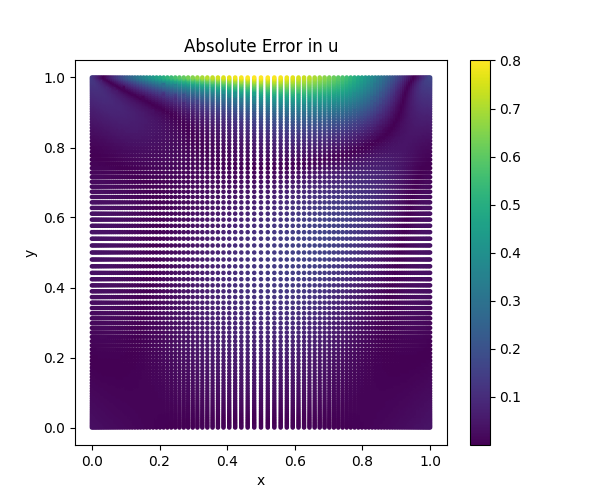} \\

            \makebox[\imgw]{} &
            \includegraphics[width=\imgw]{./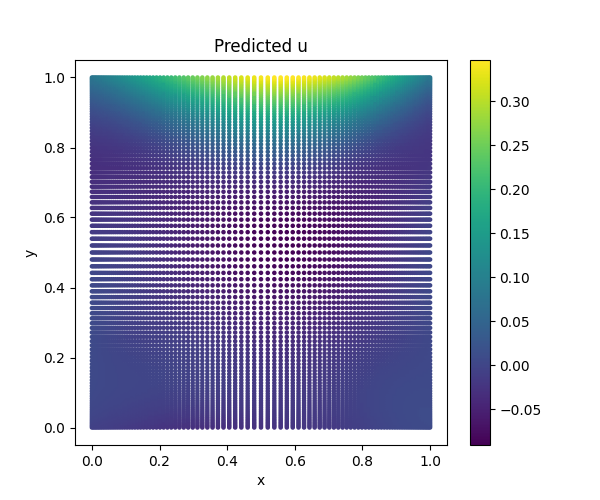} &
            \makebox[\imgw]{} &
            \makebox[\imgw]{} \\

            \makebox[\imgw]{} &
            \includegraphics[width=\imgw]{./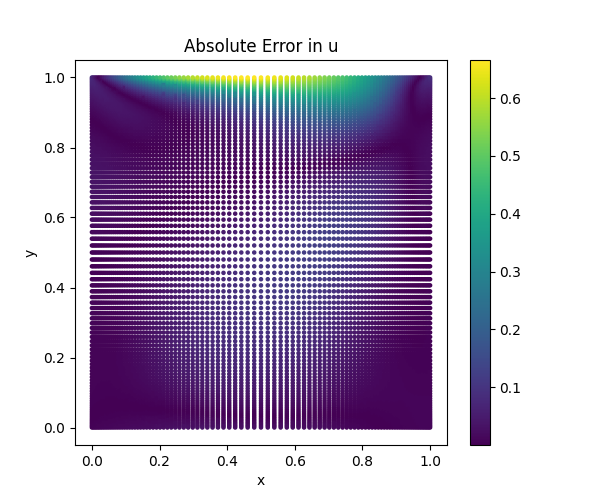} &
            \makebox[\imgw]{} &
            \makebox[\imgw]{} \\
        \end{tabular}
        \end{adjustbox}
        \endgroup

        \caption*{(b) From left to right, the first, third, and fifth rows display the predictions of the AC-PKAN, Cheby1KAN and Cheby2KAN; the PINNs, QRes and fKAN models; and the FLS models, respectively. The second, fourth, and sixth rows present their corresponding absolute errors.}
    \end{subfigure}

    \caption{Comparison of the ground truth solution for the 2D Lid-driven cavity flow with predictions and error maps from various models.}
    \label{fig:2dldc}
\end{figure}

\newpage
\begin{figure}[t]
    \vspace{-0.10in}
    \centering
    \begin{subfigure}{\linewidth}
        \centering
        \includegraphics[width=0.20\linewidth]{./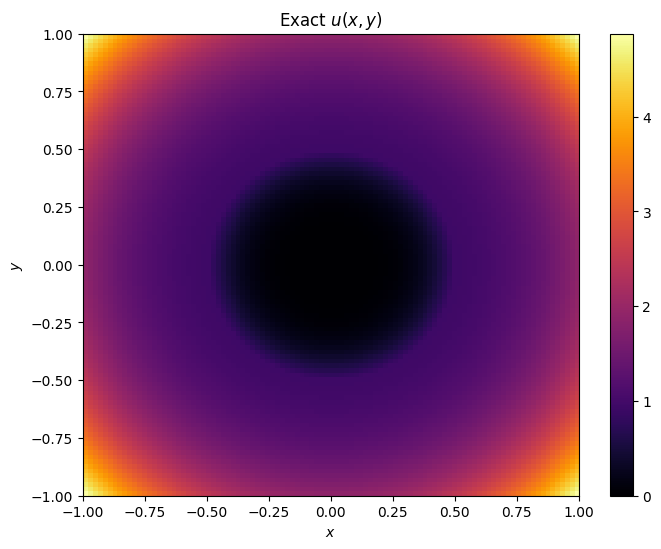}
        \caption*{(a) Ground Truth Solution for the Heterogeneous Possion equation}
    \end{subfigure}
    
    \vspace{0.15in}
    
    \begin{subfigure}{\linewidth}
        \centering
        \begin{tabular}{cccc}
            \includegraphics[width=0.20\linewidth]{./figure/Heterogeneous_problem_AC-PKAN_prediction.png} &
            \includegraphics[width=0.20\linewidth]{./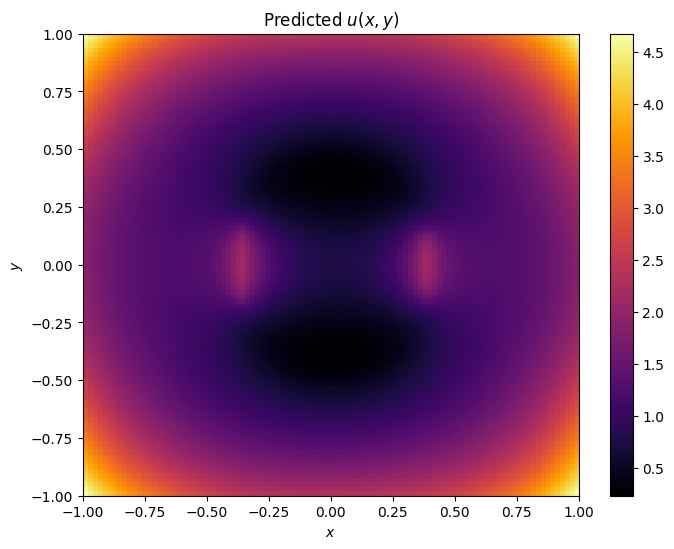} &
            \includegraphics[width=0.20\linewidth]{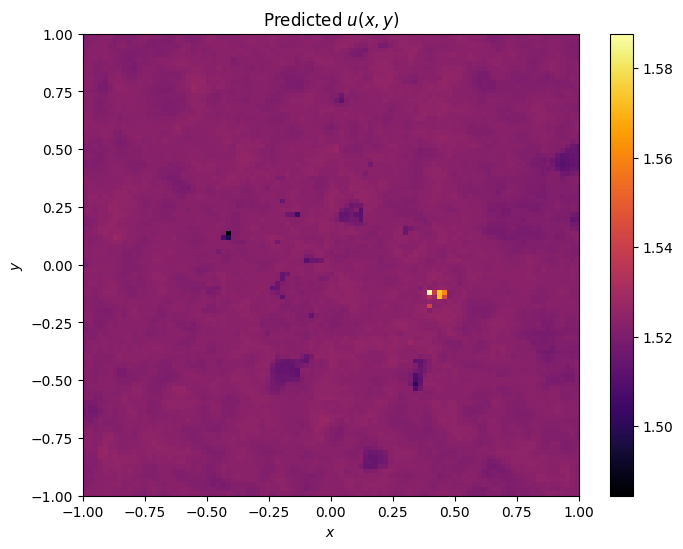} &
            \includegraphics[width=0.20\linewidth]{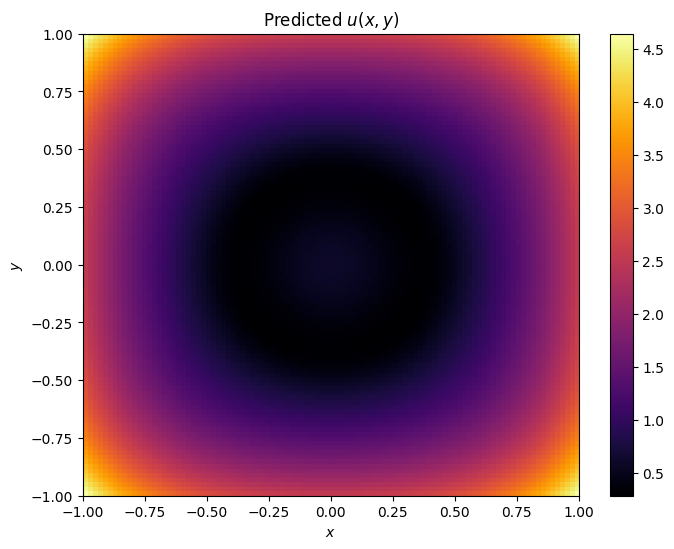} \\
            \includegraphics[width=0.20\linewidth]{./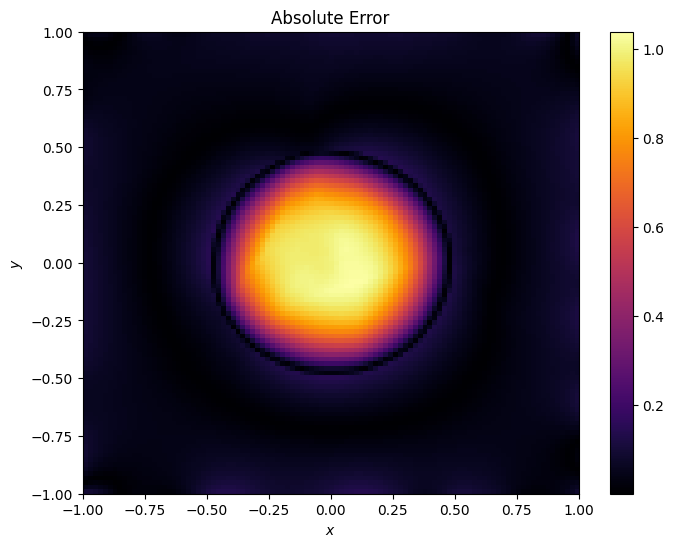} &
            \includegraphics[width=0.20\linewidth]{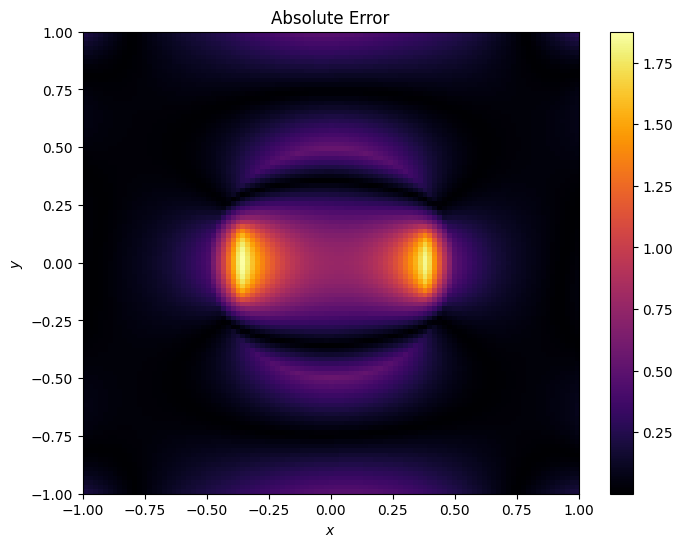} &
            \includegraphics[width=0.20\linewidth]{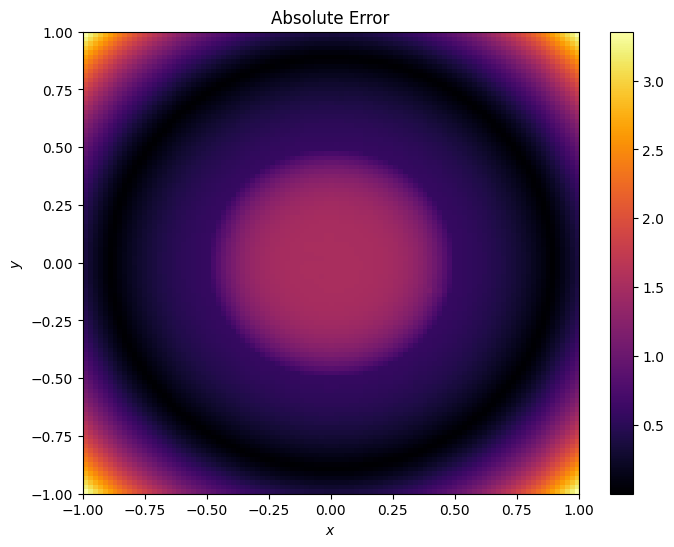} &
            \includegraphics[width=0.20\linewidth]{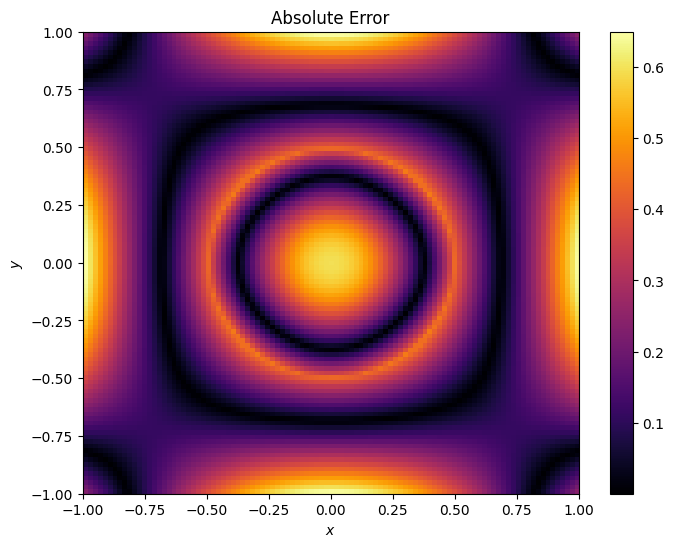} \\
            \includegraphics[width=0.20\linewidth]{./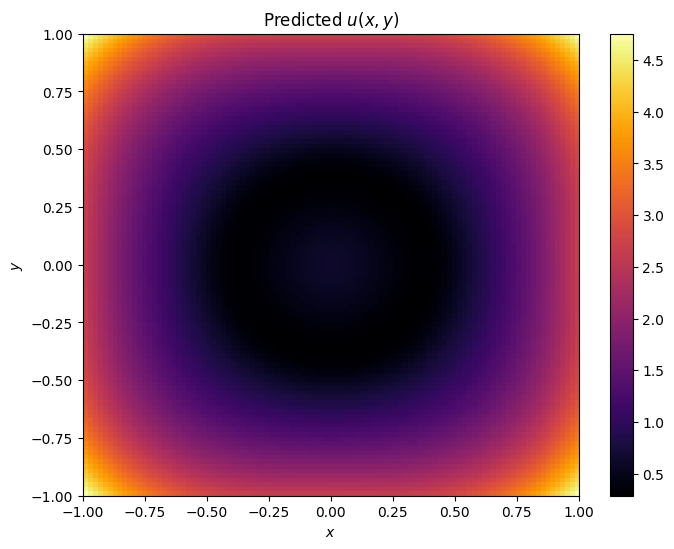} &
            \includegraphics[width=0.20\linewidth]{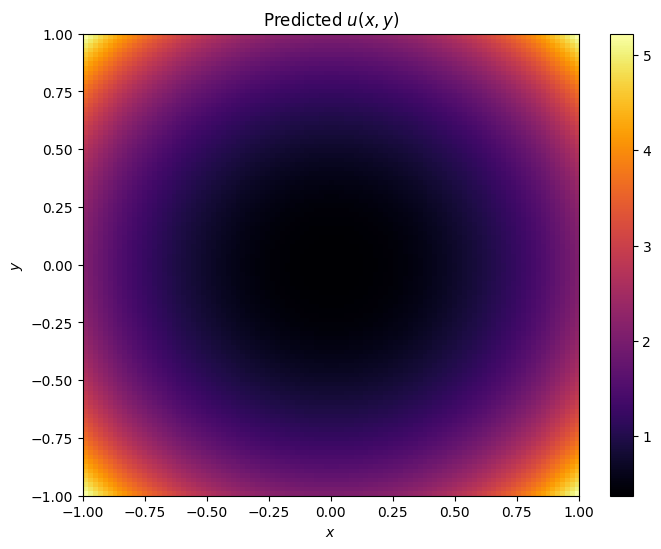} &
            \includegraphics[width=0.20\linewidth]{./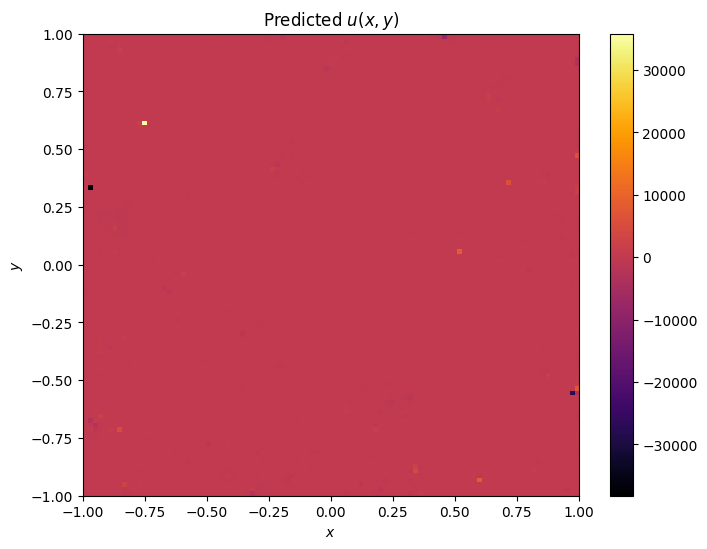} &
            \includegraphics[width=0.20\linewidth]{./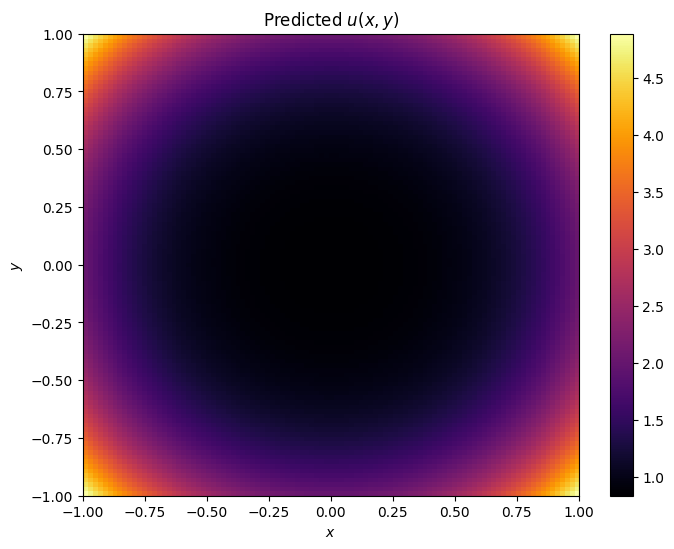} \\
            \includegraphics[width=0.20\linewidth]{./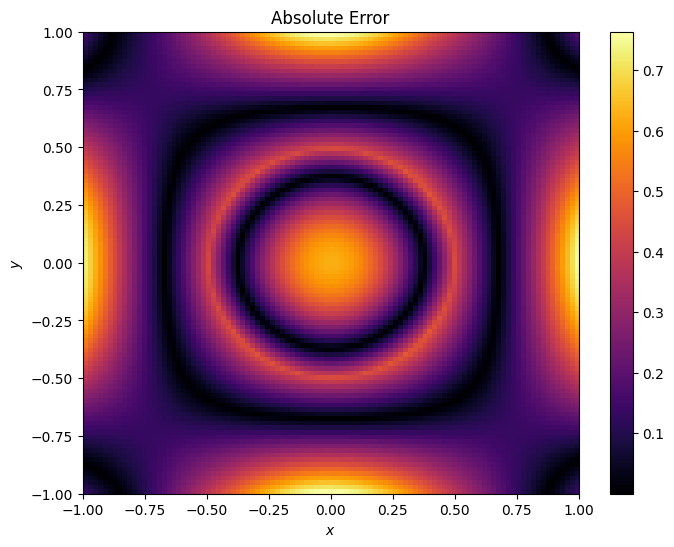} &
            \includegraphics[width=0.20\linewidth]{./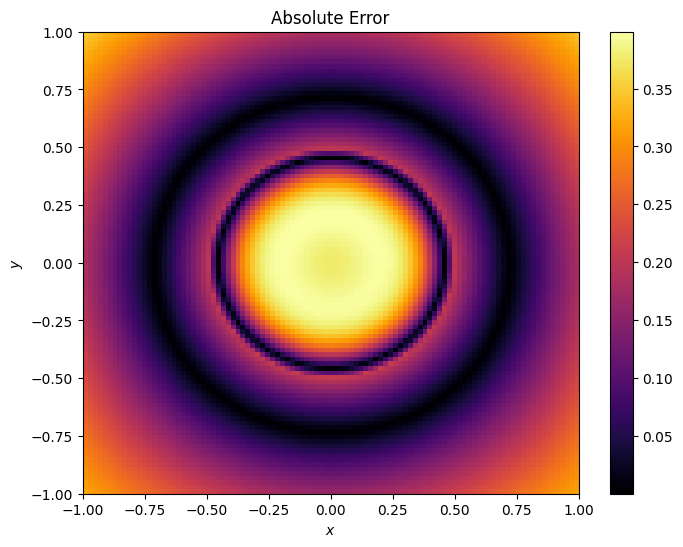} &
            \includegraphics[width=0.20\linewidth]{./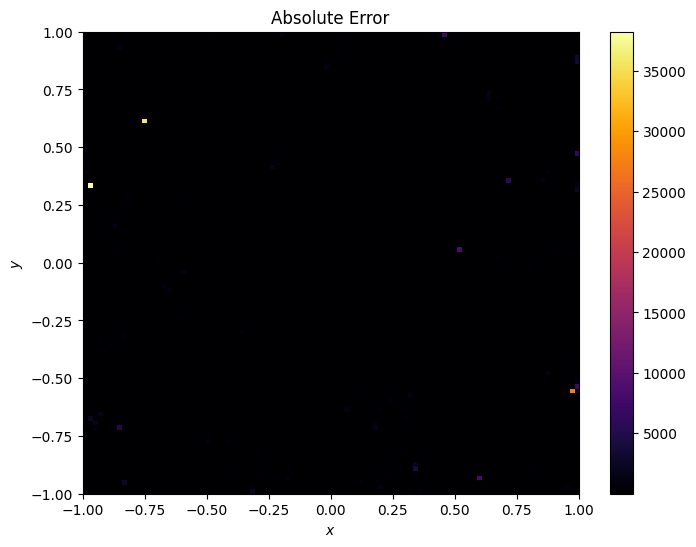} &
            \includegraphics[width=0.20\linewidth]{./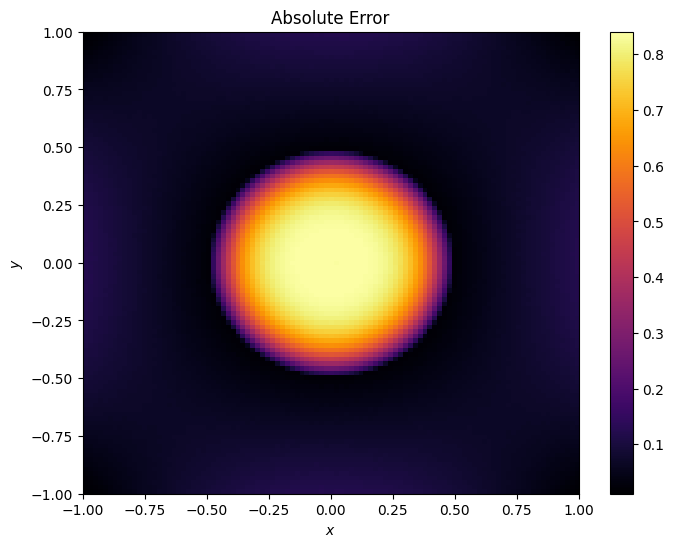} \\
            \includegraphics[width=0.20\linewidth]{./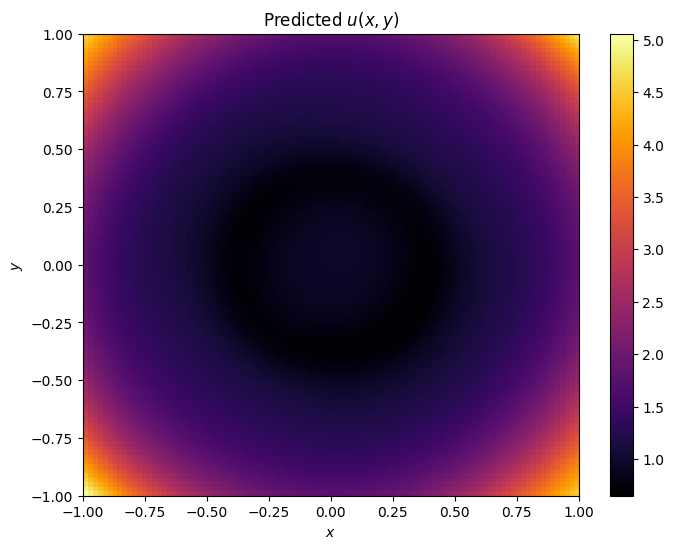} &
            \includegraphics[width=0.20\linewidth]{./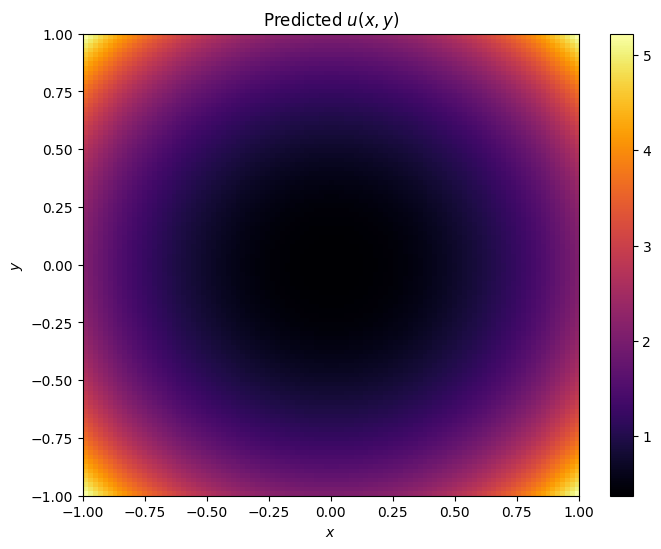} &
            \includegraphics[width=0.20\linewidth]{./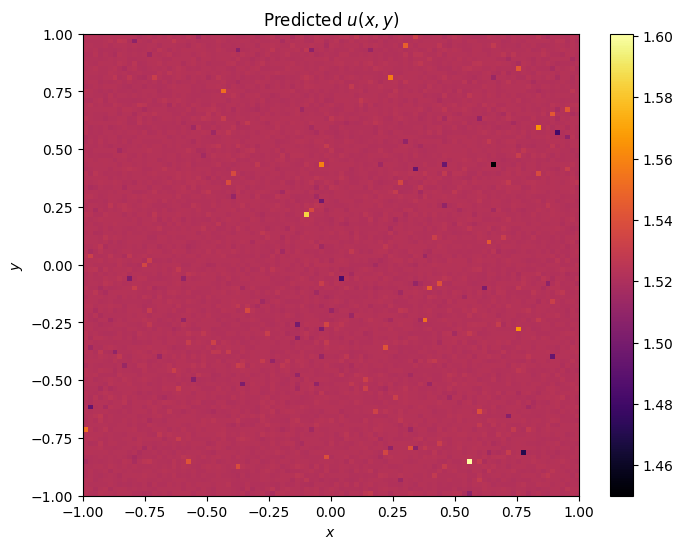} &
            \includegraphics[width=0.20\linewidth]{./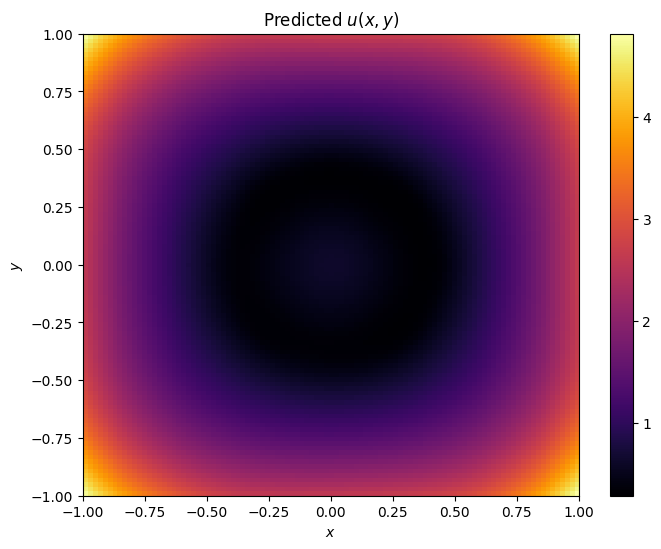} \\
            \includegraphics[width=0.20\linewidth]{./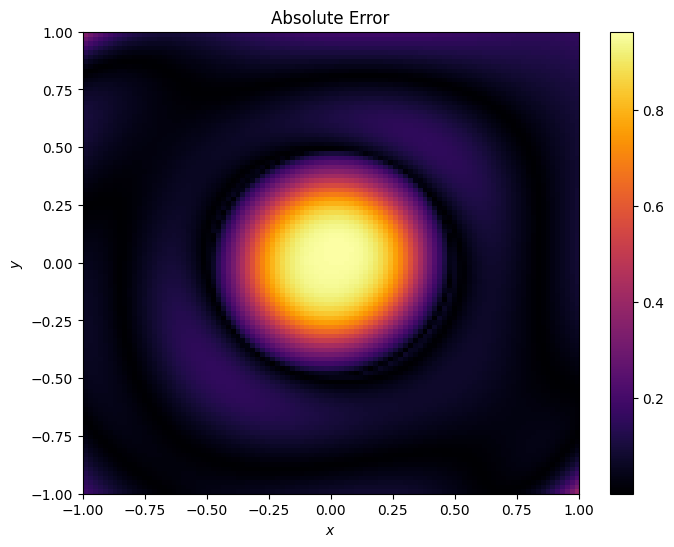} &
            \includegraphics[width=0.20\linewidth]{./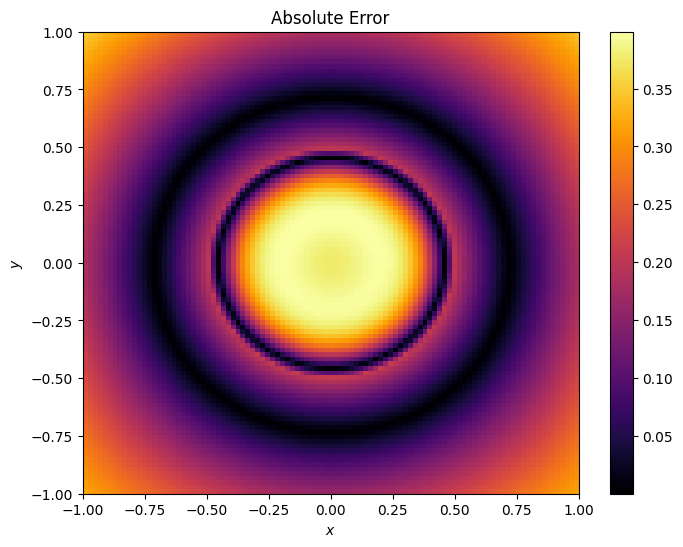} &
            \includegraphics[width=0.20\linewidth]{./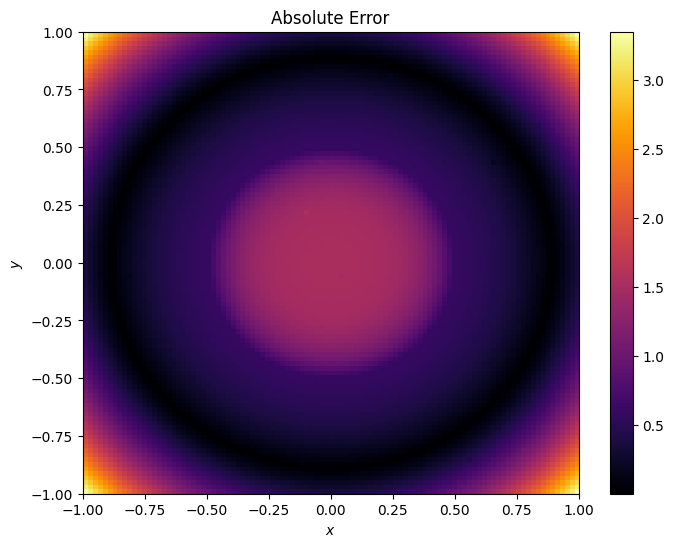} &
            \includegraphics[width=0.20\linewidth]{./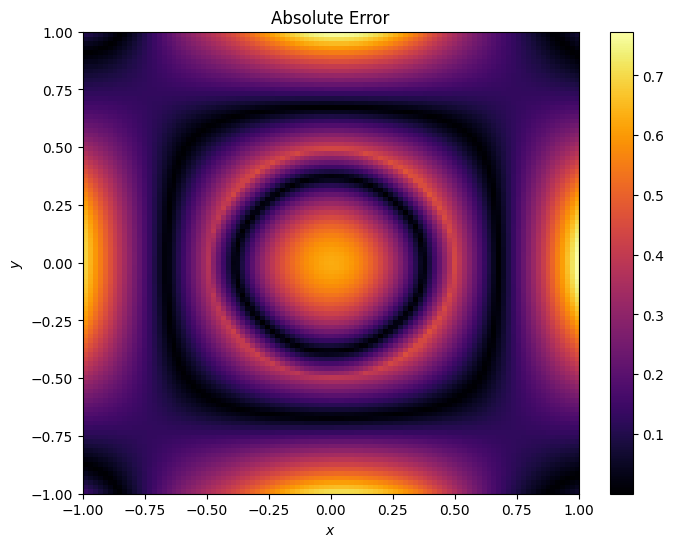} \\
        \end{tabular}
        \caption*{(b) From left to right, the first, third, and fifth rows display the predictions of the AC-PKAN, Cheby1KAN, Cheby2KAN, and FastKAN models; the PINNs, QRes, rKAN, and fKAN models; and the PINNsformer, FLS, FourierKAN, and KINN models, respectively. The second, fourth, and sixth rows present their corresponding absolute errors.}
    \end{subfigure}
    \caption{Comparison of the ground truth solution for the Heterogeneous Possion equation problem with predictions and error maps from various models.}
    \label{fig:Heterogeneous_problem}
\end{figure}

\newpage
\begin{figure}[t]
    \vspace{-0.10in}
    \centering
    \begin{subfigure}{\linewidth}
        \centering
        \includegraphics[width=0.20\linewidth]{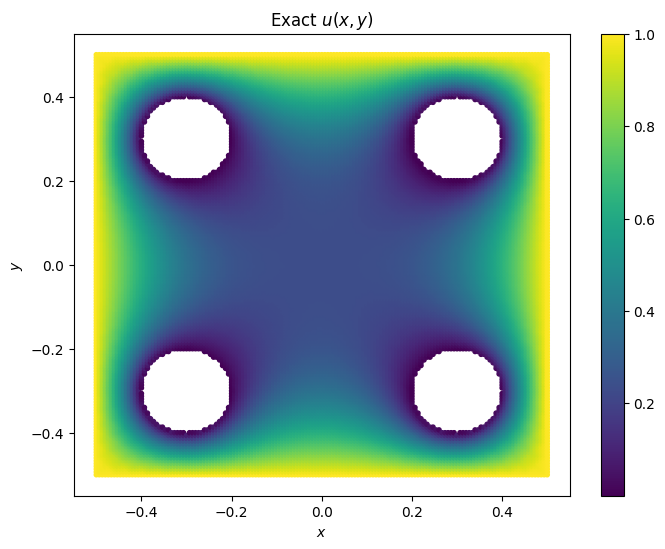}
        \caption*{(a) Ground Truth Solution for the Complex Geometry Possion equation}
    \end{subfigure}
    
    \vspace{0.15in}
    
    \begin{subfigure}{\linewidth}
        \centering
        \begin{tabular}{cccc}
            \includegraphics[width=0.2\linewidth]{./figure/complex_geometry_AC-PKAN_prediction.png} &
            \includegraphics[width=0.2\linewidth]{./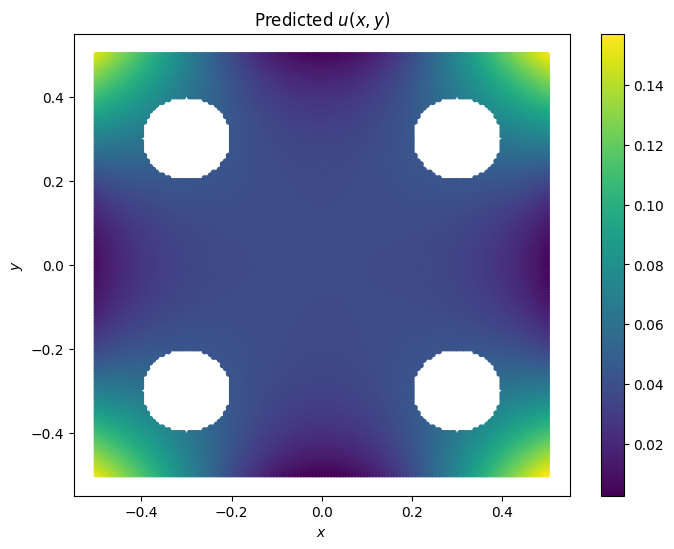} &
            \includegraphics[width=0.2\linewidth]{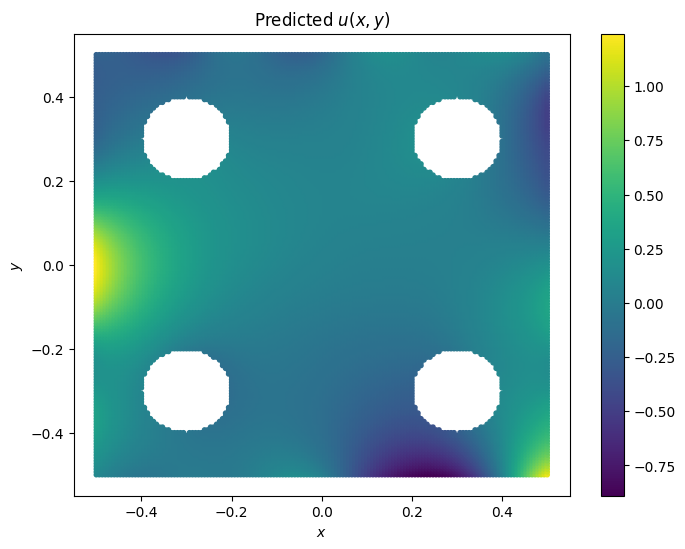} &
            \includegraphics[width=0.2\linewidth]{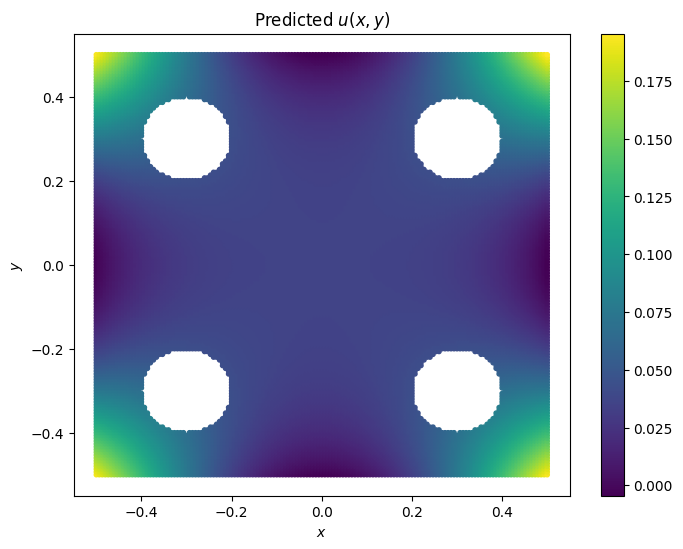} \\
            \includegraphics[width=0.2\linewidth]{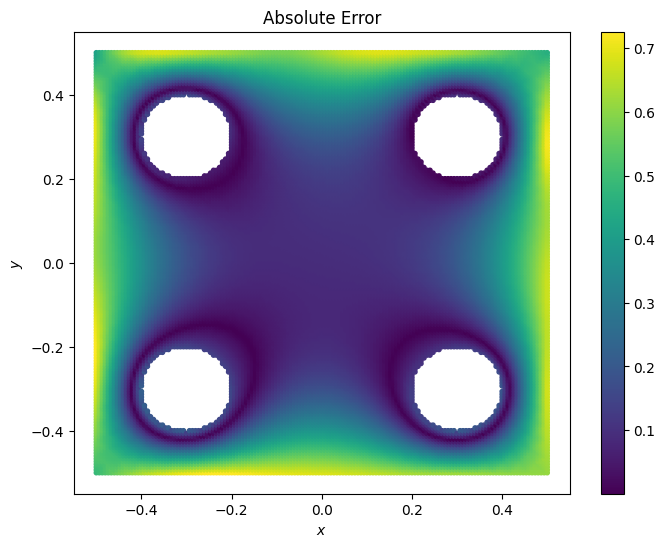} &
            \includegraphics[width=0.2\linewidth]{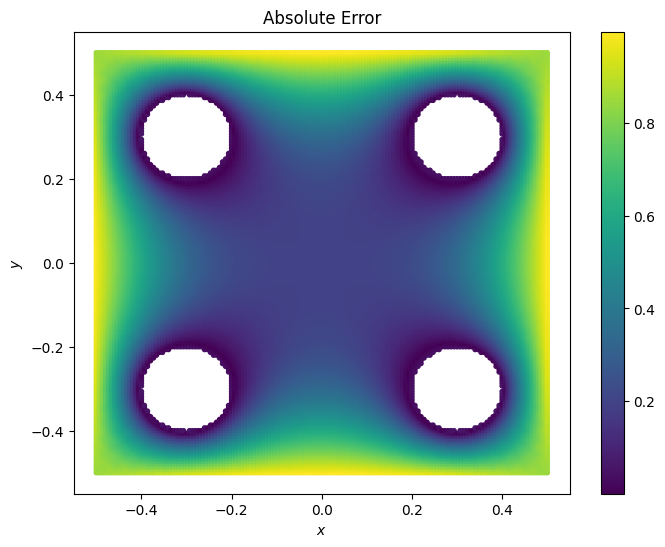} &
            \includegraphics[width=0.2\linewidth]{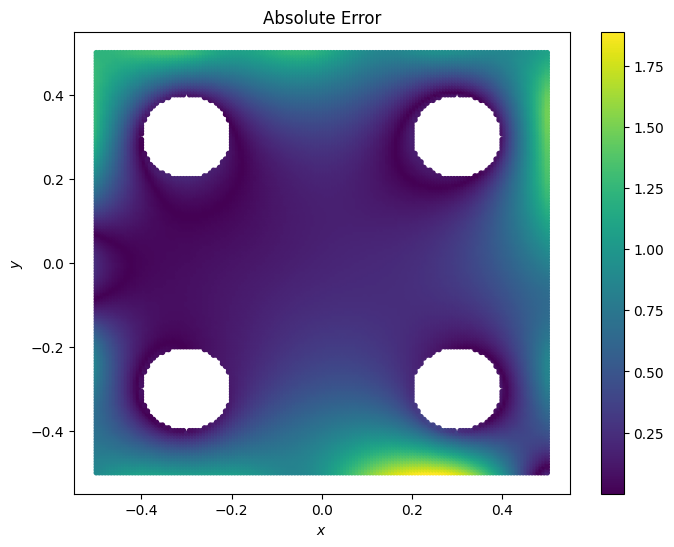} &
            \includegraphics[width=0.2\linewidth]{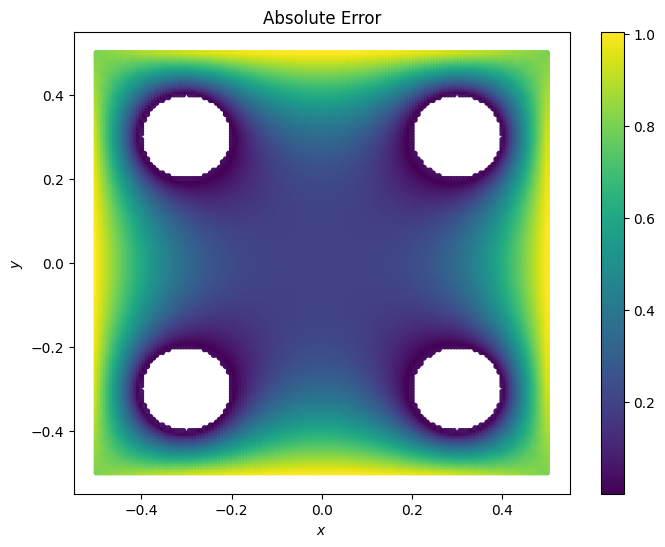} \\
            \includegraphics[width=0.2\linewidth]{./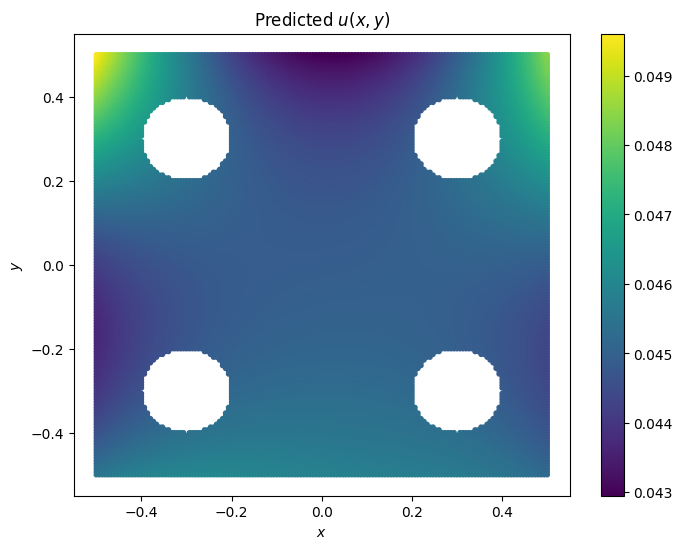} &
            \includegraphics[width=0.2\linewidth]{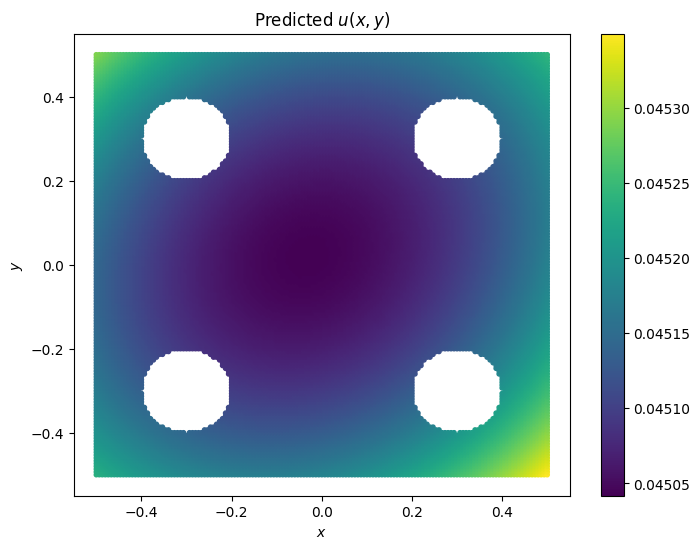} &
            \includegraphics[width=0.2\linewidth]{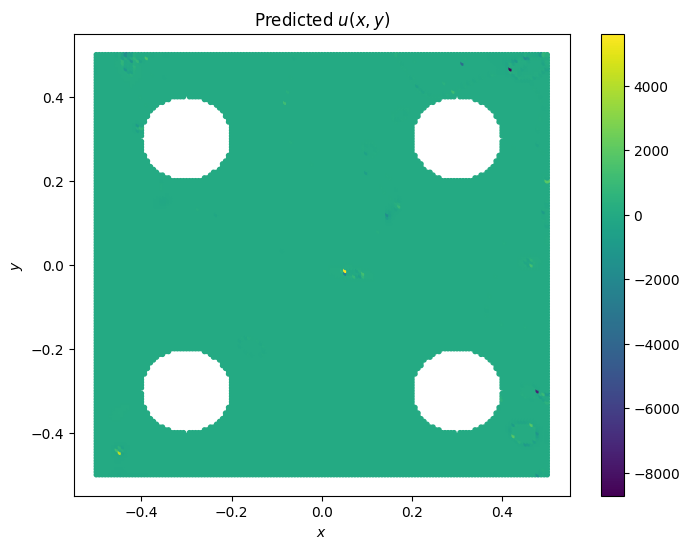} &
            \includegraphics[width=0.2\linewidth]{./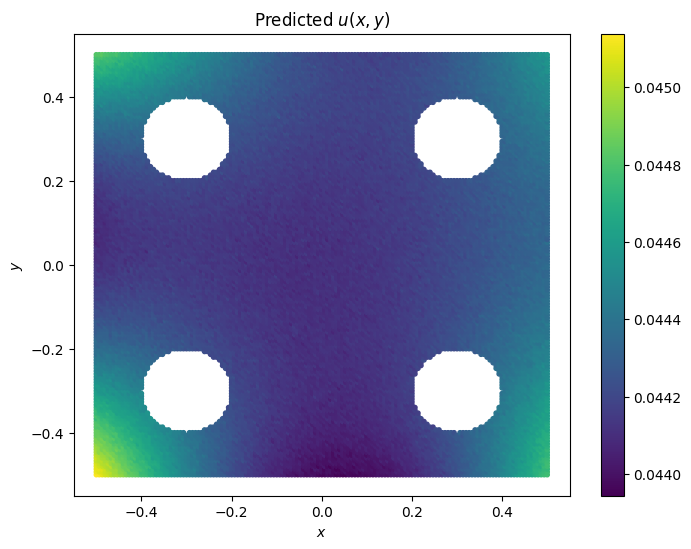} \\
            \includegraphics[width=0.2\linewidth]{./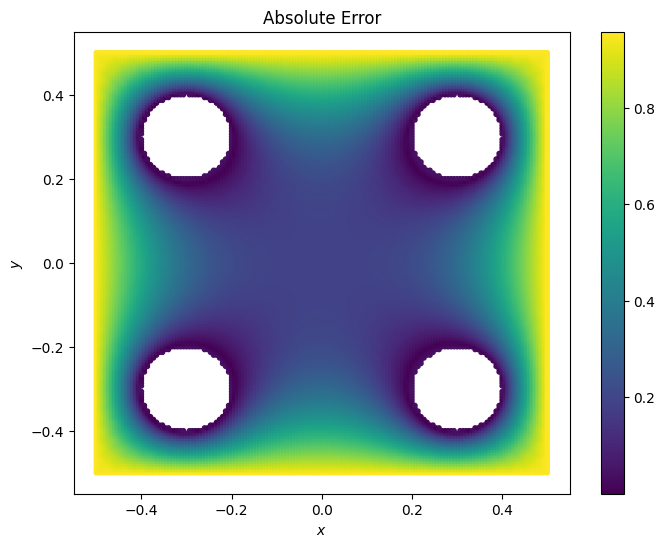} &
            \includegraphics[width=0.2\linewidth]{./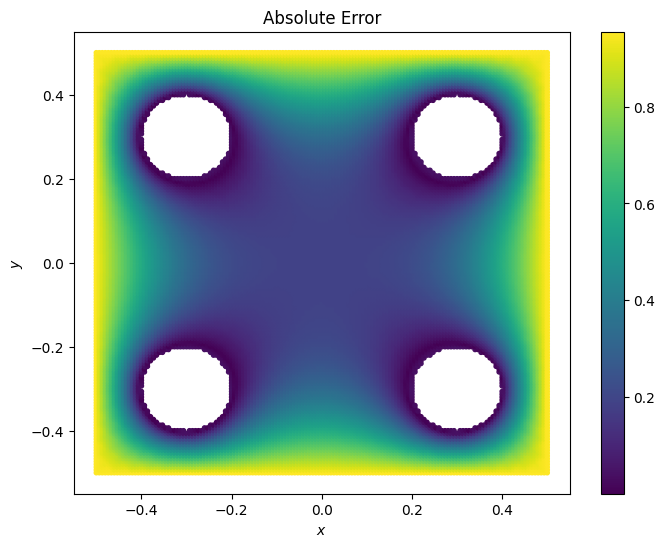} &
            \includegraphics[width=0.2\linewidth]{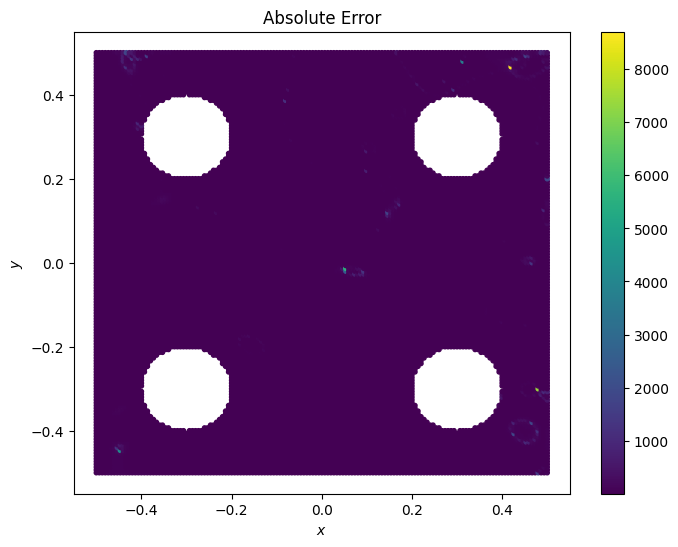} &
            \includegraphics[width=0.2\linewidth]{./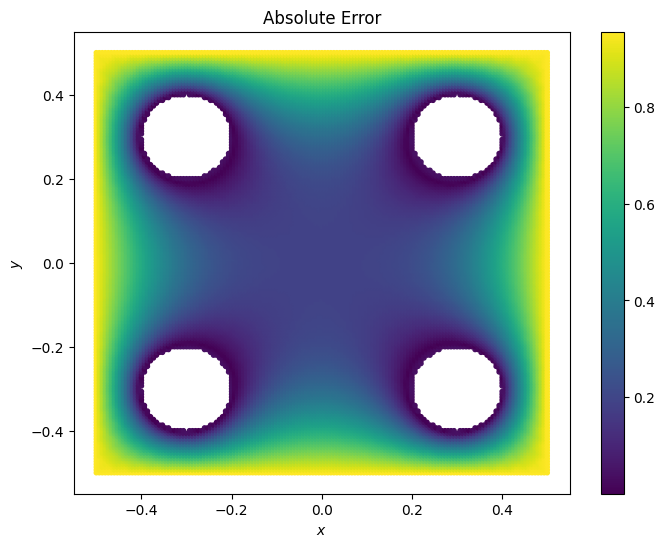} \\
            \includegraphics[width=0.2\linewidth]{./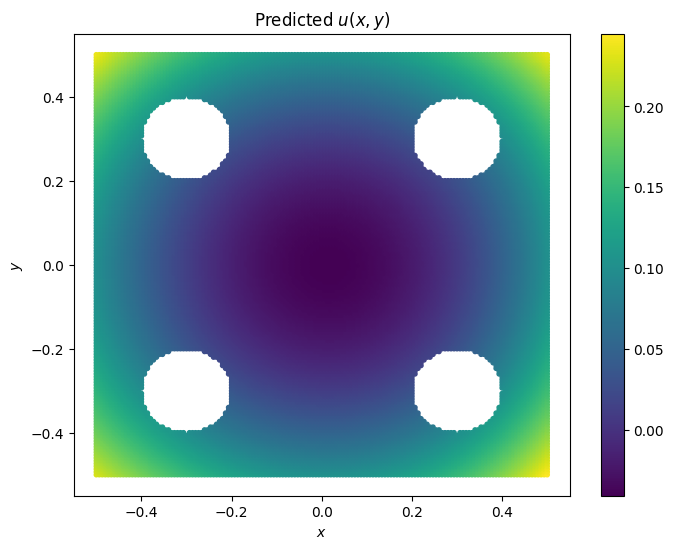} &
            \includegraphics[width=0.2\linewidth]{./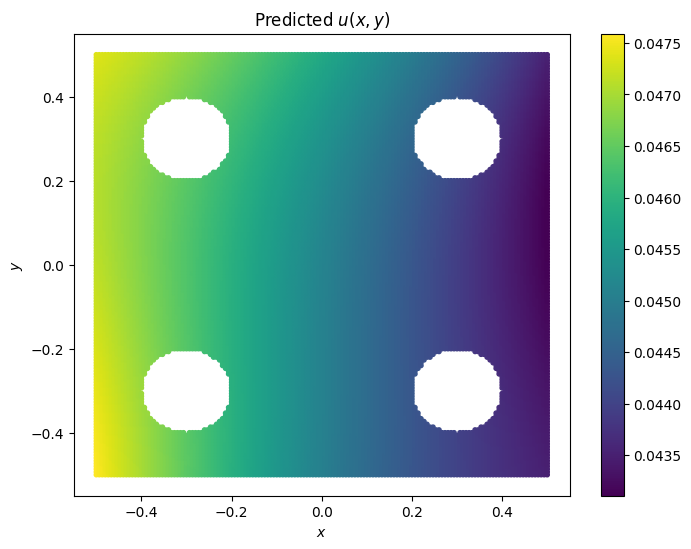} &
            \includegraphics[width=0.2\linewidth]{./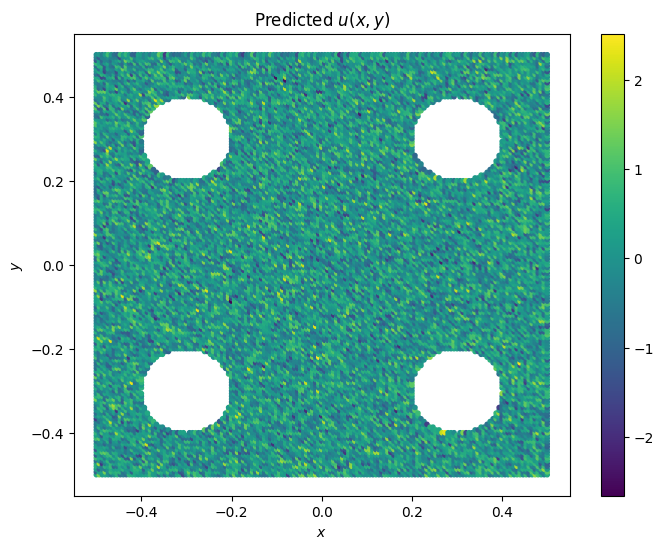} &
            \includegraphics[width=0.2\linewidth]{./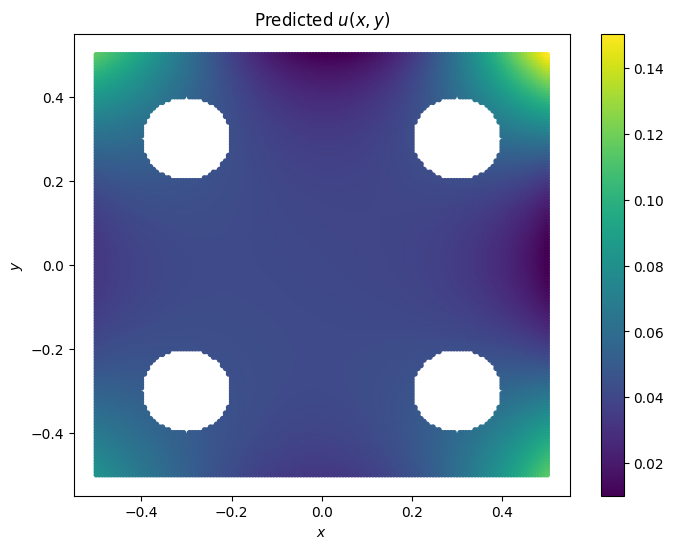} \\
            \includegraphics[width=0.2\linewidth]{./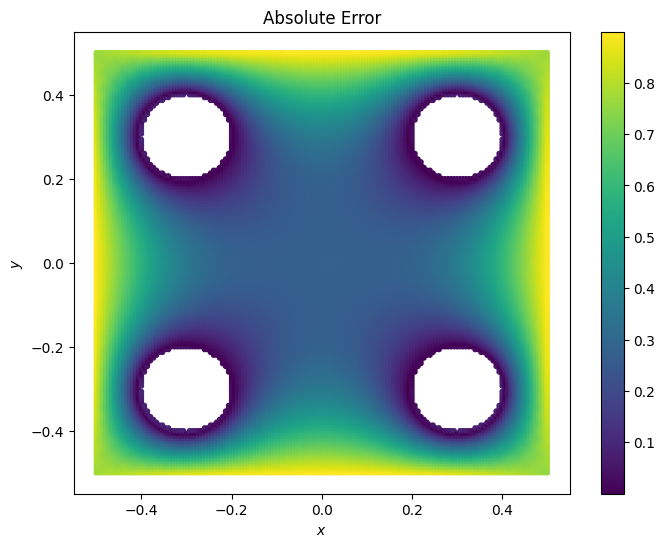} &
            \includegraphics[width=0.2\linewidth]{./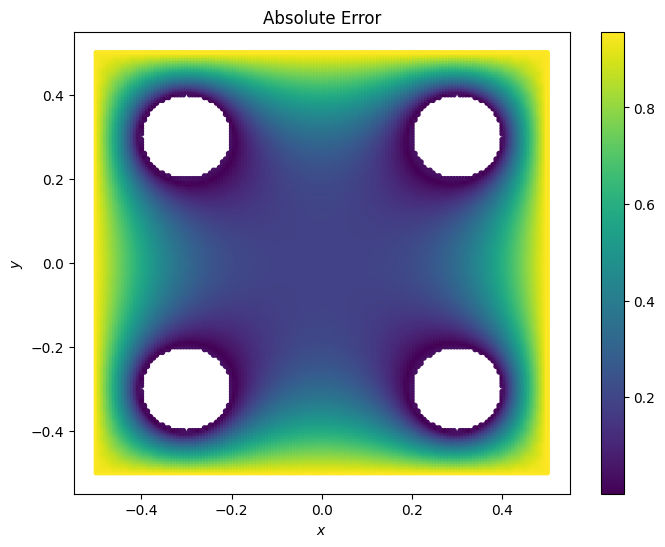} &
            \includegraphics[width=0.2\linewidth]{./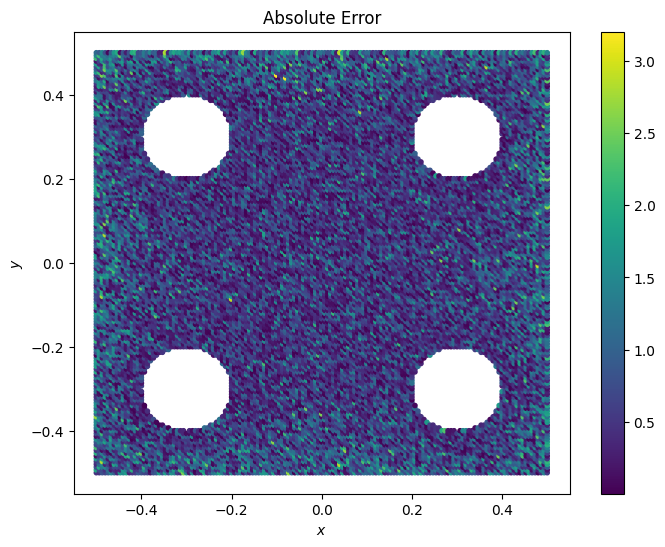} &
            \includegraphics[width=0.2\linewidth]{./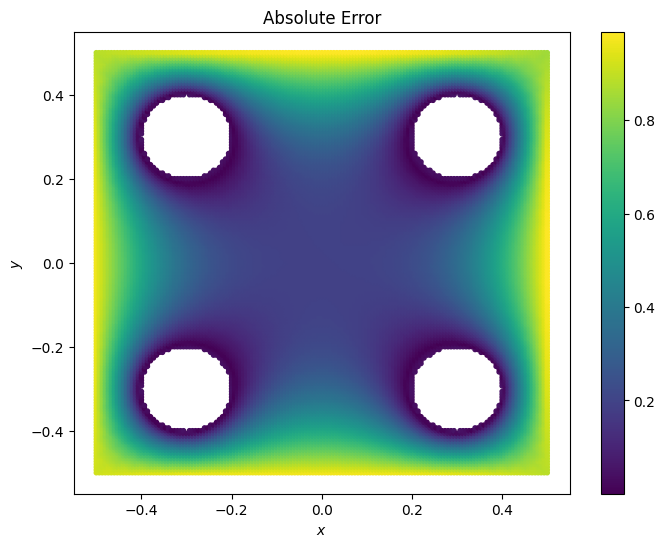} \\
        \end{tabular}
        \caption*{(b) From left to right, the first, third, and fifth rows display the predictions of the AC-PKAN, Cheby1KAN, Cheby2KAN, and FastKAN models; the PINNs, QRes, rKAN, and fKAN models; and the PINNsformer, FLS, FourierKAN, and KINN models, respectively. The second, fourth, and sixth rows present their corresponding absolute errors.}
    \end{subfigure}
    \caption{Comparison of the ground truth solution for the Complex Geometry Possion equation problem with predictions and error maps from various models.}
    \label{fig:complex_geometry}
\end{figure}

\newpage
\begin{figure}[!t] 
    \centering
    \captionsetup{aboveskip=4pt,belowskip=3pt}

    \begin{subfigure}{\linewidth}
        \centering
        \includegraphics[width=0.26\linewidth]{./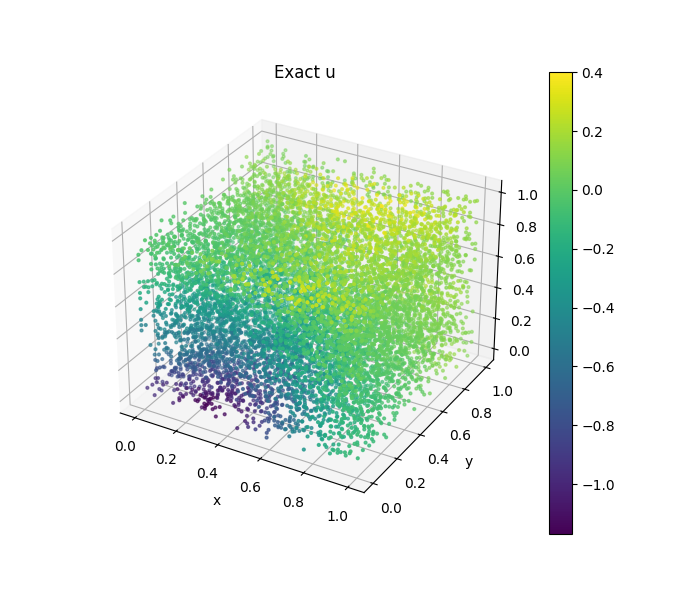}
        \caption*{(a) Ground Truth Solution for the 3D Point-Cloud Problem}
    \end{subfigure}

    \vspace{0.18em} 

    \begin{subfigure}{\linewidth}
        \centering
        \begingroup
        \setlength{\tabcolsep}{1.6pt}       
        \renewcommand{\arraystretch}{0.90}  

        \def\imgw{0.22\linewidth}

        \begin{adjustbox}{max totalsize={\textwidth}{0.70\textheight},center}
        \begin{tabular}{@{}cccc@{}}
            \includegraphics[width=\imgw]{./figure/mcKAN_Poisson3D_pred_u_3D.png} &
            \includegraphics[width=\imgw]{./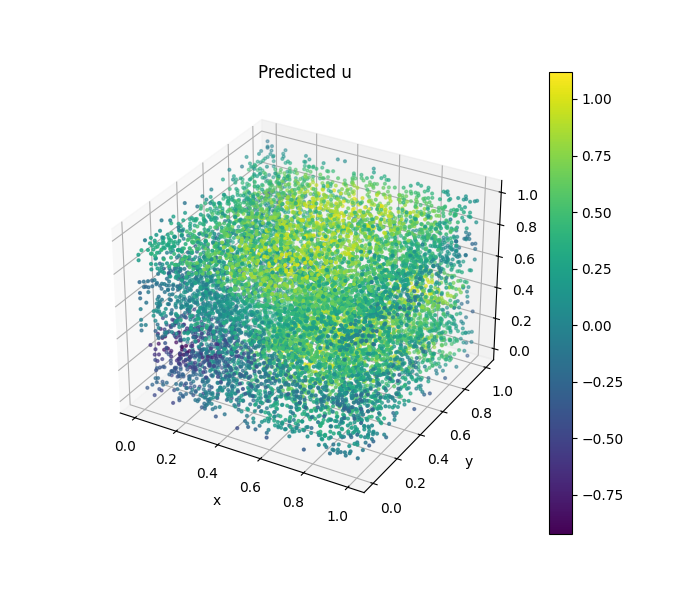} &
            \includegraphics[width=\imgw]{./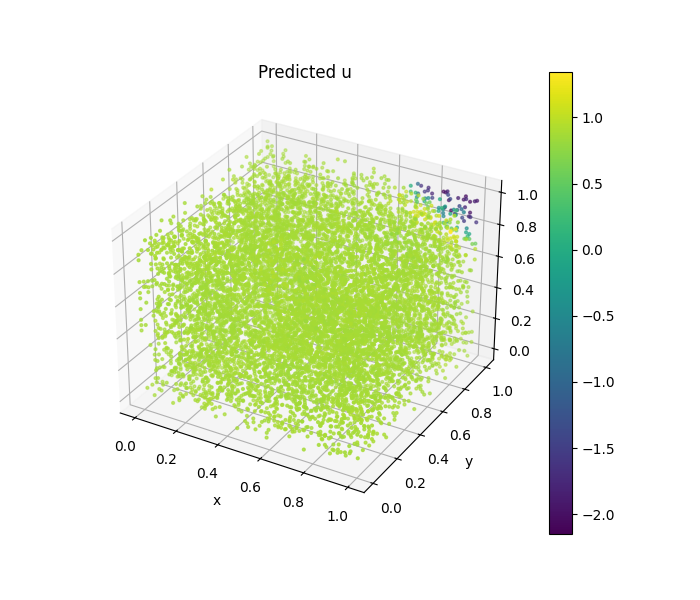} &
            \makebox[\imgw]{} \\

            \includegraphics[width=\imgw]{./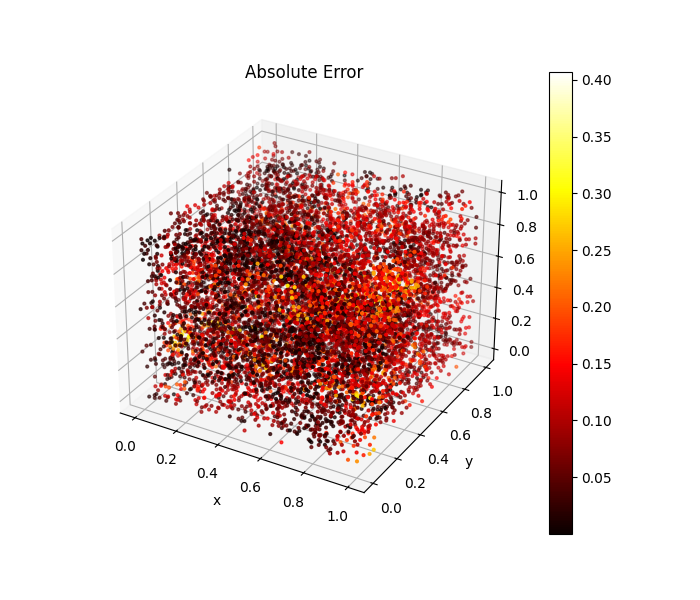} &
            \includegraphics[width=\imgw]{./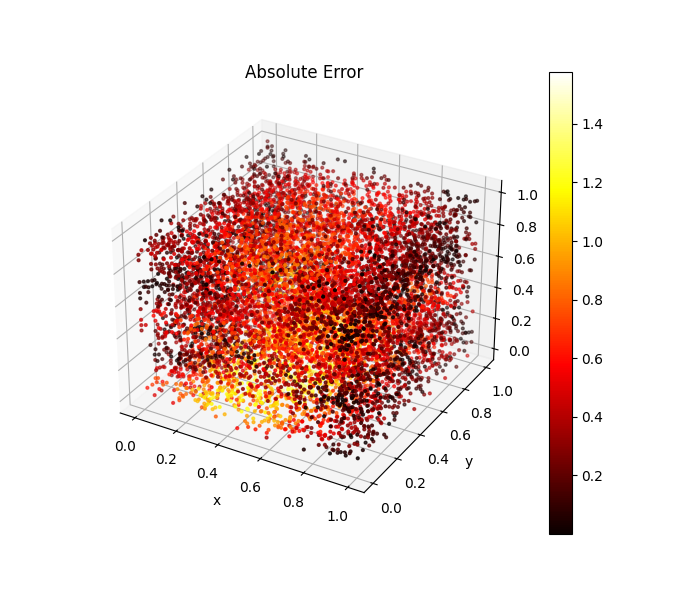} &
            \includegraphics[width=\imgw]{./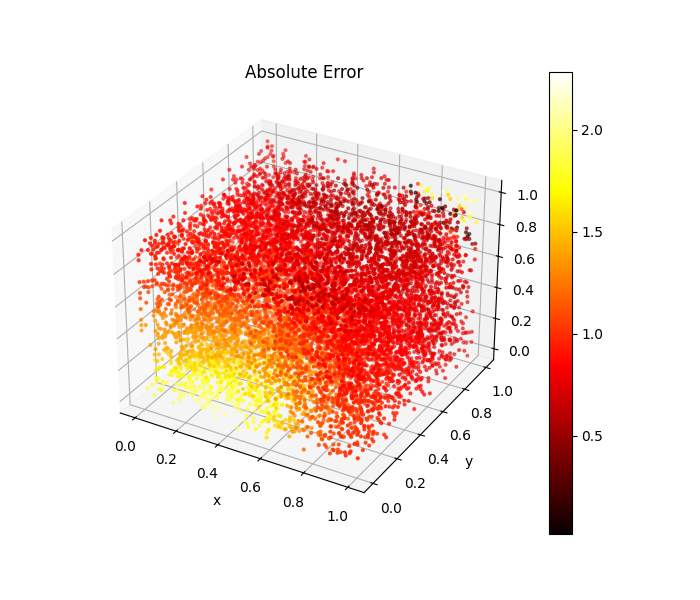} &
            \makebox[\imgw]{} \\

            \includegraphics[width=\imgw]{./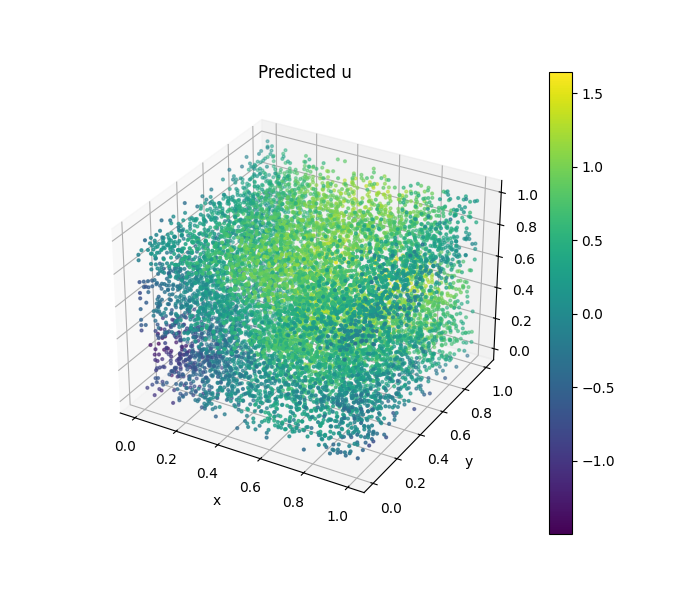} &
            \includegraphics[width=\imgw]{./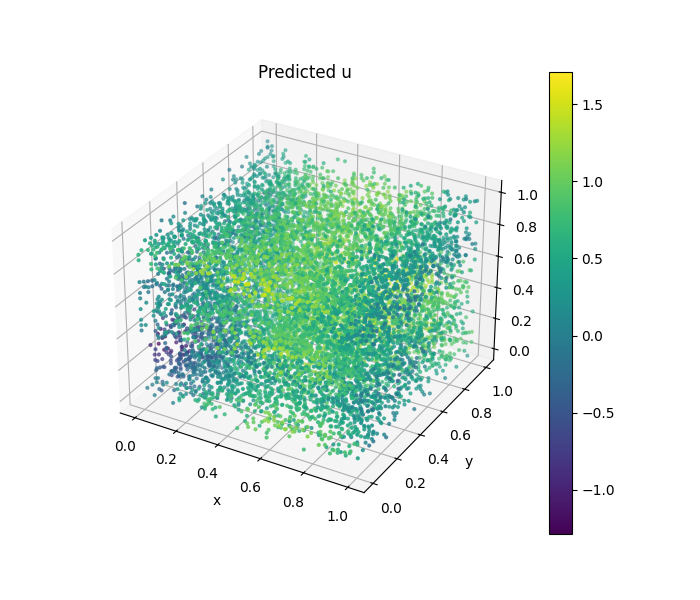} &
            \includegraphics[width=\imgw]{./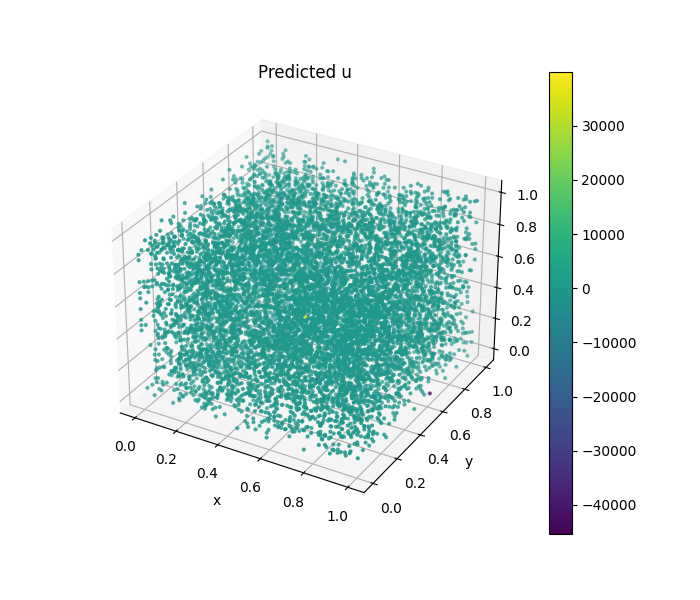} &
            \includegraphics[width=\imgw]{./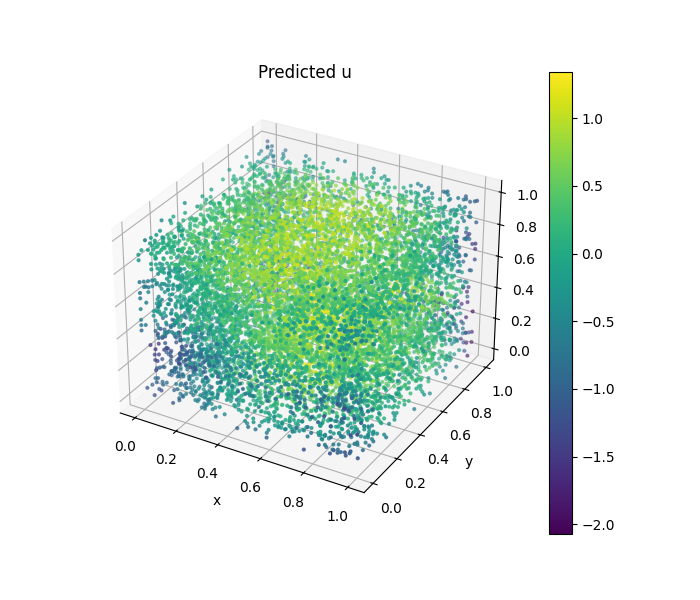} \\

            \includegraphics[width=\imgw]{./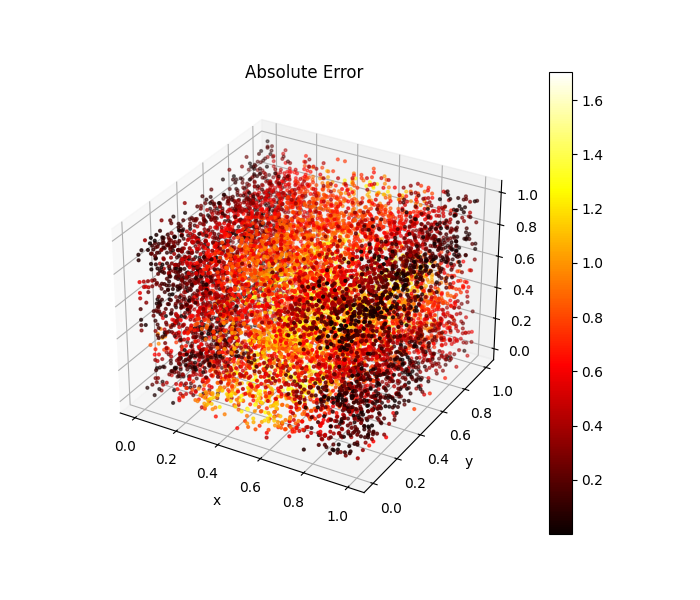} &
            \includegraphics[width=\imgw]{./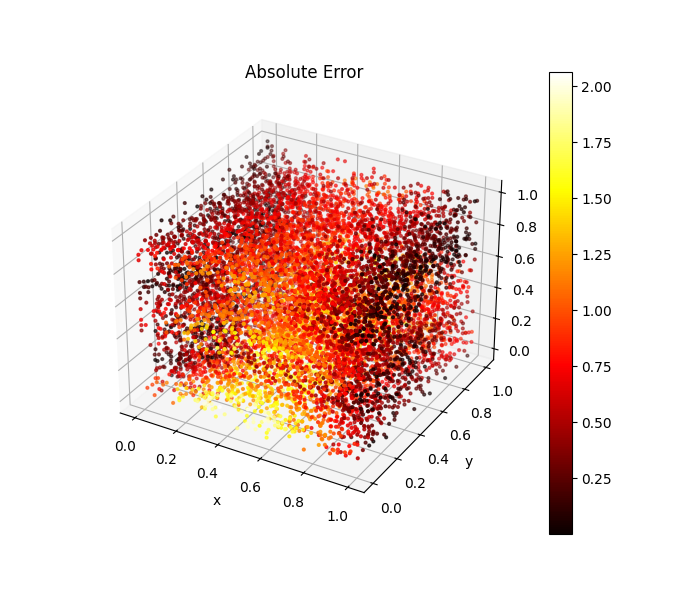} &
            \includegraphics[width=\imgw]{./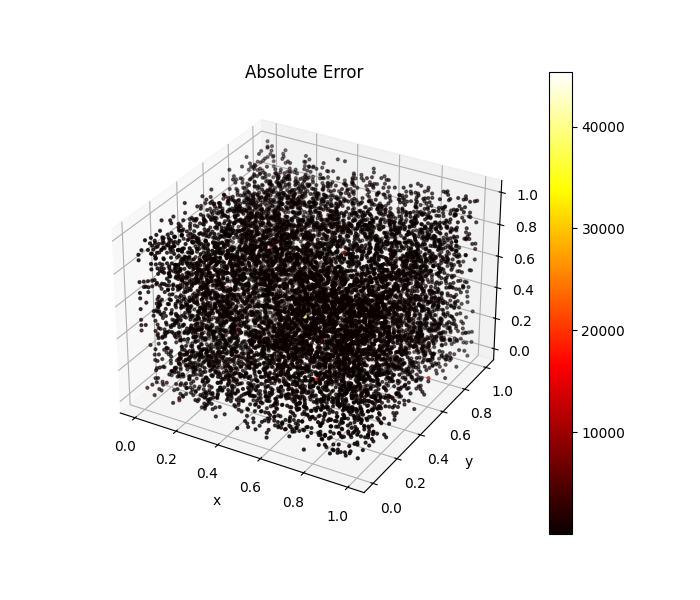} &
            \includegraphics[width=\imgw]{./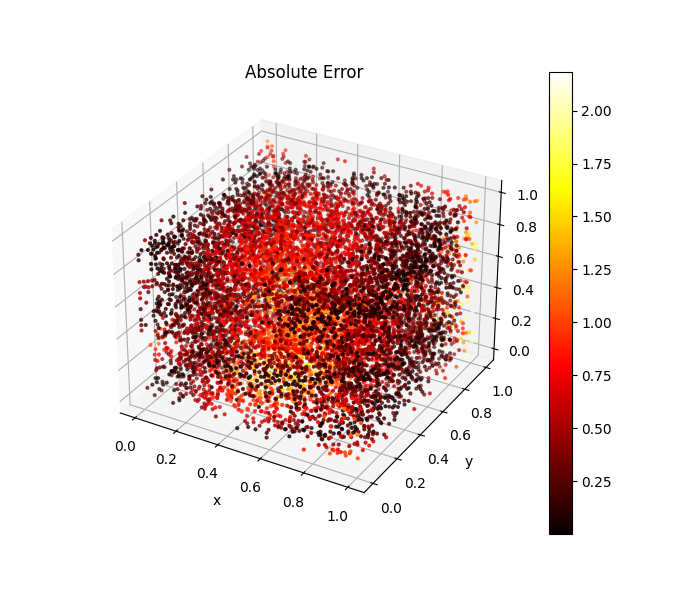} \\

            \makebox[\imgw]{} &
            \includegraphics[width=\imgw]{./figure/fkan_Poisson3D_pred_u_3D.png} &
            \includegraphics[width=\imgw]{./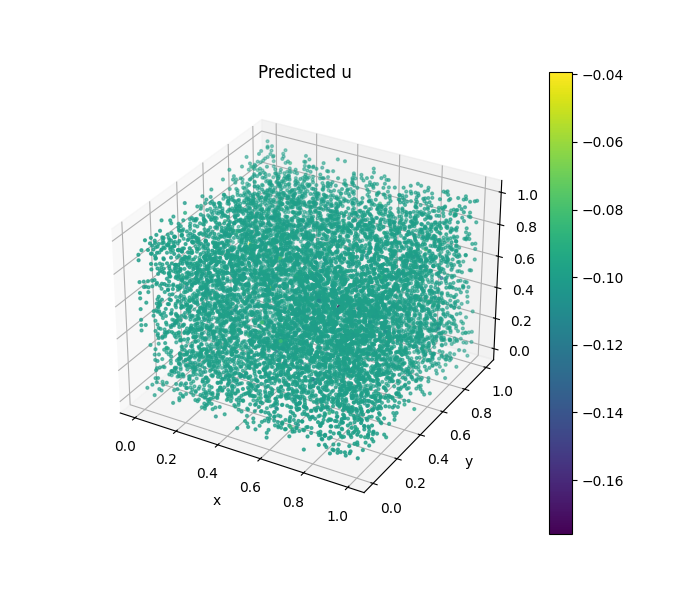} &
            \makebox[\imgw]{} \\

            \makebox[\imgw]{} &
            \includegraphics[width=\imgw]{./figure/fkan_Poisson3D_error_u_3D.png} &
            \includegraphics[width=\imgw]{./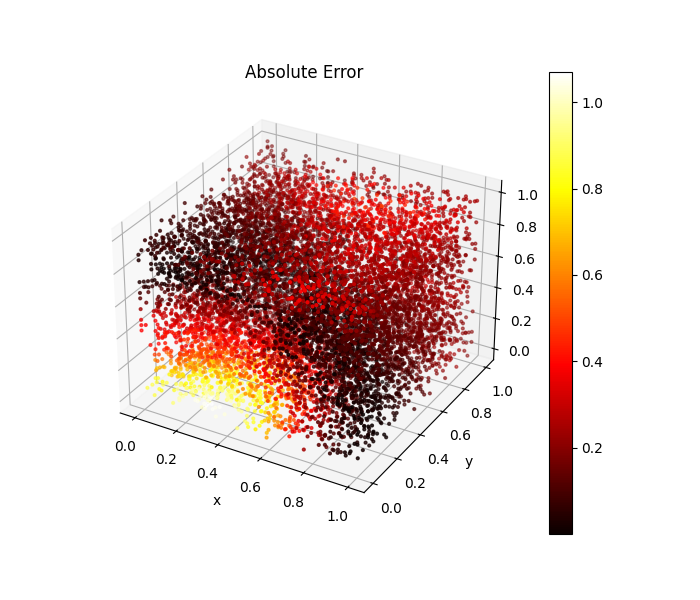} &
            \makebox[\imgw]{} \\
        \end{tabular}
        \end{adjustbox}
        \endgroup

        \caption*{(b) From left to right, the first, third, and fifth rows display the predictions of the AC-PKAN, Cheby1KAN and Cheby2KAN models; the PINNs, QRes, rKAN, and fKAN models; and the FLS and FourierKAN models, respectively. The second, fourth, and sixth rows present their corresponding absolute errors.}
    \end{subfigure}

    \caption{Comparison of the ground truth solution for the 3D Point-Cloud Problem with predictions and error maps from various models.}
    \label{fig:poisson3d}
\end{figure}


\end{document}